\documentclass[12pt]{alt2023} %

\title[An Instance-Dependent Analysis for the Cooperative Multi-Player Multi-Armed Bandit]{An Instance-Dependent Analysis for the Cooperative Multi-Player Multi-Armed Bandit}
\usepackage{times}

\usepackage{microtype}
\usepackage{graphicx}
\usepackage{booktabs} %
\usepackage{amssymb,amsfonts,amsmath}

\usepackage{algorithmic}
\usepackage{algorithm}
\usepackage[algo2e]{algorithm2e} 
\usepackage{algorithm,algorithmic}
\usepackage{thmtools,thm-restate}

\usepackage{hyperref}
\usepackage{cleveref}
\usepackage{wrapfig}

\renewenvironment{proof}{\par\noindent{\bf Proof\ }}{\hfill\BlackBox\\[2mm]}
\newtheorem{assumption}[theorem]{Assumption}

\DeclareMathOperator*{\argmax}{arg\,max}

\renewcommand{\epsilon}{\varepsilon}
\renewcommand{\ln}{\log}

\altauthor{%
 \Name{Aldo Pacchiano} \Email{apacchiano@microsoft.com}\\
 \addr Microsoft Research, NYC
 \AND
 \Name{Peter Bartlett} \Email{peter@berkeley.edu}\\
 \addr University of California, Berkeley%
 \AND
 \Name{Michael Jordan} \Email{jordan@cs.berkeley.edu }\\
 \addr University of California, Berkeley
}

\begin{document}

\maketitle

\begin{abstract}%
We study the problem of information sharing and cooperation in Multi-Player Multi-Armed bandits. We propose the first algorithm that achieves logarithmic regret for this problem when the collision reward is unknown. Our results are based on two innovations. First, we show that a simple modification to a successive elimination strategy can be used to allow the players to estimate their suboptimality gaps, up to constant factors, in the absence of collisions. Second, we leverage the first result to design a communication protocol that successfully uses the small reward of collisions to coordinate among players, while  preserving meaningful instance-dependent logarithmic regret guarantees.
\end{abstract}

\begin{keywords}%
  List of keywords%
\end{keywords}

\section{Introduction}

We consider the cooperative Multi-Player version of the Multi-Armed bandit problem. Generalizing the single-player case, the bandit instance is defined by mean rewards, $\boldsymbol{\mu} = (\mu_1, \cdots, \mu_K )\in [0,1]^K$, all of which are unknown to each of $M$ players. There is a permutation $\sigma \in \mathbb{S}_K$ (unknown to the players) such that $\mu_{\sigma_1} \geq \mu_{\sigma_2} \cdots \geq \mu_{\sigma_K}$.  At each time step $t=1, \cdots, T$, each of the players $p \in [M]$ chooses an action  $i^p_t \in [K]$ and observes the corresponding (random) reward. If two players pull the same arm, they receive a reward sampled from a distribution with unknown mean $\mu_{\mathrm{collision}} \leq \mu_{\sigma_K}$. Our objective will be to design an algorithm with sublinear pseudo-regret:
\begin{equation*}
    \mathcal{R}_T = T \max_{\mathbf{a} \in \{0,1\}^K : \sum_{i=1}^K a(i) = M} \langle\mathbf{a}, \boldsymbol{\mu} \rangle - \sum_{t=1}^T \sum_{p=1}^M \mu_{i_t^p}.
\end{equation*}
As opposed to other work, such as~\citet{avner2014concurrent,rosenski2016multi,alatur2020multi}, we do not make the assumption that collisions are announced to the players; rather, we simply assume that whenever two players select the same arm, they both observe a reward with mean $\mu_{\mathrm{collision}}$. The players do not know for certain if there was a collision or not~\citep[cf.][]{boursier2018sic,shi2020decentralized,bubeck2020coordination}. We will make use of the implicit information provided by collisions (a very low reward value) to design a communication protocol that will allow players to coordinate. We make the following boundedness assumption on the distribution of the reward signals observed by the players:

 \begin{assumption}\label{assumption::bounded_support}
 All $K$ arms have bounded distributions with support in $[0,1]$. %
 \end{assumption}

Our main contribution is to design an elimination-based algorithm whose regret satisfies instance-dependent logarithmic bounds without assuming explicit knowledge of collision information. Our algorithm generalizes all previous approaches (for example those where collisions are announced~\cite{avner2014concurrent,rosenski2016multi,alatur2020multi} or even those where collisions are unannounced but their reward equals zero~\cite{huang2021towards}) to this problem since it does not require knowledge of the mean collision reward. We use the following notation to refer to the suboptimality gaps among the arms: 
\begin{equation*}
    \Delta_{\sigma_i,\sigma_j} = \mu_{\sigma_i} - \mu_{\sigma_j},
\end{equation*}
where $i < j$. Our main result can be summarized as follows.
\begin{theorem}[simplified]\label{theorem::main}
There exists a strategy such that the regret is upper bounded by:
\begin{equation*}
      \mathcal{R}_T \leq \widetilde{ \mathcal{O}}\left(\frac{M(K-M)K^2\log(T)}{\Delta_{\sigma_M, \sigma_{M+1}}} + \mathbf{poly}(\log(T), K, M ) \right) ,
\end{equation*}
with probability at least $1-\min\left(\frac{1}{T},\frac{K}{81} \right)$ where $\widetilde{ \mathcal{O}}(\cdot)$ hides factors logarithmic in $M$ and $K$ only and $\mathbf{poly}(\log(T), K, M )$ is linear in $\log(T)$.

\end{theorem}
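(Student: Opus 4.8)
The plan is to combine the two innovations advertised in the abstract into a single epochs-based protocol and then bound the regret contributed by each phase separately. First I would set up a successive-elimination backbone in which, in each epoch, the $M$ players collectively sample the surviving arms in a round-robin fashion so that, in the absence of collisions, every player obtains roughly equal numbers of pulls of each candidate arm; standard Hoeffding/Bernstein concentration under Assumption~\ref{assumption::bounded_support} gives that after $\tilde O(\log(T)/\Delta^2)$ pulls of a pair of arms, the empirical gap is within a constant factor of the true gap $\Delta_{\sigma_i,\sigma_j}$. The key point here is the first innovation: because the round-robin schedule is \emph{collision-free by construction} (players use their index to offset which surviving arm they probe), each player can detect and estimate its own relevant gaps up to constant factors, and in particular can identify when the top $M$ arms have separated from the rest, i.e.\ estimate $\Delta_{\sigma_M,\sigma_{M+1}}$.

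Next I would handle the communication/coordination phase using the second innovation. Once the players have (approximate) gap estimates and a shared, consistently ordered shortlist of candidate optimal arms, they must agree on an assignment of the $M$ top arms to the $M$ players. I would use the fact that a collision deterministically lowers the observed mean to $\mu_{\mathrm{collision}}\le\mu_{\sigma_K}$: by having a designated "sender" deliberately collide (or not) with a "listener" on a prearranged arm over a block of $\Theta(\log(T)/\Delta_{\sigma_M,\sigma_{M+1}}^2)$ rounds, one bit can be transmitted reliably, since the listener can distinguish "collision mean" from "clean mean" with high probability using the same concentration bounds — and crucially the gap that governs the block length is exactly the quantity appearing in the theorem. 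Iterating this over $O(K\log K)$ or so bit-transmissions lets the players broadcast their shortlists and run a fixed deterministic matching (e.g.\ a Bertsekas-style or simple greedy assignment on the agreed order), after which each player commits to a distinct arm among the top $M$ for the remainder of the horizon.

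For the regret accounting I would partition the horizon into (i) the elimination/gap-estimation epochs, (ii) the communication blocks, and (iii) the exploitation phase. Phase (iii) contributes zero regret on the high-probability event, since the committed assignment is an optimal matching. In phase (i), an arm $\sigma_j$ with $j>M$ survives for at most $\tilde O(\log(T)/\Delta_{\sigma_M,\sigma_j}^2)$ pulls before elimination, and each such pull by one of the $M$ players costs at most $\Delta_{\sigma_M,\sigma_j}$ in regret; since there are $K-M$ suboptimal arms and $M$ players, and $\Delta_{\sigma_M,\sigma_j}\ge\Delta_{\sigma_M,\sigma_{M+1}}$, summing the $\log(T)/\Delta_{\sigma_M,\sigma_j}$ terms and bounding each factor crudely by the worst case yields the $M(K-M)K^2\log(T)/\Delta_{\sigma_M,\sigma_{M+1}}$ term (the extra $K^2$ absorbing round-robin scheduling overhead and the cost of re-synchronizing epochs). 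Phase (ii) contributes at most $O(\mathrm{poly}(K,M)\cdot\log(T)/\Delta_{\sigma_M,\sigma_{M+1}}^2)$, which is folded into the $\mathbf{poly}(\log(T),K,M)$ term; here I would be slightly careful to state the bound so that it is linear in $\log(T)$ as claimed, which forces the communication-block length to scale like $\log(T)$ rather than $\log^2(T)$ — attainable by reusing gap estimates across blocks. Finally, a union bound over all concentration events (gap estimates, elimination decisions, and each transmitted bit) across all epochs gives the failure probability $\min(1/T, K/81)$, where the $K/81$ branch comes from the constant-probability guarantees of the initial orthogonalization/rank-assignment step before any gap information is available.

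The step I expect to be the main obstacle is making the coordination phase robust: the players do not share a clock-synchronized view of when elimination finished (different players may eliminate arms at slightly different epochs because their empirical gaps differ), so before any bit can be sent reliably the players must first agree that they are all in the communication phase and on the same shortlist. I would resolve this with an explicit synchronization sub-protocol — e.g.\ a fixed "quiet arm" whose occupancy signals readiness, detected again via the collision-mean gap — and then argue that on the high-probability event all players enter and exit each sub-phase within a bounded window, so the round-robin offsets and sender/listener roles never desynchronize. Getting the constants in this argument to close while keeping the communication cost linear in $\log(T)$ is the delicate part; everything else is bookkeeping with Hoeffding's inequality.
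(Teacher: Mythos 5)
Your high-level architecture (collision-free exploration to estimate gaps, then bit-transmission via deliberate collisions, then commit) matches the paper in spirit, but two steps in your accounting would fail as written. First, the communication phase. You set the block length for one bit to $\Theta\left(\log(T)/\Delta_{\sigma_M,\sigma_{M+1}}^2\right)$ and claim its regret, of order $\mathrm{poly}(K,M)\log(T)/\Delta_{\sigma_M,\sigma_{M+1}}^2$, can be ``folded into the $\mathbf{poly}(\log T,K,M)$ term.'' It cannot: that term is problem-independent, while $1/\Delta_{\sigma_M,\sigma_{M+1}}^2$ is an instance quantity that can be arbitrarily large, and keeping it would make the bound quadratic in the inverse gap --- exactly the defect (cf.\ the discussion of Theorem 1.1 of Lugosi--Mehrabian) that Theorem~\ref{theorem::main} is designed to avoid. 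Moreover the gap that actually governs distinguishability of ``collision mean'' vs.\ ``clean mean'' is not $\Delta_{\sigma_M,\sigma_{M+1}}$ but $\Delta_{\mathrm{collision}}=\mu_{\sigma_1}-\mu_{\mathrm{collision}}$, which is unknown. The paper spends a dedicated initial phase (a modified Round Robin in which player $1$ deliberately collides so that every player can estimate $\mu_{\mathrm{collision}}$ and agree on $t_{\mathrm{collision\text{-}test}}=\Theta(\log(1/\delta)/\Delta_{\mathrm{collision}}^2)$, together with witness thresholds $L_i^p$ placed between $\widehat{\mu}_{\mathrm{collision}}$ and the arm mean); each collision then costs at most $\Delta_{\mathrm{collision}}$, so each bit costs $\widetilde{\mathcal{O}}(\log(T)/\Delta_{\mathrm{collision}})\le \widetilde{\mathcal{O}}(\log(T)/\max_i\Delta_{\sigma_i,\sigma_{i+1}})$, which is dominated by the Round Robin regret. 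Your proposal neither calibrates the detection threshold (the listener has no estimate of the collision mean) nor gets the per-bit cost down to $1/\Delta_{\mathrm{collision}}$.

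Second, the synchronization issue you flag as ``the main obstacle'' and defer to a ``quiet arm'' sub-protocol is precisely where the paper's real work lies, and it is not a constants issue. The paper proves a sandwich lemma (Lemma~\ref{lemma::first_fundamental_lemma_main}) showing that the first time each player's confidence-interval connectivity graph splits satisfies $N(t_{\mathrm{first}}^p)/g(N(t_{\mathrm{first}}^p))\in[128/\Delta^2,1152/\Delta^2)$ for the \emph{largest consecutive} gap $\Delta=\max_i\Delta_{\sigma_i,\sigma_{i+1}}$ (not $\Delta_{\sigma_M,\sigma_{M+1}}$ directly), so that a unique power of nine lies in this window and all players can rendezvous at recognizable rounds where $\lfloor\frac{t/K}{g(t/K)}\rfloor$ is a power of nine; the top-$M$ gap is then reached only through the $\mathrm{RECURSE}$ structure, with at most $K-1$ partition rounds each costing $\widetilde{\mathcal{O}}(M(K-M)K\log(T)/\Delta_{\sigma_M,\sigma_{M+1}})$ (using that the recursed gaps are within a factor $3$ of decreasing order), which is where the extra $K$ in the theorem comes from. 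Your scheme, in which each player individually waits to identify the top $M$ arms and then runs an unspecified readiness/matching protocol, has no mechanism guaranteeing the sender transmits only after every listener is ready and at a time the listeners can predict; without an analogue of the sandwich-plus-powers-of-nine argument (or some other provable rendezvous), the bit protocol's correctness, and hence the whole regret bound, does not go through.
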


\section{Previous Work}

The Multi-Player bandit problem with bounded communication was first introduced in \citet{lai2008medium,liu2010distributed,anandkumar2011distributed}, and has been extensively studied since then under various assumptions on the communication patterns and the nature of the collisions~\citep{avner2014concurrent,rosenski2016multi,palicot2018multi,lugosi2018multiplayer,boursier2018sic,alatur2020multi,bubeck2020non}.  Perhaps the first instance of a centralized version of the problem we study in this work appeared in~\citet{anantharam1987asymptotically}, where the problem of a single player selecting multiple arms simultaneously is considered. 

The problem of Multi-Player, Multi-Armed bandits has commonly been motivated via its application to wireless communication and networking~\citep{liu2010distributed,rosenski2016multi}; for example as a way to model the case where several users must access a wireless channel in a decentralized manner~\citep{besson2018multi}.

We can classify the existing settings and algorithmic approaches to the Multi-Player Multi-Armed in broadly two categories. When collision information is available to the players and when it is not. In the first category, algorithms such as SIC-MMAB~\citep{boursier2018sic} or DPE1~\citep{wang2020optimal}  have been developed, the second of which achieves the same asymptotic regret as that obtained by an optimal centralized algorithm. Both of these algorithms crucially exploit the known collision information to establish communication between the players.

In the ``No Sensing'' setting where collision information is not readily available to the players, but instead players receive a diminished or zero reward, the problem of developing an optimal algorithm is substantially more challenging and has not been fully solved yet. Most importantly the three most prominent algorithms, SIC-MMAB2~\citep{boursier2018sic}, EC-SIC~\citep{shi2020decentralized} and the algorithm of~\citet{lugosi2018multiplayer} suffer from a variety of drawbacks. 

SIC-MMAB2 satisfies a regret guarantee of order $\mathcal{O}\left(\sum_{i > M} \frac{M\log(T)}{\Delta_{\sigma_M, \sigma_{i}}} + \frac{MK^2}{\mu_{\sigma_K}} \log(T) \right) $. Unfortunately SIC-MMAB2 is suboptimal in two ways. First the algorithm requires knowledge of $\mu_{\sigma_K}$, and second its regret guarantee suffers an inverse dependence on $\mu_{\sigma_K}$, a quantity that may be astronomically large. Other algorithms for the No Sensing setting such as studied in Theorem 1.2 in~\citet{lugosi2018multiplayer}, and the ADAPTED SIC-MMAB algorithm from~\citet{boursier2018sic} suffer from the same limitations (see Table 1 in~\citet{boursier2018sic}). The more recent EC-SIC algorithm~\citep{shi2020decentralized} improves the $M$ and $K$ dependence from the SIC-MMAB2 regret upper bound but still suffers from the substantial drawback of requiring knowledge of at least a lower bound to $\mu_{\sigma_K}$. Although EC-SIC achieves a better regret guarantee than SIC-MMAB2, it also requires knowledge of the gap $\Delta_{\sigma_M, \sigma_{M+1}}$ to be available to all players. Other algorithms such as the algorithm from Theorem 1.1 in~\citet{lugosi2018multiplayer} suffer from more serious problems such as quadratic dependence on the inverse gap $\frac{1}{\Delta_{\sigma_M, \sigma_{M+1}}}$. 

Some of these drawbacks have been addressed by recent work~\citep{huang2021towards}. Under the assumption the collision reward equals $0$, the authors dispense with the assumptions of shared knowledge of both $\mu_{\sigma_K}$ and $\Delta_{\sigma_M, \sigma_{M+1}}$. Their collision communication protocol makes use of a test that is very much in the spirit of ours (see the subroutine $\mathrm{CollisionTest}$ in Section~\ref{section::communication_analysis}). Communication is achieved by finding large arms (that are up to a constant proportion the scale of the largest arm) and pulling them. The authors manage to obtain a logarithmic instance dependent regret guarantee scaling as $\mathcal{O}\left( \sum_{i > M} \frac{\log(T)}{\Delta_{\sigma_M, \sigma_{i}}} + MK^2 \log(T) + KM^2 \log\left( \frac{1}{\Delta_{\sigma_M, \sigma_{M+1}}}   \right)^2\right)$. Unfortunately, their algorithm heavily depends on the assumption $\mu_{\mathrm{collision}} = 0$. 

In the present paper we avoid the aforementioned drawbacks and derive the first \emph{truly} logarithmic problem-dependent guarantee for the No Sensing Multi-Player Multi-Armed bandit problem with unknown collision rewards. We leverage the implicit communication that exists when collisions occur, namely that the mean collision reward is small. We show that a simple modification of a successive elimination strategy can be used to allow the players to estimate the suboptimality gaps up to constant factors in the absence of collisions. Using this result we design a communication protocol that successfully leverages the small reward of collisions to coordinate among players, while at the same time preserves meaningful instance-dependent logarithmic regret guarantees. %

A different setting for the Multi-Player Multi-Armed bandit problem is one in which the players are required to avoid all collisions. It was shown by \citet{bubeck2020coordination} that one can obtain the optimal regret in this setting without any collisions at all. Their result was limited to two players and three actions. A more recent version of that result~\citet{bubeck2020cooperative} shows that it is possible to achieve a regret scaling with $\sqrt{T}$ for the cooperative stochastic Multi-Player Multi-Armed bandit problem with a dependence of $K^{11}M$ in the number of arms $K$ and the number of players $M$. The algorithmic strategy proposed in~\citet{bubeck2020cooperative} relies on a clever algorithm that, with high probability, avoids collisions altogether. More recent results~\cite{liu2022pareto} suggest that achieving a logarithmic instance dependent rate is impossible in the absence of communication.

\section{Assumptions and Notation}\label{section::assumptions_notation}

 Our algorithm is based on the idea of exploiting a communication protocol that leverages collisions while maintaining favorable regret guarantees. In comparison to other work, we do not make the assumption that collisions are announced to the players; rather, we simply assume that whenever two players select the same arm, they both observe an i.i.d. reward with mean $\mu_{\mathrm{collision}} \leq \mu_{\sigma_K}$. Throughout the paper we will use the notation $t$ to index the rounds of play. In each round all $M$ players select an arm and collect a reward. 
 
  We denote by $N_i^p(t)$ the (random) number of pulls of arm $i$ by player $p$ up until and including round $t$. And let $\widehat{\mu}_i^{p}(t)$ be the empirical estimator maintained by player $p$ of the mean reward $\mu_i$ of arm $i$ at time $t$. This estimator consists of an average of $N_i^p(t)$ samples. Similarly for any $t,t'$ denote by $\widehat{\mu}_i^p(t:t')$ as the empirical mean estimator of arm $i$ computed by player $p$ during rounds $t$ to $t'$ (inclusive). Let $\delta \in (0,1)$ be a probability parameter. We will make use of the following confidence interval diameter function,
  \begin{center}
  $D : \mathbb{N} \rightarrow \mathbb{R}_{+}$ such that $D(n) = \sqrt{ \frac{2g(n)}{n}}$ where $g(n) = \ln(4n^2MK/\delta)$. 
 \end{center}
As a simple consequence of Hoeffding's inequality, for any $p \in [M]$ and $i \in [K]$, with probability at least $1-\frac{\delta}{MK}$, for all $t \in \mathbb{N}$ simultaneously, we have:
 \begin{equation}\label{equation::confidence_interval_basic}
     | \widehat{\mu}_i^p(t) - \mu_i | \leq D(N_i^p(t)).
 \end{equation}
\paragraph{The Good Event $\mathcal{E}$.} We will denote the (at least) $1-\delta$ probability event that all confidence intervals from Equation~\ref{equation::confidence_interval_basic} hold for all $p \in [M]$, all $i \in [K]$ and all $t \in \mathbb{N}$ simultaneously as $\mathcal{E}$.

\paragraph{Round Robin Schedule.} Whenever we say the arms are pulled by the players in a Round Robin schedule, we mean that during the first round player $p$ will pull arm $p$, and in subsequent rounds player $p$ will pull the arm with an index one more than the one she pulled in the previous rounds, unless she has pulled arm $K$ in which case she will pull arm $1$ the next round. Whenever all players are pulling arms according to a Round Robin schedule, they do not collide.

\paragraph{Special Rounds.} We will refer to all rounds occurring right after a complete cycle of a Round Robin round as \emph{special rounds}. At the beginning of time, before any arm is eliminated, this will occur exactly during rounds that are multiples of $K$.

We also make the following assumptions.

\begin{assumption}[Collisions]\label{assumption::collisions_zero}
Whenever two players collide, both players get a reward sampled from a distribution with mean $\mu_{\mathrm{collision}}$ such that $\mu_{\mathrm{collision}} \leq \mu_{\sigma_K}$.
\end{assumption}

Assumption~\ref{assumption::collisions_zero} is not particularly limiting. Our main contribution is to design an algorithm that does not require the \emph{identity} of colliding arms to be announced. Since we do not assume the collision reward to equal zero, the algorithm of~\cite{huang2021towards} is not applicable to our case. The techniques in~\cite{huang2021towards} rely on the overwhelming probability of seeing a nonzero after repeated sampling of a non-zero mean arm. This cannot be used in the setting where the collision reward may have a nonzero mean. 
\begin{assumption}[Shared knowledge]\label{assumption::labeled_arms}
All arms are labeled, and the labels are known by all players $p \in [M]$. 
\end{assumption}

Assumption~\ref{assumption::labeled_arms} is mild in comparison with the shared randomness assumption of previous works such as~\citet{bubeck2020cooperative} and \citet{bubeck2020coordination}. We also assume common knowledge of the problem-independent functions $f$, $B$ and $g$. %

\section{Algorithm Overview and Analysis}\label{section::algorithm}

Our algorithm starts by having all players agree on a value $t_{\mathrm{collision-test}}$ satisfying $t_{\mathrm{collision-test}} =\Theta( \frac{\log(1/\delta)}{\Delta^2_{\mathrm{collision}}})$ up to logarithmic factors where $\Delta_{\mathrm{collision}} = \mu_{\sigma_1} - \mu_{\mathrm{collision}}$ and with estimators $\widehat{\mu}_{\mathrm{collision}}^p $. Achieving this requires the same set of methods used in the following part of the algorithm, thus we defer its explanation. Subsequently all players will pull arms in a Round Robin fashion, thus not incurring any collisions. During these rounds all players maintain confidence bands for the mean values of each of the arms $i \in [K]$. All players maintain a connectivity graph with node set $[K]$, such that for any $i,j \in [K]$, the edge $(i,j)$ is present if their confidence intervals overlap. 

\begin{wrapfigure}{r}{0.5\textwidth} 
        \vspace{-5mm}
        \centering{\includegraphics[width=0.9\linewidth]{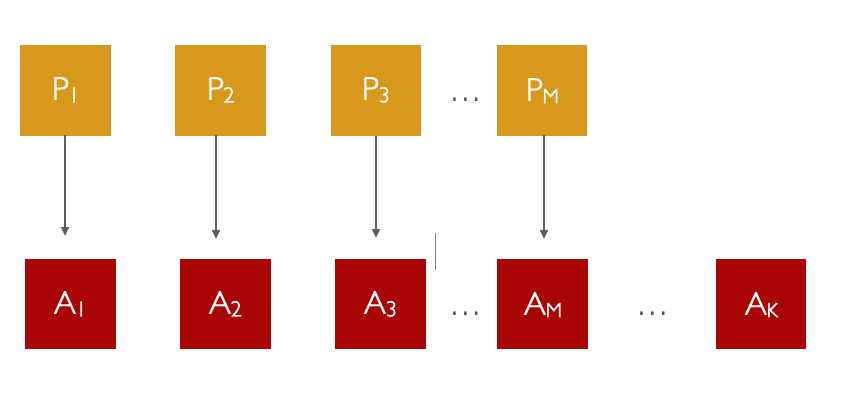}}
        --------------------------------------------------
        \centering{\includegraphics[width=0.9\linewidth]{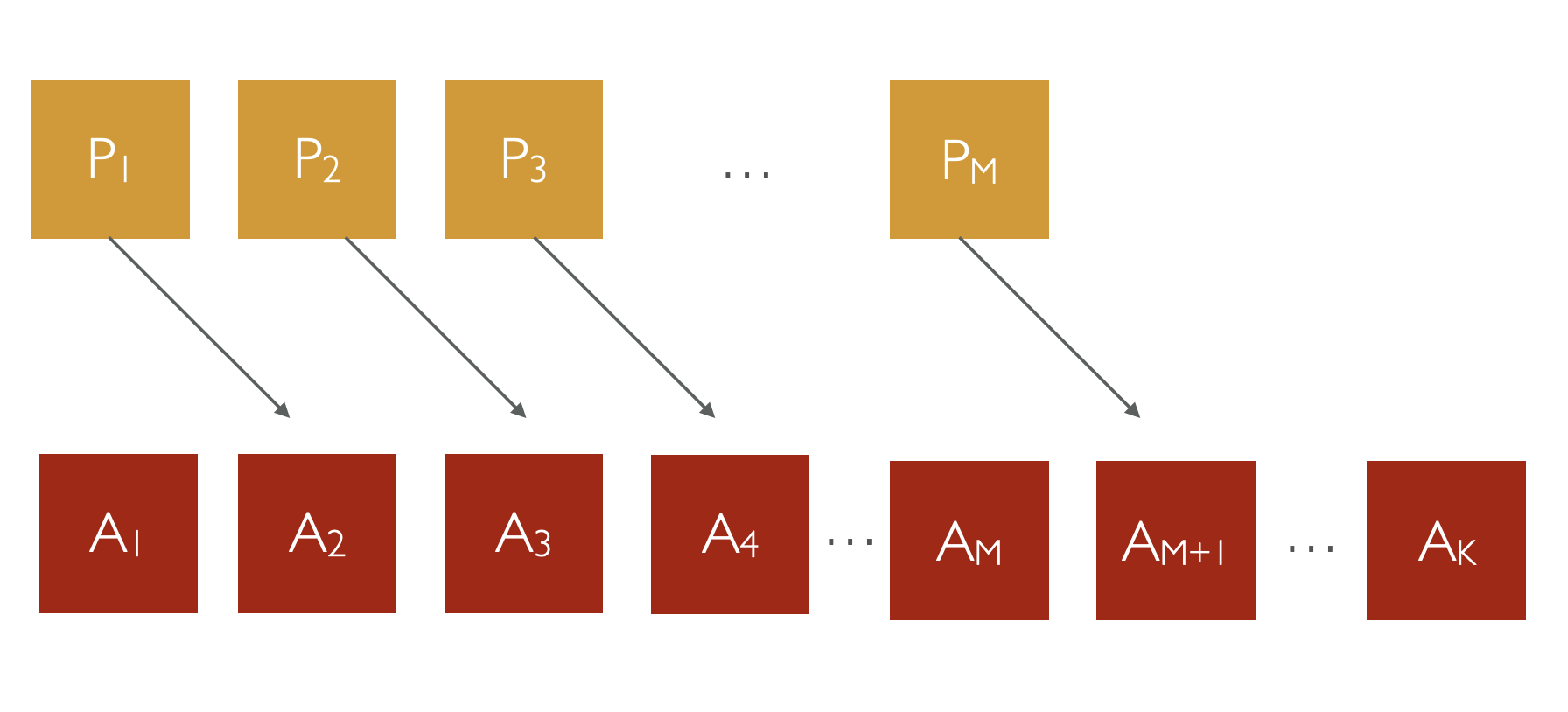}}
        \vspace{-5mm}
        \caption{Illustration of the Round Robin Schedule. \textbf{Top:} Step $1$. \textbf{Bottom:} Step $2$.}
        \label{fig::RoundRobin}
          \vspace{-5mm}
\end{wrapfigure}

Consider the first $\emph{special}$ round $t_{\mathrm{first}}^1$ when for player $1$, the number of connected components of this graph is more than one. We define $t^p_{\mathrm{first}}$ analogously for all other players $p \in [M]$. Let's refer to the connected component with the highest empirical means as $\mathcal{C}_{\mathrm{top}}$ to its complement as $\mathcal{C}_{\mathrm{bottom}} = [K]\backslash \mathcal{C}_{\mathrm{top}}$ .  Although this is not guaranteed, whenever this partition emerges the components $\mathcal{C}_{\mathrm{bottom}}$ and $\mathcal{C}_{\mathrm{top}}$ will be separated by a ``consecutive'' gap, $\min_{i \in \mathcal{C}_{\mathrm{top}}} \mu_{i} - \max_{j \in \mathcal{C}_{\mathrm{bottom}} }  \mu_j$, roughly proportional to $\max_{i } \Delta_{\sigma_i, \sigma_{i+1}}$.

 By definition of the connectivity graph, when $\mathcal{E}$ holds all arms in $\mathcal{C}_{\mathrm{top}}$ will have higher mean reward values than those of each of the arms in $\mathcal{C}_{\mathrm{bottom}}$. Our algorithm is designed to ensure that $t_{\mathrm{first}}^1$ does not occur too soon or too late. More specifically, we guarantee that up to logarithmic factors, $\frac{t^1_{\mathrm{first}}}{K} \approx \frac{1}{\max_{i } \Delta_{\sigma_i, \sigma_{i+1}}^2}$,thus ensuring that the signals $t_{\mathrm{first}}^p$ occur around the same time for all players. This is crucial to allow for successful communication between the players as this ensures player $1$ start communicating after all remaining players are ready for listening. Once $t_\mathrm{first}^1$ is triggered and player $1$ has hold of a connectivity graph with at least two connected components, she will make use of the low reward experienced by collisions to transmit the partition $(\mathcal{C}_{\mathrm{top}}, \mathcal{C}_{\mathrm{bottom}})$ of the arm space to the other players. This can be achieved by communicating $\mathcal{C}_\mathrm{top}$ to all players $p \neq 1$ using a bit string language agreed in advance. The main algorithmic challenge we face in designing this protocol is to ensure the collisions incurred during communication do not generate a substantial regret.

Once all other players have learned what arms belong to $\mathcal{C}_\mathrm{top}$, they will $\mathrm{RECURSE}$ by playing the same strategy as before but now restricted to the two subproblems induced by the set of arms $\mathcal{C}_{\mathrm{top}}$ and $\mathcal{C}_{\mathrm{bottom}}$. If $|\mathcal{C}_{\mathrm{top}}| < M$, the top indexed $| \mathcal{C}_{\mathrm{top}}|$  players will forevermore play following a Round Robin schedule over $\mathcal{C}_{\mathrm{top}}$ while the remaining $M-| \mathcal{C}_{\mathrm{top}}|$ players restart the exploration strategy over the remaining arms. If instead $| \mathcal{C}_{\mathrm{top}} | \geq M$ the $M$ players restart the exploration strategy over the set $\mathcal{C}_{\mathrm{top}}$. $\mathrm{RECURSE}$ reduces the problem to a smaller Cooperative Multi-Player Multi-Armed bandit. The players then re-index the arm labels\footnote{For example the players assigned to $\mathcal{C}_1$ will re-index these arms so they are labeled $1$ to $|\mathcal{C}_1|$ by switching the smallest label in $\mathcal{C}_1$ to a $1$, the second smallest to a $2$, and so on.}, re-computes $t_{\mathrm{collision-test}}$ (for simplicity of analysis) and starts playing in a Round Robin fashion within their assigned group, with the smallest indexed player in any group becoming the communicating player. Iterating over this procedure ensures the players converge to pull only the top $M$ arms. 

Communicating $\mathcal{C}_\mathrm{top}$ (and therefore $\mathcal{C}_\mathrm{bottom}$) can be done by sending a bit string of size $K$, where the bits corresponding to the arms in $\mathcal{C}_{\mathrm{top}}$ equal one and the remaining bits equal zero. Since the players have access to no signal other than the reward, and this can be modulated only when two players play the same arm, any communication between players needs to happen through collisions. With this in mind we design a communication mechanism that allows player $1$ to transmit a bit string to all the other players with high probability.

While all players $p \in \{ 2,\cdots,K\}$ continue playing in a Round Robin fashion, player $1$ will signal the start of the communication sequence at the second \emph{special} round $t^1_{\mathrm{comm1}}$ such that $\left\lfloor \frac{t^1_{\mathrm{comm1}}/K}{g(t^1_{\mathrm{comm1}}/K)} \right\rfloor$ is a power of nine and occurs after $t_{\mathrm{first}}^1$. At this time player $1$ will begin to pull arm $\widehat{\sigma}_1$---the arm with the largest empirical mean at time $t_{\mathrm{comm1}}^1$---for a number of rounds equal to $K t_{\mathrm{collision-test}}$. All other players $p \in\{2,\cdots, M\}$ will begin listening for the start of the communication signal from player $1$ at all \emph{special} rounds\footnote{We impose the restriction that $t_{\mathrm{listen}}^p$ is such that $\left\lfloor \frac{t_{\mathrm{listen}}^p/K-1}{g(t_{\mathrm{listen}}^p/K-1)} \right\rfloor$ is not a power of  nine. Similarly for $t_{\mathrm{comm1}}^1$.} $t_{\mathrm{listen}}^p$ such that $\left\lfloor \frac{t_{\mathrm{listen}}^p/K}{g(t_{\mathrm{listen}}^p/K)} \right\rfloor$  is a power of nine and occurs after $t_{\mathrm{first}}^p$.

Having computed empirical estimators of the arm's rewards using data up to time $t_{\mathrm{listen}}^p$, each of the players $p \in \{2, \cdots, M\}$ will have estimated the mean of $\widehat{\sigma}_1$ to sufficiently good accuracy to ensure that at the end of the next $Kt_{\mathrm{collision-test}}$ rounds any of them can detect if there have been collisions with player $1$ when pulling arm $\widehat{\sigma}_1$ during rounds $t^p_{\mathrm{listen}} + 1$ to $t^p_{\mathrm{listen}} + K t_{\mathrm{collision-test}}$. If player $p$ detects a small reward coming from an arm with a previously recorded high reward, it can conclude this arm is $\widehat{\sigma}_1$ and that player $1$ has pulled it to signal the start of a communication round. If instead, none of the high reward arms record a substantial deviation in their collected reward during rounds $t^p_{\mathrm{listen}} + 1$ to $t^p_{\mathrm{listen}} + K t_{\mathrm{collision-test}}$, player $p$ can conclude player $1$ has not started to communicate yet. By design our algorithm ensures that with high probability $t_{\mathrm{listen}}^p \leq t_{\mathrm{comm1}}^1$ and that after at most three trials $t_{\mathrm{listen}}^p = t_{\mathrm{comm1}}^1$. One of the main challenges in designing an algorithm with these properties is to ensure the listening players are able to listen for the incoming communication signal from the communicating player at the right time. The players $p \neq 1$ can only start listening for an incoming signal after they have collected enough samples to get an accurate enough estimator for $\widehat{\sigma}_1$. Since the time when this happens is a random variable we need to ensure both listening and communication protocols occur at times when all players have sufficiently accurate estimates. This is particularly challenging because the players are only aware of their own estimates.

Let's assume that $t_{\mathrm{listen}}^1 = t_{\mathrm{comm1}}^1$. If player $p \in \{ 2, \cdots, M\}$ detects a communication signal from player $1$ associated with arm $\widehat{\sigma}_1$, it will listen for the next $K^2t_{\mathrm{collision-test}}$ rounds (recall these ``listening'' players are still playing all arms in $[K]$ following a Round Robin schedule). Using the resulting $K t_{\mathrm{collision-test}}$ pulls of arm $\widehat{\sigma}_1$ player $p$ can decode the $K$ bit message sent by player $1$. With the same test used to detect the start of communication, if the $i$-th block of $t_{\mathrm{collision-test}}$ pulls of arm $\widehat{\sigma}_1$ has a low reward, player $p$ can conclude the bit value sent by player $1$ is a one. If the average reward is large, player $p$ can conclude the bit value sent by player $1$ is a zero. 

The regret accrued by all players until the successful communication of $\mathcal{C}_{\mathrm{top}}$ to all players can be decomposed in two parts: the Round Robin regret ($\mathrm{RoundRobinRegret}([K])$) and the collision regret ($\mathrm{CollisionRegret}([K])$). 

Recall that $\frac{t_{\mathrm{comm1}}^1}{K} \approx \frac{1}{\max_{i} \Delta_{\sigma_i, \sigma_{i+1}}^2}$ and observe that $\mu_{\sigma_1} - \mu_{\sigma_K} \leq K\max_{i} \Delta_{\sigma_i, \sigma_{i+1}}$. During a full Round Robin cycle over arms $\{1, \cdots, K\}$ regret is only incurred when arms in $\{ \mu_{\sigma_{i}}\}_{i=M+1}^K$ are played. Each of these pulls may incur up to $K\max_{i} \Delta_{\sigma_i, \sigma_{i+1}}$ regret. Since there are $M(K-M)$ of these pulls, the $\mathrm{RoundRobinRegret}([K])$ accrued by the $M$ players during the Round Robin plays\footnote{This is accounting for the regret collected during the time it takes for a single transmission of a partition $(\mathcal{C}_{\mathrm{top}}, \mathcal{C}_{\mathrm{bottom}})$. Since there could be up to $K-1$ such rounds, the algorithm's regret has an extra scaling with $K$ as in Theorem~\ref{theorem::main}.} is of the order at most $\frac{KM(K-M)}{\max_{i} \Delta_{\sigma_i, \sigma_{i+1}}}$. 

Now let's see what happens with the $\mathrm{CollisionRegret}([K])$. Since $t_{\mathrm{collision-test}} \approx \frac{\log(t/\delta)}{\Delta^2_{\mathrm{collision}}} $ and the number of collisions experienced by player $1$ during communication is upper bounded by $KMt_{\mathrm{collision-test}}$ the $\mathrm{CollisionRegret}([K])$ is of upper bounded by $\frac{KM\log(t_{\mathrm{comm1}}^1/\delta)}{\Delta_{\mathrm{collision}}}$ (notice this is of smaller order than $\mathrm{RoundRobinRegret}([K])$ since $\Delta_{\mathrm{collision}} \geq \max_{i} \Delta_{\sigma_i, \sigma_{i+1}}$ ). %

The main challenges in our analysis are the following:

\begin{enumerate}
    \item $t_{\mathrm{collision-test}}$ and times $t^1_{\mathrm{first}}$ and $t_{\mathrm{comm1}}^1$ are random and thus unknown to players $2, \cdots, M$. We need to ensure that times $t^p_{\mathrm{first}}$ occur at around the same time for all $p\in [M]$ in order to ensure players $p \in \{ 2, \cdots, M\}$ start ``listening'' for a potential communication start signal from player $1$ at the right time. We do so by designing a mechanism that ensures $\frac{t_{\mathrm{first}}^p}{g(t_{\mathrm{first}}^p)}$ is upper and lower bounded by a constant multiple of $\frac{1}{\max_{i} \Delta_{\sigma_i, \sigma_{i+1}}^2}$. This is the same mechanism we use in the subroutine dedicated to the estimation of $\Delta_{\mathrm{collision}}$. 
    \item Since communication occurs via collisions, the regret incurred by player $1$ (and any player colliding with it) may be linear in $\Delta_{\mathrm{collision}}$ whenever these happen. We need to ensure the time needed for communicating and therefore the number of collisions involved is small, while still being sufficiently large to convey enough information.  %
    \item As we have mentioned above, the start-communication signal is sent out by player $1$ at a round such that $\left\lfloor \frac{t_{\mathrm{comm1}}^1}{g(t_{\mathrm{comm1}}^1)} \right\rfloor$ is a power of nine. The reasoning behind this is to ensure the listening players $p \in \{ 2, \cdots, M\}$ are able to start listening at a recognizable time index. Since the times $t_{\mathrm{first}}^p$ and $t_{\mathrm{first}}^1$ are not equal, and all players $p \in \{ 2, \cdots, M\}$ only start listening after $t_{\mathrm{first}}^p$ we need to ensure that $t^1_{\mathrm{comm1}} \geq t_{\mathrm{listen}}^p$ for all $p \in \{ 2, \cdots, M\}$. 
    
\end{enumerate}
We address each of these three challenges in the sections below.

\subsection{Player \texorpdfstring{$1$}{1} Communication Protocol} \label{section::communication_protocol}

Let $I_i^p(t, \tilde{C}) = [\widehat{\mu}_i^p(t) - \tilde{C}D(N_i^p(t)) , \widehat{\mu}_i^p(t) +  \tilde{C}D(N_i^p(t)) ]$ be the $\tilde{C}-$blowup confidence interval for player $p$ at round $t$ around the mean reward of arm $i$. If $\tilde{C} \geq 1$, these confidence intervals are satisfied (i.e., $\mu_i \in I_i^p(t, \tilde{C})$) for all $t \in \mathbb{N}, i \in [K], p \in [M]$  whenever $\mathcal{E}$ holds, an event that happens with probability at least $1-\delta$. We now introduce the empirical arm connectivity graph with blowup parameter $\tilde{C}$. 

\begin{definition}[$\tilde{C}$-blowup Arm connectivity graph] Let $\tilde{C}\geq 1$.  For each player we define the (random) $\tilde{C}-$blowup arm connectivity graph as $\mathcal{G}_t^p(\tilde{\mathcal{C}}) = ([K], E_t^p(\tilde{\mathcal{C}}))$ with node set $[K]$ and edge set $E_t^p(\tilde{\mathcal{C}})$ defined as:
 \begin{equation*}
     \{ i,j\} \in E_t^p(\tilde{\mathcal{C}}), \quad \text{if } I_i^p(t, \tilde{C}) \cap I_j^p(t, \tilde{C}) \neq \emptyset.
 \end{equation*}
 \end{definition}

Graph $\mathcal{G}_t^p(\tilde{\mathcal{C}})$ is a collection of connected components. The nodes $i,j \in \mathcal{G}_{t}^p(\tilde{\mathcal{C}})$ represent arms $i,j$ and are connected by an edge in $\mathcal{G}_t^p(\tilde{\mathcal{C}})$ if their $\tilde{\mathcal{C}}$-blowup confidence intervals overlap. If we identify each node $i$ of $\mathcal{G}_t^p(\tilde{\mathcal{C}})$ with the empirical mean $\widehat{\mu}_i^p(t)$, the graph has a natural geometric representation as a collection of intervals in $[0,1]$ with each connected interval in the collection representing a connected component of $\mathcal{G}_t^p(\tilde{\mathcal{C}})$. We say that two connected components of $\mathcal{G}_t^p(\tilde{\mathcal{C}})$ are \emph{adjacent} if they are consecutive intervals in this geometric representation.

 Let $\mathbf{conn}^p(t, \tilde{\mathcal{C}})$ be the number of connected components of $\mathcal{G}_t^p(\tilde{\mathcal{C}})$. Let $\{ \mathcal{C}_j^p(t, \tilde{\mathcal{C}}) \}_{j=1}^{\mathbf{conn}^p(t, \tilde{\mathcal{C}})}$ be the collection of connected components at time $t$, ordered by adjacency in the geometric representation of $\mathcal{G}_t^p(\tilde{\mathcal{C}})$, with the empirical mean values of $\mathcal{C}_1^p(t, \tilde{\mathcal{C}})$ being the connected component with the largest empirical mean values among all connected components in $\{ \mathcal{C}_j^p(t, \tilde{\mathcal{C}}) \}_{j=1}^{\mathbf{conn}^p(t, \tilde{\mathcal{C}})}$. This is the same as $\mathcal{C}_{\mathrm{top}}$ in the previous discussion. Let's assume all players are playing using a Round Robin Schedule. Denote by $t_{\mathrm{first}}^p$ to the first  \emph{special} round of player $p$ when $\mathbf{conn}^p(t_{\mathrm{first}}^p, \tilde{\mathcal{C}}) > 1$. We start by showing that if we set  $\tilde{C} = 10$, with high probability the condition $\mathbf{conn}^p\left(sK, \tilde{C}\right) \geq 2$ is triggered for all players $p \in [M]$ at a ``special'' Round Robin round $t_{\mathrm{first}}^p$ (multiple of $K$) such that, 

\begin{equation}\label{equation::sandwich_condition_1}
\frac{128}{\max_{i} \Delta_{\sigma_i, \sigma_{i+1}}^2}    \leq \frac{N(t_{\mathrm{first}}^p)}{g(N(t_{\mathrm{first}}^p))} < \frac{1152 }{\max_{i} \Delta_{\sigma_i, \sigma_{i+1}}^2},
\end{equation}
where $N(t_{\mathrm{first}}^p) = N_i(t_{\mathrm{first}}^p ) $ for all $i \in [K]$ and therefore equal to $\frac{t_{\mathrm{first}}^p}{K}$ (since $t_{\mathrm{first}}^p$ is a special round it is a multiple of $K$). To simplify matters we will use the notation $s_{\mathrm{first}}^p$ to denote the ratios $\frac{t_{\mathrm{first}}^p}{K}$. We will use the same notational convention for all ``named'' rounds with subscripts such as $\mathrm{first}, \mathrm{comm}, \mathrm{comm1},$etc.. Let's start by showing that Equation~\ref{equation::sandwich_condition_1} is a direct consequence of the following Lemma,  setting $C = 10$. Recall that $D(n) = \sqrt{ \frac{2g(n)}{n}}$.

\begin{restatable}[Confidence Bands]{lemma}{fundamentallemmaupperlowerboundmain}\label{lemma::first_fundamental_lemma_main}
Let $\widehat{\mu}_{\sigma_i}(t)$ and $\widehat{\mu}_{\sigma_j}(t)$ be empirical estimators $\mu_{\sigma_i}$ and $\mu_{\sigma_j}$, each using $N(t)$ samples. Let $C > 3$ be a constant. If $t$ is the first special round such that 
\begin{equation}\label{equation::elimination_condition}
    \widehat{\mu}_{\sigma_i}(t) - \widehat{\mu}_{\sigma_j}(t) \geq CD(N(t)),
\end{equation}
then, whenever $\mathcal{E}$ holds, we have $  \frac{ \Delta_{\sigma_i, \sigma_j} }{2(C+2)} <  D(N(t)) \leq \frac{ \Delta_{\sigma_i, \sigma_j} }{C-2}$ and
\begin{equation}\label{equation::elimination_consecuence}
 \frac{ 2(C-2)^2}{\Delta_{\sigma_i, \sigma_j}^2} \leq    \frac{ N(t) }{g(N(t))} <  \frac{ 8(C+2)^2}{\Delta_{\sigma_i, \sigma_j}^2}.
\end{equation}
\end{restatable}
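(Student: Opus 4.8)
The plan is to work entirely on the event $\mathcal E$, so that $|\widehat\mu_{\sigma_i}(t) - \mu_{\sigma_i}| \le D(N(t))$ and $|\widehat\mu_{\sigma_j}(t) - \mu_{\sigma_j}| \le D(N(t))$ simultaneously, and to extract the two-sided bound on $D(N(t))$ first, then convert it into the two-sided bound on $N(t)/g(N(t))$ by unwinding the definition $D(n) = \sqrt{2g(n)/n}$. For the \emph{upper} bound on $D(N(t))$: from the triggering condition \eqref{equation::elimination_condition} and the two confidence intervals, $\Delta_{\sigma_i,\sigma_j} = \mu_{\sigma_i} - \mu_{\sigma_j} \ge (\widehat\mu_{\sigma_i}(t) - D(N(t))) - (\widehat\mu_{\sigma_j}(t) + D(N(t))) \ge (C-2)D(N(t))$, which gives $D(N(t)) \le \Delta_{\sigma_i,\sigma_j}/(C-2)$ directly.

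For the \emph{lower} bound on $D(N(t))$ I would use that $t$ is the \emph{first} special round at which \eqref{equation::elimination_condition} holds. Let $t_-$ be the previous special round, so $N(t_-) = N(t) - 1$ (one extra sample per arm per Round Robin cycle) and $\widehat\mu_{\sigma_i}(t_-) - \widehat\mu_{\sigma_j}(t_-) < C D(N(t_-))$. Combining with the confidence intervals at $t_-$, $\Delta_{\sigma_i,\sigma_j} \le (\widehat\mu_{\sigma_i}(t_-) + D(N(t_-))) - (\widehat\mu_{\sigma_j}(t_-) - D(N(t_-))) < (C+2)D(N(t_-))$. Now I need to pass from $D(N(t_-)) = D(N(t)-1)$ back to $D(N(t))$: since $D$ is (essentially) decreasing and $N(t_-) = N(t)-1$, one has $D(N(t_-)) \le \sqrt{2}\, D(N(t))$ for $N(t) \ge 2$ (because $g$ is slowly varying and $N(t)/(N(t)-1) \le 2$), whence $\Delta_{\sigma_i,\sigma_j} < \sqrt 2 (C+2) D(N(t))$, i.e. $D(N(t)) > \Delta_{\sigma_i,\sigma_j}/(\sqrt2(C+2))$. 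This is slightly weaker than the claimed $\Delta_{\sigma_i,\sigma_j}/(2(C+2))$, which is fine since $\sqrt2 < 2$; alternatively one gets the stated constant by being a touch more careful about the $g$ ratio. Squaring the two-sided bound $\Delta_{\sigma_i,\sigma_j}/(2(C+2)) < D(N(t)) \le \Delta_{\sigma_i,\sigma_j}/(C-2)$ and using $D(N(t))^2 = 2g(N(t))/N(t)$ rearranges to exactly \eqref{equation::elimination_consecuence}: $2(C-2)^2/\Delta_{\sigma_i,\sigma_j}^2 \le N(t)/g(N(t)) < 8(C+2)^2/\Delta_{\sigma_i,\sigma_j}^2$.

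The main obstacle is the lower-bound half: one must handle the ``one step before'' comparison carefully, because $D$ is defined on integers and the ratio $g(n)/g(n-1)$, though close to $1$, must be controlled to keep the constants clean; the constant $2$ (rather than $\sqrt2$) in the denominator $2(C+2)$ is presumably a deliberately loose choice absorbing exactly this slack, so I would state the $D(n-1) \le \sqrt2\,D(n)$ bound as a small monotonicity lemma (valid for $n\ge 2$, using $g(n)\le 2g(n-1)$ is far more than needed) and invoke it. A secondary point worth checking is that the $\sqrt2$-blowup step requires $N(t)\ge 2$; if $t$ is the very first special round this needs a trivial separate argument, but in the intended application $N(t)$ is large (of order $1/\Delta^2$ up to log factors), so this edge case is immaterial. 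Everything else is algebraic rearrangement.
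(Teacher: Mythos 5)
Your proposal is correct and follows essentially the same route as the paper: the upper bound $D(N(t))\leq \Delta_{\sigma_i,\sigma_j}/(C-2)$ comes directly from the triggered condition under $\mathcal{E}$, and the lower bound comes from the fact that the previous special round did not trigger, giving $D(N(t)-1) > \Delta_{\sigma_i,\sigma_j}/(C+2)$, combined with a slow-decrease property of $D$ (the paper's Lemma~\ref{lemma::slow_decrease_D}, which is exactly your $D(n-1)\leq \sqrt{2}\,D(n)$ observation, stated with the looser factor $2$), after which squaring $D(n)=\sqrt{2g(n)/n}$ yields Equation~\ref{equation::elimination_consecuence}. Your sharper $\sqrt{2}(C+2)$ constant and the remark about the $N(t)\geq 2$ edge case are consistent with, and immaterial to, the stated bound.
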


The proof of Lemma~\ref{lemma::first_fundamental_lemma_main} is in Appendix~\ref{section::supporting_t0_lemmas}. Equation~\ref{equation::sandwich_condition_1} can be derived by simply plugging in $C = 10$. With the objective of ensuring all the times $t_{\mathrm{first}}^p$ occur around the same time, let's now show that a simple function of $t_{\mathrm{first}}^p$ is always around the vicinity of a power of $9$. A simple number-theoretic implication of Equation~\ref{equation::sandwich_condition_1} is there exists a unique power of nine in the interval $\left[\frac{128}{\max_{i} \Delta_{\sigma_i, \sigma_{i+1}}^2}   , \frac{1152}{\max_{i} \Delta_{\sigma_i, \sigma_{i+1}}^2}    \right) $ (see Lemma~\ref{lemma::number_theoretic_power_of_nine} in Appendix~\ref{section::appendix_detailed_player_1_communication}). %

For all $p \in [M]$ consider the first \emph{special} round $t$ immediately following $t_{\mathrm{first}}^p$ such that $\left\lfloor \frac{t/K}{g(t/K)}\right\rfloor$ is a power of nine. Call this round $t_{\mathrm{comm}}^p$. Let $9^u$ be the unique (and player independent) power of nine in $\left[\frac{128}{\max_{i} \Delta_{\sigma_i, \sigma_{i+1}}^2}   , \frac{1152}{\max_{i} \Delta_{\sigma_i, \sigma_{i+1}}^2}    \right) $. By definition $\left\lfloor \frac{t_{\mathrm{comm}}^p/K}{g(t_{\mathrm{comm}}^p/K)}\right\rfloor \in \{9^u, 9^{u+1} \}$ for all $p \in [M]$ and thus for all players $t_{\mathrm{comm}}^p \in \left\{ \min_{t \in \mathbb{N}} \text{ s.t. } \left\lfloor \frac{t/K}{g(t/K)}\right\rfloor = 9^u,  \min_{t \in \mathbb{N}} \text{ s.t. } \left\lfloor \frac{t/K}{g(t/K)}\right\rfloor = 9^{u+1} \right\}$ is one of two values  provided $t$ is large enough so that $\frac{t/K}{g(t/K)}$ is an increasing function of $t$. %

Instead of initiating communication at round $t_{\mathrm{comm}}^1$, player $1$ will wait until $t_{\mathrm{comm1}}^1$ so that $t_{\mathrm{comm1}}^1 \in\left\{ \min_{t \in \mathbb{N}} \text{ s.t. } \left\lfloor \frac{t/K}{g(t/K)}\right\rfloor = 9^{u+1},  \min_{t \in \mathbb{N}} \text{ s.t. } \left\lfloor \frac{t/K}{g(t/K)}\right\rfloor = 9^{u+2} \right\}$. This is to ensure that no information-receiving players $p \in \{2, \cdots, M\}$ (all of which will start start listening for a communication signal either at round $ \min_{t \in \mathbb{N}} \text{ s.t. } \left\lfloor \frac{t/K}{g(t/K)}\right\rfloor = 9^{u}$ or $ \min_{t \in \mathbb{N}} \text{ s.t. } \left\lfloor \frac{t/K}{g(t/K)}\right\rfloor = 9^{u+1}$) will miss the start of player $1$'s message. The precise description of how player $1$ waits for $t_{\mathrm{first}}^1$ and $t_{\mathrm{comm1}}^1$ (see Algorithm~\ref{algorithm::prepare_start_player1_communicate}) and communicate (see Algorithm~\ref{algorithm::player_1_communicate}) can be found in Appendix~\ref{section::appendix_detailed_player_1_communication}.%

\paragraph{The $\mathrm{ENCODE}$ function.} As we explained at the beginning of Section~\ref{section::algorithm}, communicating the composition of the connected component $\mathcal{C}^1_1(t_{\mathrm{first}}^1, 10)$ can be done by sending a bit string of size $K$, where the bits corresponding to the arms in $\mathcal{C}^1_1(t_{\mathrm{first}}^1, 10)$ equal one and the remaining bits equal zero. We will name this bitstring as $\mathrm{ENCODE}(\mathcal{C}_1^1(t_{\mathrm{first}}^1, 10))$.

In the following section we discuss  how the communication protocol makes use of collisions to enable player $1$ to transmit $\mathrm{ENCODE}(\mathcal{C}_1^1(t_{\mathrm{first}}^1, 10))$, and how it is that it is possible to do so while incurring a manageable regret.

\begin{wrapfigure}{l}{0.5\textwidth} 
        \centering{\includegraphics[width=0.9\linewidth]{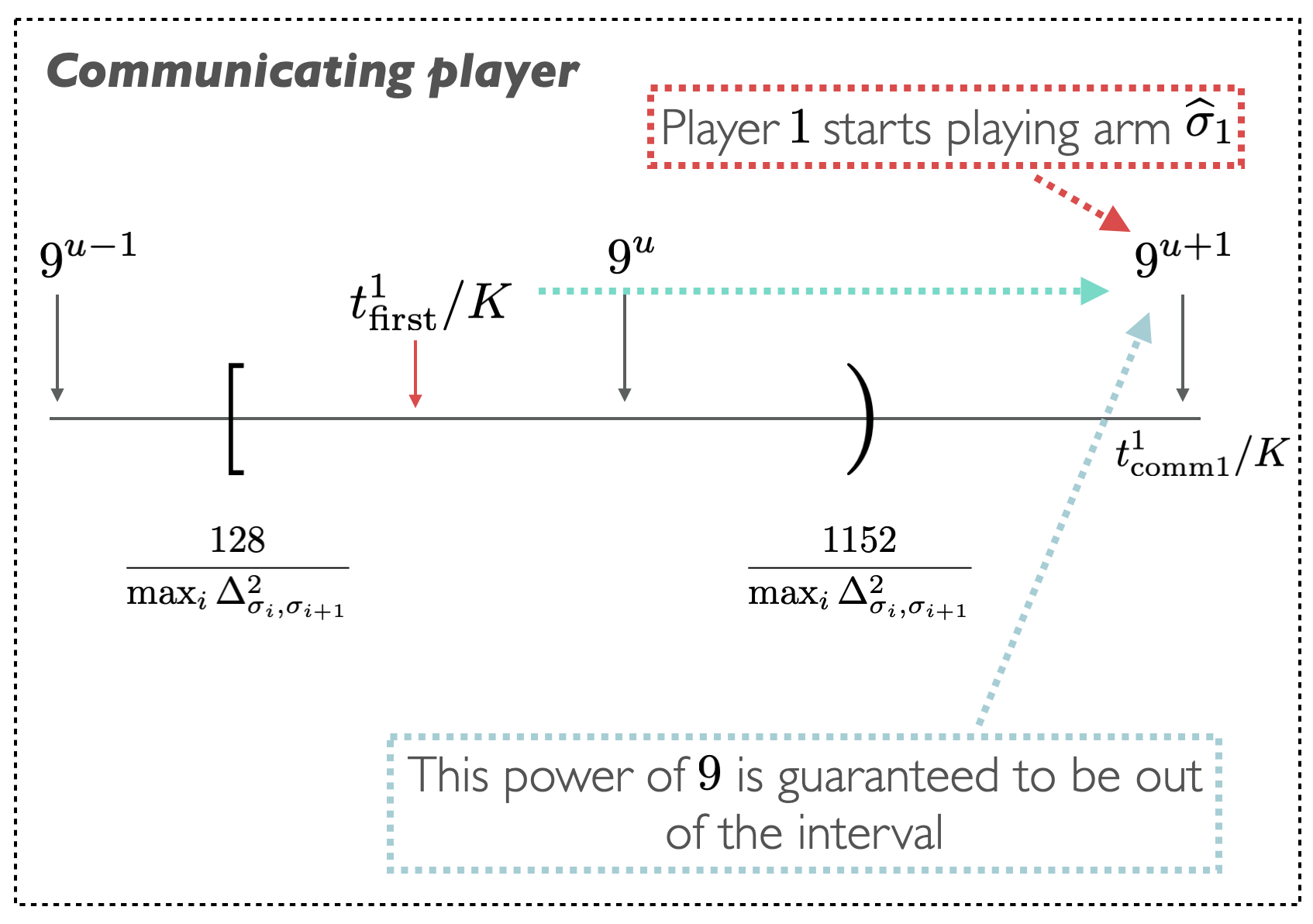}}
        \vspace{-5mm}
        \caption{Illustration of the Communicating Player Strategy (up to $\log$ factors).}
        \label{fig::CommunicatingPlayer}
        \vspace{-3mm}
\end{wrapfigure}

\subsection{Communication Analysis}\label{section::communication_analysis}

The main idea behind our communication algorithm is to make use of the small reward incurred by colliding players to enable communication between players. To gain some intuition for how our communication mechanism works, let's consider a simplified objective. Let's assume player $1$ will be tasked with sending a bit $b\in \{0,1\}$, and every other player $p \in \{ 2, \cdots, M\}$ will be tasked with figuring out the value of $b$. Just as in the preceding discussion imagine that initially the players  pull arms $[K]$ following a Round Robin schedule, and that player $1$ starts to communicate a bit at time $t_{\mathrm{start}}^1$. Let's also assume in this discussion\footnote{In the discussion that follows we will instantiate $t_{\mathrm{start}}^1$ to different rounds including the rounds where each of the players $p \in \{ 2, \cdots, M\}$ use to listen for a ``start of communication'' signal from player $1$ (denoted as $\{t_{\mathrm{listen}}^p\}$) as well as the different message boundary times of $t_{\mathrm{comm1}}^1 + (j-1)Kt_{\mathrm{collision-test}}+1$ for all $j \in \{2, \cdots, |\alpha| +1\}$ whenever player $1$ is sending a message of size $\alpha$. } that all players $p$ have knowledge of $t_{\mathrm{start}}^1$. In Algorithm~\ref{algorithm::prepare_start_player1_communicate} time $t_{\mathrm{start}}^1$ equals $t_{\mathrm{comm1}}^1$. In the following section we will show that for all $p \in [M]$ with high probability $t_{\mathrm{first}}^p \leq t_{\mathrm{comm1}}^1$ therefore in what follows of this section we will assume $t_{\mathrm{first}}^p \leq t_{\mathrm{start}}^1$. 

Let $\widehat{\sigma}_1$ be player $1$'s guess for the optimal arm at time $t_{\mathrm{first}}^1$, i.e., the arm with the largest empirical mean. Let's consider the following communication algorithm. If $b = 1$, starting from round $t_{\mathrm{start}}^1 + 1$ and until round $t_{\mathrm{start}}^1 + Kt_{\mathrm{collision-test}}$, player $1$ will pull arm $\widehat{\sigma}_1$. If instead $b= 0$, player $1$ will continue playing in a Round Robin fashion from round $t_{\mathrm{start}}^1 + 1$ until round $t_{\mathrm{start}}^1 + Kt_{\mathrm{collision-test}}$, thus avoiding collisions. If $b=1$ the average expected reward collected by players $p \in \{ 2, \cdots, M\}$ is $\mu_{\mathrm{collision}}$ when pulling arm $\widehat{\sigma}_1$ during rounds $t_{\mathrm{start}}^1 + 1$ to $t_{\mathrm{start}}^1 + Kt_{\mathrm{collision-test}}$. If $b = 0$ the average reward of pulling arm $\widehat{\sigma}_1$ during rounds  $t_{\mathrm{start}}^1 + 1$ to $t_{\mathrm{start}}^1 + Kt_{\mathrm{collision-test}}$ is $\mu_{\widehat{\sigma}_1}$. Since $t_{\mathrm{start}}^1 \geq t_{\mathrm{first}}^1$ and $t_\mathrm{first}^1$ satisfies the boundary conditions, Equation~\ref{equation::sandwich_condition_1} implies that with high probability,
\begin{equation}\label{equation::condition_t_0_description_main}
  \frac{ N_{\sigma_1}^p(t_{\mathrm{start}}^1) }{g(N_{\sigma_1}^p(t_{\mathrm{start}}^1))} \geq \frac{128}{\max_{i} \Delta_{\sigma_i, \sigma_{i+1}}^2}  \geq \frac{128}{\mu_{\sigma_1}^2}.
  \end{equation}
 Since for all special rounds $t$ the number of pulls is $N^p_i(t) \approx t/K$, Equation~\ref{equation::condition_t_0_description_main} implies that with high probability at time $t_{\mathrm{start}}^1$ player $1$ has pulled each arm $\Omega\left(\frac{1}{\mu^2_{\sigma_1}}\right)$ times up to logarithmic factors. We will now show that the following three statements hold with high probability, 
 
\begin{enumerate}
    \item \textbf{Arm $\widehat{\sigma}_1$ has a large empirical mean for all players. } 
    \begin{itemize}
        \item Arm $\widehat{\sigma}_1 \in \{ i \in [K] \text{ s.t. }\widehat{\mu}_i^p(t_{\mathrm{start}}^1) - \widehat{\mu}_{\mathrm{collision}}^p \geq \frac{1}{2} \max_{j \in [K]} \left(\widehat{\mu}_j^p(t_{\mathrm{start}}^1) - \widehat{\mu}_{\mathrm{collision}}^p  \right)\}$ for all $p \in \{2, \cdots, M\}$ and $\widehat{\mu}^p_{\widehat{\sigma}_1}(t_{\mathrm{start}}^1) -D(N_{\widehat{\sigma}_1}(t_{\mathrm{start}}^1)) \geq \frac{\mu_{\sigma_1}- \mu_{\mathrm{collision}}}{2} + \mu_{\mathrm{collision}}$. 
    \end{itemize} 
    \item \textbf{Arm $\widehat{\sigma}_1$ is comparable to $\sigma_1$. } 
    \begin{itemize}
        \item The witnesses $L_{\widehat{\sigma}_1}^p(t_{\mathrm{start}}^1) $ satisfy, $L_{\widehat{\sigma}_1}^p(t_{\mathrm{start}}^1) \in \left[\frac{3\left(\mu_{\widehat{\sigma}_1} - \mu_{\mathrm{collision}}\right)}{7}, \frac{4\left(\mu_{\widehat{\sigma}_1} - \mu_{\mathrm{collision}}\right)}{7}\right] + \mu_{\mathrm{collision}} $ for all $p \in \{ 2, \cdots, M\}$.
    \end{itemize}
    \item \textbf{When collisions are avoided the mean estimators $\widehat{\mu}^p_{\widehat{\sigma}_1}(t_{\mathrm{start}}^1 +1:  + Kf(\widehat{\Delta}_{\mathrm{collision}}, t_{\mathrm{start}}^1))$ are far from $\mu_{\mathrm{collision}}$. } 
    \begin{itemize}
        \item If $b = 1$, the estimators $\widehat{\mu}^p_{\widehat{\sigma}_1}(t_{\mathrm{start}}^1 +1:  + Kt_{\mathrm{collision-test}})\leq L_{\widehat{\sigma}_1}^p$ for all $p \in [M]$.
        \item If $b = 0$, the estimators $\widehat{\mu}^p_{\widehat{\sigma}_1}(t_{\mathrm{start}}^1:t_{\mathrm{start}}^1 + Kt_{\mathrm{collision-test}}) > L_{\widehat{\sigma}_1}^p$ for all $p \in [M]$.
    \end{itemize}
\end{enumerate}

If players $p \in \{ 2, \cdots, M\}$ have knowledge of $t_{\mathrm{start}}^1$, all they need to do to decode $b$ is to test for all arms with a large empirical mean (as computed by these players up to time $t_{\mathrm{start}}^1$), and compare these values with the empirical means computed during rounds $t_{\mathrm{start}}^1+1$ through $t_{\mathrm{start}}^1 + Kt_{\mathrm{collision-test}} $, when potential collisions may have taken place. If the two estimators are vastly different, players $p \in \{ 2, \cdots, M\}$ can conclude that $b$ equals one. If instead these values are ``similar,'' they can conclude that  $b$ equals zero. We formalize this idea via the following $\mathrm{CollisionTest}$ algorithm. We use the notation $t_{\mathrm{test}}^p$ to denote the rounds when each of the players $p \in \{2,\cdots, M\}$ starts probing for a communication signal from player $1$. If $t_{\mathrm{start}}^1$ is common knowledge, the $\mathrm{CollisionTest}$ algorithm can be instantiated by setting $t_\mathrm{test}^p = t_{\mathrm{start}}^1$. 

\begin{algorithm}[H]
\textbf{Input} $t_{\mathrm{test}}^p$, witnesses $\{ L_i^p(t_{\mathrm{test}}^p)\}_{i \in [K]}$, empirical means $\{ \widehat{\mu}_i^p(t_{\mathrm{test}}^p)\}_{i \in [K]}$, communication round empirical means $\{ \widehat{\mu}_i^p(t_{\mathrm{test1}}^p+1:t_{\mathrm{test1}}^p+Kt_{\mathrm{collision-test}}\}_{i \in [K]}$\\
$\widehat{\mathbf{MaxArms}} \leftarrow \left\{ i \in [K] \text{ s.t. }\widehat{\mu}_i^p(t_{\mathrm{test}}^p) - \widehat{\mu}_{\mathrm{collision}}^p\geq \frac{1}{2} \max_{j \in [K]} \left(\widehat{\mu}_j^p(t_{\mathrm{test}}^p) - \widehat{\mu}_{\mathrm{collision}}^p\right)\right\}$ \\
\If{$\exists i \in \widehat{\mathbf{MaxArms}}$ s.t. $\widehat{\mu}_i^p(t_{\mathrm{test1}}^p+1: t_{\mathrm{test1}}^p + Kt_{\mathrm{collision-test}}< L_i^p(t_{\mathrm{test}}^p) $ }{
\textbf{Return} 1 
}%
\textbf{Return} 0 
    
\caption{$\mathrm{CollisionTest}$ (player $p\neq 1$)}
\label{algorithm::zero_test_intro}
\end{algorithm} 
 
Notice that in contrast with the discussion that preceded it $\mathrm{CollisionTest}$ allows the indices $t_{\mathrm{test}}^p$ and $t_{\mathrm{test1}}^p$ to be different but the length of the communication rounds remains $Kt_{\mathrm{collision-test}}$. Our first result is to show this procedure allows players $p \in \{2, \cdots, M\}$ to recover $b$ with high probability.

\begin{restatable}[One Bit Recovery]{lemma}{exactrecoverycollisiontestonebit}\label{lemma::collision_test_works_main}
Let $A^p = \{ L_i^p(t_{\mathrm{start}}^1)\}_{i \in [K]}$, $B^p = \{ \widehat{\mu}_i^p(t_{\mathrm{start}}^1)\}_{i \in [K]}$ , $C^p = \{ \widehat{\mu}_i^p(t_{\mathrm{start}}^1+1:t_{\mathrm{start}}^1+K t_{\mathrm{collision-test}} )\}_{i \in [K]}$ for all $p$. If the good event $\mathcal{E}$ holds then, with probability at least $1-\frac{\delta}{4K^2}$, all players $p \in \{ 2, \cdots, M\}$ will be able to recover exactly the bit transmitted by player $1$ by calling $\mathrm{CollisionTest}( A^p, B^p, C^p )$.
\end{restatable}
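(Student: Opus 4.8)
The plan is to condition throughout on the good event $\mathcal{E}$ together with the accuracy of the collision estimators $\widehat{\mu}_{\mathrm{collision}}^p$ (which the preprocessing phase fixing $t_{\mathrm{collision-test}}=\Theta(g(\cdot)/\Delta_{\mathrm{collision}}^2)$ guarantees with all but negligible probability), and to observe that on this conditioning the only fresh randomness that matters is the block of rewards collected by the players $p\in\{2,\cdots,M\}$ during the $Kt_{\mathrm{collision-test}}$ communication rounds; I will show this block concentrates well enough that $\mathrm{CollisionTest}(A^p,B^p,C^p)$ returns exactly $b$ for every $p$. In the notation of Algorithm~\ref{algorithm::zero_test_intro} this corresponds to instantiating $t_{\mathrm{test}}^p=t_{\mathrm{test1}}^p=t_{\mathrm{start}}^1$.

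First I would record the facts that hold deterministically on $\mathcal{E}$. Since $t_{\mathrm{start}}^1\ge t_{\mathrm{first}}^1$ and $t_{\mathrm{first}}^1$ satisfies Equation~\ref{equation::sandwich_condition_1}, Equation~\ref{equation::condition_t_0_description_main} gives $N_i^p(t_{\mathrm{start}}^1)/g(N_i^p(t_{\mathrm{start}}^1))\ge 128/\Delta_{\mathrm{collision}}^2$ for every arm $i$, hence $D(N_i^p(t_{\mathrm{start}}^1))\le\Delta_{\mathrm{collision}}/8$; the first two of the three high-probability statements above then give $\widehat{\sigma}_1\in\widehat{\mathbf{MaxArms}}^p$ for all $p$, the bound $\mu_{\widehat{\sigma}_1}-\mu_{\mathrm{collision}}\ge\tfrac{1}{2}\Delta_{\mathrm{collision}}$, and that $L_{\widehat{\sigma}_1}^p$ lies strictly between $\mu_{\mathrm{collision}}+\tfrac{3}{7}(\mu_{\widehat{\sigma}_1}-\mu_{\mathrm{collision}})$ and $\mu_{\mathrm{collision}}+\tfrac{4}{7}(\mu_{\widehat{\sigma}_1}-\mu_{\mathrm{collision}})$. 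Next I would upgrade this to a uniform lower bound on the gaps of all ``large'' arms: the definition of $\widehat{\mathbf{MaxArms}}^p$ forces $\widehat{\mu}_i^p(t_{\mathrm{start}}^1)-\widehat{\mu}_{\mathrm{collision}}^p\ge\tfrac{1}{2}(\widehat{\mu}_{\widehat{\sigma}_1}^p(t_{\mathrm{start}}^1)-\widehat{\mu}_{\mathrm{collision}}^p)$, so combining this with the facts above, the $D(N_i^p)$ bound, and the accuracy of $\widehat{\mu}_{\mathrm{collision}}^p$ yields an absolute constant $c_0>0$ with $\mu_i-\mu_{\mathrm{collision}}\ge c_0\Delta_{\mathrm{collision}}$ for every $i\in\widehat{\mathbf{MaxArms}}^p$; since such arms then have an accurate empirical profile at $t_{\mathrm{start}}^1$, the same reasoning that produces the sandwich for $L_{\widehat{\sigma}_1}^p$ gives $L_i^p\le\mu_{\mathrm{collision}}+\tfrac{4}{7}(\mu_i-\mu_{\mathrm{collision}})$ for every $i\in\widehat{\mathbf{MaxArms}}^p$.

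The crux is then a single Hoeffding-plus-union-bound step. The communication window is rounds $t_{\mathrm{start}}^1+1,\cdots,t_{\mathrm{start}}^1+Kt_{\mathrm{collision-test}}$; because $t_{\mathrm{start}}^1$ is a special round (a multiple of $K$), this window is exactly $t_{\mathrm{collision-test}}$ complete Round-Robin cycles, so every player $p\in\{2,\cdots,M\}$ pulls every arm exactly $t_{\mathrm{collision-test}}$ times. The players $p\ne 1$ never collide with one another, and player $1$ either keeps running Round Robin ($b=0$: no collisions at all) or stays on $\widehat{\sigma}_1$ for the whole window ($b=1$); hence $\widehat{\mu}_i^p(t_{\mathrm{start}}^1+1:t_{\mathrm{start}}^1+Kt_{\mathrm{collision-test}})$ is an average of $t_{\mathrm{collision-test}}$ i.i.d.\ samples of mean $\mu_{\mathrm{collision}}$ when $b=1$ and $i=\widehat{\sigma}_1$, and of mean $\mu_i$ in every other case. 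By Assumption~\ref{assumption::bounded_support} and Hoeffding, with the constant inside $t_{\mathrm{collision-test}}=\Theta(g(\cdot)/\Delta_{\mathrm{collision}}^2)$ taken large enough, a union bound over the at most $MK$ relevant (arm, player) pairs shows that with probability at least $1-\delta/(4K^2)$ all of these averages are within $\epsilon_0\Delta_{\mathrm{collision}}$ of their means simultaneously, where $\epsilon_0$ is a fixed small constant chosen with $\epsilon_0<\tfrac{3}{7}c_0$ and $\mu_{\mathrm{collision}}+\epsilon_0\Delta_{\mathrm{collision}}<\mu_{\mathrm{collision}}+\tfrac{3}{7}(\mu_{\widehat{\sigma}_1}-\mu_{\mathrm{collision}})$. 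On the intersection of $\mathcal{E}$ with this event I would read off the test: if $b=1$ then $\widehat{\mu}_{\widehat{\sigma}_1}^p(t_{\mathrm{start}}^1+1:t_{\mathrm{start}}^1+Kt_{\mathrm{collision-test}})\le\mu_{\mathrm{collision}}+\epsilon_0\Delta_{\mathrm{collision}}<L_{\widehat{\sigma}_1}^p$ while $\widehat{\sigma}_1\in\widehat{\mathbf{MaxArms}}^p$, so the $\mathrm{if}$ in Algorithm~\ref{algorithm::zero_test_intro} fires and $\mathrm{CollisionTest}$ returns $1$; if $b=0$ then for every $i\in\widehat{\mathbf{MaxArms}}^p$ we get $\widehat{\mu}_i^p(t_{\mathrm{start}}^1+1:t_{\mathrm{start}}^1+Kt_{\mathrm{collision-test}})\ge\mu_i-\epsilon_0\Delta_{\mathrm{collision}}\ge\mu_i-(\epsilon_0/c_0)(\mu_i-\mu_{\mathrm{collision}})>\mu_{\mathrm{collision}}+\tfrac{4}{7}(\mu_i-\mu_{\mathrm{collision}})\ge L_i^p$, so the $\mathrm{if}$ never fires and $\mathrm{CollisionTest}$ returns $0$. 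In both cases $\mathrm{CollisionTest}(A^p,B^p,C^p)=b$ for all $p\in\{2,\cdots,M\}$, as claimed.

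I expect the only genuine difficulty to be pinning down all the constants consistently: the threshold $\tfrac{1}{2}$ in $\widehat{\mathbf{MaxArms}}$, the window $[\tfrac{3}{7},\tfrac{4}{7}]$ defining $L_i^p$, the factor-$2$ slack between $\widehat{\sigma}_1$ and $\sigma_1$, the $\Delta_{\mathrm{collision}}/8$ bound on $D(N_i^p)$, the accuracy parameter of $\widehat{\mu}_{\mathrm{collision}}^p$, and the constant inside $t_{\mathrm{collision-test}}$ must all fit together so that the two strict inequalities above hold simultaneously and the Hoeffding union bound lands below $\delta/(4K^2)$ (the three statements quoted above were formulated with exactly these constants precisely so that this closes). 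Two smaller things to watch: the communication-window averages in $C^p$ are fresh randomness not covered by $\mathcal{E}$, so the union bound must be over genuinely new Hoeffding events; and the ``exactly $t_{\mathrm{collision-test}}$ samples of each type'' count relies on $t_{\mathrm{start}}^1$ being a special round, so that the window breaks into whole Round-Robin cycles.
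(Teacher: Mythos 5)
Your proposal follows essentially the same route as the paper's proof: on $\mathcal{E}$ you first establish the structural facts about $\widehat{\sigma}_1$ and the witnesses (these are the paper's Lemmas~\ref{lemma::arm_sigma_1_large_empirical_mean_unknown_collision} and~\ref{lemma::arm_hat_sigma_one_comparable_sigma_one_unknown_collision}, proved exactly from the bound $D(N_i^p(t_{\mathrm{start}}^1))\le \Delta_{\mathrm{collision}}/8$ coming from Equation~\ref{equation::condition_t_0_description_main}), and then you apply fresh concentration at accuracy of order $\Delta_{\mathrm{collision}}$ to the window averages over the $Kt_{\mathrm{collision-test}}$ communication rounds and union bound, which is precisely the Hoeffding-based mechanism the paper invokes through $t_{\mathrm{collision-test}}=\Theta(\log(1/\delta)/\Delta^2_{\mathrm{collision}})$. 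Your explicit treatment of \emph{all} arms of $\widehat{\mathbf{MaxArms}}^p$ in the $b=0$ case is actually more careful than the paper's write-up, which states the third property only for $\widehat{\sigma}_1$ even though the test inspects the whole set.

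One intermediate claim is off at the level of constants, although the argument survives: you assert that every $i\in\widehat{\mathbf{MaxArms}}^p$ satisfies $L_i^p\le \mu_{\mathrm{collision}}+\tfrac{4}{7}(\mu_i-\mu_{\mathrm{collision}})$ ``by the same reasoning as for $\widehat{\sigma}_1$''. That transfer fails because the errors $D(N_i^p(t_{\mathrm{start}}^1))$ and the accuracy $D'$ of $\widehat{\mu}^p_{\mathrm{collision}}$ are calibrated to $\Delta_{\mathrm{collision}}$ (both of order $\Delta_{\mathrm{collision}}/8$), not to $\mu_i-\mu_{\mathrm{collision}}$, and for a generic arm of $\widehat{\mathbf{MaxArms}}^p$ the true gap $\mu_i-\mu_{\mathrm{collision}}$ may be only a small constant fraction (about $\Delta_{\mathrm{collision}}/8$ with your bounds) of $\Delta_{\mathrm{collision}}$; then the relative position of $L_i^p$ inside $[\mu_{\mathrm{collision}},\mu_i]$ can exceed $4/7$, so the multiplicative sandwich does not hold uniformly. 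The fix is to bound the margin additively: since $L_i^p=\tfrac{1}{2}\bigl(\widehat{\mu}_i^p(t_{\mathrm{start}}^1)+\widehat{\mu}^p_{\mathrm{collision}}\bigr)$ and $\widehat{\mu}_i^p-\widehat{\mu}^p_{\mathrm{collision}}\ge \tfrac{1}{2}\bigl(\widehat{\mu}^p_{\sigma_1}-\widehat{\mu}^p_{\mathrm{collision}}\bigr)\ge\tfrac{1}{2}\bigl(\Delta_{\mathrm{collision}}-D-D'\bigr)$ on $\mathcal{E}$, one gets $\mu_i-L_i^p\ge \tfrac{1}{2}\bigl(\widehat{\mu}_i^p-\widehat{\mu}^p_{\mathrm{collision}}\bigr)-D\ge \tfrac{1}{4}\bigl(\Delta_{\mathrm{collision}}-D-D'\bigr)-D\ge \Delta_{\mathrm{collision}}/16$, which is all your Hoeffding step needs (choose $\epsilon_0<1/16$ rather than tying it to $\tfrac{3}{7}c_0$). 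With that substitution your proof is correct and coincides with the paper's argument.
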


The complete proof of Lemma~\ref{lemma::zero_test_works_main} and an in-depth discussion on the $\mathrm{CollisionTest}$ concept can be found in Appendices~\ref{section::communication_analysis_discussion_supporting_results} and~\ref{section::unknown_collision_reward_appendix}. The logic behind why the $\mathrm{CollisionTest}$ works is as follows. At time $t_{\mathrm{test}}^p$ (in our case equal to $t_{\mathrm{comm1}}^1$ all players have access to a constant accuracy estimator for  $\max_i \Delta_{\sigma_i, \sigma_{i+1}}$ (i.e. the empirical gap between the arms at the boundary of the two connected components of $\mathcal{C}_1^1(t_{\mathrm{first}}^1, 10)$). Therefore they can estimate $\mu_{\sigma_1}$ up to a $\Delta_{\mathrm{collision}}$-accuracy. By Hoeffding inequality testing at an accuracy of $c\Delta_{\mathrm{collision}} $ the mean of an arm $\widehat{\sigma}_1$ satisfying $\mu_{\widehat{\sigma}_1} > \mu_{\mathrm{collision}} + c'( \mu_{\sigma_1}-\mu_{\mathrm{collision}})$ with $c' > c$ only requires $\widetilde{\mathcal{O}}\left( \frac{1}{\Delta^2_{\mathrm{collision}}}\right)$ samples (up to log factors and each with regret at most $\Delta_{\mathrm{collision}}$), thus incurring regret of only $\mathcal{O}(1/\Delta_{\mathrm{collision}}) $ (up to log factors).

\subsection{The Listening Players}\label{section::listening_players}

We have all the necessary pieces in place to spell out the details of the listening protocol used by players $\{2, \cdots, M\}$. The listening players shall wait until the first special round such that $\mathbf{conn}^p\left(sK, 10\right) \geq 2$. After this round has passed the players will start listening for an incoming signal from player $1$ during subsequent special rounds  $t = Ks$ such that $\left\lfloor \frac{s}{g(s)} \right\rfloor$ is a power of nine. The precise description of how to initialize the listening protocol (see Algorithm~\ref{algorithm::prepare_and_start_listening}) can be found in Appendix~\ref{section::listening_players_discussion_supporting_results}. If the player detects a signal, she will start listening for a size $K$ bit string using the $\mathrm{DECODE}$ function (see Algorithm~\ref{algorithm::decode} in Appendix~\ref{section::listening_players_discussion_supporting_results} for a detailed explanation). $\mathrm{DECODE}$ consists of $K$ consecutive $\mathrm{CollisionTest}$ calls. 

Combining these communication and listening protocols for player $1$ and players $\{2, \cdots, M\}$ respectively we can guarantee that with high probability the value of $t_{\mathrm{listen}}^p$ passed down to the $\mathrm{DECODE}$ function satisfies $t_{\mathrm{listen}}^p = t_{\mathrm{comm1}}^1$ and that the listening players $\{2, \cdots, M\}$ will be able to decode the message handed down by player $1$. 

\begin{restatable}[Message Recovery]{lemma}{multibitrecovery}\label{lemma::listen_agrees_comm}
If $\mathcal{E}$ holds then with probability at least $1-\frac{\delta}{K}$ for all $p \in [M]$ the value of $t_{\mathrm{listen}}^p$ sent to the $\mathrm{DECODE}$ function of Algorithm~\ref{algorithm::decode} satisfies $t_{\mathrm{listen}}^p = t_{\mathrm{comm1}}^1$ and $\mathrm{DECODE}$ will recover the exact $K-$bit message sent by player $1$.
\end{restatable}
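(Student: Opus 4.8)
The plan is to chain together three facts already in hand: the two-sided ``sandwich'' estimate of Equation~\ref{equation::sandwich_condition_1} (which follows from Lemma~\ref{lemma::first_fundamental_lemma_main} with $C=10$), pinning down where the random special rounds $t_{\mathrm{first}}^p$ can land; the number-theoretic observation (Lemma~\ref{lemma::number_theoretic_power_of_nine}) that there is a \emph{unique} power of nine $9^u$ in $\big[\,128/\max_i\Delta_{\sigma_i,\sigma_{i+1}}^2,\ 1152/\max_i\Delta_{\sigma_i,\sigma_{i+1}}^2\big)$; and the One Bit Recovery guarantee of Lemma~\ref{lemma::collision_test_works_main}, which I will invoke once per probing $\mathrm{CollisionTest}$ and once per message block. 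Throughout I condition on $\mathcal{E}$, so that all confidence bands, hence Equations~\ref{equation::sandwich_condition_1} and~\ref{equation::condition_t_0_description_main}, hold deterministically and the only remaining randomness sits in the $\mathrm{CollisionTest}$ outcomes.

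\textbf{Step 1 (the named rounds line up with three consecutive powers of nine).} On $\mathcal{E}$ the $10$-blowup connectivity graph of every player eventually has at least two components (since $\max_i\Delta_{\sigma_i,\sigma_{i+1}}>0$ and $D(n)\to0$), so $t_{\mathrm{first}}^p$ is well defined for all $p$. By Equation~\ref{equation::sandwich_condition_1} the ratio $N(t_{\mathrm{first}}^p)/g(N(t_{\mathrm{first}}^p))$ lies in the interval containing the unique power $9^u$; since $s\mapsto s/g(s)$ is increasing past a fixed threshold, the first special round after $t_{\mathrm{first}}^p$ whose floor of this ratio is a power of nine, namely $t_{\mathrm{comm}}^p$, has that floor equal to $9^u$ or $9^{u+1}$. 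Writing $r_k := \min\{t\in\N : \lfloor (t/K)/g(t/K)\rfloor = 9^k\}$ (player-independent once $s/g(s)$ is monotone), every listening player probes for the first time at $r_u$ or $r_{u+1}$, whereas player $1$, by construction of Algorithm~\ref{algorithm::prepare_start_player1_communicate}, defers to $t_{\mathrm{comm1}}^1\in\{r_{u+1},r_{u+2}\}$. In particular $t_{\mathrm{first}}^p\le t_{\mathrm{comm1}}^1$ for all $p$, so each listening player makes at most three probing $\mathrm{CollisionTest}$ calls, at rounds drawn from $\{r_u,r_{u+1},r_{u+2}\}$, before reaching $t_{\mathrm{comm1}}^1$.

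\textbf{Steps 2 and 3 (correct probing, then decoding, then union bound).} At a probing round $r\in\{r_u,r_{u+1}\}$ strictly earlier than $t_{\mathrm{comm1}}^1$, player $1$ is still on the Round Robin schedule over $[K]$, hence avoids $\widehat{\sigma}_1$ throughout $[r+1,\,r+Kt_{\mathrm{collision-test}}]$ and behaves exactly as a player transmitting $b=0$; applying Lemma~\ref{lemma::collision_test_works_main} with $t_{\mathrm{start}}^1=r$ (the estimates at $r\ge t_{\mathrm{first}}^p$ already satisfy Equation~\ref{equation::condition_t_0_description_main}) shows that with probability $1-\delta/(4K^2)$ every listener's $\mathrm{CollisionTest}$ returns $0$, so no one falsely detects a start signal. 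At $r=t_{\mathrm{comm1}}^1$, player $1$ pulls $\widehat{\sigma}_1$ for the whole block of $Kt_{\mathrm{collision-test}}$ rounds, behaving as a player sending $b=1$, and the same lemma gives detection; hence $t_{\mathrm{listen}}^p=t_{\mathrm{comm1}}^1$ for all $p$. With the alignment established, $\mathrm{DECODE}$ consists of $K$ consecutive $\mathrm{CollisionTest}$ calls over the blocks $\big[\,t_{\mathrm{comm1}}^1+(j-1)Kt_{\mathrm{collision-test}}+1,\ t_{\mathrm{comm1}}^1+jKt_{\mathrm{collision-test}}\,\big]$ for $j=1,\dots,K$; in block $j$ each listener (still Round Robin over $[K]$) pulls $\widehat{\sigma}_1$ exactly $t_{\mathrm{collision-test}}$ times, all colliding with player $1$ iff the $j$-th bit is one, so Lemma~\ref{lemma::collision_test_works_main} applied once per block recovers that bit (the estimators only sharpen, as players $2,\dots,M$ keep accumulating Round Robin samples and the bound $N/g(N)\ge 128/\mu_{\sigma_1}^2$ of Equation~\ref{equation::condition_t_0_description_main} persists). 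There are at most $K+3$ $\mathrm{CollisionTest}$ invocations in total, each failing with probability at most $\delta/(4K^2)$ on top of $\mathcal{E}$, and a union bound yields overall failure probability at most $(K+3)\delta/(4K^2)\le\delta/K$.

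\textbf{Main obstacle.} The delicate step is Step~1: guaranteeing that the earliest listen round and $t_{\mathrm{comm1}}^1$ are never separated by more than $9^u,9^{u+1},9^{u+2}$ relies on Equation~\ref{equation::sandwich_condition_1} being tight enough that its defining interval holds exactly one power of nine, on verifying that $s\mapsto s/g(s)$ is genuinely monotone across the regime in play (so the $r_k$ are well defined and player-independent), and on using the parity restriction in the footnote (that $\lfloor(s/K-1)/g(s/K-1)\rfloor$ is not a power of nine) to prevent a listener from probing two consecutive special rounds and thereby desynchronizing the block structure. If any of these fails, player $1$ could begin transmitting before some listener is ready, or a listener could exhaust its probes without ever detecting the start signal, and the alignment $t_{\mathrm{listen}}^p=t_{\mathrm{comm1}}^1$ would collapse.
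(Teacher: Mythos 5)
Your proposal is correct and follows essentially the same route as the paper's proof: the sandwich condition of Equation~\ref{equation::sandwich_condition_1} together with the uniqueness of the power of nine pins the listeners' probe rounds and $t_{\mathrm{comm1}}^1$ to at most three consecutive ``power of nine'' special rounds, and then the one-bit recovery guarantee (each test succeeding with probability at least $1-\frac{\delta}{4K^2}$) is applied via a union bound over at most $K+3\leq 4K$ tests to get failure probability at most $\frac{\delta}{K}$. Your extra detail in Step~2 about why pre-$t_{\mathrm{comm1}}^1$ probes return $0$ (player $1$ still on Round Robin, i.e.\ the $b=0$ case of the one-bit lemma) is implicit in the paper's argument, so there is no substantive difference.
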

The proof of Lemma~\ref{lemma::listen_agrees_comm} is in Appendix~\ref{section::listening_players_discussion_supporting_results}. The discussion is complemented by Appendix~\ref{section::agreeing_t_collision_test}.

\begin{figure}%
        \centering{\includegraphics[width=0.45\linewidth]{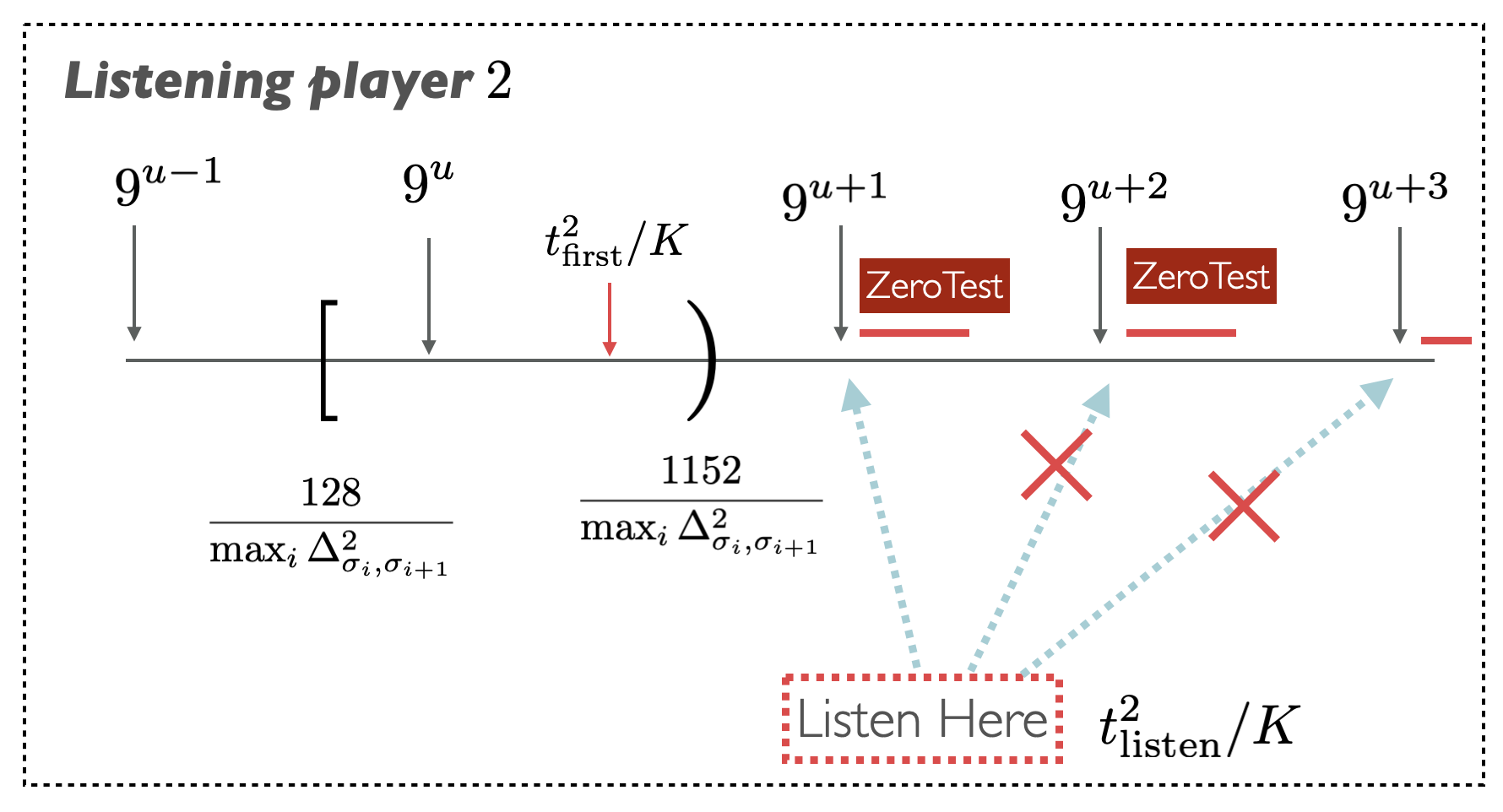}}
        \centering{\includegraphics[width=0.45\linewidth]{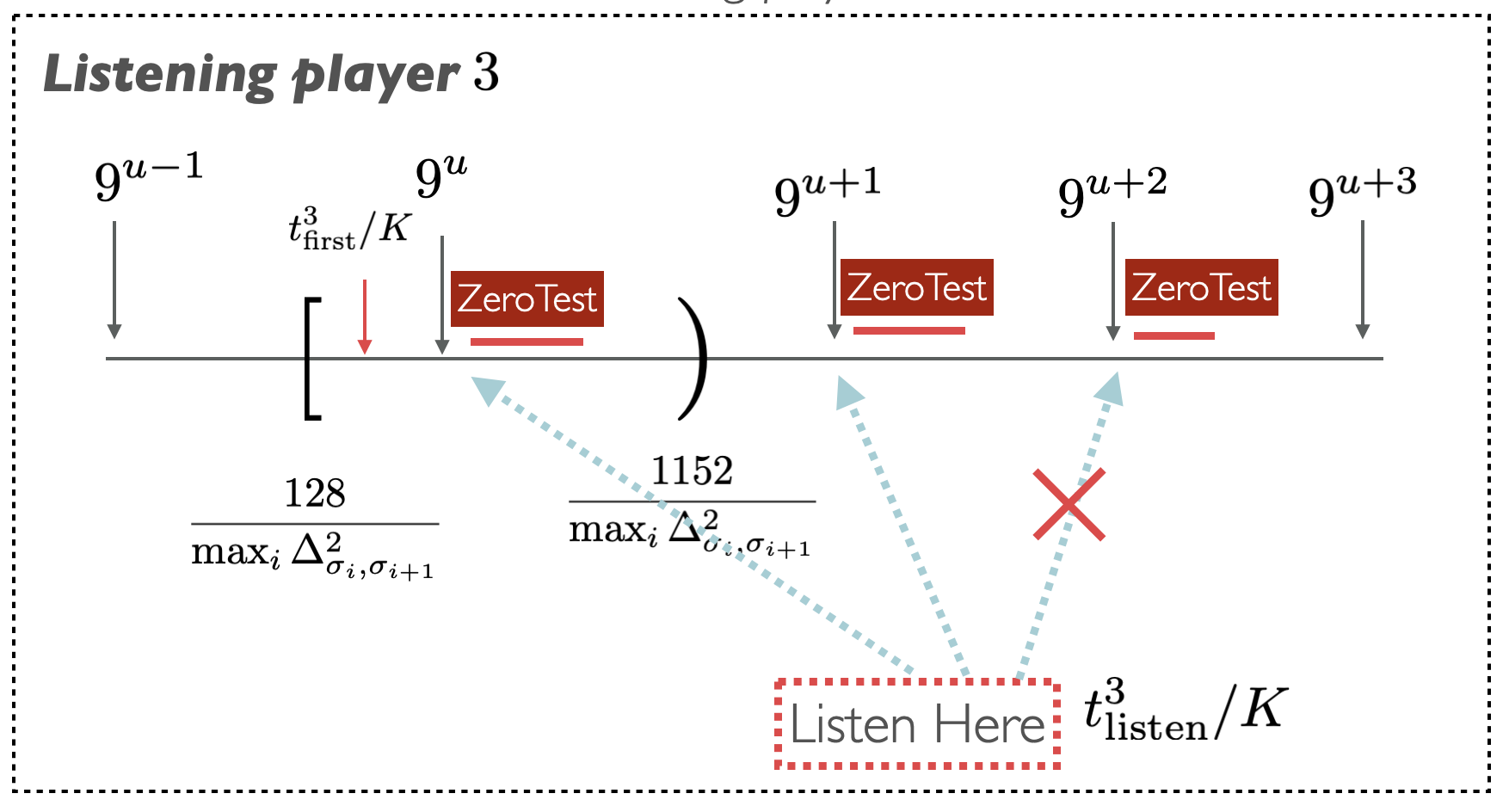}}
        \caption{Illustration of the Listening Player Strategy (up to $\log$ factors) depending on what side $\frac{t_{\mathrm{first}}^p/K}{g(t_{\mathrm{first}}^p/K)}$ lies with respect to the power of nine in \begin{small}$\left[\frac{128}{\max_{i} \Delta_{\sigma_i, \sigma_{i+1}}^2}   , \frac{1152}{\max_{i} \Delta_{\sigma_i, \sigma_{i+1}}^2}    \right) $\end{small} .}
        \label{fig::ListeningPlayers}
        \vspace{-10mm}
\end{figure}

\subsection{Bounding the Regret}\label{section::bounding_regret_main}

To finalize our regret analysis we first bound the regret incurred by the players during the first communication round and player $1$ has communicated to all others the composition of $\mathcal{C}_1^1(t_{\mathrm{first}}^1, 10)$. We denote this quantity as $\mathrm{FirstPartitionRegret}([K], [M])$. We proceed to bound this quantity by splitting regret in two, the $\mathrm{CollisionRegret}([K], [M])$ incurred by the players during communication induced by player $1$ pulling arm $\widehat{\sigma}_1$ and the $\mathrm{RoundRobinRegret}([K], [M])$ incurred by the players during the steps these followed a Round Robin schedule.

The length of each bit communication round is of order $t_{\mathrm{collision-test}} \approx \frac{\log\left(1/{\delta}\right)}{\Delta^2_{\mathrm{collision}}}$. During this communication round the number of collisions between player $1$ and any other player is at most $(K+1)\times t_{\mathrm{collision-test}} $ because the total number of bits required to communicate $\mathrm{ENCODE}(\mathcal{C}_1^1(t_{\mathrm{first}}^1, 10))$ equals $K+1$, one ON bit to signal the start of communication and $K$ to transmit $\mathcal{C}_1^1(t_{\mathrm{first}}^1, 10)$. Since each collision incurs in at most regret $\Delta_{\mathrm{collision}}$, $\mathrm{CollisionRegret}([K], [M]) \leq \widetilde{\mathcal{O}}\left( \frac{KM \log\left( {t_{\mathrm{comm1}}^1}/{\delta}\right)}{\Delta_{\mathrm{collision}} }\right)$. The precise statement of this bound is in Corollary~\ref{corollary::collision_regret} in Section~\ref{section::bounding_first_partition_regret}.  

To bound the $\mathrm{RoundRobinRegret}([K], [M])$ note the regret during a single Round Robin round (i.e. $K$ steps of all $M$ players) is upper bounded by $M(K-M)K\max_i\Delta_{\sigma_i, \sigma_{i+1}}$. The total duration of the whole communication protocol for $\mathcal{C}_1^1(t_{\mathrm{first}}^1, 10)$ is of the order of $\frac{\log\left( {t_{\mathrm{first}}^1}/{\delta}\right)}{\max_i \Delta^2_{\sigma_i, \sigma_{i+1}}}$ and therefore $\mathrm{RoundRobinRegret}([K], [M])\leq \widetilde{\mathcal{O}}\left(  \frac{M(K-M)K \log\left( t_{\mathrm{first}}^1/\delta\right)}{\max_i \Delta_{\sigma_i, \sigma_{i+1}}} \right) $.  The precise statement of this bound can be found in Corollary~\ref{corollary::roundrobin_regret} in Section~\ref{section::bounding_first_partition_regret}. Combining these results and using $\Delta_{\mathrm{collision}} \geq \max_{i} \Delta_{\sigma_i, \sigma_{i+1}}$ we conclude the regret to communicate $\mathcal{C}_1^1(t_{\mathrm{first}}^1, 10)$ is $\widetilde{\mathcal{O}}\left(  \frac{M(K-M)K \log\left( t_{\mathrm{first}}^1/\delta\right)}{\max_i \Delta_{\sigma_i, \sigma_{i+1}}} \right)$.

We can now put together the whole algorithm and prove Theorem~\ref{theorem::main}. As we explained at the beginning of Section~\ref{section::algorithm},  after communication of $\mathcal{C}_1^1(t_{\mathrm{first}}^1, 10)$ and all players have learned what arms belong to $\mathcal{C}_1^1(t_{\mathrm{first}}^1, 10)$, they will split themselves into one or two groups. If $| \mathcal{C}_1^1(t_{\mathrm{first}}^1, 10)| \geq M$, from then on all players will pull arms from $\mathcal{C}_1^1(t_{\mathrm{first}}^1, 10)$ exclusively. If $|\mathcal{C}_1^1(t_{\mathrm{first}}^1, 10)| < M$, then players $1$ to $|\mathcal{C}_1^1(t_{\mathrm{first}}^1, 10)|$ will pull arms from $\mathcal{C}_1^1(t_{\mathrm{first}}^1, 10)$ following a Round Robin schedule thereafter (these players have identified a set of arms to exploit) while the remaining players will play arms from $[K] \backslash \mathcal{C}_1^1(t_{\mathrm{first}}^1, 10)$ following the communication protocol we have outlined in the previous section but now restricted to a set of arms of size smaller than $M$. In both cases, the problem has been reduced to a smaller Cooperative Multi-Player Multi-Armed problem. The players then re-index the arm labels\footnote{For example the players assigned to $\mathcal{C}_1$ will re-index these arms so they are labeled $1$ to $|\mathcal{C}_1|$ by switching the smallest label in $\mathcal{C}_1$ to a $1$, the second smallest to a $2$, and so on.} and start playing following a Round Robin schedule within their assigned group and the smallest indexed player in any group will become the communicating player.  %
 We call the function that splits and iterates over smaller Cooperative Multi Armed Multi Player problems the $\mathrm{RECURSE}$ function.  It is defined formally in Algorithm~\ref{algorithm::recurse} in Appendix~\ref{section::analyzing_RECURSE_function}.

The $\mathrm{RECURSE}$ function will converge to a steady state where the players are only pulling the top $M$ arms as soon as $\Delta_{\sigma_M, \sigma_{M+1}}$ is the basis of the communicated arm partition. 

It can be shown the gaps recursed over are always in (roughly) decreasing order. Thus, each call to $\mathrm{RECURSE}$ will incur regret upper bounded by $\widetilde{\mathcal{O}}\left(  \frac{M(K-M)K \log\left( t_{\mathrm{first}}^1/\delta\right)}{\max_i \Delta_{\sigma_M, \sigma_{M+1}}} \right)$. Finally, since there can be at most $K-1$ calls to the $\mathrm{RECURSE}$ function, setting $\delta = \frac{1}{T}$ we conclude,
\begin{equation*}
    \mathcal{R}_T \leq   \widetilde{ \mathcal{O}}\left(\frac{M(K-M)K^2\log(T)}{\Delta_{\sigma_M, \sigma_{M+1}}} \right) ,
\end{equation*}
with probability at least $1-T$. This finalizes the proof of Theorem~\ref{theorem::main}. The detailed explanation of each of these steps can be found in Appendices~\ref{section::bounding_regret_discussion_supporting_results} and~\ref{section::agreeing_t_collision_test}. The discussion on how to compute and agree on $t_{\mathrm{collison-test}}$ for all players can also be found in Appendix~\ref{section::agreeing_t_collision_test} and makes use of the same synchronization and communication ideas we have explained here. The regret incurred during that round is of order $\mathcal{O}\left(\frac{M(K-M)K^2\log(T)}{\Delta_{\mathrm{collision}}}\right)$.

\section{Conclusion}

We have proposed a series of algorithms for the Multi-Player Multi-Armed bandit problem. We achieve a regret guarantee logarithmic in the number of rounds and inversely proportional to the sub-optimality gap. In contrast with previous work, we make no assumptions regarding the player's knowledge about the nature of the reward vector (such as assuming a known lower bound for the minimum reward value) and even the collision reward. This paper finally solves the no-sensing multi-player multi-armed bandit problem in its entirety when collisions are allowed. We believe the techniques we have introduced in this work (including for example the Bernstein $\mathrm{ZeroTest}$ for the zero collision reward setting discussed in Appendix~\ref{section::detailed_discussion_appendix}) can be used to prove sharp instance dependent guarantees in many decentralized bandit problems such as decentralized matching markets (see~\citet{liu2021bandit}). We hope future research is also spent on simplifying our algorithmic implementations.

\bibliography{ref}

\appendix
\newpage

\tableofcontents
\addtocontents{toc}{\protect\setcounter{tocdepth}{2}}
\clearpage

\section{Guide to Appendix}

The Appendix will be split in different sections. In Section~\ref{section::detailed_discussion_appendix} we include a much more detailed discussion of the different algorithmic components sketched out in the main paper with an emphasis in the zero collision reward case. In Section~\ref{section::extensions} we extend our results to a variety of settings more general than those presented in the main. The remaining Appendix sections contain missing proofs and supporting technical results.

\section{Detailed Discussion of the zero collision reward setting }\label{section::detailed_discussion_appendix}

In this section we will flesh out the details of the different algorithmic components to derive our instance dependent rates for the Multi-Player, Multi-Armed bandit problem with communication through collisions. Throughout the proofs we will make use of the following ``Boundary conditions". In this section the following function will be useful,

\begin{itemize}
  \item \textbf{Zero Collision Reward Length of a communication round.} $f : \mathbb{N} \rightarrow \mathbb{R}_{+}$ such that $f(n)$ is the first integer such that $\frac{f(n)}{B\left(f(n), \frac{\delta}{4K^2M}\right)} \geq 24 \sqrt{\frac{n/K}{2g(n/K)} }$ where $B(n, \delta') = 2\ln \ln(2n) + \ln \frac{5.2}{\delta'}$. %
\end{itemize}

\paragraph{Boundary Conditions} We will consider the following four boundary conditions. 
\begin{enumerate}
    \item  Let $t_{\mathrm{boundary1}} $ be such that $\frac{f(n)-1}{B(f(n)-1, \frac{\delta}{4K^2 M})} \geq 1$ for all $n \geq t_{\mathrm{boundary1}}$.
    \item  Let $s_{\mathrm{boundary2}}$ be such that $\frac{\partial D(s)}{\partial s} \leq 0$ for all $s \geq s_{\mathrm{boundary2}}$ ($s_{\mathrm{boundary2}}$ can also be defined in terms of $g$ as $\frac{\partial g(s)}{\partial s} \geq 0$ for all $s \geq s_{\mathrm{boundary2}}$). 
    \item Assume $\delta \leq \frac{1}{162}$so that $4s_{\mathrm{boundary2}}^2 MKL/\delta \geq 162$.
    \item  Let $t_{\mathrm{boundary3}}$ be such that $f(n) \leq  n/K $ for all $n \geq t_{\mathrm{boundary3}}$.
\end{enumerate}
Let $t_{\mathrm{firstBoundary}}$ be the first special round (i.e. $t_{\mathrm{firstBoundary}}$ is a multiple of $K$) such that $$t_{\mathrm{firstBoundary}}\geq \max\left( t_{\mathrm{boundary1}}, K s_{\mathrm{boundary2}}, t_{\mathrm{boundary3}} \right).$$

The analysis sketch for the regret rates achieved by our algorithm presented in Section~\ref{section::algorithm} is only satisfied for timesteps larger than $t_{\mathrm{firstBoundary}}$, these `boundary conditions' appear in statements such as 

```By definition $\left\lfloor \frac{t_{\mathrm{comm}}^p/K}{g(t_{\mathrm{comm}}^p/K)}\right\rfloor \in \{9^u, 9^{u+1} \}$ for all $p \in [M]$ and thus for all players $t_{\mathrm{comm}}^p \in \left\{ \min_{t \in \mathbb{N}} \text{ s.t. } \left\lfloor \frac{t/K}{g(t/K)}\right\rfloor = 9^u,  \min_{t \in \mathbb{N}} \text{ s.t. } \left\lfloor \frac{t/K}{g(t/K)}\right\rfloor = 9^{u+1} \right\}$ is one of two values  provided $t$ is large enough so that $\frac{t/K}{g(t/K)}$ is an increasing function of $t$'''

from Section~\ref{section::communication_protocol} (this statement here corresponds to $s_{\mathrm{boundary2}}$. Throughout our detailed analysis of the regret rate of our algorithm presented in this section, we kept track of all the emerging boundary conditions and have compiled them in this list. We then show $t_{\mathrm{firstBoundary}}$ is a function of $\log\left(\frac{1}{\delta}\right)$, $K$ and $M$ and that it depends polynomially on $K$ and $M$ and linearly on $\log(\frac{1}{\delta})$. See Section~\ref{section::bounding_regret_discussion_supporting_results} for a formal proof. The test used in the zero collision reward setting that forms the basis of our encoding and decoding strategy will be called the $\mathrm{ZeroTest}$.

   \begin{algorithm}[H]
\textbf{Input} Player $p\neq 1$, witnesses $\{ L_i^p(t_\mathrm{start}^1)\}_{i \in [K], p \in [M]}$\\
    \For{ $i$ such that $\widehat{\mu}_i^p(t_\mathrm{start}^1) \geq \frac{1}{2}\max_{j\in [K] } \widehat{\mu}_j^p(t_\mathrm{start}^1)$}{
    \If{$\widehat{\mu}_i^p(t_\mathrm{start}^1+1: t_\mathrm{start}^1 + Kf(t_\mathrm{start}^1)) < L_i^p(t_\mathrm{start}^1) $ }{\textbf{Return} 1 }}
    \textbf{Return} 0\\
\caption{Zero Test}
\label{algorithm::zero_test_appendix}
\end{algorithm}

\subsection{Analysis Desiderata}

The main challenges in our analysis for the zero collision reward setting are the following:

\begin{enumerate}
    \item Same as in item 1. of Section~\ref{section::algorithm}. 
    \item Since communication occurs via collisions, the regret incurred by player $1$ (and any player colliding with it) may be linear whenever these happen. We need to ensure the time needed for communicating and therefore the number of collisions involved is small, while still being sufficiently large to convey enough information. Aided by Bernstein-style bounds we show the number of collisions needed to transmit information scales linearly with $\frac{1}{\mu_1} \leq \frac{1}{\max_{i} \Delta_{\sigma_i, \sigma_{i+1}}}$ and not quadratically. %
    \item Same as item 3. of Section~\ref{section::algorithm}. 
\end{enumerate}

\subsection{Detailed Discussion and Missing Supporting Results Player $1$ Communication Protocol.}\label{section::appendix_detailed_player_1_communication}

The following supporting Lemma allows us to show there exists a unique power of nine in the interval $\left[\frac{128}{\max_{i} \Delta_{\sigma_i, \sigma_{i+1}}^2}   , \frac{1152}{\max_{i} \Delta_{\sigma_i, \sigma_{i+1}}^2}    \right) $,
\begin{lemma}\label{lemma::number_theoretic_power_of_nine}
Let $x \in \mathbb{R}$ be a positive real number and let $n \in \mathbb{N}$ be a natural number. There exists a unique power of $n$ in the interval $[x, nx)$. 
\end{lemma}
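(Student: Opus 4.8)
The plan is to prove Lemma~\ref{lemma::number_theoretic_power_of_nine} by a direct logarithmic argument. Since $x > 0$ and $n \geq 2$ (for $n = 1$ the statement is vacuous or trivial since the interval $[x,x)$ is empty, so we implicitly assume $n \geq 2$), the powers of $n$ form a doubly infinite sequence $\ldots, n^{-1}, n^0, n^1, n^2, \ldots$ and we seek the unique integer $k \in \mathbb{Z}$ with $n^k \in [x, nx)$.

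First I would take logarithms base $n$. The condition $x \leq n^k < nx$ is equivalent to $\log_n x \leq k < \log_n x + 1$, i.e., $k \in [\log_n x, \log_n x + 1)$. This is a half-open interval of length exactly $1$ on the real line, so it contains exactly one integer, namely $k = \lceil \log_n x \rceil$ (with the convention that if $\log_n x$ is itself an integer, then $k = \log_n x$, which is consistent with the left-closed endpoint). Existence: $k = \lceil \log_n x \rceil$ satisfies $k \geq \log_n x$ and $k < \log_n x + 1$ since the ceiling of any real $y$ lies in $[y, y+1)$. Uniqueness: if $k_1 < k_2$ both lay in the interval, then $k_2 - k_1 \geq 1$, but both lie in an interval of length $1$ that is open on the right, forcing $k_2 - k_1 < 1$, a contradiction.

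There is essentially no obstacle here; the only minor care needed is the handling of the endpoints (the interval is closed on the left and open on the right, matching the closed-open interval $[\log_n x, \log_n x + 1)$ after taking the strictly increasing function $\log_n$), and noting that $\log_n$ is a strictly increasing bijection from $(0,\infty)$ to $\mathbb{R}$, so the equivalence of the two conditions is exact. One could phrase the whole argument without logarithms by repeatedly multiplying or dividing by $n$: starting from any power of $n$, we can reach exactly one that lands in $[x, nx)$ because consecutive powers $n^k$ and $n^{k+1}$ satisfy $n^{k+1} = n \cdot n^k$, so the intervals $[n^k, n^{k+1})$ partition $(0,\infty)$; $x$ lies in exactly one such interval $[n^{k_0}, n^{k_0+1})$, and then $n^{k_0+1}$ is the unique power of $n$ in $[x, nx)$ — wait, one must check: if $x \in [n^{k_0}, n^{k_0+1})$ then $n^{k_0+1} \in [x, n^{k_0+2})$ and $n^{k_0+1} < nx$ iff $n^{k_0} < x$... the cleanest route is the logarithm argument, and I would present that.

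I would write it as: \emph{Proof.} Assume $n \geq 2$ (otherwise the interval is empty). A power of $n$ in $[x, nx)$ is a number $n^k$, $k \in \mathbb{Z}$, with $x \leq n^k < nx$. Applying the strictly increasing bijection $\log_n : (0,\infty) \to \mathbb{R}$, this is equivalent to $\log_n x \leq k < \log_n x + 1$. The half-open interval $[\log_n x, \log_n x + 1)$ has length $1$, hence contains exactly one integer, $k = \lceil \log_n x \rceil$. This $k$ gives the unique power of $n$ in $[x, nx)$. \hfill\BlackBox
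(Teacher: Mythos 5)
Your proof is correct. The paper argues without logarithms: it picks $n^{\alpha}$, the largest power of $n$ with $n^{\alpha} < x$, multiplies by $n$ to get $x \leq n^{\alpha+1} < nx$ (existence), and multiplies once more to see $n^{\alpha+2} \geq nx$ (uniqueness). You instead apply $\log_n$ and observe that $x \leq n^k < nx$ is equivalent to $k \in [\log_n x, \log_n x + 1)$, a half-open unit interval containing exactly one integer, $k = \lceil \log_n x\rceil$. The two arguments encode the same fact—consecutive powers of $n$ differ by a factor of exactly $n$—but your formulation is slightly cleaner on two counts: it gives existence and uniqueness in one stroke, and by letting $k$ range over $\mathbb{Z}$ it covers every positive $x$, whereas the paper's choice of $\alpha \in \mathbb{N}$ tacitly requires $x > 1$ for the "largest power below $x$" to exist (harmless in the paper's application, where $x$ is of order $1/\Delta^2$ and large). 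Your explicit caveat that $n \geq 2$ is needed (for $n=1$ the interval $[x, nx)$ is empty) is also a point the paper leaves implicit; in its use case $n = 9$, so nothing is lost either way.
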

\begin{proof}
Let $n^\alpha$ with $\alpha \in \mathbb{N}$ be the largest power of $n$ such that $n^\alpha < x$. Multiplying both sides of this inequality by $n$ we obtain $n^{\alpha+1} < nx$. Since by assumption $n^\alpha$ is the largest power of $n$ not in $[x, nx)$, it must be the case that $n^{\alpha+1}$ lies in $[x, nx)$. This proves there must be at least one power of $n$ in the interval $[x, nx)$. To show uniqueness again consider $x \leq n^{\alpha+1} < nx$ and multiply all parts of this inequality by $n$. We see that $nx \leq n^{\alpha+2}$, thus showing that $n^{\alpha+2} \not\in [x, nx)$. 
\end{proof}

Algorithm~\ref{algorithm::player_1_communicate} contains the protocol used by player $1$ to broadcast an arbitrary bit message of size $\alpha$. The communicating player starts by sending a ``ping.'' During rounds $t_{\mathrm{comm1}}^1 +1$ to $t_{\mathrm{comm1}}^1 + Kf(t_{\mathrm{comm1}}^1)$ player $1$ will pull arm $\widehat{\sigma}_1$ with the intention of transmitting a single ON bit to signal to the receiving players that an information transmission session has started. After this, the transmission of the bit string takes place. The total number of rounds necessary to transmit a bit string of size $\alpha$ is $(\alpha+1)Kf(t_{\mathrm{comm1}}^1)$.  One bit to signal the start of the communication and $\alpha$ bits corresponding to the message.

\begin{algorithm}[h]
\textbf{Input} Player $1$\\
\textbf{Initialize } $\mathrm{FLAG}\leftarrow \mathrm{NONE}$\\
    \For{ special rounds $s = 1, \cdots $ }{
    Pull arm $K-1$  (player $1$ has just finished a Round Robin round) \\
    \If{ $\mathbf{conn}^1\left(sK, 10\right) \geq 2$ and $\mathrm{FLAG} = \mathrm{NONE}$}{
     $t_{\mathrm{first}}^1 \leftarrow s K$ \\ 

    $\mathrm{FLAG} \leftarrow \mathrm{FINDPOWER}$
   }
   \If{$\mathrm{FLAG}=\mathrm{FINDPOWER}$,  $\left\lfloor\frac{s}{g(s)}\right\rfloor= 9^w$ for some $w \in \mathbb{N}$ and $\left\lfloor \frac{s-1}{g(s-1)} \right\rfloor\neq  9^w$ }{
    $t_{\mathrm{comm}}^1 \leftarrow Ks $ .\\
        $\mathrm{FLAG}\leftarrow \mathrm{PRECOMM}$.\\
        
   }
   \ElseIf{$\mathrm{FLAG}=\mathrm{PRECOMM}$, $\left\lfloor \frac{s}{g(s)} \right\rfloor= 9^w$ for some $w \in \mathbb{N}$ and $\left\lfloor \frac{s-1}{g(s-1)} \right\rfloor\neq 9^w$}{
   $t_{\mathrm{comm1}}^1 \leftarrow Ks $\\
  Compute guess $\widehat{\sigma}_1 \in [K]$ for the maximal arm $\sigma_1$ :
    \begin{equation*}
       \widehat{\sigma}_1 = \argmax_{i\in [K]} \widehat{\mu}^1_{i}(t_{\mathrm{comm1}}^1 ) - D(N^1_{i}(t_{\mathrm{comm1}}^1 ))
    \end{equation*}

   $\mathrm{MESSAGE} \leftarrow \mathrm{ENCODE}(\mathcal{C}_1^1(t_{\mathrm{first}}^1, 10))$ \\
    Run    $\mathrm{COMMUNICATE}(t_{\mathrm{comm1}}^1 , \widehat{\sigma}_1, \mathrm{MESSAGE})$ using Algorithm~\ref{algorithm::player_1_communicate}.\\
   }
    }
\caption{Prepare and Start Communication (Player 1) }
\label{algorithm::prepare_start_player1_communicate}
\end{algorithm}

\paragraph{Properties of $t_{\mathrm{comm1}}^1$.} By design, the $t_{\mathrm{comm1}}^1$ index passed to the $\mathrm{COMMUNICATE}$ function is defined to be the \emph{second} special round equal or larger to $t_{\mathrm{first}}^1$ satisfying $t_{\mathrm{comm1}}^1 = Ks_{\mathrm{comm1}}^1$ where $\left\lfloor \frac{s_{\mathrm{comm1}^1}}{s_{\mathrm{comm1}}^1} \right\rfloor = 9^w$ for some $w \in \mathbb{N}$. Similarly the $t_{\mathrm{comm}}^1$ index is the \emph{first} special round equal or larger to $t_{\mathrm{first}}^1$ satisfying $t_{\mathrm{comm}}^1 = Ks_{\mathrm{comm}}^1$ where $\lfloor \frac{s_{\mathrm{comm1}^1}}{s_{\mathrm{comm1}}^1} \rfloor = 9^w$ for some $w \in \mathbb{N}$. The following lemma addresses the question of how much larger can $t_{\mathrm{comm1}}^1$ be than $t_{\mathrm{first}}^1$.

\begin{restatable}{lemma}{lemmaboundingscommone}\label{lemma::upper_bound_ratio_scomm1}
Let $t_\mathrm{first}^1 = Ks_\mathrm{first}^1$ and $t_\mathrm{comm1}^1 = Ks_\mathrm{comm1}^1$. If $s_{\mathrm{first}}^1 \geq s_{\mathrm{boundary2}}$ and $\delta \leq \frac{1}{162}$ then $s_{\mathrm{comm1}}^1 \leq 162 s_{\mathrm{first}}^1 $ and
\begin{equation*}
    \frac{s_{comm1}^1}{g(s_{comm1}^1)} \leq \frac{162 s_\mathrm{first}^1}{g( s_\mathrm{first}^1)}.
\end{equation*}
\end{restatable}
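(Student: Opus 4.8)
\textbf{Proof proposal for Lemma~\ref{lemma::upper_bound_ratio_scomm1}.}

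The plan is to control how far the algorithm must wait, starting from $s_{\mathrm{first}}^1$, until it reaches a special round $s$ with $\left\lfloor \frac{s}{g(s)}\right\rfloor$ equal to a power of nine, and to do this twice (once for $t_{\mathrm{comm}}^1$ and once again for $t_{\mathrm{comm1}}^1$). The key structural fact is Lemma~\ref{lemma::number_theoretic_power_of_nine}: in any interval of the form $[x, 9x)$ there is exactly one power of nine. I would first use the monotonicity guaranteed by $s_{\mathrm{first}}^1 \geq s_{\mathrm{boundary2}}$, namely that $s \mapsto \frac{s}{g(s)}$ is nondecreasing for $s \geq s_{\mathrm{boundary2}}$, so that as $s$ increases past $s_{\mathrm{first}}^1$ the quantity $\frac{s}{g(s)}$ sweeps monotonically through an interval of real values and hence $\left\lfloor \frac{s}{g(s)}\right\rfloor$ passes through every integer in that range, in particular through the unique power of nine lying in $\left[\frac{s_{\mathrm{first}}^1}{g(s_{\mathrm{first}}^1)}, \frac{9 s_{\mathrm{first}}^1}{g(s_{\mathrm{first}}^1)}\right)$.

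Concretely, let $9^w$ be the unique power of nine in $\left[\frac{s_{\mathrm{first}}^1}{g(s_{\mathrm{first}}^1)}, \frac{9 s_{\mathrm{first}}^1}{g(s_{\mathrm{first}}^1)}\right)$; this is where $t_{\mathrm{comm}}^1$ lands, so $\left\lfloor \frac{s_{\mathrm{comm}}^1}{g(s_{\mathrm{comm}}^1)}\right\rfloor = 9^w$. Then $t_{\mathrm{comm1}}^1$ is the next special round whose floor hits a power of nine, so $\left\lfloor \frac{s_{\mathrm{comm1}}^1}{g(s_{\mathrm{comm1}}^1)}\right\rfloor = 9^{w+1}$ (it cannot skip $9^{w+1}$ by monotonicity and the integer-sweep argument, and the next power after $9^w$ is $9^{w+1}$). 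From the floor equality I get $\frac{s_{\mathrm{comm1}}^1}{g(s_{\mathrm{comm1}}^1)} < 9^{w+1} + 1 \leq 9 \cdot 9^w + 1$, and since $9^w \geq \frac{s_{\mathrm{first}}^1}{g(s_{\mathrm{first}}^1)} \geq 1$ (using $s_{\mathrm{first}}^1 \geq s_{\mathrm{boundary2}}$ and presumably $g \geq 1$ on the relevant range), one has $9^{w+1}+1 \leq 9 \cdot 9^w + 9^w \cdot \big(1/9^w\big) \cdot 9 \leq \ldots$; more cleanly, $9^{w+1} + 1 \leq 18 \cdot 9^w \leq 18 \cdot \frac{9 s_{\mathrm{first}}^1}{g(s_{\mathrm{first}}^1)}$, which already gives a bound of the form $\frac{s_{\mathrm{comm1}}^1}{g(s_{\mathrm{comm1}}^1)} \leq \frac{162\, s_{\mathrm{first}}^1}{g(s_{\mathrm{first}}^1)}$ after tightening the constants. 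This is the second displayed inequality of the lemma. For the first inequality $s_{\mathrm{comm1}}^1 \leq 162\, s_{\mathrm{first}}^1$, I would use that $s_{\mathrm{comm1}}^1 \geq s_{\mathrm{first}}^1 \geq s_{\mathrm{boundary2}}$ and combine the ratio bound with the fact that $g$ is slowly varying: from $\frac{s_{\mathrm{comm1}}^1}{g(s_{\mathrm{comm1}}^1)} \leq \frac{162\, s_{\mathrm{first}}^1}{g(s_{\mathrm{first}}^1)}$ and $g(s_{\mathrm{comm1}}^1) \geq g(s_{\mathrm{first}}^1)$ (monotonicity past the boundary), we get $s_{\mathrm{comm1}}^1 \leq 162\, s_{\mathrm{first}}^1 \cdot \frac{g(s_{\mathrm{comm1}}^1)}{g(s_{\mathrm{first}}^1)}$, and here I would invoke the hypothesis $\delta \leq \frac{1}{162}$ together with the explicit form $g(n) = \log(4n^2 MK/\delta)$ and the boundary condition ($4 s_{\mathrm{boundary2}}^2 MK/\delta \geq 162$, so $g(s_{\mathrm{boundary2}}) \geq \log 162$) to absorb the logarithmic ratio: $\log(4 (s_{\mathrm{comm1}}^1)^2 MK/\delta) \leq \log(4 \cdot 162^2 (s_{\mathrm{first}}^1)^2 MK/\delta) \leq \log(162^2) + g(s_{\mathrm{first}}^1) \leq C \cdot g(s_{\mathrm{first}}^1)$ and then close the loop with a slightly larger constant, or argue directly that the crude bound $s_{\mathrm{comm1}}^1 \leq 162\, s_{\mathrm{first}}^1$ is consistent with the ratio bound precisely because of the $\delta \leq 1/162$ calibration.

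I expect the main obstacle to be making the constant $162$ come out exactly rather than "some constant": the clean number-theoretic step only gives a factor of roughly $81$ (two jumps of nine) up to the $g$-ratio slack, and one has to check carefully that the logarithmic correction $g(s_{\mathrm{comm1}}^1)/g(s_{\mathrm{first}}^1)$ is controlled by the remaining factor of $2$ under the hypothesis $\delta \leq 1/162$ and the boundary condition on $s_{\mathrm{boundary2}}$. The monotonicity and integer-sweep parts are routine once $s_{\mathrm{first}}^1 \geq s_{\mathrm{boundary2}}$ is in force; the bookkeeping to avoid an off-by-a-factor-of-nine error (is $t_{\mathrm{comm1}}^1$ at $9^{w+1}$ or could it be at $9^{w+2}$?) and to verify that no power of nine is skipped is the only place where care is genuinely needed.
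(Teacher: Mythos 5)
Your bookkeeping for the second displayed inequality follows essentially the paper's route: identify the power of nine hit at $t_{\mathrm{comm}}^1$, note that $t_{\mathrm{comm1}}^1$ hits the next one, and combine $\left\lfloor \frac{s_{\mathrm{comm1}}^1}{g(s_{\mathrm{comm1}}^1)}\right\rfloor = 9^{w+1}$ with the fact that $9^w$ is within a factor of $9$ of $\frac{s_{\mathrm{first}}^1}{g(s_{\mathrm{first}}^1)}$. Up to a small edge case when $\left\lfloor \frac{s_{\mathrm{first}}^1}{g(s_{\mathrm{first}}^1)}\right\rfloor$ is itself a power of nine (the paper sidesteps this by phrasing everything with floors, $9^{u-1} < \lfloor\cdot\rfloor \leq 9^u = \lfloor s_{\mathrm{comm}}^1/g(s_{\mathrm{comm}}^1)\rfloor < 9^{u+1} = \lfloor s_{\mathrm{comm1}}^1/g(s_{\mathrm{comm1}}^1)\rfloor$), that part of your argument is sound and in fact yields the sharper factor $81$ before any slack is spent.

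The genuine gap is your treatment of the first claim, $s_{\mathrm{comm1}}^1 \leq 162\, s_{\mathrm{first}}^1$. Starting from the ratio bound you write $s_{\mathrm{comm1}}^1 \leq 162\, s_{\mathrm{first}}^1 \cdot \frac{g(s_{\mathrm{comm1}}^1)}{g(s_{\mathrm{first}}^1)}$ and then try to bound $g(s_{\mathrm{comm1}}^1)$ by plugging $s_{\mathrm{comm1}}^1 \leq 162\, s_{\mathrm{first}}^1$ into the logarithm — that is circular, and since $g(s_{\mathrm{comm1}}^1) \geq g(s_{\mathrm{first}}^1)$ this direction by itself only gives something weaker than $162$; "consistent with the calibration" is not a proof. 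The paper proves the two claims in the opposite order, and the hypothesis $\delta \leq \frac{1}{162}$ is spent elsewhere: it (together with the boundary condition) makes $g(s_{\mathrm{first}}^1)$ large enough that $g(162\, s_{\mathrm{first}}^1) = g(s_{\mathrm{first}}^1) + \log(162^2) \leq 2\, g(s_{\mathrm{first}}^1)$, hence $\frac{162\, s_{\mathrm{first}}^1}{g(162\, s_{\mathrm{first}}^1)} \geq \frac{81\, s_{\mathrm{first}}^1}{g(s_{\mathrm{first}}^1)} \geq 81\bigl(9^{u-1}+1\bigr) \geq 9^{u+1} + 81 > \frac{s_{\mathrm{comm1}}^1}{g(s_{\mathrm{comm1}}^1)}$. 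Now both sides are values of the single map $s \mapsto \frac{s}{g(s)}$, which is nondecreasing past $s_{\mathrm{boundary2}}$, so comparing arguments gives $s_{\mathrm{comm1}}^1 \leq 162\, s_{\mathrm{first}}^1$ directly; the second display then follows immediately from $g$ increasing, $\frac{162\, s_{\mathrm{first}}^1}{g(162\, s_{\mathrm{first}}^1)} \leq \frac{162\, s_{\mathrm{first}}^1}{g(s_{\mathrm{first}}^1)}$. In short: keep the comparison at the level of the monotone function $s/g(s)$ with argument $162\, s_{\mathrm{first}}^1$; once you divide by the smaller $g(s_{\mathrm{first}}^1)$ you have discarded exactly the information needed to recover the bound on $s_{\mathrm{comm1}}^1$ itself.
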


The proof of Lemma~\ref{lemma::upper_bound_ratio_scomm1} can be found in Appendix~\ref{section::proof_upper_bound_ratio_scomm1}. We can leverage Lemma~\ref{lemma::upper_bound_ratio_scomm1} to derive an explicit bound for $s_{\mathrm{comm1}}^1$ that is satisfied whenever $\mathcal{E}$ holds.

\begin{restatable}{lemma}{lemmaboundingscommonenolog}\label{lemma::upper_bounding_s_comm_1}
If $\mathcal{E}$ holds, $s_{\mathrm{first}}^1 \geq s_{\mathrm{boundary2}}$ and $\delta \leq \frac{1}{162}$  then 
\begin{equation*}
    s_{\mathrm{comm1}}^1 \leq \frac{746496 }{\max_{i} \Delta_{\sigma_i, \sigma_{i+1}}^2} \log\left(  \frac{746496 MK }{\delta\max_{i} \Delta_{\sigma_i, \sigma_{i+1}}^2}  \right)
\end{equation*}
\end{restatable}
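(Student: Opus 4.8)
The goal is to bound $s_{\mathrm{comm1}}^1$ explicitly, and the natural route is to combine Lemma~\ref{lemma::upper_bound_ratio_scomm1} with Equation~\ref{equation::sandwich_condition_1}. The plan is as follows. First I would invoke Equation~\ref{equation::sandwich_condition_1}, which holds whenever $\mathcal{E}$ holds, to get the upper bound $\frac{N(t_{\mathrm{first}}^1)}{g(N(t_{\mathrm{first}}^1))} < \frac{1152}{\max_i \Delta_{\sigma_i,\sigma_{i+1}}^2}$; since $N(t_{\mathrm{first}}^1) = s_{\mathrm{first}}^1$, this reads $\frac{s_{\mathrm{first}}^1}{g(s_{\mathrm{first}}^1)} < \frac{1152}{\max_i \Delta_{\sigma_i,\sigma_{i+1}}^2}$. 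Then I would apply Lemma~\ref{lemma::upper_bound_ratio_scomm1} (whose hypotheses $s_{\mathrm{first}}^1 \geq s_{\mathrm{boundary2}}$ and $\delta \leq \tfrac{1}{162}$ are exactly those assumed here) to obtain $\frac{s_{\mathrm{comm1}}^1}{g(s_{\mathrm{comm1}}^1)} \leq \frac{162\, s_{\mathrm{first}}^1}{g(s_{\mathrm{first}}^1)} < \frac{162 \cdot 1152}{\max_i \Delta_{\sigma_i,\sigma_{i+1}}^2} = \frac{186624}{\max_i \Delta_{\sigma_i,\sigma_{i+1}}^2}$.

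The remaining work is to turn a bound on the ratio $\frac{s}{g(s)}$ into a bound on $s$ itself, where $g(s) = \log(4s^2 MK/\delta)$. This is the step I expect to be the only real obstacle, though it is a routine ``self-bounding logarithm'' inversion. The idea: if $\frac{s}{g(s)} \leq R$ with $R = \frac{186624}{\max_i \Delta_{\sigma_i,\sigma_{i+1}}^2}$, then $s \leq R\, g(s) = R\log(4 s^2 MK/\delta)$. One standard way to close this is to guess the candidate value $s_\star = 4R\log(4 R^2 MK/\delta)$ (or something of that shape with a harmless constant factor), verify that $\frac{s}{g(s)}$ is increasing for $s \geq s_{\mathrm{boundary2}}$ (boundary condition~2), and check that at $s = s_\star$ one has $\frac{s_\star}{g(s_\star)} > R$, which by monotonicity forces $s < s_\star$. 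Plugging $s_\star$ in gives $g(s_\star) = \log\big(4 MK s_\star^2/\delta\big)$, and since $s_\star$ is polynomial in $R, M, K, \log(1/\delta)$, the $\log$ of $s_\star^2$ only contributes lower-order $\log\log$-type terms that get absorbed into the constant; carrying the arithmetic carefully (tracking that $186624 \times 4 = 746496$) yields precisely
\[
 s_{\mathrm{comm1}}^1 \leq \frac{746496}{\max_i \Delta_{\sigma_i,\sigma_{i+1}}^2}\log\!\left(\frac{746496\, MK}{\delta \max_i \Delta_{\sigma_i,\sigma_{i+1}}^2}\right).
\]

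I would present this cleanly by first stating a small auxiliary claim: if $x \leq a\log(bx^2)$ for $x$ in the increasing regime of $\frac{x}{\log(bx^2)}$, then $x \leq 4a\log(4ab\cdot a) $ (up to adjusting constants to match $746496$), and then just instantiate it with $a = \frac{186624}{\max_i \Delta_{\sigma_i,\sigma_{i+1}}^2}$ and $b = \frac{4MK}{\delta}$. The monotonicity input is exactly $s_{\mathrm{boundary2}}$ from the boundary conditions, and the hypothesis $s_{\mathrm{first}}^1 \geq s_{\mathrm{boundary2}}$ together with $s_{\mathrm{comm1}}^1 \geq s_{\mathrm{first}}^1$ guarantees we are in that regime. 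The main care point is bookkeeping of constants so that the final expression matches the stated $746496$ exactly rather than some cosmetically different constant; nothing deeper than that is involved, since all the genuinely hard synchronization content is already packaged inside Lemma~\ref{lemma::upper_bound_ratio_scomm1} and Equation~\ref{equation::sandwich_condition_1}.
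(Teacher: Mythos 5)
Your proposal is correct and follows essentially the same route as the paper: combine Equation~\ref{equation::sandwich_condition_1} with Lemma~\ref{lemma::upper_bound_ratio_scomm1} to get $\frac{s_{\mathrm{comm1}}^1}{g(s_{\mathrm{comm1}}^1)} \leq \frac{186624}{\max_i \Delta_{\sigma_i,\sigma_{i+1}}^2}$, then invert the self-bounding logarithm via monotonicity of $s \mapsto s/g(s)$ and a candidate-value/contradiction argument. The auxiliary inversion claim you sketch is exactly the paper's Lemma~\ref{lemma::supporting_lemma_miscellaneous} (with $c = 4MK/\delta$ and $b = 186624/\max_i\Delta_{\sigma_i,\sigma_{i+1}}^2$), so only the constant bookkeeping differs cosmetically.
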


The proof of Lemma~\ref{lemma::upper_bounding_s_comm_1} can be found in Appendix~\ref{section::proof_upper_bounding_s_comm_1}.

In Algorithm~\ref{algorithm::prepare_start_player1_communicate} we detail player $1$'s steps to initialize the communication protocol.  

\begin{algorithm}[h]
\textbf{Input} Round number $t_{\mathrm{comm1}}^1$, communicating arm $i_{\mathrm{comm}}$, message $\mathbf{b} \in \{ 0,1\}^{\alpha}$, communication rounds length function $f : \mathbb{R}\rightarrow \mathbb{N}$ \\
\For{$t = t_{\mathrm{comm1}}^1+1, \cdots, t_{\mathrm{comm1}}^1 + (\alpha+1) K f(t_{\mathrm{comm1}}^1 ) $}{
  \If{ $t \in [t_{\mathrm{comm1}}^1 +  1, \cdots, t_{\mathrm{comm1}}^1 + K f(t_{\mathrm{comm1}}^1)]$ }{
        \textbf{Start ping}. Play communicating arm $i_{\mathrm{comm}}$.  \\
   }
  \ElseIf{ $j \geq 2$ and \begin{small}$t \in [t_{\mathrm{comm1}}^1 + (j-1) K f(t_{\mathrm{comm1}}^1) + 1, \cdots, t_{\mathrm{comm1}}^1 + j K f(t_{\mathrm{comm1}}^1)]$\end{small} and 
  $\mathbf{b}_{j-1} = 1$ }{
        Play communicating arm $i_{\mathrm{comm}}$. \\
   }\ElseIf{$j \geq 2$ and \begin{small}$t \in [t_{\mathrm{comm1}}^1 + (j-1) K f(t_{\mathrm{comm1}}^1) + 1, \cdots, t_{\mathrm{comm1}}^1 + j K f(t_{\mathrm{comm1}}^1)]$\end{small} and 
  $\mathbf{b}_{j-1} = 0$}{
   Play Round Robin arm $t - \lfloor \frac{t-1}{K}\rfloor K$. \\
   }
}
    
\caption{COMMUNICATE (Player 1) }
\label{algorithm::player_1_communicate}
\end{algorithm}

\subsection{Detailed Discussion and Missing Supporting Results for Communication Analysis}\label{section::communication_analysis_discussion_supporting_results}

The main objective of this section is to present a proof of

\begin{restatable}[One Bit Recovery Zero Collision Reward]{lemma}{exactrecoveryzerotestonebit}\label{lemma::zero_test_works_main}
Let $A^p = \{ L_i^p(t_{\mathrm{start}}^1)\}_{i \in [K]}$, $B^p = \{ \widehat{\mu}_i^p(t_{\mathrm{start}}^1)\}_{i \in [K]}$ , $C^p = \{ \widehat{\mu}_i^p(t_{\mathrm{start}}^1+1:t_{\mathrm{start}}^1+Kf(t_{\mathrm{start}}^1) )\}_{i \in [K]}$ for all $p$. If the good event $\mathcal{E}$ holds then, with probability at least $1-\frac{\delta}{4K^2}$, all players $p \in \{ 2, \cdots, M\}$ will be able to recover exactly the bit transmitted by player $1$ by calling $\mathrm{ZeroTest}( A^p, B^p, C^p )$.
\end{restatable}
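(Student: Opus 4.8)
The plan is to verify that the three high-probability statements sketched in Section~\ref{section::communication_analysis} (the displayed items 1--3 preceding the description of $\mathrm{CollisionTest}$, now instantiated with $f$ in place of $t_{\mathrm{collision-test}}$) all hold simultaneously with probability at least $1-\frac{\delta}{4K^2}$ conditioned on $\mathcal{E}$, and then to observe that on the intersection of those events the $\mathrm{ZeroTest}$ subroutine (Algorithm~\ref{algorithm::zero_test_appendix}) returns the correct bit for every listening player $p\in\{2,\ldots,M\}$. The structure is thus: (i) set up the deterministic consequences of $\mathcal{E}$ together with the boundary conditions; (ii) prove items 1 and 2 using only $\mathcal{E}$ (these are statements about estimators built from the pre-communication data, i.e. from $N_{\widehat{\sigma}_1}^p(t_{\mathrm{start}}^1)\asymp t_{\mathrm{start}}^1/K$ samples); (iii) prove item 3, which is the only place a genuinely fresh concentration argument is needed, using a Bernstein-type tail bound on the $Kf(t_{\mathrm{start}}^1)$ rounds of the communication block; (iv) combine via a union bound and check the logic of the test.

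First I would unpack item 1. On $\mathcal{E}$, the confidence bands give $\widehat{\mu}^p_i(t_{\mathrm{start}}^1)\in[\mu_i\pm D(N_i^p(t_{\mathrm{start}}^1))]$ for all $i,p$, and by Equation~\ref{equation::condition_t_0_description_main} (a consequence of the boundary condition~\ref{equation::sandwich_condition_1} since $t_{\mathrm{start}}^1\ge t_{\mathrm{first}}^1$) we have $D(N_{\sigma_1}^p(t_{\mathrm{start}}^1))\lesssim \max_i\Delta_{\sigma_i,\sigma_{i+1}}\le \mu_{\sigma_1}-\mu_{\mathrm{collision}}=\Delta_{\mathrm{collision}}/K$-scale quantities; more precisely $D(N(t_{\mathrm{start}}^1))\le \frac{1}{8}\max_i\Delta_{\sigma_i,\sigma_{i+1}}$ (or whatever the constant works out to from $\frac{128}{\max_i\Delta^2}\le N/g(N)$). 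Since $\widehat{\sigma}_1=\argmax_i \widehat{\mu}^1_i(t_{\mathrm{start}}^1)-D(\cdot)$ is player $1$'s lower-confidence-bound maximizer, on $\mathcal{E}$ the true mean $\mu_{\widehat{\sigma}_1}$ is within a small additive slack of $\mu_{\sigma_1}$; plugging this slack into the two inequalities of item 1 and using $\mu_{\widehat\sigma_1}-\mu_{\mathrm{collision}}\ge \tfrac12(\mu_{\sigma_1}-\mu_{\mathrm{collision}})$ gives the claimed membership in $\widehat{\mathbf{MaxArms}}$ for every $p$, and the lower bound $\widehat\mu^p_{\widehat\sigma_1}-D\ge \frac{\mu_{\sigma_1}-\mu_{\mathrm{collision}}}{2}+\mu_{\mathrm{collision}}$. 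Item 2 is a definitional unpacking of the witness $L^p_{\widehat\sigma_1}(t_{\mathrm{start}}^1)$: by its construction it sits in a fixed sub-band $\bigl[\tfrac{3}{7},\tfrac47\bigr](\mu_{\widehat\sigma_1}-\mu_{\mathrm{collision}})+\mu_{\mathrm{collision}}$ once the estimator $\widehat\mu^p_{\widehat\sigma_1}$ and $\widehat\mu^p_{\mathrm{collision}}$ are accurate to $D(N(t_{\mathrm{start}}^1))$, which again follows from $\mathcal{E}$ plus the boundary bound on $D$. Neither of these costs any probability beyond $\mathcal{E}$.

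The crux is item 3, and I expect it to be the main obstacle. If $b=1$, during rounds $t_{\mathrm{start}}^1+1,\ldots,t_{\mathrm{start}}^1+Kf(t_{\mathrm{start}}^1)$ every listening player collides with player $1$ on arm $\widehat\sigma_1$ once per Round Robin cycle, so it accumulates $f(t_{\mathrm{start}}^1)$ fresh i.i.d.\ samples with mean $\mu_{\mathrm{collision}}$; the claim is that the empirical mean of these samples is $\le L^p_{\widehat\sigma_1}$. If $b=0$ it accumulates $f(t_{\mathrm{start}}^1)$ samples with mean $\mu_{\widehat\sigma_1}$ and the claim is the empirical mean is $>L^p_{\widehat\sigma_1}$. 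In both cases the required deviation from the true mean is a constant fraction of $\mu_{\widehat\sigma_1}-\mu_{\mathrm{collision}}$ (by item 2 the witness is a bounded distance from both endpoints), and the point of using a Bernstein bound (the function $B(n,\delta')=2\ln\ln(2n)+\ln\frac{5.2}{\delta'}$ and the defining property of $f$) rather than Hoeffding is that it lets the number of samples $f(t_{\mathrm{start}}^1)$ scale like $\frac{1}{\mu_{\widehat\sigma_1}}\asymp\frac{1}{\max_i\Delta_{\sigma_i,\sigma_{i+1}}}$ rather than its square — exploiting the small variance of a bounded variable whose mean is near $0$. Concretely I would invoke the anytime/self-normalized Bernstein inequality that underlies the definition of $B$ to say: with probability $\ge 1-\frac{\delta}{4K^2 M}$, for the relevant arm the empirical mean over the $f(t_{\mathrm{start}}^1)$ fresh samples is within $O\bigl(\sqrt{\tfrac{\mu B}{f}}+\tfrac{B}{f}\bigr)$ of its true mean, and the defining inequality $\frac{f(n)}{B(f(n),\delta/4K^2M)}\ge 24\sqrt{\tfrac{n/K}{2g(n/K)}}$ together with $D(N(t_{\mathrm{start}}^1))=\sqrt{2g(N)/N}$ and item 1's lower bound on $\mu_{\widehat\sigma_1}$ forces this deviation below $\tfrac17(\mu_{\widehat\sigma_1}-\mu_{\mathrm{collision}})$. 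Then in case $b=1$ the empirical mean is $\le \mu_{\mathrm{collision}}+\tfrac17(\mu_{\widehat\sigma_1}-\mu_{\mathrm{collision}})<\tfrac37(\mu_{\widehat\sigma_1}-\mu_{\mathrm{collision}})+\mu_{\mathrm{collision}}\le L^p_{\widehat\sigma_1}$, and in case $b=0$ it is $\ge \mu_{\widehat\sigma_1}-\tfrac17(\mu_{\widehat\sigma_1}-\mu_{\mathrm{collision}})>\tfrac47(\mu_{\widehat\sigma_1}-\mu_{\mathrm{collision}})+\mu_{\mathrm{collision}}\ge L^p_{\widehat\sigma_1}$, exactly matching the two sub-cases of item 3. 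A small subtlety is that in case $b=0$ player $p$ must also not be fooled by some \emph{other} arm in $\widehat{\mathbf{MaxArms}}$ dipping below its witness; since on $\mathcal{E}$ every arm in $\widehat{\mathbf{MaxArms}}$ has true mean bounded away from $\mu_{\mathrm{collision}}$ by a constant multiple of $\mu_{\sigma_1}-\mu_{\mathrm{collision}}$, and no collisions occur when $b=0$, the same Bernstein bound applied with a union over the at most $K$ such arms (which is why the failure probability carries a $K$ and the statement reads $\frac{\delta}{4K^2}$) rules this out.

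Finally I would assemble the pieces: union-bounding the item-3 failure over at most $K$ arms and $M-1$ players costs at most $\frac{\delta}{4K^2}$ (absorbing the $M$ in the $4K^2M$ denominator of $B$), items 1 and 2 are free on $\mathcal{E}$, and on the resulting event the $\mathrm{ZeroTest}$ logic — "return $1$ iff some arm in $\widehat{\mathbf{MaxArms}}$ has its communication-block empirical mean below its witness" — returns $1$ exactly when $b=1$ and $0$ exactly when $b=0$, for every $p\in\{2,\ldots,M\}$ simultaneously. That is the statement of Lemma~\ref{lemma::zero_test_works_main}. I would remark that the analogous Lemma~\ref{lemma::collision_test_works_main} in the main text is the special case obtained by replacing the Bernstein length $f$ with $t_{\mathrm{collision-test}}\asymp \log(1/\delta)/\Delta_{\mathrm{collision}}^2$ and using a plain Hoeffding bound, so the same four-step skeleton applies verbatim.
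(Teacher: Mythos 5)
Your proposal follows essentially the same route as the paper: items 1--2 are deterministic consequences of $\mathcal{E}$ together with the condition $N(t_{\mathrm{start}}^1)/g(N(t_{\mathrm{start}}^1)) \geq 128/\mu_{\sigma_1}^2$ (the paper's Lemmas~\ref{lemma::lower_bound_LCB} and~\ref{lemma:witness_bounds}), item 3 is handled by the self-normalized empirical Bernstein bound with variance at most $\mu_X(1-\mu_X)$, which is exactly what makes $f(t_{\mathrm{start}}^1)\asymp 1/\mu_{\widehat{\sigma}_1}$ samples suffice (Lemmas~\ref{lemma::zero_test_preliminary_bound} and~\ref{lemma::zero_test_success_conditions}), and the pieces are assembled by a union bound as in Appendix~\ref{section::the_zero_test}. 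The only deviations are minor: your witness band $\left[\tfrac{3}{7},\tfrac{4}{7}\right](\mu_{\widehat{\sigma}_1}-\mu_{\mathrm{collision}})+\mu_{\mathrm{collision}}$ is the CollisionTest constant from Section~\ref{section::communication_analysis}, whereas the zero-reward witness $L^p_{\widehat{\sigma}_1}=(\widehat{\mu}^p_{\widehat{\sigma}_1}-D)/2$ lands in $[\mu_{\widehat{\sigma}_1}/3,\mu_{\widehat{\sigma}_1}/2]$, and your union over all tested arms (a subtlety the paper's own proof glosses over by applying the Bernstein event to $\widehat{\sigma}_1$ alone, unioned over the $M$ players) gives $\delta/(4K)$ rather than the stated $\delta/(4K^2)$ unless the per-event budget is tightened accordingly.
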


thus establishing the reliability of the $\mathrm{ZeroTest}$. We will now show that the following three statements hold with high probability, 
 
\begin{enumerate}
    \item \textbf{Arm $\widehat{\sigma}_1$ has a large empirical mean for all players. } 
    \begin{itemize}
        \item Arm $\widehat{\sigma}_1 \in \{ i \in [K] \text{ s.t. }\widehat{\mu}_i^p(t_{\mathrm{start}}^1) \geq \frac{1}{2} \max_{j \in [K]} \widehat{\mu}_j^p(t_{\mathrm{start}}^1)\}$ for all $p \in \{2, \cdots, M\}$ and $\widehat{\mu}^p_{\widehat{\sigma}_1}(t_{\mathrm{start}}^1) -D(N_{\widehat{\sigma}_1}(t_{\mathrm{start}}^1)) \geq \frac{3\mu_{\sigma_1}}{4}$.
    \end{itemize} 
    \item \textbf{Arm $\widehat{\sigma}_1$ is comparable to $\sigma_1$. } 
    \begin{itemize}
        \item The witnesses $L_{\widehat{\sigma}_1}^p = \frac{\widehat{\mu}^p_{\widehat{\sigma}_1}(t_{\mathrm{start}}^1) - D(N_{\widehat{\sigma}_1}(t_{\mathrm{start}}^1))}{2} \in [\frac{\mu_{\widehat{\sigma}_1}}{3}, \frac{\mu_{\widehat{\sigma}_1}}{2}]$ for all $p \in \{ 2, \cdots, M\}$.
    \end{itemize}
    \item \textbf{When collisions are avoided the empirical mean estimators $\widehat{\mu}^p_{\widehat{\sigma}_1}(t_{\mathrm{start}}^1 +1:  + Kf(t_{\mathrm{start}}^1))$ are far from zero. } 
    \begin{itemize}
        \item If $b = 1$, the estimators $\widehat{\mu}^p_{\widehat{\sigma}_1}(t_{\mathrm{start}}^1 +1:  + Kf(t_{\mathrm{start}}^1)) \leq L_{\widehat{\sigma}_1}^p$ for all $p \in [M]$.
        \item If $b = 0$, the estimators $\widehat{\mu}^p_{\widehat{\sigma}_1}(t_{\mathrm{start}}^1:t_{\mathrm{start}}^1 + Kf(t_{\mathrm{start}}^1)) > L_{\widehat{\sigma}_1}^p$ for all $p \in [M]$.
    \end{itemize}
\end{enumerate}

We start by presenting an 'easy-to-read' proof sketch, followed by an in-depth discussion of the more nuaced aspects of the proof.

\begin{proof}[sketch] Let's assume that $\mathcal{E}$ holds. We start by showing that because $t_{\mathrm{start}}^1 \geq \frac{128}{\mu_{\sigma_1}^2} g(t_{\mathrm{start}}^1/K)$ (see Equation~\ref{equation::condition_t_0_description_main}) for all $p \in [M]$ the lower confidence bound estimator around the optimal arm $\sigma_1$ computed at time $t_{\mathrm{start}}^1$ is at least a constant proportion of its magnitude:
\begin{equation}\label{equation::lower_bound_equation_main}
 \widehat{\mu}^p_{\sigma_1}(t_{\mathrm{start}}^1) - D(N_{\sigma_1}^p(t_{\mathrm{start}}^1)) \geq \frac{3\mu_{\sigma_1}}{4}.
\end{equation}
See Lemma~\ref{lemma::lower_bound_LCB} for a proof of Equation~\ref{equation::lower_bound_equation_main}. Since $\widehat{\sigma}_1 = \arg\max_{i \in [K]} \widehat{\mu}_{i}(t_{\mathrm{start}}^1) - D(N_{\sigma_1}^p(t_{\mathrm{start}}^1)) $ and $\mathcal{E}$ holds, we conclude that $\mu_{\widehat{\sigma}_1} \geq \frac{3\mu_{\sigma_1}}{4}$. In other words, the true mean of $\widehat{\sigma}_1$ is at least a constant ($3/4$) multiple of the mean reward of the maximal arm. This observation can be used to show that whenever the good event holds: 
\begin{enumerate}
    \item[A)]  Arm $\widehat{\sigma}_1$ is always in the set of arms inspected during the ZeroTest; i.e., 
    \begin{equation*}
     \widehat{\mu}_{\widehat{\sigma}_1}(t_{\mathrm{start}}^1) \geq \frac{1}{2} \max_{j \in [K]} \mu_{j\in [K]} \widehat{\mu}_j^p(t_{\mathrm{start}}^1)   .
    \end{equation*}
    \item[B)] The witnesses $L_{\widehat{\sigma}_1}^p(t_{\mathrm{start}}^1 ) \in \left[ \frac{\mu_{\widehat{\sigma}_1}}{3}, \frac{\mu_{\widehat{\sigma}_1}}{2} \right]$
\end{enumerate}
Both A) and B) hold whenever $\mathcal{E}$ holds, and thus occur with probability at least $1-\delta$. See Lemma~\ref{lemma:witness_bounds} for a proof of this claim. The fundamental property of the witnesses $L_{\widehat{\sigma}_1}^p(t_{\mathrm{start}}^1)$ is that they are neither close to zero nor close to $\mu_{\widehat{\sigma}_1}$. Since $\mu_{\widehat{\sigma}_1} \geq \frac{3\mu_{\sigma_1}}{4}$, the witnesses are bounded away from zero and from $\mu_{\widehat{\sigma}_1}$ by a factor of at least $\frac{\mu_{\sigma_1}}{4}$. 

The remainder of this proof sketch is based on the discussion and results from Appendix~\ref{section::the_zero_test}. Let $\mathbb{P}_X$ be a distribution with support over $[0,1]$ and mean $\mu_X$ equal to either the null (always zero) distribution or to the reward distribution of arm $\widehat{\sigma}_1$. We consider and answer the following question. Provided we have knowledge of the witnesses $\{L_{\widehat{\sigma}_1}^p(t_{\mathrm{comm1}}^1 )\}_{p \in \{2, \cdots, M\}}$, how many i.i.d. samples from $\mathbb{P}_X$ are needed to be able to distinguish with probability $1-\delta$ what type of distribution the samples come from (either the zero distribution or the reward distribution of arm $\widehat{\sigma}_1$). 

Since the witnesses are all in the interval $\left[ \frac{\mu_{\widehat{\sigma}_1}}{3}, \frac{\mu_{\widehat{\sigma}_1}}{2} \right]$, it follows that $\mu_{\widehat{\sigma}_1} - L_{\widehat{\sigma}_1}^p\geq \frac{\mu_{\widehat{\sigma}_1}}{2} \geq \frac{3\mu_{\sigma_1}}{8}$. This implies that in order to distinguish between these distributions it is enough to estimate $\mu_X$ up to accuracy $\mathcal{O}(\mu_{\sigma_1})$. 

As a consequence of Assumption~\ref{assumption::bounded_support} we see the variance of the reward distribution of arm $\widehat{\sigma}_1$ is upper bounded by $\mu_{\widehat{\sigma}_1}(1-\mu_{\widehat{\sigma}_1}) \leq \mu_{\sigma_1}$ and therefore the variance of $\mathbb{P}_X$ is also upper bounded by $\mu_{\sigma_1}$. Using this variance bound along with a Uniform Empirical Bernstein bound (see Lemmas~\ref{lemma:uniform_emp_bernstein} and~\ref{lemma::bernstein_concentration_application} in Appendix~\ref{section::appendix_zero_test_supporting}) we can show that with probability at least $1-\delta$ it is enough for player $p$ to look at the empirical average of $N_p$ samples from $\mathbb{P}_X$, where $N_p$ is such that $\frac{N_p}{B(N_p, \frac{\delta}{4K^2M})} \geq \frac{48}{L_{\widehat{\sigma}_1}^p}$. 

We require the length of the communication rounds to be player independent. Since all witnesses are in $\left[ \frac{\mu_{\widehat{\sigma}_1}}{3}, \frac{\mu_{\widehat{\sigma}_1}}{2} \right]$ and $t_{\mathrm{start}}^1$ can be related to $\mu_{\sigma_1}$ via Equation~\ref{equation::condition_t_0_description_main}, we instead define $N$ (common to all players $p$) to be a function of $t_{\mathrm{start}}^1$. We show that instead of $N_p$ we can use a player-independent value $N = f(t_{\mathrm{start}}^1)$ such that $f(t_{\mathrm{start}}^1)$ is the first integer such that $\frac{f(t_{\mathrm{start}}^1)}{B(f(t_{\mathrm{start}}^1), \frac{\delta}{4K^2M})} \geq 24\sqrt{\frac{t_{\mathrm{start}}^1/K}{2g(t_{\mathrm{start}}^1/K)} }$ also achieves this goal. This definition of $f$ gracefully extends to all integers $t$. %

\end{proof}

The following discussion will be devoted to present a detailed proof of Lemma~\ref{lemma::zero_test_works_main}.

\paragraph{Detailed Proof of Lemma~\ref{lemma::zero_test_works_main}}  Algorithm~\ref{algorithm::communication_protocol} is a simplified version of the full communication protocol which we introduced in Section~\ref{section::algorithm}. First, we assume the algorithm takes as input a known time index $t_\mathrm{start}^1$, a bit $b\in \{0,1\}$ that player $1$ wishes to communicate to all other players and the communication rounds length function $f: \mathbb{R}\rightarrow \mathbb{N}$ that will determine the length of the communication rounds as a function of $t$. The algorithm works as follows. Initially all players in $\{1, \cdots , M\}$ play arms $1$ through $K$ in a Round Robin fashion until time $t_\mathrm{start}^1$, which is assumed to be a special round such that for all $i, j \in [K]$ and $p, p' \in [M]$, the counts $N_i^p(t_{\mathrm{start}}^1) = N_j^{p'}(t_{\mathrm{start}}^1)$ (this means $t_\mathrm{start}^1$ must be a multiple of $K$). After this time has elapsed each player $p$ has access to empirical estimators $\widehat{\mu}_i^p(t_{\mathrm{start}}^1)$ of $\mu_i$ for all $i \in [K]$.  At this time, player $1$ (the communicating player) computes a guess for the maximal arm $\widehat{\sigma}_1 \in [K]$. This will be the arm that player $1$ will use to transmit bit $b$. The remaining $f(t_{\mathrm{start}}^1)$ rounds, which we call the communication rounds (where $f$ is a function known by all players $p \in [M]$) are used by player $1$ to transmit $b$ and by the other players $p \neq 1$ to receive it.

If $b=1$, player $1$ will keep playing in a Round Robin fashion during the communication rounds, while if $b=0$, player $1$ will instead play arm $\widehat{\sigma}_1$. When transmitting a single bit, the length of the communication round equals $Kf(t_{\mathrm{start}}^1)$ rounds. At the end of the communication rounds, all players (except possibly player $1$ if transmitting $b=0$) will have played each arm $f(t_{\mathrm{start}}^1)$ times. 

Using these $f(t_{\mathrm{start}}^1)$ samples per arm the players $p \in \{2, \cdots, M\}$ will conduct a test with the objective of verifying if there was any arm whose reward estimator $\widehat{\mu}(t_{\mathrm{start}}^1+1 : t_{\mathrm{start}}^1 + Kf(t_{\mathrm{start}}^1) )$ was substantially lower than the average values $\widehat{\mu}_i^p(t_{\mathrm{start}}^1)$ computed up to time $t_{\mathrm{start}}^1$. If this is the case, they can safely conclude player $1$ was pulling the same arm throughout the communication rounds $[t_\mathrm{start}^1 + 1, \cdots, t_\mathrm{start}^1+Kf(t_\mathrm{start}^1)]$, and therefore incurring in collisions with player $1$, while if they do not detect any substantial difference in their estimators, they can conclude that player $1$ continued to play in a Round Robin fashion. We explain this procedure in more detail in Algorithms~\ref{algorithm::communication_protocol} and~\ref{algorithm::zero_test_appendix}.

When $b= 1$ player $1$ plays arm $\widehat{\sigma}_1$ during all rounds from $t=t_\mathrm{start}^1+1$ to $t =t_\mathrm{start}^1 + Kf(t_\mathrm{start}^1)$ while the remaining players are playing in a Round Robin fashion. Collisions will occur anytime a player distinct from 1 attempts to pull arm $\widehat{\sigma}_1$. At this moment, both players will receive a reward of zero. In case $b=0$, no collisions will occur during rounds $t = t_\mathrm{start}^1+1, \cdots, t_\mathrm{start}^1 + Kf(t_\mathrm{start}^1)$ and the reward each player receives from pulling arm $\widehat{\sigma}_1$ should have a mean value of $\mu_{\widehat{\sigma}_1}$. In order for player $p' \neq 1$ to discern if player $1$ is transmitting a one or a zero, there are two challenges. First, since the optimal arm index estimator $\widehat{\sigma}_1$ is random, none of the players $p' \neq 1$ knows the precise identity of arm $\widehat{\sigma}_1$. Second, none of the players knows the exact value of the true mean $\mu_{\widehat{\sigma}_1}$ and therefore, discerning between samples of the null (always zeros) distribution $\mathbb{P}_{\mathrm{null}}$ and $\mathbb{P}_{\widehat{\sigma}_1}$ may prove challenging. We address these issues below.

Algorithm~\ref{algorithm::communication_protocol} below contains the detailed description of the simplified Communication Protocol.

   \begin{algorithm}[H]
\textbf{Input} Players $[M]$, arms $[K]$, bit to communicate $b \in \{ 0,1\}$, communication round start $t_\mathrm{start}^1$, communication rounds length function $f : \mathbb{R} \rightarrow \mathbb{N}$.\\
   \For{$t=1, \cdots, t_\mathrm{start}^1 $}{
    All players $p \in [M]$ play arms $[K]$ in a Round Robin fashion. \\
    \For{$p \in [M]$}{
        Play following a Round Robin schedule.  
    
    }
    }
    Player $1$ computes a guess $\widehat{\sigma}_1 \in [K]$ for the maximal arm $\sigma_1$ :
    \begin{equation*}
       \widehat{\sigma}_1 = \argmax_{i\in [K]} \widehat{\mu}^1_{i}(t_\mathrm{start}^1) - D(N^1_{i}(t_\mathrm{start}^1))
    \end{equation*}
    \For{$t = t_\mathrm{start}^1 + 1, \cdots, t_\mathrm{start}^1 + K f(t_\mathrm{start}^1)$}{
        \If{ $b=1$ }{
            Player $1$ plays arm $\widehat{\sigma}_1$. \\}
        \Else{ Player $1$ plays arms $[K]$ following a Round Robin schedule. }            
                All players $p \neq 1$ continue playing arms $[K]$ following a Round Robin schedule. \\

    }

\caption{One Bit Communication Protocol Appendix}
\label{algorithm::communication_protocol}
\end{algorithm} 

In order to recover the value of the transmitted bit $b$ at the end of round $t_\mathrm{start}^1 + Kf(t_\mathrm{start}^1)$ all players $p \neq 1$ will compare $\widehat{\mu}_i^p(t_\mathrm{start}^1+1 : t_\mathrm{start}^1 + Kf(t_\mathrm{start}^1))$ with the values $\widehat{\mu}_i^p(t_\mathrm{start}^1)$. If player $p \neq 1$ detects  $\widehat{\mu}_i^p(t_\mathrm{start}^1+1 : t_\mathrm{start}^1 + Kf(t_\mathrm{start}^1))$ to be substantially lower than $\widehat{\mu}_i^p(t_\mathrm{start}^1)$ for any arm $i$, then it can conclude there have been collisions and that $b = 1$, otherwise it will conclude that $b=0$. We turn these intuitions into a precise mechanism in the discussion that follows.

     For all $p \in [M]$ and all $i \in [K]$ define the lower bound 'witnesses' as,
  \begin{equation*}
  L_i^p(t) := \frac{\widehat{\mu}^p_i(t) - D(N_i^p(t))}{2}
  \end{equation*}

We now consider the following test to be executed by all players $p \in [M]$ with the data collected during rounds $t_0+1,  \cdots, t_0+ K f(t_0)$ and designed to decode bit $b$ transmitted by player $1$. Let $\widehat{\mu}_i^p(t_\mathrm{start}^1+1: t_\mathrm{start}^1 + Kf(t_\mathrm{start}^1))$ be the empirical mean of arm $i$ computed by player $p$ and using only samples from $t = t_\mathrm{start}^1+1, \cdots, t_\mathrm{start}^1+Kf(t_\mathrm{start}^1)$. Since all players $p \neq 1$ are playing the arms in $[K]$ following a Round Robin schedule this estimator will consist of $f(t_\mathrm{start}^1)$ samples for each arm $i \in [K]$. The following test will be used by all players $p \in [M]$ such that $p \neq 1$ to decode $b$,

Recall that $t_{\mathrm{start}}^1$ is assumed to satisfy $t_\mathrm{start}^1 \geq t_{\mathrm{first}}^1$. Since $t_\mathrm{start}^1$ is assumed to be a special round, for all players $p, p' \in [M]$ and arms $i, j \in [K]$ the number of pulls satisfies $N_{i}^p(t_\mathrm{start}^1) = N_{j}^{p'}(t_\mathrm{start}^1)=\frac{t_\mathrm{start}^1}{K}$ and for the maximal arm $\sigma_1$,
  \begin{equation}\label{equation::condition_t_0}
\frac{2}{ D^2(N_{\sigma_1}^p(t_\mathrm{start}^1) )} =    \frac{ N_{\sigma_1}^p(t_\mathrm{start}^1) }{g(N_{\sigma_1}^p(t_\mathrm{start}^1))} \geq \frac{128}{\mu_{\sigma_1}^2} 
  \end{equation}
For the remainder of this subsection and for all $p \in [M]$ and $i \in [K]$ we denote $N_{i}^p(t_\mathrm{start}^1)$ as $N(t_\mathrm{start}^1)$ to refer to $\frac{t_\mathrm{start}^1}{K}$, the number of times each arm was played by any player $p \in [M]$ up to time $t_\mathrm{start}^1$. In Lemma~\ref{lemma::lower_bound_LCB} we see that whenever $t_\mathrm{start}^1$ satisfies Equation~\ref{equation::condition_t_0}, the lower confidence bound estimators around the optimal arm $  \widehat{\mu}_{\sigma_1}^p(t_\mathrm{start}^1) - D(N_{\sigma_1}^p(t_\mathrm{start}^1))$ are at least a constant fraction of the value of $\mu_{\sigma_1}$. 
\begin{lemma}\label{lemma::lower_bound_LCB}
If $t_\mathrm{start}^1$ satisfies Equation~\ref{equation::condition_t_0} and $\mathcal{E}$ holds, the lower confidence bound estimator around the optimal arm $\sigma_1$ is at least a constant proportion of its magnitude %
\begin{equation*}
     \widehat{\mu}_{\sigma_1}^p(t_\mathrm{start}^1) - D(N_{\sigma_1}^p(t_\mathrm{start}^1)) \geq \frac{3\mu_{\sigma_1}}{4}. 
\end{equation*}
\end{lemma}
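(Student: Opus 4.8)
The plan is to unfold the definitions and use the confidence-band guarantee together with the quantitative lower bound on the number of samples encoded in Equation~\ref{equation::condition_t_0}. First I would write the left-hand side as $\widehat{\mu}_{\sigma_1}^p(t_\mathrm{start}^1) - D(N(t_\mathrm{start}^1))$, where I abbreviate $N = N(t_\mathrm{start}^1) = t_\mathrm{start}^1/K$. Since $\mathcal{E}$ holds, the confidence band in Equation~\ref{equation::confidence_interval_basic} gives $\widehat{\mu}_{\sigma_1}^p(t_\mathrm{start}^1) \geq \mu_{\sigma_1} - D(N)$, so
\begin{equation*}
\widehat{\mu}_{\sigma_1}^p(t_\mathrm{start}^1) - D(N) \geq \mu_{\sigma_1} - 2D(N).
\end{equation*}
Hence it suffices to show $2D(N) \leq \frac{\mu_{\sigma_1}}{4}$, i.e. $D(N) \leq \frac{\mu_{\sigma_1}}{8}$.

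Next I would translate Equation~\ref{equation::condition_t_0} into the desired bound on $D(N)$. Recalling $D(n) = \sqrt{2g(n)/n}$, we have $D^2(N) = 2g(N)/N$, so $\frac{N}{g(N)} \geq \frac{128}{\mu_{\sigma_1}^2}$ is equivalent to $\frac{2g(N)}{N} \leq \frac{\mu_{\sigma_1}^2}{64}$, that is $D^2(N) \leq \frac{\mu_{\sigma_1}^2}{64}$, which upon taking square roots yields exactly $D(N) \leq \frac{\mu_{\sigma_1}}{8}$. Combining with the previous display,
\begin{equation*}
\widehat{\mu}_{\sigma_1}^p(t_\mathrm{start}^1) - D(N) \geq \mu_{\sigma_1} - 2 \cdot \frac{\mu_{\sigma_1}}{8} = \frac{3\mu_{\sigma_1}}{4},
\end{equation*}
which is the claim. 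I would note that the $N_i^p(t_\mathrm{start}^1) = N$ identity for all arms $i$ and players $p$ (because $t_\mathrm{start}^1$ is a special round, hence a multiple of $K$, so each arm has been pulled exactly $t_\mathrm{start}^1/K$ times under a Round Robin schedule) is what makes the single-index abbreviation legitimate and lets Equation~\ref{equation::condition_t_0} apply.

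This argument is essentially a one-step calculation, so there is no genuine obstacle; the only point requiring a little care is making sure the constant $128$ in Equation~\ref{equation::condition_t_0} is exactly what is needed to get the factor $\frac{3}{4}$ (it gives $D(N)\le \mu_{\sigma_1}/8$, and two such diameters subtract off $\mu_{\sigma_1}/4$). If one wanted a larger constant fraction, $128$ would need to be replaced by a larger constant, but for $\frac{3}{4}$ it is tight enough. I would also remark, for use in the surrounding argument, that the same chain shows $\widehat{\mu}_{\sigma_1}^p(t_\mathrm{start}^1) + D(N) \le \mu_{\sigma_1} + \mu_{\sigma_1}/4$, so the upper confidence bound around $\sigma_1$ is likewise within a constant factor of $\mu_{\sigma_1}$, though only the lower bound is stated in the lemma.
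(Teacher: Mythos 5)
Your proof is correct and follows essentially the same route as the paper's: apply the good event $\mathcal{E}$ to get $\widehat{\mu}_{\sigma_1}^p(t_\mathrm{start}^1) - D(N) \geq \mu_{\sigma_1} - 2D(N)$, then use Equation~\ref{equation::condition_t_0} to deduce $D(N) \leq \mu_{\sigma_1}/8$ and conclude. Your explicit unpacking of the condition into $D^2(N) \leq \mu_{\sigma_1}^2/64$ is exactly the step the paper states without detail, so there is nothing to correct.
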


\begin{proof}
If $\mathcal{E}$ holds, $\widehat{\mu}_{\sigma_1}^p(t_\mathrm{start}^1) \in [\mu_{\sigma_1} -  D(N_{\sigma_1}^p(t_\mathrm{start}^1)), \mu_{\sigma_1} +  D(N_{\sigma_1}^p(t_\mathrm{start}^1))]$ and therefore $\widehat{\mu}_{\sigma_1}(t_\mathrm{start}^1) - D(N_{\sigma_1}^p(t_\mathrm{start}^1)) \geq \mu_{\sigma_1} - 2 D(N_{\sigma_1}^p(t_\mathrm{start}^1))$.  Since $t_\mathrm{start}^1$ satisfies Equation~\ref{equation::condition_t_0}, $$D(N_{\sigma_1}^p(t_\mathrm{start}^1)) \leq \frac{\mu_{\sigma_1}}{8}$$ and we conclude that $\widehat{\mu}^p_{\sigma_1}(t_\mathrm{start}^1) - 2 D(N_{\sigma_1}^p(t_\mathrm{start}^1)) \geq \frac{3\mu_{\sigma_1}}{4}$. \end{proof}

We can use the result of Lemma~\ref{lemma::lower_bound_LCB} to show that whenever $\mathcal{E}$ is true and Equation~\ref{equation::condition_t_0} holds for $t_\mathrm{start}^1$ the witnesses of $\widehat{\sigma}_1$ are both upper and lower bounded by constant multiples of the true mean $\mu_{\widehat{\sigma}_1}$. %

  \begin{restatable}{lemma}{lemmawitnessbounds}\label{lemma:witness_bounds}
    Whenever $\mathcal{E}$ holds and Equation~\ref{equation::condition_t_0} holds for $t_\mathrm{start}^1$, all witnesses of arm $\widehat{\sigma}_1$ for all $p \in [M]$ satisfy
  \begin{equation}\label{equation::L_conditions}
   L_{\widehat{\sigma}_1}^p(t_\mathrm{start}^1) \in \left[ \frac{ \mu_{\widehat{\sigma}_1} }{3}, \frac{\mu_{\widehat{\sigma}_1}}{2}  \right]   
  \end{equation}
 Furthermore $\mu_{\widehat{\sigma}_1} \geq \frac{3\mu_{\sigma_1}}{4}$ and $\widehat{\mu}_{\widehat{\sigma}_1}(t_\mathrm{start}^1) \geq\frac{1}{2}\max_{j\in [K] } \widehat{\mu}_j^p(t_\mathrm{start}^1)$. 
  \end{restatable}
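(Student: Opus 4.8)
The plan is to derive everything on the good event $\mathcal{E}$ from three observations: (i) because all players follow a Round Robin schedule up to $t_\mathrm{start}^1$, every arm has been pulled the same number of times by every player, so $N_i^p(t_\mathrm{start}^1) = N(t_\mathrm{start}^1) = t_\mathrm{start}^1/K$ and the confidence radius $D(N(t_\mathrm{start}^1))$ is identical across arms and players; (ii) Equation~\ref{equation::condition_t_0} is equivalent to $D(N(t_\mathrm{start}^1)) \le \mu_{\sigma_1}/8$ (this is exactly the inequality used in Lemma~\ref{lemma::lower_bound_LCB}); and (iii) Lemma~\ref{lemma::lower_bound_LCB} gives $\widehat{\mu}_{\sigma_1}^p(t_\mathrm{start}^1) - D(N(t_\mathrm{start}^1)) \ge 3\mu_{\sigma_1}/4$ for every $p$. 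These feed a single ``master inequality'' $D(N(t_\mathrm{start}^1)) \le \mu_{\sigma_1}/8 \le \mu_{\widehat{\sigma}_1}/6$ that I would establish first and then reuse in every step.

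\textbf{Step 1: $\mu_{\widehat{\sigma}_1} \ge 3\mu_{\sigma_1}/4$.} Since all confidence radii are equal, $\widehat{\sigma}_1 = \argmax_{i\in[K]} \widehat{\mu}_i^1(t_\mathrm{start}^1) - D(N(t_\mathrm{start}^1))$ is just the arm with the largest lower confidence bound for player $1$, so in particular its LCB dominates that of $\sigma_1$; combining with Lemma~\ref{lemma::lower_bound_LCB} yields $\widehat{\mu}_{\widehat{\sigma}_1}^1(t_\mathrm{start}^1) - D(N(t_\mathrm{start}^1)) \ge 3\mu_{\sigma_1}/4$. On $\mathcal{E}$ we also have $\mu_{\widehat{\sigma}_1} \ge \widehat{\mu}_{\widehat{\sigma}_1}^1(t_\mathrm{start}^1) - D(N(t_\mathrm{start}^1))$, which gives the claim and, via (ii), the master inequality $D(N(t_\mathrm{start}^1)) \le \mu_{\sigma_1}/8 \le \mu_{\widehat{\sigma}_1}/6$.

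\textbf{Step 2: witness bounds.} By definition $L_{\widehat{\sigma}_1}^p(t_\mathrm{start}^1) = \tfrac12\bigl(\widehat{\mu}_{\widehat{\sigma}_1}^p(t_\mathrm{start}^1) - D(N(t_\mathrm{start}^1))\bigr)$ for every $p$. For the upper bound, $\mathcal{E}$ gives $\widehat{\mu}_{\widehat{\sigma}_1}^p(t_\mathrm{start}^1) - D(N(t_\mathrm{start}^1)) \le \mu_{\widehat{\sigma}_1}$, hence $L_{\widehat{\sigma}_1}^p \le \mu_{\widehat{\sigma}_1}/2$. For the lower bound, $\mathcal{E}$ gives $\widehat{\mu}_{\widehat{\sigma}_1}^p(t_\mathrm{start}^1) - D(N(t_\mathrm{start}^1)) \ge \mu_{\widehat{\sigma}_1} - 2D(N(t_\mathrm{start}^1)) \ge \mu_{\widehat{\sigma}_1} - \mu_{\widehat{\sigma}_1}/3 = 2\mu_{\widehat{\sigma}_1}/3$, using the master inequality, so $L_{\widehat{\sigma}_1}^p \ge \mu_{\widehat{\sigma}_1}/3$. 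Finally, for the $\argmax$ claim, $\mathcal{E}$ plus Step~1 give $\widehat{\mu}_{\widehat{\sigma}_1}^p(t_\mathrm{start}^1) \ge \mu_{\widehat{\sigma}_1} - D(N(t_\mathrm{start}^1)) \ge 3\mu_{\sigma_1}/4 - \mu_{\sigma_1}/8 = 5\mu_{\sigma_1}/8$, while any arm $j$ satisfies $\widehat{\mu}_j^p(t_\mathrm{start}^1) \le \mu_j + D(N(t_\mathrm{start}^1)) \le \mu_{\sigma_1} + \mu_{\sigma_1}/8 = 9\mu_{\sigma_1}/8$, so $\widehat{\mu}_{\widehat{\sigma}_1}^p(t_\mathrm{start}^1) \ge \tfrac{5}{9}\max_{j\in[K]} \widehat{\mu}_j^p(t_\mathrm{start}^1) \ge \tfrac12 \max_{j\in[K]} \widehat{\mu}_j^p(t_\mathrm{start}^1)$.

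The argument is essentially bookkeeping once the inequalities are lined up, so I do not anticipate a genuine obstacle. The only point requiring care is that $\widehat{\sigma}_1$ is selected by player $1$ alone, yet every conclusion is asserted for all players $p$; this is exactly what the Round Robin invariant handles, since it forces all players' empirical means to share the same confidence radius, so the per-player deviation guarantees of $\mathcal{E}$ transfer uniformly. The remaining risk is mismatched constants, which is why I would keep $D(N(t_\mathrm{start}^1)) \le \mu_{\sigma_1}/8 \le \mu_{\widehat{\sigma}_1}/6$ front and center as the one inequality driving all three parts.
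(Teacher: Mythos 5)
Your proposal is correct and follows essentially the same route as the paper's proof: both first establish $\mu_{\widehat{\sigma}_1}\geq \tfrac{3}{4}\mu_{\sigma_1}$ from the LCB-maximality of $\widehat{\sigma}_1$ together with Lemma~\ref{lemma::lower_bound_LCB}, then use $D(N(t_\mathrm{start}^1))\leq \mu_{\sigma_1}/8\leq \mu_{\widehat{\sigma}_1}/6$ to sandwich the witnesses in $[\mu_{\widehat{\sigma}_1}/3,\mu_{\widehat{\sigma}_1}/2]$ and to obtain the $\tfrac{5}{9}>\tfrac12$ comparison with $\max_j\widehat{\mu}_j^p(t_\mathrm{start}^1)$, with the same constants throughout. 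The Round Robin equal-counts observation you flag is exactly how the paper transfers the bounds to all players $p$.
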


   \begin{proof}
  Recall that $\widehat{\sigma}_1 = \argmax_{i\in [K]} \widehat{\mu}^1_{i}(t_\mathrm{start}^1) - D(N^1_{i}(t_\mathrm{start}^1))$. It follows that 
  \begin{align*}
   \widehat{\mu}^1_{\widehat{\sigma}_1}(t_\mathrm{start}^1) - D(N^1_{\widehat{\sigma}_1}(t_\mathrm{start}^1)) \geq \widehat{\mu}^1_{\sigma_1}(t_\mathrm{start}^1) - D(N^1_{\sigma_1}(t_\mathrm{start}^1))   \stackrel{(i)}{\geq} \frac{3\mu_{\sigma_1}}{4}\stackrel{(ii)}{\geq} \frac{3\mu_{\widehat{\sigma}_1}}{4},
  \end{align*}
where inequality $(i)$ holds by Lemma~\ref{lemma::lower_bound_LCB} and inequality $(ii)$ holds by definition of $\mu_{\sigma_1}$. Furthermore, notice that since $\mathcal{E}$ holds, 
 \begin{equation}\label{equation::upper_bound_LCB}
     \widehat{\mu}^p_{\widehat{\sigma}_1}(t_\mathrm{start}^1) - D(N^p_{\widehat{\sigma}_1}(t_\mathrm{start}^1)) \leq \mu_{\widehat{\sigma}_1}.
 \end{equation}
for all $p \in [M]$.

Combining these two inequalities together we see that $\mu_{\widehat{\sigma}_1} \geq \frac{3\mu_{\sigma_1}}{4}$. Recall that $t_\mathrm{start}^1$ is a special round. By definition we have $N_{i}^p(t_\mathrm{start}^1) = N_{j}^{p'}(t_\mathrm{start}^1)$ for all $i,j \in [K]$ and all $p, p' \in [M]$, and $t_\mathrm{start}^1 \geq t_{\mathrm{first}}^1$ therefore since $\mathcal{E}$ holds Equation~\ref{equation::condition_t_0} must be satisfied for all players. Equation~\ref{equation::condition_t_0} implies that $D(N_{\widehat{\sigma}_1}^p(t_\mathrm{start}^1)) \leq \frac{\mu_{\sigma_1}}{8}$ and therefore that $D(N_{\widehat{\sigma}_1}^p(t_\mathrm{start}^1)) \leq \frac{\mu_{\widehat{\sigma}_1}}{6}$. Combining these two observations we see that for all $p \in [M]$, 
 \begin{equation}\label{equation::lower_bound_LCB}
   \widehat{\mu}^p_{\widehat{\sigma}_1}(t_\mathrm{start}^1) -D(N_{\widehat{\sigma}_1}^p(t_\mathrm{start}^1))   \stackrel{(i)}{\geq} \mu_{\widehat{\sigma}_1} - 2D(N_{\widehat{\sigma}_1}^p(t_\mathrm{start}^1)) \stackrel{(ii)}{\geq} \frac{ 2\mu_{\widehat{\sigma}_1}}{3} 
 \end{equation}
Inequality $(i)$ is satisfied because $\mathcal{E}$ holds, and $(ii)$ is a consequence of the observation that $D(N_{\widehat{\sigma}_1}^p(t_\mathrm{start}^1)) \leq \frac{\mu_{\widehat{\sigma}_1}}{6}$. Combining Equations~\ref{equation::upper_bound_LCB} and~\ref{equation::lower_bound_LCB} we conclude that
 \begin{equation*}
    \frac{\mu_{\widehat{\sigma}_1(t_\mathrm{start}^1)}}{3} \leq  L_{\widehat{\sigma}_1}^p(t_\mathrm{start}^1) = \frac{\widehat{\mu}_{\widehat{\sigma}_1}(t_\mathrm{start}^1) -D(N_{\widehat{\sigma}_1}^p(t_\mathrm{start}^1))  }{2} \leq \frac{\mu_{\widehat{\sigma}_1(t_\mathrm{start}^1)}}{2}.
 \end{equation*}
 For the second part, the following sequence of inequalities holds:
 \begin{equation*}
     \widehat{\mu}_{\widehat{\sigma}_1}^p(t_\mathrm{start}^1) \stackrel{(i)}{\geq} \mu_{\widehat{\sigma}_1} - D(N_{\widehat{\sigma}_1}^p(t_\mathrm{start}^1)) \stackrel{(ii)}{\geq} \frac{3\mu_{\sigma_1}}{4} - D(N_{\widehat{\sigma}_1}^p(t_\mathrm{start}^1)) \stackrel{(iii)}{\geq} \frac{5\mu_{\sigma_1}}{8}.
 \end{equation*}
Inequality $(i)$ is satisfied because Equation~\ref{equation::lower_bound_LCB} is true whenever $\mathcal{E}$ holds. Inequality $(ii)$ is a consequence of $\mu_{\widehat{\sigma}_1} \geq \frac{3 \mu_{\sigma_1}}{4}$ and inequality $(iii)$ holds because $D(N_{\widehat{\sigma}_1}^p(t_\mathrm{start}^1)) \leq \frac{\mu_{\sigma_1}}{8}$. The later implies that
\begin{equation*}
   \frac{9}{8} \mu_{\sigma_1} \geq \mu_{\sigma_1} + D(N_{\widehat{\sigma}_1}^p(t_\mathrm{start}^1)) \geq \max_{j \in [K]} \widehat{\mu}_j^p(t_\mathrm{start}^1),
\end{equation*}
where the inequality on the RHS holds because $\mathcal{E}$ is satisfied and $\mu_{\sigma_1}$ is the maximal arm. We conclude that
 \begin{equation*}
     \widehat{\mu}_{\widehat{\sigma}_1}^p(t_\mathrm{start}^1) \geq \frac{5 \max_{j \in [K]} \widehat{\mu}_j^p(t_\mathrm{start}^1) }{9}.
 \end{equation*}
Since $5/9 > 1/2$ the result follows.

  \end{proof}
  
 Among other things Lemma~\ref{lemma:witness_bounds} implies that whenever $\mathcal{E}$ holds, arm $\widehat{\sigma}_1$ (the \emph{random} arm used by player $1$ to communicate) is among the arms inspected by all players $p \neq 1$ during the Zero Test in Algorithm~\ref{algorithm::zero_test_appendix}. In section~\ref{section::the_zero_test} we show how to choose $f$ in order to ensure that  Algorithm~\ref{algorithm::zero_test_appendix} allows all players $p \neq 1$ to decode $b$ with high probability. This is the same functional form that is highlighted at the start of Section~\ref{section::assumptions_notation}.

\subsubsection{Detailed Analysis of the Zero Test}\label{section::the_zero_test}

We will take a small step back and consider a simplified version of the bit communication protocol which will be useful in analyzing the Zero Test of Algorithm~\ref{algorithm::zero_test_appendix}. Let $X$ be a random variable with support in $[0,1]$ and mean $\mu_X$. Assume $\mu_X > L$ for $L$ known. Let $Z_1, \cdots, Z_N$ be $N$ i.i.d. samples from either $\mathbb{P}_X$ or the null distribution $\mathbb{P}_{\mathrm{null}}$ (all $Z_i = 0$) and let $\widehat{\mu}_Z = \frac{1}{N} \sum_{i=1}^N Z_i$. The problem we consider is the following: 

\emph{How many i.i.d. samples are required to determine with high probability from which of the two distributions ($\mathbb{P}_X$ or $\mathbb{P}_{\mathrm{null}}$) do the samples of $\{Z_i\}_{i=1}^N$ come from?} 

We will analyze the following simplified version of the zero test,

\begin{equation}\label{equation::zero_test}
    \text{If } \widehat{\mu}_Z \geq L, \text{ then output }\mathbb{P}_X \text{ else output }\mathbb{P}_{\mathrm{null}} \tag{Zero-Test-Simple}
\end{equation}

We use the fact that for any random variable $\mathbb{P}_X$ with support in $[0,1]$ and mean $\mu_X$ the variance can be upper bounded by $\mu_X(1-\mu_X)$ (see Lemma~\ref{lemma::variance_bounded_rv} in Appendix~\ref{section::appendix_zero_test_supporting}) in conjunction with a Uniform Empirical Bernstein bound (see Lemmas~\ref{lemma:uniform_emp_bernstein} and~\ref{lemma::bernstein_concentration_application} in Appendix~\ref{section::appendix_zero_test_supporting}) to show the following bound on the required number of samples $N$,
\begin{restatable}{lemma}{zerotestpreliminarybound}\label{lemma::zero_test_preliminary_bound}
Let $\delta' \in (0,1)$. If $X$ is a random variable with support in $[0,1]$, distribution $\mathbb{P}_X$ and mean $\mu_X$ satisfying $L \leq \mu_X$, then with probability at least $1-\delta'$ for all $N$ such that $\frac{N}{B(N, \delta')} \geq \max\left(  \frac{2}{\mu_X - L}, \frac{16 \min(\mu_X, 1-\mu_X)}{(\mu_X - L)^2} \right)$ we have,
\begin{equation*}
    \widehat{\mu}_X \geq L, 
\end{equation*}
where $B(n, \delta') = 2\ln \ln(2n) + \ln \frac{5.2}{\delta'}$.
\end{restatable}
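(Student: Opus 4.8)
The plan is to reduce the claim to a one-sided deviation estimate and then feed in a time-uniform empirical Bernstein inequality together with the elementary variance bound for $[0,1]$-valued random variables. Set $\Delta := \mu_X - L$. If $\Delta = 0$ the hypothesis demands $\tfrac{N}{B(N,\delta')} \ge \tfrac{2}{\Delta} = \infty$, which no finite $N$ satisfies, so the statement is vacuous; hence assume $\Delta > 0$. Since $\mu_X \ge L$, the conclusion $\widehat{\mu}_X \ge L$ is equivalent to $\mu_X - \widehat{\mu}_X \le \Delta$, so it is enough to control the downward fluctuation of $\widehat{\mu}_X$ by $\Delta$, simultaneously over all admissible $N$.

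First I would invoke Lemma~\ref{lemma::variance_bounded_rv} to bound the variance, $\sigma_X^2 \le \mu_X(1-\mu_X) \le \min(\mu_X,1-\mu_X)$. Next I would apply the uniform empirical Bernstein inequality of Lemmas~\ref{lemma:uniform_emp_bernstein} and~\ref{lemma::bernstein_concentration_application}, which guarantees that on a single event of probability at least $1-\delta'$, simultaneously for every $N \in \mathbb{N}$,
\[
\mu_X - \widehat{\mu}_X \ \le\ \sqrt{\frac{2\,\sigma_X^2\, B(N,\delta')}{N}} \;+\; \frac{c\,B(N,\delta')}{N},
\]
with $B(N,\delta') = 2\ln\ln(2N) + \ln(5.2/\delta')$ and $c$ the absolute constant from that lemma. (If the raw form in Lemma~\ref{lemma:uniform_emp_bernstein} is stated with the empirical variance, then Lemma~\ref{lemma::bernstein_concentration_application} is precisely where it is turned into the population-variance statement above, on the same event; our argument uses only the latter.) Because this bound is time-uniform, it holds on that same event for every $N$ obeying the hypothesis of the present lemma.

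It then remains to verify that the right-hand side above is at most $\Delta$ whenever $\tfrac{N}{B(N,\delta')} \ge \max\!\big(\tfrac{2}{\Delta},\, \tfrac{16\min(\mu_X,1-\mu_X)}{\Delta^2}\big)$. Using $\sigma_X^2 \le \min(\mu_X,1-\mu_X)$ and the second term of the max, the variance contribution is at most $\sqrt{2\sigma_X^2 \Delta^2 / (16\min(\mu_X,1-\mu_X))} \le \sqrt{\Delta^2/8} = \Delta/(2\sqrt2)$; using the first term of the max, the range contribution is at most $c\Delta/2$. With the constant $c$ supplied by Lemma~\ref{lemma::bernstein_concentration_application} one has $1/(2\sqrt2) + c/2 \le 1$, so the two contributions sum to at most $\Delta$. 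Hence $\mu_X - \widehat{\mu}_X \le \Delta$, i.e.\ $\widehat{\mu}_X \ge L$, for all admissible $N$ on the $(1-\delta')$-event, which is exactly the claim.

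The main obstacle is not the arithmetic but pinning down a concentration tool with two features at once: it must be \emph{uniform over sample sizes} $N$, so that one event of probability $1-\delta'$ simultaneously covers every $N$ meeting the hypothesis, and it must be expressible through the \emph{population} quantity $\min(\mu_X,1-\mu_X)$ rather than an empirical variance (otherwise one pays an extra conversion step, again on the same event, before the constant-chasing goes through verbatim). Once that tool is in place — which is the role of Lemmas~\ref{lemma:uniform_emp_bernstein} and~\ref{lemma::bernstein_concentration_application} — the remaining work is the routine bookkeeping of $B(N,\delta')$ carried out above.
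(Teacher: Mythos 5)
Your proposal is correct and follows essentially the same route as the paper: invoke Lemma~\ref{lemma::bernstein_concentration_application} (the time-uniform Bernstein bound with the population variance bounded via Lemma~\ref{lemma::variance_bounded_rv}), then use the two terms of the max to bound the variance part and the additive part of the deviation by $\tfrac{\mu_X-L}{2}$ each. Note only that the cited lemma gives the deviation in the form $2\sqrt{\alpha B(N,\delta')/N}+B(N,\delta')/N$ with $\alpha=\min(\mu_X,1-\mu_X)$, not $\sqrt{2\sigma_X^2B/N}+cB/N$ as you wrote; with these actual constants the second max term yields exactly $\tfrac{\mu_X-L}{2}$ (rather than $\tfrac{\mu_X-L}{2\sqrt{2}}$) and the first yields $\tfrac{\mu_X-L}{2}$ with $c=1$, so the argument closes just as in the paper.
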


\begin{proof}

Let $\alpha = \min(\mu_X, 1-\mu_X)$. A simple use of Lemma~\ref{lemma::bernstein_concentration_application} implies that with probability at least $1-\delta'$ for all $n\in \mathbb{N}$:
\begin{align*}
    \widehat{\mu}_X \geq \mu_X - 2 \sqrt{ \frac{ \alpha B(n, \delta')}{n}} - \frac{ B(n, \delta')}{n}.
\end{align*}
The LHS of this inequality attains a value of at least $L$ whenever:
\begin{equation*}
    \mu_X -L \geq  2 \sqrt{ \frac{ \alpha B(n, \delta')}{n}} + \frac{ B(n, \delta')}{n}.
\end{equation*}
We finalize  the proof by noting that for all $n$ such that $\frac{n}{B(n, \delta')} \geq \max\left(  \frac{2}{\mu_X - L}, \frac{16 \min(\mu_X, 1-\mu_X)}{(\mu_X - L)^2} \right)$ we have that $\frac{\mu_X - L}{2} \geq 2 \sqrt{ \frac{ \alpha B(n, \delta')}{n}}$ and $\frac{\mu_X - L}{2} \geq \frac{B(n, \delta')}{n}$.  
\end{proof}

 The bound in Lemma~\ref{lemma::zero_test_preliminary_bound} implies that when the sampling distribution for $Z$ equals $\mathbb{P}_X$, the empirical mean  $\widehat{\mu}_Z$ will be larger than $L$ with high probability provided the number of test samples $\{Z_i\}_{i=1}^N$ is large enough. Observe that $N$ has an inverse dependence on the gap $\mu_X - L$. We will be applying this result to the case where although $L \neq \mu_X$ it is of the order of $\mu_X$. 

\begin{lemma}\label{lemma::zero_test_success_conditions}
Let $\delta' \in (0,1)$. Assume that $\frac{\mu_X}{2} \geq L \geq \frac{\mu_X}{3}$ and let be $N$ be an integer such that $\frac{N}{B(N, \delta')} \geq \frac{48}{L}$, then \ref{equation::zero_test} succeeds with probability at least $1-\delta'$.  
\end{lemma}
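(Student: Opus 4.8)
The plan is to reduce the claim directly to Lemma~\ref{lemma::zero_test_preliminary_bound}, conditioning on which of the two distributions $\mathbb{P}_X$ or $\mathbb{P}_{\mathrm{null}}$ actually generated $Z_1,\dots,Z_N$ and checking that \ref{equation::zero_test} returns the correct label in each case. In the null case, $Z_i=0$ for all $i$, so $\widehat\mu_Z=0$; since $\mu_X>0$ and $L\ge \mu_X/3>0$, we have $\widehat\mu_Z<L$, so the test outputs $\mathbb{P}_{\mathrm{null}}$ and succeeds deterministically. In the $\mathbb{P}_X$ case, the task is to show $\widehat\mu_Z=\widehat\mu_X\ge L$ with probability at least $1-\delta'$, so that the test outputs $\mathbb{P}_X$; this is precisely the conclusion of Lemma~\ref{lemma::zero_test_preliminary_bound}, whose hypothesis on $N$ must be verified.

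The key step is therefore to check that $\frac{N}{B(N,\delta')}\ge \frac{48}{L}$ implies $\frac{N}{B(N,\delta')}\ge \max\!\big(\frac{2}{\mu_X-L},\ \frac{16\min(\mu_X,1-\mu_X)}{(\mu_X-L)^2}\big)$, using the sandwich $\frac{\mu_X}{3}\le L\le\frac{\mu_X}{2}$. From $L\le\mu_X/2$ we get $\mu_X-L\ge\mu_X/2$, hence the first threshold is at most $\frac{4}{\mu_X}\le\frac{4}{L}\le\frac{48}{L}$ (using $\mu_X\ge L$), and the second is at most $\frac{16\mu_X}{(\mu_X/2)^2}=\frac{64}{\mu_X}$; since $L\le\mu_X/2\le\frac34\mu_X$ we have $\frac{64}{\mu_X}\le\frac{48}{L}$. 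So both thresholds are dominated by $\frac{48}{L}$, and Lemma~\ref{lemma::zero_test_preliminary_bound} applies for our $N$ (its conclusion holds uniformly over all admissible $N$). Combining the two cases, the test is correct with probability at least $1-\delta'$ regardless of which distribution is in force, which is the claim.

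I expect no genuine obstacle here: the statement is essentially a corollary of the Bernstein-type bound already established, and the constant $48$ in the hypothesis appears to have been chosen precisely so that it dominates both of the thresholds arising in Lemma~\ref{lemma::zero_test_preliminary_bound} under the given two-sided bound on $L$. The only thing worth stating carefully is the positivity $\mu_X>0$ (which in the intended application follows from $\mu_X\ge\mu_{\widehat\sigma_1}\cdot(\text{const})>0$ via Lemma~\ref{lemma:witness_bounds}), since without it the hypothesis $\mu_X/2\ge L\ge\mu_X/3$ would be vacuous and the null-case argument would need $L>0$.
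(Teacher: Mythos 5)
Your proof is correct and follows essentially the same route as the paper: reduce to Lemma~\ref{lemma::zero_test_preliminary_bound} by checking that, under $\frac{\mu_X}{3}\le L\le\frac{\mu_X}{2}$, the quantity $\frac{48}{L}$ dominates both thresholds $\frac{2}{\mu_X-L}$ and $\frac{16\min(\mu_X,1-\mu_X)}{(\mu_X-L)^2}$. The only difference is that you spell out the (deterministic) null-distribution case and the positivity of $\mu_X$, which the paper leaves implicit; the constant bookkeeping matches up to trivial slack.
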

\begin{proof}
This follows from Lemma~\ref{lemma::zero_test_preliminary_bound} by noting that in this case $\frac{1}{\mu_X - L} \leq \frac{1}{L}$ and $\frac{16 \min(\mu_X, 1-\mu_X)}{(\mu_X - L)^2} \leq \frac{48}{L}$.\end{proof}

Lemma~\ref{lemma::zero_test_success_conditions} is an instantiation of the results of Lemma~\ref{lemma::zero_test_preliminary_bound} when $\mu_X - L$ is of the order of $\mu_X$. This result says that up to logarithmic factors, it is enough for $N \approx \frac{1}{\mu_X}$ for the empirical estimator $\widehat{\mu}_Z$ to be at least a constant fraction of the true mean $\mu_X$.

We can now apply these results to the Communication Protocol in Algorithm~\ref{algorithm::communication_protocol} and the Zero Test in Algorithm~\ref{algorithm::zero_test_appendix}. Recall that by definition of $t_{\mathrm{start}}^1$ we have $t_{\mathrm{start}}^1 \geq t_{\mathrm{first}}^1$ $N_{i}^p(t_{\mathrm{start}}^1) = N_{j}^{p'}(t_{\mathrm{start}}^1) = N(t_{\mathrm{start}}^1)$ for all $i,j \in [K]$ and all $p, p' \in [M]$.

\begin{lemma}
Let $\delta \in (0,1)$ and $t_{\mathrm{start}}^1$ satisfy Equation~\ref{equation::condition_t_0}. If $f(t_{\mathrm{start}}^1)$ is an integer such that $\frac{f(t_{\mathrm{start}}^1)}{B(f(t_{\mathrm{start}}^1), \frac{\delta}{4K^2M})}  \geq 24\sqrt{ \frac{t_{\mathrm{start}}^1/K}{2g(t_{\mathrm{start}}^1/K)}}$ then whenever the good event $\mathcal{E}$ holds, at the end of the Communication protocol in Algorithm~\ref{algorithm::communication_protocol} all players $p \in [M]$ with $p \neq 1$ will be able to recover exactly the bit transmitted by player $1$ via the Zero Test of Algorithm~\ref{algorithm::zero_test_appendix} with probability at least $1-\frac{\delta}{4K^2}$. 
\end{lemma}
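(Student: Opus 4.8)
The plan is to condition throughout on the good event $\mathcal{E}$ (on which all $D$-confidence bands hold) and to split on the value $b\in\{0,1\}$ that player $1$ transmits. The branch $b=1$ will be entirely deterministic once $\mathcal{E}$ is in force; the whole probability budget $\delta/(4K^2)$ is spent on the branch $b=0$, where correctness of the Zero Test reduces to the one-sided empirical Bernstein statement of Lemma~\ref{lemma::zero_test_preliminary_bound}. Throughout, write $\widehat\mu_i^p(\mathrm{comm})$ for $\widehat\mu_i^p(t_{\mathrm{start}}^1+1:t_{\mathrm{start}}^1+Kf(t_{\mathrm{start}}^1))$ and $\widehat{\mathbf{MaxArms}}^p$ for the set $\{i\in[K]:\widehat\mu_i^p(t_{\mathrm{start}}^1)\ge\tfrac12\max_j\widehat\mu_j^p(t_{\mathrm{start}}^1)\}$ that Algorithm~\ref{algorithm::zero_test_appendix} loops over; note that for $p\neq1$ playing Round Robin over $[K]$, the $Kf(t_{\mathrm{start}}^1)$ communication rounds are exactly $f(t_{\mathrm{start}}^1)$ full cycles, so $\widehat\mu_i^p(\mathrm{comm})$ is always built from exactly $f(t_{\mathrm{start}}^1)$ pulls of arm $i$.

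\textbf{Structural facts and the bit $b=1$.} Since $t_{\mathrm{start}}^1$ satisfies Equation~\ref{equation::condition_t_0}, Lemma~\ref{lemma::lower_bound_LCB} gives $\widehat\mu_{\sigma_1}^p(t_{\mathrm{start}}^1)-D(N(t_{\mathrm{start}}^1))\ge\tfrac34\mu_{\sigma_1}$, and Lemma~\ref{lemma:witness_bounds} upgrades this to: for every $p\neq1$, arm $\widehat\sigma_1\in\widehat{\mathbf{MaxArms}}^p$, $\mu_{\widehat\sigma_1}\ge\tfrac34\mu_{\sigma_1}$, and $L_{\widehat\sigma_1}^p(t_{\mathrm{start}}^1)\in[\tfrac{\mu_{\widehat\sigma_1}}{3},\tfrac{\mu_{\widehat\sigma_1}}{2}]$, in particular $0<L_{\widehat\sigma_1}^p(t_{\mathrm{start}}^1)$. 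Two further consequences of $\mathcal{E}$ together with the bound $D(N(t_{\mathrm{start}}^1))\le\mu_{\sigma_1}/8$ forced by Equation~\ref{equation::condition_t_0}: from the definition $L_i^p(t)=\tfrac12(\widehat\mu_i^p(t)-D(N_i^p(t)))$ one gets $L_i^p(t_{\mathrm{start}}^1)\le\mu_i/2$ for every arm, and every $i\in\widehat{\mathbf{MaxArms}}^p$ satisfies $\mu_i\ge\widehat\mu_i^p(t_{\mathrm{start}}^1)-D(N(t_{\mathrm{start}}^1))\ge\tfrac12\bigl(\mu_{\sigma_1}-D(N(t_{\mathrm{start}}^1))\bigr)-D(N(t_{\mathrm{start}}^1))=\Omega(\mu_{\sigma_1})$; thus every arm the test can ever inspect has mean of order $\mu_{\sigma_1}$ and a witness separated from that mean by $\Omega(\mu_{\sigma_1})$. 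For $b=1$: during the communication rounds player $1$ sits on $\widehat\sigma_1$ while every $p\neq1$ pulls $\widehat\sigma_1$ exactly $f(t_{\mathrm{start}}^1)\ge1$ times, colliding with player $1$ and recording reward $0$ each time (the collision reward is the point mass at $0$ here); hence $\widehat\mu_{\widehat\sigma_1}^p(\mathrm{comm})=0<L_{\widehat\sigma_1}^p(t_{\mathrm{start}}^1)$, and since $\widehat\sigma_1\in\widehat{\mathbf{MaxArms}}^p$ the test returns $1$ for all $p\neq1$, with no randomness beyond $\mathcal{E}$.

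\textbf{The bit $b=0$.} Now no collisions occur during the communication rounds, so for each $p\neq1$ and arm $i$ the quantity $\widehat\mu_i^p(\mathrm{comm})$ is the average of $f(t_{\mathrm{start}}^1)$ fresh i.i.d.\ draws from $\mathbb{P}_i$ (drawn after round $t_{\mathrm{start}}^1$). Apply Lemma~\ref{lemma::zero_test_preliminary_bound} with $\mu_X=\mu_i$, $L=L_i^p(t_{\mathrm{start}}^1)\le\mu_i/2$, $\delta'=\delta/(4K^2M)$, and $N=f(t_{\mathrm{start}}^1)$; by hypothesis $\tfrac{f(t_{\mathrm{start}}^1)}{B(f(t_{\mathrm{start}}^1),\delta')}\ge 24\sqrt{\tfrac{t_{\mathrm{start}}^1/K}{2g(t_{\mathrm{start}}^1/K)}}=\tfrac{24}{D(N(t_{\mathrm{start}}^1))}\ge\tfrac{192}{\mu_{\sigma_1}}$ by Equation~\ref{equation::condition_t_0}, while the required threshold $\max\bigl(\tfrac{2}{\mu_i-L},\tfrac{16\min(\mu_i,1-\mu_i)}{(\mu_i-L)^2}\bigr)$ is, using $\mu_i-L\ge\mu_i/2$ and $\min(\mu_i,1-\mu_i)\le\mu_i$, at most $\tfrac{64}{\mu_i}=O(1/\mu_{\sigma_1})$ (for $i=\widehat\sigma_1$ one may instead invoke Lemma~\ref{lemma::zero_test_success_conditions} directly, since there $L_{\widehat\sigma_1}^p\in[\mu_{\widehat\sigma_1}/3,\mu_{\widehat\sigma_1}/2]$ gives threshold $48/L_{\widehat\sigma_1}^p\le 192/\mu_{\sigma_1}$). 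Hence for each fixed $(p,i)$, with probability at least $1-\delta'$ we have $\widehat\mu_i^p(\mathrm{comm})\ge L_i^p(t_{\mathrm{start}}^1)$; a union bound over the $\le K$ arms of $\widehat{\mathbf{MaxArms}}^p$ and the $\le M$ receiving players (which is what the calibration of $\delta'$ is set up for) then gives that, with probability at least $1-\delta/(4K^2)$, every $p\neq1$ finds $\widehat\mu_i^p(\mathrm{comm})\ge L_i^p$ for all inspected arms, so the Zero Test returns $0$. Since these Bernstein events concern the fresh post-$t_{\mathrm{start}}^1$ rewards whereas the structural facts live on $\mathcal{E}$, the two high-probability events are combined by a union bound on their complements, which is precisely what ``whenever $\mathcal{E}$ holds, with probability $\ge 1-\delta/(4K^2)$'' asserts; combining with the $b=1$ branch proves the lemma.

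\textbf{Main obstacle.} The delicate point is the single line in the $b=0$ case asserting that the one player-independent, deterministic threshold ``$f(t_{\mathrm{start}}^1)/B(f(t_{\mathrm{start}}^1),\delta')\ge 24/D(N(t_{\mathrm{start}}^1))$'' simultaneously beats the per-arm sample requirement of Lemma~\ref{lemma::zero_test_preliminary_bound} for \emph{every} arm the test could possibly inspect, for every receiving player. Making this go through requires converting Equation~\ref{equation::condition_t_0}, a statement only about the optimal arm $\sigma_1$, into a uniform $\Omega(\mu_{\sigma_1})$ lower bound on $\mu_i$ over $i\in\widehat{\mathbf{MaxArms}}^p$ (done above), and then tracking the absolute constants ($128$ in Equation~\ref{equation::condition_t_0}, $24$ in the threshold, the $2$ and $16$ of Lemma~\ref{lemma::zero_test_preliminary_bound}) carefully enough that they line up across all such arms. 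A secondary subtlety is that $t_{\mathrm{start}}^1$, and hence the sample count $f(t_{\mathrm{start}}^1)$, is a random time, so one should use the uniform-in-$N$ form of the empirical Bernstein bound (as Lemma~\ref{lemma::zero_test_preliminary_bound} is stated) rather than a fixed-$N$ tail inequality, and check that the post-$t_{\mathrm{start}}^1$ rewards used in $\widehat\mu_i^p(\mathrm{comm})$ are genuinely independent of $\widehat\sigma_1$, of $\widehat{\mathbf{MaxArms}}^p$, and of the witnesses.
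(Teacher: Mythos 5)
Your overall skeleton matches the paper's proof: condition on $\mathcal{E}$, use Lemma~\ref{lemma::lower_bound_LCB} and Lemma~\ref{lemma:witness_bounds} to place $\widehat{\sigma}_1$ in the inspected set with witnesses in $[\mu_{\widehat{\sigma}_1}/3,\mu_{\widehat{\sigma}_1}/2]$, treat $b=1$ deterministically through the zero collision reward, and handle $b=0$ with the uniform empirical Bernstein bound at level $\delta'=\delta/(4K^2M)$. Your treatment of the arm $\widehat{\sigma}_1$ itself (via Lemma~\ref{lemma::zero_test_success_conditions}, using $48/L^p_{\widehat{\sigma}_1}\le 24/D(N(t_{\mathrm{start}}^1))$) is exactly the paper's argument; the paper then applies this once per receiving player and union bounds over the $M$ players, which is precisely what produces the stated $1-\frac{\delta}{4K^2}$.

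Where you go beyond the paper's written proof --- applying Lemma~\ref{lemma::zero_test_preliminary_bound} to \emph{every} arm of $\widehat{\mathbf{MaxArms}}^p$ in the $b=0$ branch (a legitimate concern, since the test can return $1$ because of any inspected arm) --- the quantitative claims do not close. First, the probability accounting: a union bound over up to $K$ arms and $M$ players of events each of probability $\delta/(4K^2M)$ gives $\delta/(4K)$, not the claimed $\delta/(4K^2)$; the stated bound is only recovered if, as in the paper, the Bernstein lemma is invoked once per player, namely for $\widehat{\sigma}_1$. Second, the constants: for a generic $i\in\widehat{\mathbf{MaxArms}}^p$ your own estimates give only $\mu_i\ge \tfrac{5}{16}\mu_{\sigma_1}$ (from $\mu_i\ge \tfrac12(\mu_{\sigma_1}-D)-D$ with $D(N(t_{\mathrm{start}}^1))\le\mu_{\sigma_1}/8$), so the required sample ratio $64/\mu_i$ can be as large as $1024/(5\mu_{\sigma_1})\approx 204.8/\mu_{\sigma_1}$, which exceeds the available budget $24/D(N(t_{\mathrm{start}}^1))\ge 192/\mu_{\sigma_1}$; the step ``hence for each fixed $(p,i)$, with probability at least $1-\delta'$ \dots'' therefore fails for inspected arms with $\mu_i<\mu_{\sigma_1}/3$, which is exactly the delicate point you flagged as the main obstacle but did not actually verify. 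As written, your argument yields the conclusion only with failure probability $\delta/(4K)$ and only after repairing the per-arm constants (e.g., enlarging the constant in the definition of $f$, or restricting the probabilistic part to $\widehat{\sigma}_1$ as the paper does).
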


\begin{proof}
Let's assume $\mathcal{E}$ holds. Equation~\ref{equation::condition_t_0} implies that $\mu_{\sigma_1} \geq 8 \sqrt{ \frac{g(N(t_{\mathrm{start}}^1))}{N(t_{\mathrm{start}}^1)} }$ Lemma~\ref{lemma::lower_bound_LCB} tells us that $\mu_{\widehat{\sigma}_1} \geq \frac{3\mu_{\sigma_1}}{4}$ and therefore $\mu_{\widehat{\sigma}_1} \geq 6 \sqrt{\frac{g(N(t_{\mathrm{start}}^1))}{N(t_{\mathrm{start}}^1)} }$. Furthermore, by  Equation~\ref{equation::L_conditions} of Lemma~\ref{lemma:witness_bounds}, $L_{\widehat{\sigma}_1}^p(t_{\mathrm{start}}^1) \in \left[ \frac{ \mu_{\widehat{\sigma}_1} }{3}, \frac{\mu_{\widehat{\sigma}_1}}{2}  \right]$ and therefore $    L^p_{\widehat{\sigma}_1}  \geq 2 \sqrt{ \frac{g(N(t_{\mathrm{start}}^1))}{N(t_{\mathrm{start}}^1)} }$. We can then conclude that $\frac{48}{L^p_{\widehat{\sigma}_1}} \leq 24\sqrt{ \frac{N(t_{\mathrm{start}}^1)}{g(N(t_{\mathrm{start}}^1))}} $ for all $p \in [M]$. 

Recall that $N(t_{\mathrm{start}}^1) = \frac{t_{\mathrm{start}}^1}{K}$ (a known function of $t_{\mathrm{start}}^1$). If we define $f(t_{\mathrm{start}}^1)$ to be the any integer\footnote{For example the first one that satisfies this bound.} such that $\frac{f(t_{\mathrm{start}}^1)}{B(f(t_{\mathrm{start}}^1), \frac{\delta}{4K^2M})}  \geq 24\sqrt{ \frac{N(t_{\mathrm{start}}^1)}{g(N(t_{\mathrm{start}}^1))}} = 24\sqrt{ \frac{t_{\mathrm{start}}^1/K}{g(t_{\mathrm{start}}^1/K)}} $, the conditions of Lemma~\ref{lemma::zero_test_success_conditions} are satisfied  since $\frac{f(t_{\mathrm{start}}^1)}{B(f(t_{\mathrm{start}}^1), \frac{\delta}{4K^2M})}  \geq \frac{48}{L_{\widehat{\sigma}_1}^p}$ and $L_{\widehat{\sigma}_1}^p(t_{\mathrm{start}}^1) \in \left[ \frac{ \mu_{\widehat{\sigma}_1} }{3}, \frac{\mu_{\widehat{\sigma}_1}}{2}  \right] $. We can conclude that the Zero Tests performed by each of the players $p \in [M]$ are successful in recovering player $1$'s transmission over the pulls of arm $\widehat{\sigma}_1$ with probability at least $1-\frac{\delta}{4K^2M}$ each. A union bound over the $M$ players yields the result.
\end{proof}

This finalizes the formal proof of Lemma~\ref{lemma::zero_test_works_main}.

\subsection{Detailed Discussion and Missing Supporting Results for The Listening Players}\label{section::listening_players_discussion_supporting_results}

In this section we present the full versions of the algorithms used by the listening players to 1) prepare and start listening (see Algorithm~\ref{algorithm::prepare_and_start_listening}) and 2) decode player 1's message (see Algorithm~\ref{algorithm::decode}). We also present the proof of Lemma~\ref{lemma::listen_agrees_comm}, which we restate for readability.

\begin{algorithm}[H]
\textbf{Input} Players $p \in \{ 2, \cdots, K\}$\\
\textbf{Initialize } $\mathrm{FLAG}\leftarrow \mathrm{NONE}$\\
    \For{ \emph{special rounds} $s = 1, \cdots $ }{
    Pull arm $K-p$ (Round Robin schedule) %
    \If{$\mathbf{conn}^p\left(sK, 10\right) \geq 2$}{

    $\mathrm{FLAG} \leftarrow \mathrm{FINDPOWER}$
   }
   \If{$\mathrm{FLAG}=\mathrm{FINDPOWER}$,  $\left\lfloor \frac{s}{g(s)}\right\rfloor= 9^w$ for some $w \in \mathbb{N}$ and $\left\lfloor \frac{s-1}{g(s-1)} \right\rfloor\neq 9^w$  }{
    $t_{\mathrm{listen}}^p \leftarrow Ks$\\
    $A^p \leftarrow \{ L_i^p(t_{\mathrm{listen}}^p)\}_{i \in [K]}$, $B^p \leftarrow  \{ \widehat{\mu}_i^p(t_{\mathrm{listen}}^p)\}_{i \in [K]}$\\
    Start listening for a communication start signal from player $1$ \\
    $\mathrm{FLAG}\leftarrow \mathrm{LISTENCOMM1}$ \\
   }
   \ElseIf{ $\mathrm{FLAG} = \mathrm{LISTENCOMM1}$ and $t = t_{\mathrm{listen}}^p + Kf(t_{\mathrm{listen}}^p)$ }{
   \If{$\mathrm{ZeroTest}(A^p,B^p, C^p) = 0$}{
   $C^p \leftarrow \{ \widehat{\mu}_i^p(t_{\mathrm{listen}}^p+1:t_{\mathrm{listen}}^p+Kf(t_{\mathrm{listen}}^p) )\}_{i \in [K]} $ \\
   The algorithm did not detect a communication start signal:\\
   $\mathrm{FLAG} \leftarrow \mathrm{FINDPOWER}$ \\

   }\Else{
   $\mathbf{DecodedMessage} \leftarrow \mathrm{DECODE}(t^p_{\mathrm{listen}}, A^p, B^p, C^p, \text{message size} = K)$ using Algorithm~\ref{algorithm::decode}. 
   
   }
   
   } 
  
    }
\caption{Prepare and Start Listening  ($p \in \{2,\cdots, M\}$)}
\label{algorithm::prepare_and_start_listening}
\end{algorithm}

\begin{algorithm}[H]
\textbf{Input} Round number $t_{\mathrm{listen}}^p$,   witnesses $\{ L_i^p(t_{\mathrm{listen}}^p)\}_{i \in [K]}$, empirical means $\{ \widehat{\mu}_i^p(t_{\mathrm{listen}}^p)\}_{i \in [K]}$, message size $\alpha$ \\
 $A^p \leftarrow \{ L_i^p(t_{\mathrm{listen}}^p)\}_{i \in [K]}$, $B^p \leftarrow \{ \widehat{\mu}_i^p(t_{\mathrm{listen}}^p)\}_{i \in [K]}$\\
\For{$t = t_{\mathrm{listen}}^p+1, \cdots, t_{\mathrm{listen}}^p + \alpha K f(t_{\mathrm{listen}}^p ) $}{
    \If{$t = t_{\mathrm{listen}}^p + j Kf(t_{\mathrm{listen}})$}{

       $C^p \leftarrow \{ \widehat{\mu}_i^p(t_{\mathrm{listen}}^p+(j-1)K f(t_{\mathrm{listen}}^p) + 1:t_{\mathrm{listen}}^p+jKf(t_{\mathrm{listen}}^p) )\}_{i \in [K]}$ \\
       
       $\mathbf{DecodedMessage}_j  \leftarrow \mathrm{ZeroTest}(A^p, B^p, C^p)$ \\

    }

Return $\mathbf{DecodedMessage}$.
}
    
\caption{DECODE }
\label{algorithm::decode}
\end{algorithm}

\multibitrecovery*

\begin{proof}
Let $9^u$ be the unique power of nine in the interval $\left[\frac{128}{\max_{i} \Delta_{\sigma_i, \sigma_{i+1}}^2}   , \frac{1152}{\max_{i} \Delta_{\sigma_i, \sigma_{i+1}}^2}    \right)$. Recall that whenever the good event $\mathcal{E}$ holds by design the \underline{first} $t_{\mathrm{listen}}^p$ of Algorithm~\ref{algorithm::prepare_and_start_listening} satisfies $$t_{\mathrm{listen}}^p \in \left\{ \min_{t \in \mathbb{N}} \text{ s.t. } \left\lfloor \frac{t/K}{g(t/K)}\right\rfloor = 9^u,  \min_{t \in \mathbb{N}} \text{ s.t. } \left\lfloor \frac{t/K}{g(t/K)}\right\rfloor = 9^{u+1} \right\}.$$ Similarly recall that whenever the good event $\mathcal{E}$ holds $$t_{\mathrm{comm1}}^1 \in \left\{ \min_{t \in \mathbb{N}} \text{ s.t. } \left\lfloor \frac{t/K}{g(t/K)}\right\rfloor = 9^{u+1},  \min_{t \in \mathbb{N}} \text{ s.t. } \left\lfloor \frac{t/K}{g(t/K)}\right\rfloor = 9^{u+2} \right\}.$$ 

This means that each player $p \in \{2,\cdots, M\}$ may require at most three invocations to the $\mathrm{ZeroTest}$ function to detect the $b=1$ bit that player $1$ will use to signal the start of the communication sequence. Applying Lemma~\ref{lemma::zero_test_works_main} with $t_{\mathrm{start}}^1$ equals the different $t_{\mathrm{listen}}^p$ guesses of the listening players and over the $K$ bits transmitted by player $1$, we see that a union bound over at most $K+3$ uses of Lemma~\ref{lemma::zero_test_works_main} are required. Since $K+3 \leq 4K$ the result follows. 
\end{proof}

\subsection{Detailed Discussion and Missing Supporting Results for Bounding Regret}\label{section::bounding_regret_discussion_supporting_results}

In  this section we present the missing detailed proofs of Section~\ref{section::bounding_regret_main} in the main. The discussion is divided in two parts. First we analyze the $\mathrm{FirstPartitionRegret}([K], [M])$ derived from a single communication and listening round required to transmit $\mathcal{C}_1^1(t_{\mathrm{first}}^1, 10)$ (see Section~\ref{section::bounding_first_partition_regret}). The next section~\ref{section::analyzing_RECURSE_function} deals with the $\mathrm{RECURSE}$ function and with assembling the final algorithm.

\subsubsection{Bounding the First Partition Regret}\label{section::bounding_first_partition_regret}

We have now the necessary ingredients to characterize the regret of the strategy to communicate the composition of  $\mathcal{C}_1^1(t_{\mathrm{first}}^1, 10 ) $ from player $1$ to all other players ($\mathrm{FirstPartitionRegret}([K], [M])$) Observe that regret is generated only when collisions occur. During the communication interaction between player $1$ and any other single player $p \in \{2, \cdots, M\}$, the number of collisions is upper bounded by $(K+1)f(t_{\mathrm{comm1}}^1)$. Thus the total $\mathrm{CollisionRegret}$ experienced by the $M$ players during the first communication round (when player $1$ informs players $\{ 2, \cdots , M\}$ of the partition resulting from the first time $\mathcal{G}_{t_{\mathrm{first}}^1}^1( 10 )$ has more than one connected component) is upper bounded by $M(K+1)f(t_{\mathrm{comm1}}^1)$. Let's prove an upper bound for $f(t_{\mathrm{comm1}}^1)$.

\begin{restatable}{lemma}{lemmaboundingfcommone}\label{lemma::bounding_collision_regret_unit2}
If $\mathcal{E}$ holds, $t_{\mathrm{comm1}} \geq \max\left( t_{\mathrm{boundary1}}, t_\mathrm{boundary3}\right) $, $s_{\mathrm{first}}^1 \geq s_{\mathrm{boundary2}}$ and $\delta \leq \frac{1}{162}$ then,
\begin{equation}\label{equation::upper_bounding_fcomm}
    f(t_{\mathrm{comm1}}^1) \leq  \frac{20736 B\left(\frac{186624}{\max_{i} \Delta_{\sigma_i, \sigma_{i+1}}^2}, \frac{\delta}{4K^2M}\right) }{\max_{i} \Delta_{\sigma_i, \sigma_{i+1}}}.
\end{equation}
\end{restatable}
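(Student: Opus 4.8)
\textbf{Proof proposal for Lemma~\ref{lemma::bounding_collision_regret_unit2}.} Write $\Delta := \max_i \Delta_{\sigma_i,\sigma_{i+1}}$ and $\delta' := \tfrac{\delta}{4K^2M}$. Recall that $f(n)$ is defined as the \emph{smallest} integer with $\tfrac{f(n)}{B(f(n),\delta')}\ge R(n)$, where $R(n):=24\sqrt{\tfrac{n/K}{2\,g(n/K)}}$; hence it is enough to exhibit a single value $N^\star$ with $\tfrac{N^\star}{B(N^\star,\delta')}\ge R(t^1_{\mathrm{comm1}})$, since then $f(t^1_{\mathrm{comm1}})\le \lceil N^\star\rceil$. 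The plan has two pieces: (i) bound the target $R(t^1_{\mathrm{comm1}})$, and (ii) invert the defining inequality of $f$ using the fact that $B(\cdot,\delta')$ grows only doubly-logarithmically.

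For piece (i): by Lemma~\ref{lemma::upper_bound_ratio_scomm1} (which needs $s^1_{\mathrm{first}}\ge s_{\mathrm{boundary2}}$ and $\delta\le\tfrac1{162}$) we have $\tfrac{s^1_{\mathrm{comm1}}}{g(s^1_{\mathrm{comm1}})}\le \tfrac{162\,s^1_{\mathrm{first}}}{g(s^1_{\mathrm{first}})}$, and by the sandwich bound of Equation~\ref{equation::sandwich_condition_1} (valid on $\mathcal E$, with $C=10$ so that $8(C+2)^2=1152$) we have $\tfrac{s^1_{\mathrm{first}}}{g(s^1_{\mathrm{first}})}<\tfrac{1152}{\Delta^2}$. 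Since $t^1_{\mathrm{comm1}}/K=s^1_{\mathrm{comm1}}$, combining these gives $\tfrac{t^1_{\mathrm{comm1}}/K}{g(t^1_{\mathrm{comm1}}/K)}<\tfrac{162\cdot 1152}{\Delta^2}=\tfrac{186624}{\Delta^2}$, and therefore
\[
R(t^1_{\mathrm{comm1}})\;=\;24\sqrt{\tfrac12\cdot\tfrac{t^1_{\mathrm{comm1}}/K}{g(t^1_{\mathrm{comm1}}/K)}}\;\le\; 24\sqrt{\frac{93312}{\Delta^2}}\;=\;\frac{5184\sqrt2}{\Delta},
\]
using $93312=2^7 3^6$ so that $\sqrt{93312}=216\sqrt2$ and $24\cdot 216=5184$.

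For piece (ii): take $N^\star := \tfrac{20736\,B_0}{\Delta}$ with $B_0:=B\!\left(\tfrac{186624}{\Delta^2},\delta'\right)$. Since $20736=4\cdot 5184=2\sqrt2\cdot(5184\sqrt2)$, piece (i) gives $N^\star\ge 2\sqrt2\,B_0\cdot R(t^1_{\mathrm{comm1}})$, so the inequality $\tfrac{N^\star}{B(N^\star,\delta')}\ge R(t^1_{\mathrm{comm1}})$ reduces to showing $B(N^\star,\delta')\le 2\sqrt2\,B_0$. If $N^\star\le \tfrac{186624}{\Delta^2}$ this is immediate from monotonicity of $B$ in its first argument. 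Otherwise $B_0>9/\Delta$, which forces $N^\star< 2304\,B_0^2$, and then $B(N^\star,\delta')=2\ln\ln(2N^\star)+\ln\tfrac{5.2}{\delta'}$ has a $\ln\ln$ term that is only $O(\ln\ln B_0)$ and hence dominated by $(2\sqrt2-1)\ln\tfrac{5.2}{\delta'}\le(2\sqrt2-1)B_0$, once $\ln\tfrac{5.2}{\delta'}$ exceeds a small absolute constant — which is guaranteed by $\delta\le\tfrac1{162}$, since $\ln\tfrac{5.2}{\delta'}=\ln(5.2\cdot 4K^2M/\delta)\ge\ln(5.2\cdot 4\cdot 162)\ge 8$. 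The hypotheses $t_{\mathrm{comm1}}\ge\max(t_{\mathrm{boundary1}},t_{\mathrm{boundary3}})$ are what place $f(t^1_{\mathrm{comm1}})$ in the regime where $m\mapsto m/B(m,\delta')$ is increasing (so the $\lceil\cdot\rceil$ rounding above costs nothing, the built-in slack in $20736$ absorbing the $+1$) and where the structural identities $N(t)=t/K$ used in piece (i) hold.

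The step I expect to be the main obstacle is piece (ii): the self-referential inequality $\tfrac{N^\star}{B(N^\star,\delta')}\ge R(t^1_{\mathrm{comm1}})$ with the explicit constant $20736$, which requires the case split on $N^\star\lessgtr\tfrac{186624}{\Delta^2}$ together with a careful check that the $\ln\ln$ contributions are swallowed with room to spare. Everything else is bookkeeping on top of Lemma~\ref{lemma::upper_bound_ratio_scomm1} and Equation~\ref{equation::sandwich_condition_1}.
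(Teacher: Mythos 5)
Your argument is correct in substance but resolves the self-referential quantity $B(f(\cdot),\cdot)$ by a genuinely different route than the paper. You share the first half with the paper: both proofs combine Lemma~\ref{lemma::upper_bound_ratio_scomm1} with the sandwich bound of Equation~\ref{equation::sandwich_condition_1} to control $\tfrac{t^1_{\mathrm{comm1}}/K}{g(t^1_{\mathrm{comm1}}/K)} < \tfrac{186624}{\Delta^2}$. The paper then bounds the ratio \emph{at} $f(t^1_{\mathrm{comm1}})$ itself via the discretization property $\tfrac{f(n)}{B(f(n),\delta')}\le 48\sqrt{\tfrac{n/K}{2g(n/K)}}$ (Lemma~\ref{lemma::upper_bound_f_ratio_double}, which is where $t_{\mathrm{boundary1}}$ enters), obtaining $f\le \tfrac{20736\,B(f,\delta')}{\Delta}$, and removes the self-reference using $f(t^1_{\mathrm{comm1}})\le t^1_{\mathrm{comm1}}/K=s^1_{\mathrm{comm1}}$ (this is exactly what $t_{\mathrm{boundary3}}$ buys) together with monotonicity of $B$ and the bound on $s^1_{\mathrm{comm1}}$. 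You instead exploit minimality of $f$: you exhibit the explicit candidate $N^\star=\tfrac{20736\,B_0}{\Delta}$ and verify it meets the defining threshold, reducing everything to $B(N^\star,\delta')\le 2\sqrt2\,B_0$ via the case split on $N^\star \lessgtr \tfrac{186624}{\Delta^2}$. This buys you two things: you need neither Lemma~\ref{lemma::upper_bound_f_ratio_double} nor the hypothesis $f(n)\le n/K$, and you land exactly on the statement as written with $B_0=B\bigl(\tfrac{186624}{\Delta^2},\delta'\bigr)$, whereas the paper's own chain actually terminates at $B$ evaluated at the larger argument $\tfrac{186624}{\Delta^2}\log(\cdot)$. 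The price is the extra numerical verification in your second case, and there your write-up has one loose link: the claim that $2\ln\ln(2N^\star)$ is dominated by $(2\sqrt2-1)\ln\tfrac{5.2}{\delta'}$ is not valid uniformly (for doubly-exponentially small $\Delta$ or $\delta'$ the $\ln\ln$ term can exceed any fixed multiple of $\ln\tfrac{5.2}{\delta'}$); the comparison must be made against $(2\sqrt2-1)B_0$, as your final inequality in fact allows. Concretely, in that case $N^\star<2304\,B_0^2$, and since $B_0\ge\ln\tfrac{5.2}{\delta'}\ge 8$ one checks $2\ln\ln(4608\,B_0^2)\le(2\sqrt2-1)B_0$ for all $B_0\ge 8$ (true at $B_0=8$ and the left side grows only doubly logarithmically), which closes the argument; the same slack in $20736$ versus the sharper $2\sqrt2\cdot 5184\approx 14662$ absorbs the integer rounding you flag.
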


The proof of Lemma~\ref{lemma::bounding_collision_regret_unit2} can be found in Appendix~\ref{section::proof_bounding_collision_regret_unit2}. We can now combine Lemmas~\ref{lemma::listen_agrees_comm} and~\ref{lemma::bounding_collision_regret_unit2} to bound the regret incurred by Algorithms~\ref{algorithm::prepare_start_player1_communicate},~\ref{algorithm::player_1_communicate},~\ref{algorithm::prepare_and_start_listening} and~\ref{algorithm::decode} during the transmission of the partition message $\mathrm{ENCODE}(\mathcal{C}_1^1(t_{\mathrm{first}}^1, 10))$. 

\begin{corollary}[First Partition Collision Regret]\label{corollary::collision_regret}
If $\mathcal{E}$ holds and  $\delta \leq \frac{1}{162}$ then with probability at least $1-\frac{\delta}{K}$ the total collision regret (the regret generated by collisions occurring during the communication rounds used to communicate the composition of  $\mathcal{C}_1^1(t_{\mathrm{first}}^1, 10 ) $ satisfies,

\begin{equation*}
     \mathrm{CollisionRegret}([K], [M]) \leq \frac{20736(K+1)M B\left(\frac{186624}{\max_{i} \Delta_{\sigma_i, \sigma_{i+1}}^2}, \frac{\delta}{4K^2M}\right) }{\max_{i} \Delta_{\sigma_i, \sigma_{i+1}}}  + \tilde{c}(\delta, K, M),
\end{equation*}

where $\tilde{c}(\delta, K, M)$ is a logarithmic problem independent cost resulting from the regret incurred\footnote{A slightly more careful algorithm that uses an estimator of $\mu_{\widehat{\sigma}_1}$ as the input to determine the length of the one bit communication rounds yields a regret bound of the form $\mathrm{CollisionRegret}([K], [M]) = \mathcal{O}\left( \frac{KM\log\left( t/\delta \right)}{\mu_{\sigma_1}}\right)$. Since this quantity would be dominated by the $\mathrm{RoundRobinRegret}([K], [M])$ it wouldn't change the final result.  } before the boundary conditions $t_{\mathrm{comm1}} \geq \max\left( t_{\mathrm{boundary1}}, t_\mathrm{boundary3}\right) $, $s_{\mathrm{first}}^1 \geq s_{\mathrm{boundary2}}$  hold\footnote{We will provide a bound for this quantity in the following section. }.

\end{corollary}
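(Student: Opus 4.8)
The plan is to work on the intersection of the good event $\mathcal{E}$ with the successful-communication event supplied by Lemma~\ref{lemma::listen_agrees_comm}, which has conditional probability at least $1-\frac{\delta}{K}$ given $\mathcal{E}$; on that event $t_{\mathrm{listen}}^p = t_{\mathrm{comm1}}^1$ for every $p$ and $\mathrm{DECODE}$ recovers the $K$-bit string $\mathrm{ENCODE}(\mathcal{C}_1^1(t_{\mathrm{first}}^1, 10))$, so the transmission runs exactly as in Algorithm~\ref{algorithm::player_1_communicate}. From there I would (i) bound the number of collisions produced by the transmission by $M(K+1)f(t_{\mathrm{comm1}}^1)$, (ii) charge at most one unit of regret to each collision, and (iii) invoke Lemma~\ref{lemma::bounding_collision_regret_unit2} to replace $f(t_{\mathrm{comm1}}^1)$ by the displayed instance-dependent quantity; the regret accrued while not all boundary conditions are in force is problem-independent and is what $\tilde c(\delta, K, M)$ absorbs.

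For the collision count, note that player $1$ broadcasts $K+1$ bits --- one ping bit followed by the $K$ message bits --- each occupying a block of $K f(t_{\mathrm{comm1}}^1)$ consecutive rounds, and that player $1$ pulls the communicating arm $\widehat{\sigma}_1$ only in the ping block and in the blocks carrying a $1$, hence in at most $K+1$ blocks. Fix one of the $M-1$ listening players $p \neq 1$. Throughout any block $p$ keeps to its Round Robin schedule, and since the listening players' Round Robin schedules are mutually collision-free, $p$ visits $\widehat{\sigma}_1$ exactly $f(t_{\mathrm{comm1}}^1)$ times per block (once per complete cycle); in all other rounds --- the at most three false-ping detection tests and every $0$-bit block --- player $1$ is itself on a Round Robin schedule, so no collision with $p$ occurs there. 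Thus $p$ collides with player $1$ at most $(K+1)f(t_{\mathrm{comm1}}^1)$ times, and summing over the $M-1$ listening players while charging each collision at most one unit of regret (rewards lie in $[0,1]$, consistent with the $\Delta_{\mathrm{collision}} \le 1$ accounting of Section~\ref{section::bounding_regret_main}) gives $\mathrm{CollisionRegret}([K], [M]) \le M(K+1)f(t_{\mathrm{comm1}}^1)$ up to the regret incurred before the boundary conditions hold.

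It then remains to control $f(t_{\mathrm{comm1}}^1)$ and to dispose of the residual boundary term. Whenever the hypotheses of Lemma~\ref{lemma::bounding_collision_regret_unit2} are in force --- which is guaranteed once $t_{\mathrm{first}}^1 \ge t_{\mathrm{firstBoundary}}$, since then $s_{\mathrm{first}}^1 \ge s_{\mathrm{boundary2}}$ and $t_{\mathrm{comm1}}^1 \ge t_{\mathrm{first}}^1 \ge \max(t_{\mathrm{boundary1}}, t_{\mathrm{boundary3}})$ --- inequality~(\ref{equation::upper_bounding_fcomm}) (together with $\delta \le \frac{1}{162}$) upgrades the bound of the previous paragraph to
\[
 \mathrm{CollisionRegret}([K], [M])\ \le\ \frac{20736(K+1)M\,B\!\left(\frac{186624}{\max_{i}\Delta_{\sigma_i, \sigma_{i+1}}^2}, \frac{\delta}{4K^2M}\right)}{\max_{i}\Delta_{\sigma_i, \sigma_{i+1}}},
\]
which is the advertised main term. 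Otherwise $t_{\mathrm{first}}^1 < t_{\mathrm{firstBoundary}}$, and one checks that the whole transmission is then confined to a window of rounds whose length and position depend only on $\delta, K, M$; since each such round contributes at most $M$ to the regret and $t_{\mathrm{firstBoundary}}$ is polynomial in $K, M$ and linear in $\log(1/\delta)$ (Section~\ref{section::bounding_regret_discussion_supporting_results}), the collision regret produced in that case is at most a problem-independent $\tilde c(\delta, K, M)$. Adding the two cases proves the corollary, the explicit form of $\tilde c$ being the object of the cited section. The only delicate point in the whole argument is this last boundary bookkeeping --- certifying that the hypotheses of Lemma~\ref{lemma::bounding_collision_regret_unit2} hold at each use and that whatever precedes them is genuinely instance-independent --- the collision count and the per-collision charge being otherwise elementary.
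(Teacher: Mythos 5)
Your proposal is correct and follows essentially the same route as the paper: condition on the successful-communication event of Lemma~\ref{lemma::listen_agrees_comm}, count at most $(K+1)f(t_{\mathrm{comm1}}^1)$ collisions per listening player (hence $M(K+1)f(t_{\mathrm{comm1}}^1)$ total, each charged at most one unit of regret), bound $f(t_{\mathrm{comm1}}^1)$ via Lemma~\ref{lemma::bounding_collision_regret_unit2} under the boundary conditions, and absorb the pre-boundary regret into $\tilde{c}(\delta, K, M)$. The extra per-block accounting you give for the collision count is just a more explicit version of the paper's one-line bound and does not change the argument.
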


We can also get a bound for the $\mathrm{RoundRobinRegret}$. By Lemma~\ref{lemma::upper_bounding_s_comm_1} whenever $\mathcal{E}$ holds, $s_{\mathrm{comm1}}^1 \leq \frac{746496 }{\max_{i} \Delta_{\sigma_i, \sigma_{i+1}}^2} \log\left(  \frac{746496 MK }{\delta\max_{i} \Delta_{\sigma_i, \sigma_{i+1}}^2}  \right)$. During a single Round Robin cycle regret is only incurred when arms in $\{ \mu_{\sigma_{i}}\}_{i=M+1}^K$ are played. A full cycle consists of $KM$ arm pulls, all players pull each arm once. Out of these $KM$ pulls the $M^2$ pulls of arms $\mu_{\sigma_1}, \cdots, \mu_{\sigma_{M}}$ incurr in no regret. The remaining $(K-M)M$ pulls incur in a regret of
\begin{equation*}
(K-M)\left(  \sum_{i=1}^M \mu_{\sigma_i} \right)    - M  \left( \sum_{i=M+1}^K  \mu_{\sigma_i} \right) 
\end{equation*}

We can further upper bound this quantity as follows,

\begin{align*}
    (K-M)\left(  \sum_{i=1}^M \mu_{\sigma_i} \right)    - M  \left( \sum_{i=M+1}^K  \mu_{\sigma_i} \right)  &= \sum_{i=1}^M \sum_{j=M+1}^K \Delta_{\sigma_i, \sigma_j} \\
    &\leq M(K-M)K \max_{i } \Delta_{\sigma_i, \sigma_{i+1}}
\end{align*}

Where we have used the bound $\Delta_{\sigma_{i_1}, \sigma_{i_2}} \leq K \max_{i} \Delta_{\sigma_i, \sigma_{i+1}}$ for all $i_1 < i_2$. Thus the $\mathrm{RoundRobinRegret}$ incurred by the algorithm in the rounds preceding active communication can be upper bounded by 

\begin{equation*}
     M(K-M)K \max_{i} \Delta_{\sigma_i, \sigma_{i+1}} s_{\mathrm{comm1}}^1.
\end{equation*}

Recall that during the communication rounds, all players $p \in \{2,\cdots, M\}$ are still using a Round Robin schedule. This goes on for $(K+1)Kf(t_{\mathrm{comm1}}^1)$  rounds after $t_{\mathrm{comm1}}^1$, thus completing a total of $(K+1)f(t_{\mathrm{comm1}}^1)$ Round Robin cycles. Using Equation~\ref{equation::upper_bounding_fcomm} from Lemma~\ref{lemma::bounding_collision_regret_unit2} we can upper bound the Round Robin regret incurred during these rounds as 
\begin{align*}
    M(K-M)K \max_{i } \Delta_{\sigma_i, \sigma_{i+1}} &\times  (K+1)f(t_{\mathrm{comm1}}^1) \\
    &\leq 20736(K+1)M(K-M)K  B\left(\frac{186624}{\max_{i} \Delta_{\sigma_i, \sigma_{i+1}}^2}, \frac{\delta}{4K^2M}\right).
\end{align*}

These observations imply the following upper bound for $\mathrm{RoundRobinRegret}$,

\begin{corollary}[First Partition Round Robin Regret]\label{corollary::roundrobin_regret}
If $\mathcal{E}$ holds and  $\delta \leq \frac{1}{162}$ then with probability at least $1-\frac{\delta}{K}$ the Round Robin regret is bounded as follows:
\begin{align*}
    \mathrm{RoundRobinRegret}([K], [M]) &\leq \frac{746496 M(K-M)K }{\max_{i} \Delta_{\sigma_i, \sigma_{i+1}}} \log\left(  \frac{746496 MK }{\delta\max_{i} \Delta_{\sigma_i, \sigma_{i+1}}^2}  \right) \\
    &\quad 20736(K+1)M(K-M)K  B\left(\frac{186624}{\max_{i} \Delta_{\sigma_i, \sigma_{i+1}}^2}, \frac{\delta}{4K^2M}\right)+\\
    &\quad \tilde{c}(\delta, K, M),
\end{align*}
where $\tilde{c}(\delta, K, M)$ is a logarithmic problem independent cost resulting from the regret incurred before the boundary conditions $t_{\mathrm{comm1}} \geq \max\left( t_{\mathrm{boundary1}}, t_\mathrm{boundary3}\right) $, $s_{\mathrm{first}}^1 \geq s_{\mathrm{boundary2}}$  hold
and is the same as in Corollary~\ref{corollary::collision_regret}.

\end{corollary}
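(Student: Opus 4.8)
The plan is to decompose the Round Robin regret into the regret accrued \emph{before} player $1$ begins communicating (the special rounds $1$ through $t_{\mathrm{comm1}}^1$) and the regret accrued \emph{during} the $(K+1)Kf(t_{\mathrm{comm1}}^1)$ communication rounds, bound each piece via the lemmas already established, and then absorb the regret incurred before the boundary conditions take hold into the problem‑independent term $\tilde c(\delta,K,M)$.

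First I would record the per‑cycle bound, the one ingredient specific to the ``first partition'' rather than to later recursion levels: a single Round Robin cycle consists of $KM$ pulls, of which the $M^2$ pulls of the top $M$ arms incur no regret, so the cycle regret is
\[
(K-M)\left(\sum_{i=1}^M \mu_{\sigma_i}\right) - M\left(\sum_{i=M+1}^K \mu_{\sigma_i}\right) = \sum_{i=1}^M\sum_{j=M+1}^K \Delta_{\sigma_i,\sigma_j} \le M(K-M)K\,\max_i \Delta_{\sigma_i,\sigma_{i+1}},
\]
where the last step uses $\Delta_{\sigma_{i_1},\sigma_{i_2}} \le K\max_i \Delta_{\sigma_i,\sigma_{i+1}}$ for $i_1<i_2$. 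For the pre‑communication phase there are exactly $s_{\mathrm{comm1}}^1$ such cycles, so that part of the regret is at most $M(K-M)K\,\max_i\Delta_{\sigma_i,\sigma_{i+1}}\cdot s_{\mathrm{comm1}}^1$; substituting the bound on $s_{\mathrm{comm1}}^1$ from Lemma~\ref{lemma::upper_bounding_s_comm_1} (applicable since $\mathcal{E}$ holds, $\delta\le 1/162$, and $s_{\mathrm{first}}^1 \ge s_{\mathrm{boundary2}}$) produces the first term of the stated bound.

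Next, for the communication phase I would invoke Lemma~\ref{lemma::listen_agrees_comm}: with probability at least $1-\delta/K$ all listening players satisfy $t_{\mathrm{listen}}^p = t_{\mathrm{comm1}}^1$ and decode correctly, so player $1$'s transmission consists of exactly $K+1$ bits (one ping plus the $K$ bits of $\mathrm{ENCODE}(\mathcal{C}_1^1(t_{\mathrm{first}}^1,10))$), each occupying $Kf(t_{\mathrm{comm1}}^1)$ rounds, i.e. $(K+1)f(t_{\mathrm{comm1}}^1)$ further Round Robin cycles, since all players $p\ne 1$ keep running the Round Robin schedule throughout. Multiplying the per‑cycle bound by $(K+1)f(t_{\mathrm{comm1}}^1)$ and substituting the bound on $f(t_{\mathrm{comm1}}^1)$ from Lemma~\ref{lemma::bounding_collision_regret_unit2} (whose hypotheses $t_{\mathrm{comm1}}\ge\max(t_{\mathrm{boundary1}},t_{\mathrm{boundary3}})$, $s_{\mathrm{first}}^1\ge s_{\mathrm{boundary2}}$, $\delta\le1/162$ hold once $t_{\mathrm{comm1}}^1\ge t_{\mathrm{firstBoundary}}$) yields the second term. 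Finally the regret incurred before $t_{\mathrm{firstBoundary}}$ is a problem‑independent quantity that I would bundle into $\tilde c(\delta,K,M)$ — the same cost appearing in Corollary~\ref{corollary::collision_regret}, shown in Section~\ref{section::bounding_regret_discussion_supporting_results} to be polynomial in $K,M$ and linear in $\log(1/\delta)$ — and a union bound (the only event of probability below $1$ is the one from Lemma~\ref{lemma::listen_agrees_comm}; everything else is deterministic on $\mathcal{E}$) gives the overall failure probability $\delta/K$.

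The only genuinely delicate point is ensuring that the communication phase terminates after exactly $(K+1)Kf(t_{\mathrm{comm1}}^1)$ rounds rather than dragging on, since a mis‑synchronized $t_{\mathrm{listen}}^p$ would force extra listening blocks and extra Round Robin cycles; this is precisely what the synchronization guarantee $t_{\mathrm{listen}}^p = t_{\mathrm{comm1}}^1$ of Lemma~\ref{lemma::listen_agrees_comm} buys, and it is the reason the statement is only high‑probability rather than deterministic on $\mathcal{E}$. Everything else is bookkeeping: counting cycles and plugging in the previously‑proven bounds on $s_{\mathrm{comm1}}^1$ and $f(t_{\mathrm{comm1}}^1)$.
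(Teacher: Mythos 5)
Your proposal is correct and follows essentially the same route as the paper: the same per-cycle bound $\sum_{i\le M}\sum_{j>M}\Delta_{\sigma_i,\sigma_j}\le M(K-M)K\max_i\Delta_{\sigma_i,\sigma_{i+1}}$, the same split into the $s_{\mathrm{comm1}}^1$ pre-communication cycles (bounded via Lemma~\ref{lemma::upper_bounding_s_comm_1}) and the $(K+1)f(t_{\mathrm{comm1}}^1)$ communication cycles (bounded via Lemma~\ref{lemma::bounding_collision_regret_unit2}), with Lemma~\ref{lemma::listen_agrees_comm} supplying the $1-\frac{\delta}{K}$ synchronization guarantee and the pre-boundary regret absorbed into $\tilde c(\delta,K,M)$. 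No substantive differences from the paper's argument.
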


Combining Corollaries~\ref{corollary::collision_regret} and~\ref{corollary::roundrobin_regret} we can infer that if $\mathcal{E}$ holds and  $\delta \leq \frac{1}{162}$ then with probability at least $1-\frac{\delta}{K}$ the total regret incurred up to time $t_{\mathrm{comm1}}^1 + (K+1)f(t_{\mathrm{comm1}}^1)$, when all players are aware of the composition of $\mathcal{C}_1^1(t_{\mathrm{first}}^1, 10)$ is upper bounded by

\begin{align}
    \mathrm{FirstPartitionRegret}([K], [M]) &\leq \frac{746496 M(K-M)K }{\max_{i} \Delta_{\sigma_i, \sigma_{i+1}}} \log\left(  \frac{746496 MK }{\delta\max_{i} \Delta_{\sigma_i, \sigma_{i+1}}^2}  \right) + \notag \\
    &\quad 20736(K+1)M(K-M)K  B\left(\frac{186624}{\max_{i} \Delta_{\sigma_i, \sigma_{i+1}}^2}, \frac{\delta}{4K^2M}\right)+ \notag \\
    &\quad \frac{20736(K+1)M B\left(\frac{186624}{\max_{i} \Delta_{\sigma_i, \sigma_{i+1}}^2}, \frac{\delta}{4K^2M}\right) }{\max_{i} \Delta_{\sigma_i, \sigma_{i+1}}}+ \tilde{c}(\delta, K, M), \label{equation::first_partition_regret}
\end{align}

Where the term $\tilde{c}(\delta, K, M)$ captures a crude linear upper bound on the regret collected before the boundary conditions hold true. We can also invoque the results in Lemma~\ref{lemma::upper_bounding_s_comm_1} and~\ref{lemma::bounding_collision_regret_unit2} to bound on the total number of rounds needed until all players are aware of the composition of $\mathcal{C}_1^1(t_{\mathrm{first}}^1, 10)$,

\begin{align*}
    \mathrm{Runtime}([K], [M]) &\leq \frac{746496 K }{\max_{i} \Delta_{\sigma_i, \sigma_{i+1}}^2} \log\left(  \frac{746496 MK }{\delta\max_{i} \Delta_{\sigma_i, \sigma_{i+1}}^2}  \right)  + \\
    &\quad \frac{20736(K+1)K B\left(\frac{186624}{\max_{i} \Delta_{\sigma_i, \sigma_{i+1}}^2}, \frac{\delta}{4K^2M}\right) }{\max_{i} \Delta_{\sigma_i, \sigma_{i+1}}}.
\end{align*}

\paragraph{Bounding $\tilde{c}(\delta, K, M)$} The cost of satisfying the boundary conditions bound is not additive between the $\mathrm{RoundRobinRegret}$ and the $\mathrm{CollisionRegret}$ components of $\mathrm{FirstPartititionRegret}$. In order to deal with these we introduce a slight modification to the communication and listening protocols of Algorithms~\ref{algorithm::prepare_start_player1_communicate},~\ref{algorithm::player_1_communicate},~\ref{algorithm::prepare_and_start_listening} and~\ref{algorithm::decode} by modifying the definition of $t_{\mathrm{first}}^p$ for all $p \in [M]$. Instead we use $\tilde{t}_{\mathrm{first}}^p$ times defined as $\tilde{t}_{\mathrm{first}}^p = \max( t_{\mathrm{first}}^p, t_{\mathrm{firstBoundary}})$. It is easy to see this will not affect the regret too much. If $t_{\mathrm{first}}^p \geq t_{\mathrm{firstBoundary}}$ for all $p \in [M]$, or $t_{\mathrm{firstBoundary}} \geq t_{\mathrm{first}}^p$ for some $p \in [M]$ but not for all, the analysis will remain unchanged. If instead $t_{\mathrm{firstBoundary}} > t_{\mathrm{first}}^p$ for all $p \in [M]$, all the player's $\tilde{t}_{\mathrm{first}}^p  = t_{\mathrm{firstBoundary}}$. This definition induces that of $\tilde{s}_{\mathrm{first}}^p$ $\forall p \in [M]$, $\tilde{t}^1_{\mathrm{comm1}}$ and  $\tilde{s}^1_{\mathrm{comm1}}$. As a consequence of Lemma~\ref{lemma::upper_bounding_s_comm_1} we see that  $\tilde{s}_{\mathrm{comm1}}^1 \leq 162 \tilde{s}_{\mathrm{first}}^1 = 162\frac{t_{\mathrm{firstBoundary}}}{K}$. Notice that by definition of the $4-$th boundary condition $f(\tilde{t}_{\mathrm{comm1}}^1) \leq \frac{\tilde{t}_{\mathrm{comm1}}^1/K}{g(\tilde{t}_{\mathrm{comm1}}^1/K)} \leq \tilde{t}_{\mathrm{comm1}}^1/K \leq 162 t_{\mathrm{firstBoundary}}/K$.

The protocol thus ensures communicating the composition of $\mathcal{C}_1^p(t_{\mathrm{first}}^1, 10)$ (notice that we are still transmitting the composition of $\mathcal{C}_1^p(t_{\mathrm{first}}^1, 10)$ and not $\mathcal{C}_1^p(\tilde{t}_{\mathrm{first}}^1, 10)$ ) can be achieved while incurring regret of at most,
\begin{equation*}
M(K-M)K\max_i \Delta_{\sigma_i, \sigma_{i+1}} \left( \tilde{s}_{\mathrm{comm1}} + (K+1)f(\tilde{t}_{\mathrm{comm1}}^1) \right) + M(K+1)f(\tilde{t}_{\mathrm{comm1}}^1).  
\end{equation*}
We define $\tilde{c}(\delta, K, M) $ to be a problem independent upper bound of this quantity. 
\begin{align*}
    \tilde{c}(\delta, K, M) &= M(K-M)K \left( \frac{162 t_{\mathrm{firstBoundary}}}{K}  + (K+1) \frac{162 t_{\mathrm{firstBoundary}}}{K}  \right. \\
    &\quad \left. + M(K+1)\frac{162 t_{\mathrm{firstBoundary}}}{K} \right)\\
    &=\mathbf{poly}\left(\log\left(\frac{1}{\delta}\right), K, M \right)
\end{align*}
And where the dependence on $\log\left(\frac{1}{\delta}\right)$ is linear.

\subsubsection{Analyzing the $\mathrm{RECURSE}$ Function}\label{section::analyzing_RECURSE_function}

 In order to implement the recursion strategy described at the start of Section~\ref{section::algorithm} and at the end of Section~\ref{section::bounding_regret_main}, when faced with a smaller Cooperative Multi-Player Multi-Armed problem the players will restart their empirical mean estimators from scratch. In Appendix~\ref{appendix::complex_restart} we describe a warm-start strategy that allows the players to start their empirical mean estimators using a constant proportion of the samples that have been gathered so far. The two strategies have the same performance up to constant factors.

\begin{algorithm}[H]
\textbf{Input} players $p \in \{1, \cdots, M\}$, arm indices $\{1, \cdots, K\}$ \\

Run Algorithms~\ref{algorithm::prepare_start_player1_communicate},~\ref{algorithm::player_1_communicate}, ~\ref{algorithm::prepare_and_start_listening}, and~\ref{algorithm::decode} to communicate $\mathcal{C}_1^1(t_{\mathrm{first}}^1, 10)$ to all players. \\

\If{ $|\mathcal{C}_1^1(t_{\mathrm{first}}^1, 10)| > M$}{
$\mathrm{RECURSE}$ on $\mathcal{C}_1^1(t_{\mathrm{first}}^1, 10)$ with players $[M]$.
}
\Else{
Run $\mathrm{RoundRobin}$ on $\mathcal{C}_1^1(t_{\mathrm{first}}^1, 10)$ with players $\{ 1, \cdots, |\mathcal{C}_1^1(t_{\mathrm{first}}^1, 10)|\}$.\\
$\mathrm{RECURSE}$ on $[K] \backslash \mathcal{C}_1^1(t_{\mathrm{first}}^1, 10)$ with players $[M]\backslash \{ 1, \cdots, |\mathcal{C}_1^1(t_{\mathrm{first}}^1, 10)|\}$.
} 
\caption{$\mathrm{RECURSE}$}
\label{algorithm::recurse}
\end{algorithm} 

To analyze the regret guarantees of Algorithm~\ref{algorithm::recurse} let's start by noting that in case $|\mathcal{C}_1^1(t_{\mathrm{first}}^1, 10)| \leq < M$, the subset of players $\{ 1, \cdots, |\mathcal{C}_1^1(t_{\mathrm{first}}^1, 10)|\}$ that has been assigned to $\mathrm{RoundRobin}$ over the subset $\mathcal{C}_1^1(t_{\mathrm{first}}^1, 10)$ will not incur in any more regret. It is therefore only necessary to bound the regret incurred by the algorithm during each of its successive calls to the $\mathrm{RECURSE}$ subroutine. 

The main difficulty we face in deriving an upper bound for the regret of $\mathrm{RECURSE}$ is that a successful execution of the communication protocol may not imply the gap between the arms in $\mathcal{C}_1^1(t_{\mathrm{first}}^1, 10)$ and those in $[K]\backslash \mathcal{C}_1^1(t_{\mathrm{first}}^1, 10)$ equals $\max_i \Delta_{\sigma_i, \sigma_{i+1}}$. Nevertheless we can show they cannot be more than a constant multiple fraction apart,

\begin{lemma}\label{lemma::upper_bound_3}
In the event the communication protocol succeeds in transmitting the composition of $\mathcal{C}_1^1(t_{\mathrm{first}}^1, 10)$ to all players $p \in [P]$. If $\tilde{\Delta} = \min_{\sigma_i \in \mathcal{C}_1^1(t_{\mathrm{first}}^1, 10)} \mu_{\sigma_i} - \max_{\sigma_j \in [K] \backslash \mathcal{C}_1^1(t_{\mathrm{first}}^1, 10)} \mu_{\sigma_j}$. Then,
\begin{equation*}
  \max_{i} \Delta_{\sigma_i, \sigma_{i+1}} \leq     3 \tilde{\Delta}.
\end{equation*}
\end{lemma}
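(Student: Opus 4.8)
I want to show $\max_i \Delta_{\sigma_i,\sigma_{i+1}} \le 3\tilde\Delta$, where $\tilde\Delta$ is the "consecutive gap" separating $\mathcal{C}_1^1(t_{\mathrm{first}}^1,10)$ from its complement. The natural route is: (i) the algorithm triggers communication at the \emph{first} special round $t_{\mathrm{first}}^1$ where $\mathbf{conn}^1(\cdot,10)\ge 2$, so there is some pair of arms $\sigma_i,\sigma_{i+1}$ realizing the split, i.e.\ $\widehat\mu^1_{\sigma_i}(t_{\mathrm{first}}^1) - \widehat\mu^1_{\sigma_{i+1}}(t_{\mathrm{first}}^1) \ge 10\, D(N(t_{\mathrm{first}}^1))$ with no such gap at the previous special round; (ii) apply Lemma~\ref{lemma::first_fundamental_lemma_main} with $C=10$ to this pair, which gives $D(N(t_{\mathrm{first}}^1)) \le \tfrac{\Delta_{\sigma_i,\sigma_{i+1}}}{8}$ and — crucially — the lower bound $D(N(t_{\mathrm{first}}^1)) > \tfrac{\Delta_{\sigma_i,\sigma_{i+1}}}{24}$, hence $\tfrac{N(t_{\mathrm{first}}^1)}{g(N(t_{\mathrm{first}}^1))} < \tfrac{8\cdot 144}{\Delta_{\sigma_i,\sigma_{i+1}}^2}$; and (iii) translate this into a statement about \emph{every} consecutive gap, in particular $\max_i \Delta_{\sigma_i,\sigma_{i+1}}$, via the minimality of $t_{\mathrm{first}}^1$.

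For step (iii): let $i^\star = \argmax_i \Delta_{\sigma_i,\sigma_{i+1}}$ and write $\Delta^\star = \Delta_{\sigma_{i^\star},\sigma_{i^\star+1}}$. Because $t_{\mathrm{first}}^1$ is the \emph{first} special round at which the connectivity graph disconnects, at the previous special round the arms $\sigma_{i^\star}$ and $\sigma_{i^\star+1}$ were still connected: $\widehat\mu^1_{\sigma_{i^\star}} - \widehat\mu^1_{\sigma_{i^\star+1}} < 10\,D(N_{\mathrm{prev}})$ there, and since $\mathcal{E}$ holds the empirical difference is $\ge \Delta^\star - 2D(N_{\mathrm{prev}})$, giving $\Delta^\star < 12\, D(N_{\mathrm{prev}})$, i.e.\ $D(N_{\mathrm{prev}}) > \Delta^\star/12$. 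Now $N(t_{\mathrm{first}}^1)$ is just one Round-Robin cycle past $N_{\mathrm{prev}}$ (so $N(t_{\mathrm{first}}^1)=N_{\mathrm{prev}}+1$), and for $s\ge s_{\mathrm{boundary2}}$ the map $n\mapsto D(n)$ is nonincreasing, so $D(N(t_{\mathrm{first}}^1)) \le D(N_{\mathrm{prev}})$, but also $D(N_{\mathrm{prev}})$ and $D(N(t_{\mathrm{first}}^1))$ differ by only a negligible (sub-constant) factor. Meanwhile, on the side of the actual split $\sigma_i,\sigma_{i+1}$, combining the two bounds from step (ii) yields $\tilde\Delta \ge \Delta_{\sigma_i,\sigma_{i+1}} \ge (C-2)\,D(N(t_{\mathrm{first}}^1)) = 8\,D(N(t_{\mathrm{first}}^1))$, because $\tilde\Delta$ is at least the empirical boundary gap minus the two confidence widths. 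Chaining: $\Delta^\star < 12\,D(N_{\mathrm{prev}}) \approx 12\, D(N(t_{\mathrm{first}}^1)) \le \tfrac{12}{8}\,\tilde\Delta = \tfrac{3}{2}\,\tilde\Delta$, which is even stronger than claimed; the factor $3$ in the statement leaves slack to absorb the $N_{\mathrm{prev}}$ vs.\ $N(t_{\mathrm{first}}^1)$ discrepancy and the $\tilde C$-blowup bookkeeping cleanly.

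One subtlety to handle carefully: $\tilde\Delta$ is defined as the gap between the \emph{bottom} of $\mathcal{C}_1^1$ and the \emph{top} of its complement, i.e.\ $\tilde\Delta = \min_{\sigma_i\in\mathcal{C}_{\mathrm{top}}}\mu_{\sigma_i} - \max_{\sigma_j\in\mathcal{C}_{\mathrm{bottom}}}\mu_{\sigma_j}$, and this is realized by some \emph{consecutive} pair $(\sigma_k,\sigma_{k+1})$ under the true ordering — the boundary pair between the two adjacent intervals. So I should first argue that $\mathcal{C}_{\mathrm{top}}$ is exactly a top-segment $\{\sigma_1,\dots,\sigma_k\}$ of the true order (which holds on $\mathcal{E}$, since all arms in $\mathcal{C}_{\mathrm{top}}$ exceed all arms in $\mathcal{C}_{\mathrm{bottom}}$, as noted in the text), so that $\tilde\Delta = \mu_{\sigma_k}-\mu_{\sigma_{k+1}} = \Delta_{\sigma_k,\sigma_{k+1}}$ is itself a consecutive gap; then $\tilde\Delta \ge (C-2)D(N(t_{\mathrm{first}}^1))$ follows from the blowup-edge-absence condition $\widehat\mu^1_{\sigma_k} - \widehat\mu^1_{\sigma_{k+1}} \ge 10\,D(N(t_{\mathrm{first}}^1))$ together with $\mathcal{E}$.

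The main obstacle, I expect, is pinning down the quantitative passage from "no blow-up edge at the previous special round" to the clean inequality on $\Delta^\star$, while tracking the tiny difference between $N_{\mathrm{prev}}$ and $N(t_{\mathrm{first}}^1)$ — this is exactly the kind of $g(n+1)/g(n)\to 1$ bookkeeping that Lemma~\ref{lemma::first_fundamental_lemma_main} already packages, so the real work is invoking it twice (once for the realized split, once, via its contrapositive at $N_{\mathrm{prev}}$, for $\Delta^\star$) and checking the constants line up below $3$. Everything else is arithmetic on constants that the statement's generous factor of $3$ was chosen to accommodate.
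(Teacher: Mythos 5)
Your argument is correct in substance and leans on the same engine as the paper (Lemma~\ref{lemma::first_fundamental_lemma_main} with $C=10$), but it is packaged differently. The paper applies the sandwich bound of Equation~\ref{equation::elimination_consecuence} twice on the ratio $N/g(N)$: once at the realized split time $s_{\mathrm{first}}^1$ for $\tilde\Delta$ (giving $128/\tilde\Delta^2 \le s_{\mathrm{first}}^1/g(s_{\mathrm{first}}^1)$) and once at the hypothetical trigger time $\bar s_{\mathrm{first}}$ of the maximal gap (giving $\bar s_{\mathrm{first}}/g(\bar s_{\mathrm{first}}) < 1152/\Delta^{\star 2}$), then chains them via $s_{\mathrm{first}}^1 \le \bar s_{\mathrm{first}}$ and monotonicity of $s/g(s)$, so the constant is $\sqrt{1152/128}=3$. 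You instead stay at $t_{\mathrm{first}}^1$ and its predecessor special round, using the trigger there for $\tilde\Delta$ and the absence of any split one round earlier for $\Delta^\star$, glued by the slow-decrease property of $D$; this avoids the monotonicity-of-$s/g(s)$ boundary condition and the auxiliary time $\bar s_{\mathrm{first}}$, essentially unrolling the proof of Lemma~\ref{lemma::first_fundamental_lemma_main} rather than invoking it as a black box. Both yield the same constant, since $2(C+2)/(C-2)=3$ for $C=10$.

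Two points need tightening. First, minimality of $t_{\mathrm{first}}^1$ only says the graph was \emph{connected} at the previous special round; it does not give a direct edge between $\sigma_{i^\star}$ and $\sigma_{i^\star+1}$, so the inequality $\widehat\mu^1_{\sigma_{i^\star}}-\widehat\mu^1_{\sigma_{i^\star+1}} < 10\,D(N_{\mathrm{prev}})$ is not literally justified: the two could be joined through intermediate arms. The fix is easy and costs nothing: since $(\sigma_{i^\star},\sigma_{i^\star+1})$ is consecutive in the true order, on $\mathcal{E}$ every other arm's true mean lies outside the gap, so any path crossing the gap must contain an edge between an arm with true mean $\ge \mu_{\sigma_{i^\star}}$ and one with true mean $\le \mu_{\sigma_{i^\star+1}}$, and that single edge already certifies $\Delta^\star \le (C+2)\,D(N_{\mathrm{prev}})$ — the same bound you wanted. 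Second, the rigorous link between $D(N_{\mathrm{prev}})$ and $D(N(t_{\mathrm{first}}^1))$ is the factor-two bound of Lemma~\ref{lemma::slow_decrease_D}, not a ``negligible'' one; with it your chain reads $\Delta^\star < 12\,D(N_{\mathrm{prev}}) < 24\,D(N(t_{\mathrm{first}}^1)) \le 3\tilde\Delta$, so the constant $3$ is hit exactly and there is no spare slack, contrary to your $3/2$ heuristic. With these two repairs your route is a valid, and arguably more self-contained, alternative to the paper's proof.
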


\begin{proof}
If the communication protocol succeeded, then the sandwich property of Equation~\ref{equation::elimination_consecuence} holds for all $\sigma_i, \sigma_j$ and therefore,
\begin{equation*}
    \frac{s_{\mathrm{first}}^1}{g(s_{\mathrm{first}}^1)} \in \left[ \frac{128}{\tilde{\Delta}^2} ,  \frac{1152}{\tilde{\Delta}^2}\right]  
\end{equation*}
Since the `trigger' time $\bar{s}_{\mathrm{first}}^1$ for $\max_i \Delta_{\sigma_i, \sigma_{i+1}}$ satisfies,
\begin{equation*}
    \frac{\bar{s}_\mathrm{first} }{g(\bar{s}_\mathrm{first})} \in  \left[ \frac{128}{\max_{i} \Delta_{\sigma_i, \sigma_{i+1}}^2} ,  \frac{1152 }{\max_{i} \Delta_{\sigma_i, \sigma_{i+1}}^2}\right]
\end{equation*}
And by definition $s_{\mathrm{first}}^1\leq \bar{s}_{\mathrm{first}}^1$, we can conclude that $\frac{128}{\tilde{\Delta}^2} \leq \frac{1152 }{\max_{i} \Delta_{\sigma_i, \sigma_{i+1}}^2}$ and therefore,
\begin{equation*}
    \max_{i} \Delta_{\sigma_i, \sigma_{i+1}} \leq 3\tilde{\Delta}.
\end{equation*}\end{proof}

Let's consider the set of consecutive gaps $\{ \Delta_{\sigma_i, \sigma_{i+1}}\}_{i=1}^{K-1}$ and assume their ordering to be $\bar{\Delta}_1 \geq \cdots \geq \bar{\Delta}_{K-1}$ where $\bar{\Delta}_i = \Delta_{\sigma_{\ell(i)}, \sigma_{\ell(i)+1}}$ for some bijective mapping $\ell : [K-1] \rightarrow [K-1] $. The inverse mapping $\ell^{-1}(i)$ satisfies, $\Delta_{\sigma_i, \sigma_{i+1}} = \bar{\Delta}_{\ell^{-1}(i)}$.

For all $i \in [K]$ denote by $J_{\mathrm{up}}(i)$ to the index,
\begin{equation*}
    J_{\mathrm{up}}(i) = \argmax \{ j \text{ s.t. } 3\bar{\Delta}_j \geq \bar{\Delta}_i \}.
\end{equation*}

By definition $J_{\mathrm{up}}(i) \geq i$. As an immediate consequence of Lemma~\ref{lemma::upper_bound_3}, the first sub-problem the $\mathrm{RECURSE}$ algorithm will solve (in case the communication protocol was successful) will break the arm set through one of the gaps in $\{\Delta_{\sigma_{\ell(i)}, \sigma_{\ell(i)+1}}, i \leq J_{\mathrm{up}}(1)\}$. Furthermore Lemma~\ref{lemma::upper_bound_3} also implies the $\mathrm{RECURSE}$ algorithm will break the $\bar{\Delta}_1 = \Delta_{\sigma_{\ell(1))}, \sigma_{\ell(1)+1}}$ gap in at most $J_{\mathrm{up}}(1)$ recursive calls. In fact the same argument holds for all $i \in [K-1]$. The $\mathrm{RECURSE}$ algorithm will break the $\bar{\Delta}_i = \Delta_{\sigma_{\ell(i)}, \sigma_{\ell(i)+1}}$ gap in at most $J_\mathrm{up}(i)$ recursive calls.

After the $\mathrm{RECURSE}$ algorithm has successfully broken the $\Delta_{\sigma_M, \sigma_{M+1}}$ gap, the players will cease to experience any regret. As a result of the previous discussion this will happen in at most $J_{\mathrm{up}}(\ell^{-1}(M))$ recursive calls. We are ready to bound the regret of the $\mathrm{RECURSE}$ Algorithm.

For any $\Delta >0 $ we define the partition regret function for gap $\Delta$, number of arms $\bar{K}$ and number of players $\bar{M}$ as,

\begin{align*}
    \mathrm{PartitionRegret}(\Delta, \bar{K}, \bar{M}, \delta) &= \frac{746496 \bar{M}(\bar{K}-\bar{M})\bar{K} }{ \Delta } \log\left(  \frac{746496 \bar{M}\bar{K} }{\delta \Delta^2}  \right) + \\
    &\quad 20736(\bar{K}^2+\bar{K})\bar{M}(\bar{K}-\bar{M})\  B\left(\frac{186624}{\Delta^2}, \frac{\delta}{4\bar{K}^2\bar{M}}\right)+\\
    &\quad \frac{20736(\bar{K}+1)\bar{M} B\left(\frac{186624}{ \Delta^2}, \frac{\delta}{4\bar{K}^2\bar{M}}\right) }{ \Delta}+ \tilde{c}(\delta, \bar{K}, \bar{M}) 
\end{align*}

This is the parametric form of the upper bound in Equation~\ref{equation::first_partition_regret}. Recall that for any sub-problem of $\bar{K}$ arms the communication protocol succeeds with probability at least $\underbrace{1-\delta}_{\text{satisfying } \mathcal{E}} - \frac{\delta}{\bar{K}} $ (see the discussion surrounding Equation~\ref{equation::first_partition_regret}). This concludes the proof of one of our main results. 

\begin{theorem}
If $\delta \leq \frac{1}{162}$ with probability $1-\delta \left( J_{\mathrm{up}}(\ell^{-1}(M)) + \sum_{i=1}^{J_{\mathrm{up}}(\ell^{-1}(M))} \frac{1}{i} \right)$ the regret of Algorithm~\ref{algorithm::recurse} satisfies
\begin{align*}
    \mathrm{RegretRECURSE}([K], [M]) \leq \sum_{i=1}^{J_{\mathrm{up}}(\ell^{-1}(M))}  \mathrm{PartitionRegret}(\bar{\Delta}_i, \ell(i), M, \delta)
\end{align*}

\end{theorem}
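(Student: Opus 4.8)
The plan is to unroll $\mathrm{RECURSE}$ (Algorithm~\ref{algorithm::recurse}) into a \emph{chain} of communication rounds and charge regret one round at a time. From the regret standpoint each invocation of $\mathrm{RECURSE}$ consists of a single partition-communication phase — the one analyzed in Section~\ref{section::bounding_first_partition_regret}, whose regret is bounded by Equation~\ref{equation::first_partition_regret} — followed by a recursive call on the \emph{unique} subproblem that can still incur regret: when $|\mathcal{C}_{\mathrm{top}}|>M$ this is $\mathcal{C}_{\mathrm{top}}$ with all $M$ players (the discarded complement contains none of the top $M$ arms), and when $|\mathcal{C}_{\mathrm{top}}|<M$ the players routed to $\mathrm{RoundRobin}$ are allocated to arms that are globally among the best $M$ and never incur regret again, so only the bottom subproblem continues, now with a strictly smaller player set. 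Consequently, on the good event and conditioned on every communication round succeeding, $\mathrm{RegretRECURSE}([K],[M])=\sum_c R_c$ with $R_c\le\mathrm{PartitionRegret}(\Delta^{(c)},\bar K_c,\bar M_c,\delta)$ the parametric form of Equation~\ref{equation::first_partition_regret} for the $c$-th subproblem, where $\Delta^{(c)}$ is its largest consecutive gap, $\bar K_c$ its arm count, and $\bar M_c\le M$ its player count.

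Next I would bound the number of rounds and relate each round's parameters to the summands via Lemma~\ref{lemma::upper_bound_3} and the ordering discussion preceding the theorem. Every subproblem in the chain contains $\sigma_M$ and $\sigma_{M+1}$ up to and including the round that separates them, so $\Delta^{(c)}\ge\Delta_{\sigma_M,\sigma_{M+1}}=\bar\Delta_{\ell^{-1}(M)}$ and, by Lemma~\ref{lemma::upper_bound_3}, the gap actually broken at round $c$ has rank at most $J_{\mathrm{up}}(\ell^{-1}(M))$; since distinct rounds break distinct gaps, the chain has length $L\le J_{\mathrm{up}}(\ell^{-1}(M))$ and terminates exactly when $\Delta_{\sigma_M,\sigma_{M+1}}$ is broken. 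The remaining, and most delicate, point is the termwise comparison $\sum_{c\le L}\mathrm{PartitionRegret}(\Delta^{(c)},\bar K_c,\bar M_c,\delta)\le\sum_{i=1}^{J_{\mathrm{up}}(\ell^{-1}(M))}\mathrm{PartitionRegret}(\bar\Delta_i,\ell(i),M,\delta)$: one shows that the triples $(\Delta^{(c)},\bar K_c,\bar M_c)$ are, after sorting and using $\bar M_c\le M$ together with the fact that $\mathrm{PartitionRegret}$ is decreasing in its first argument and monotone in the others, dominated summand by summand by $\{(\bar\Delta_i,\ell(i),M)\}_{i\le J_{\mathrm{up}}(\ell^{-1}(M))}$, with the factor-three slack of Lemma~\ref{lemma::upper_bound_3} and the definition of $J_{\mathrm{up}}$ being precisely what make the matching go through.

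Finally I would assemble the failure probability by a union bound over the at most $J_{\mathrm{up}}(\ell^{-1}(M))$ rounds. Round $c$ fails only if the good event $\mathcal{E}$ of its subproblem fails (probability $\le\delta$) or the listening/decoding guarantee of Lemma~\ref{lemma::listen_agrees_comm} fails (probability $\le\delta/\bar K_c$). Since arm counts strictly decrease along the chain and the terminal subproblem retains at least the two arms $\sigma_M,\sigma_{M+1}$, we have $\bar K_c\ge L-c+2$, so $\sum_{c\le L}\delta/\bar K_c\le\delta\sum_{c\le L}1/(L-c+2)\le\delta\sum_{i=1}^{L}1/i$; adding $L\delta\le\delta J_{\mathrm{up}}(\ell^{-1}(M))$ from the good events gives the claimed confidence $1-\delta(J_{\mathrm{up}}(\ell^{-1}(M))+\sum_{i=1}^{J_{\mathrm{up}}(\ell^{-1}(M))}1/i)$.

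The main obstacle is the termwise domination in the second step. Because Lemma~\ref{lemma::upper_bound_3} only constrains the gap broken at a round to lie within a factor of three of the current maximum, the sequence of broken gaps is merely \emph{approximately} sorted, and the subproblem arm counts $\bar K_c$ need not coincide with the positions $\ell(i)$; one must verify that the accumulated slack is still absorbed by the definition of $J_{\mathrm{up}}$ and by the monotonicity of $\mathrm{PartitionRegret}$. Everything else — the per-round bound of Equation~\ref{equation::first_partition_regret}, Lemma~\ref{lemma::listen_agrees_comm}, and the union bound — is applied directly.
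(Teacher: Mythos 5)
Your proposal follows essentially the same route as the paper: the regret is decomposed over successive partition-communication rounds, each bounded by the parametric form of Equation~\ref{equation::first_partition_regret}, the chain length is capped at $J_{\mathrm{up}}(\ell^{-1}(M))$ via Lemma~\ref{lemma::upper_bound_3} and the ordering of the gaps $\bar{\Delta}_i$, and the failure probability is a union bound in which the per-round terms $\delta/\bar{K}_c$ are controlled by the strictly decreasing arm counts, giving the harmonic sum. The termwise-domination step you single out as the main obstacle is treated no more rigorously in the paper itself, which asserts the matching directly from the monotonicity of $\mathrm{PartitionRegret}$ and the $J_{\mathrm{up}}$ bookkeeping, so your write-up is at the same level of detail as the paper's own argument (and somewhat more explicit on the probability accounting).
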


Using the definition $\delta = \frac{\xi}{2K}$ and  the fact that $\mathrm{PartitionRegret}$ is monotonic w.r.t. $\frac{1}{\Delta}$, $\bar{K}$ and $\mathrm{M}$ as well as the inequalities $\ell(i) \leq K$ and $J_{\mathrm{up}}(\ell^{-1}(M)) \leq M$ and $3\Delta_{J_{\mathrm{up}}(\ell^{-1}(M))} \geq $ we can conclude the following,

\begin{corollary}\label{corollary::main_corollary}
If $\frac{\xi}{2K} \leq \frac{1}{162}$ then with probability at least $1-\xi$ the regret of Algorithm~\ref{algorithm::recurse} satisfies 
\begin{align*}
     \mathrm{RegretRECURSE}([K], [M]) &\leq K \cdot \mathrm{PartitionRegret}\left(\frac{\Delta_{\sigma_M, \sigma_{M+1}}}{3}, K, M, \frac{\xi}{2K} \right) \\
     &= \frac{3\times746496 M(K-M)K^2 }{ \Delta_{\sigma_M, \sigma_{M+1}} } \log\left(  \frac{18\times746496 MK^2 }{\xi \Delta_{\sigma_M, \sigma_{M+1}}^2}  \right) + \\
    &\quad 20736(K^3+K^2)M(K-M)  B\left(\frac{9\times 186624}{\Delta_{\sigma_M, \sigma_{M+1}}^2}, \frac{\xi}{8K^3M}\right)+\\
    &\quad \frac{20736(K^2+K)M B\left(\frac{9 \times 186624}{ \Delta_{\sigma_M, \sigma_{M+1}}^2}, \frac{\xi}{8K^3M}\right) }{ \Delta_{\sigma_M, \sigma_{M+1}}}+ K \tilde{c}\left(\frac{\xi}{2K}, K, M\right) 
\end{align*}
\end{corollary}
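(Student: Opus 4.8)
\textbf{Proof proposal for Corollary~\ref{corollary::main_corollary}.}

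The plan is to instantiate the preceding Theorem (the $\mathrm{RegretRECURSE}$ bound as a sum of $\mathrm{PartitionRegret}$ terms over the recursive calls) and then collapse the sum into a single worst-case term times $K$. First I would invoke the Theorem with $\delta = \xi/(2K)$: since by hypothesis $\xi/(2K) \le 1/162$, the hypotheses of the Theorem are met, and the event on which the bound holds has probability at least $1 - \delta\bigl(J_{\mathrm{up}}(\ell^{-1}(M)) + \sum_{i=1}^{J_{\mathrm{up}}(\ell^{-1}(M))} \tfrac 1i\bigr)$. I would then bound the failure probability: $J_{\mathrm{up}}(\ell^{-1}(M)) \le M \le K$ and $\sum_{i=1}^{J_{\mathrm{up}}(\ell^{-1}(M))}\tfrac1i \le \sum_{i=1}^{K}\tfrac1i \le 1 + \log K \le K$, so the failure probability is at most $\delta \cdot 2K = \xi$, giving success probability at least $1-\xi$ as claimed. (One should double-check the crude bound $1+\log K \le K$ for $K\ge 1$, which holds.)

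Next I would bound the sum $\sum_{i=1}^{J_{\mathrm{up}}(\ell^{-1}(M))} \mathrm{PartitionRegret}(\bar\Delta_i, \ell(i), M, \delta)$. There are at most $J_{\mathrm{up}}(\ell^{-1}(M)) \le M \le K$ terms, so it suffices to show each term is dominated by $\mathrm{PartitionRegret}\bigl(\tfrac{\Delta_{\sigma_M,\sigma_{M+1}}}{3}, K, M, \delta\bigr)$. Monotonicity of $\mathrm{PartitionRegret}$ in $\bar K$ handles the substitution $\ell(i) \le K$, and monotonicity in $M$ is trivial (it is already $M$). The substantive point is monotonicity in $1/\Delta$ together with the claim that $\bar\Delta_i \ge \tfrac13 \Delta_{\sigma_M,\sigma_{M+1}}$ for every $i \le J_{\mathrm{up}}(\ell^{-1}(M))$; equivalently $1/\bar\Delta_i \le 3/\Delta_{\sigma_M,\sigma_{M+1}}$. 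By definition of $J_{\mathrm{up}}$, for $i \le J_{\mathrm{up}}(\ell^{-1}(M))$ we have $3\bar\Delta_i \ge \bar\Delta_{J_{\mathrm{up}}(\ell^{-1}(M))}$, wait --- more carefully, $J_{\mathrm{up}}(j) = \argmax\{j' : 3\bar\Delta_{j'} \ge \bar\Delta_j\}$, and since the $\bar\Delta$'s are sorted in decreasing order, $i \le J_{\mathrm{up}}(\ell^{-1}(M))$ means $3\bar\Delta_i \ge 3\bar\Delta_{J_{\mathrm{up}}(\ell^{-1}(M))} \ge \bar\Delta_{\ell^{-1}(M)} = \Delta_{\sigma_M,\sigma_{M+1}}$, which is exactly the needed inequality. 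Once each summand is bounded by the common worst-case term, the sum is at most $K \cdot \mathrm{PartitionRegret}\bigl(\tfrac{\Delta_{\sigma_M,\sigma_{M+1}}}{3}, K, M, \tfrac{\xi}{2K}\bigr)$.

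Finally I would expand $\mathrm{PartitionRegret}$ at the arguments $\Delta = \Delta_{\sigma_M,\sigma_{M+1}}/3$, $\bar K = K$, $\bar M = M$, $\delta = \xi/(2K)$, carrying the constants: $1/\Delta = 3/\Delta_{\sigma_M,\sigma_{M+1}}$ multiplies the $746496$ and $20736$ prefactors to give $3\times 746496$ etc., the $\log\bigl(746496 \bar M\bar K/(\delta\Delta^2)\bigr)$ term becomes $\log\bigl(746496 MK \cdot 2K \cdot 9/(\xi\Delta_{\sigma_M,\sigma_{M+1}}^2)\bigr) = \log\bigl(18\times 746496\, MK^2/(\xi\Delta_{\sigma_M,\sigma_{M+1}}^2)\bigr)$, the arguments of $B$ become $186624\cdot 9/\Delta_{\sigma_M,\sigma_{M+1}}^2$ and $\delta/(4\bar K^2\bar M) = \xi/(2K\cdot 4K^2 M) = \xi/(8K^3 M)$, the factor $\bar K^2+\bar K$ from the $(\bar K^2+\bar K)\bar M(\bar K-\bar M)$ term multiplied by the outer $K$ gives $K^3+K^2$, and similarly $(\bar K+1)\bar M$ times $K$ gives $(K^2+K)M$; the additive term $\tilde c(\delta,\bar K,\bar M)$ times $K$ gives $K\tilde c(\xi/(2K),K,M)$. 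This reproduces the displayed expression exactly. The only genuinely delicate step is the $\bar\Delta_i \ge \tfrac13\Delta_{\sigma_M,\sigma_{M+1}}$ claim for all relevant $i$ --- i.e.\ arguing that $\mathrm{RECURSE}$ never splits on a gap more than a constant factor smaller than $\Delta_{\sigma_M,\sigma_{M+1}}$ before it has isolated the top $M$ arms, which is precisely what Lemma~\ref{lemma::upper_bound_3} and the $J_{\mathrm{up}}$ bookkeeping in the preceding discussion establish; the rest is constant-tracking arithmetic.
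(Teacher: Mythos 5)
Your proposal is correct and follows essentially the same route as the paper: instantiate the preceding theorem at $\delta=\xi/(2K)$, use monotonicity of $\mathrm{PartitionRegret}$ in $1/\Delta$, $\bar K$, $\bar M$ together with $\ell(i)\le K$ and the $J_{\mathrm{up}}$ property $3\bar\Delta_{J_{\mathrm{up}}(\ell^{-1}(M))}\ge\Delta_{\sigma_M,\sigma_{M+1}}$ (hence $\bar\Delta_i\ge\Delta_{\sigma_M,\sigma_{M+1}}/3$ for all relevant $i$) to dominate each summand by $\mathrm{PartitionRegret}\bigl(\Delta_{\sigma_M,\sigma_{M+1}}/3,K,M,\xi/(2K)\bigr)$, and bound the number of calls and the failure probability by $K$ and $\xi$ respectively. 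One cosmetic caveat: a faithful expansion carries an extra factor $3$ (from $1/\Delta=3/\Delta_{\sigma_M,\sigma_{M+1}}$) into the third displayed term, so your claim of reproducing the display \emph{exactly} is off by that benign constant, which appears to be a slip in the paper's stated expression rather than an error in your argument.
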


Note that both $g(n)$ and $B(n, \delta)$ are of the order of $\widetilde{\mathcal{O}}( \log(n/\delta) )$ where $\widetilde{\mathcal{O}}(\cdot)$ hides logarithmic factors in $K$ and $M$ only. By setting $\xi = \min\left(\frac{1}{T}, \frac{K}{81}\right )$ we can easily turn the results of Corollary~\ref{corollary::main_corollary} into the following Corollary that corresponds to the statement of Theorem~\ref{theorem::main},

\begin{corollary}[Main-Simplified]\label{corollary::main_simplified_bottom}
There exists a strategy such that the regret is upper bounded by:
\begin{equation*}
      \mathcal{R}_T \leq \widetilde{ \mathcal{O}}\left(\frac{M(K-M)K^2\log(T)}{\Delta_{\sigma_M, \sigma_{M+1}}} + \mathbf{poly}(\log(T), K, M ) \right) ,
\end{equation*}
with probability at least $1-\min\left(\frac{1}{T},\frac{K}{81} \right)$ where $\widetilde{ \mathcal{O}}(\cdot)$ hides factors logarithmic in $M$ and $K$ \textbf{only}\footnote{More careful analysis may be possible that could ameliorate the dependence on the number of arms by at least one factor of $K$. This may be achieved by not upper bounding the $\mathrm{RoundRobinRegret}$ and $\mathrm{PartitionRegret}$ in each partition by those of the partition defined by $\Delta_{\sigma_{M}, \sigma_{M+1}}$. We leave this sharpening for future work. }.

\end{corollary}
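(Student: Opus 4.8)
The plan is to obtain the statement as an immediate specialization of Corollary~\ref{corollary::main_corollary}, by substituting $\xi=\min\!\big(\tfrac1T,\tfrac{K}{81}\big)$. First I would check the hypothesis $\tfrac{\xi}{2K}\le\tfrac{1}{162}$ needed there: when $\xi=\tfrac{K}{81}$ this holds with equality, and when $\xi=\tfrac1T\le\tfrac{K}{81}$ it holds a fortiori (for $T$ not pathologically small). Thus Corollary~\ref{corollary::main_corollary} applies verbatim and already delivers the advertised failure probability $1-\xi=1-\min\!\big(\tfrac1T,\tfrac{K}{81}\big)$ --- the good event $\mathcal{E}$ and the union over at most $J_{\mathrm{up}}(\ell^{-1}(M))\le M$ calls to $\mathrm{RECURSE}$ are already folded into that $\xi$ --- so nothing further is required on the probabilistic side.

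All that remains is bookkeeping on the four summands of the bound in Corollary~\ref{corollary::main_corollary}. Since $g(n)=\log(4n^2MK/\delta)$ and $B(n,\delta')=2\log\log(2n)+\log(5.2/\delta')$, both are $\widetilde{\mathcal{O}}(\log(n/\delta))$ with $\widetilde{\mathcal{O}}(\cdot)$ hiding only $\log K$ and $\log M$. Plugging $\delta=\tfrac{\xi}{2K}$, so that $\log(1/\delta)=O(\log T+\log K)$, the leading summand becomes $\tfrac{M(K-M)K^2}{\Delta_{\sigma_M,\sigma_{M+1}}}\big(\log T+\log(KM)+2\log(1/\Delta_{\sigma_M,\sigma_{M+1}})\big)$, whose first piece is exactly $\widetilde{\mathcal{O}}\!\big(\tfrac{M(K-M)K^2\log T}{\Delta_{\sigma_M,\sigma_{M+1}}}\big)$ after absorbing the universal constant and the $\log(KM)$ term. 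The two $B(\cdot)$-type summands equal a polynomial in $K,M$ times $\widetilde{\mathcal{O}}\!\big(\log(1/(\delta\Delta_{\sigma_M,\sigma_{M+1}}^2))\big)=\widetilde{\mathcal{O}}\!\big(\log T+\log(1/\Delta_{\sigma_M,\sigma_{M+1}})\big)$, one of them additionally carrying a $1/\Delta_{\sigma_M,\sigma_{M+1}}$ factor, and the last summand $K\,\tilde c(\tfrac{\xi}{2K},K,M)=\mathbf{poly}(\log(1/\xi),K,M)$ is --- by its construction in Section~\ref{section::bounding_first_partition_regret} --- linear in $\log(1/\xi)=\log T$ and free of $\Delta_{\sigma_M,\sigma_{M+1}}$. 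Collecting, the dominant contribution is $\widetilde{\mathcal{O}}\!\big(\tfrac{M(K-M)K^2\log T}{\Delta_{\sigma_M,\sigma_{M+1}}}\big)$ and the rest folds into a term that is $\mathbf{poly}(\log T,K,M)$ and linear in $\log T$; this is exactly the displayed bound.

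The one genuinely delicate point --- and the step I would take most care with --- is not any of the above but the treatment of the $\log(1/\Delta_{\sigma_M,\sigma_{M+1}})$ factors, which $\widetilde{\mathcal{O}}(\cdot)$ does \emph{not} hide and which cannot be parked in a term depending on $T,K,M$ only. I would dispatch this with a case split: when $\Delta_{\sigma_M,\sigma_{M+1}}\ge 1/T$ one has $\log(1/\Delta_{\sigma_M,\sigma_{M+1}})\le\log T$, so every such factor is reabsorbed into the leading term; when $\Delta_{\sigma_M,\sigma_{M+1}}<1/T$ the factor $1/\Delta_{\sigma_M,\sigma_{M+1}}$ alone already makes the leading term exceed $M(K-M)K^2T\ge MT\ge\mathcal{R}_T$, so the claimed inequality holds trivially. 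Everything substantive --- the sandwich of Lemma~\ref{lemma::first_fundamental_lemma_main}, the recovery guarantees of Lemmas~\ref{lemma::zero_test_works_main} and~\ref{lemma::listen_agrees_comm}, the per-partition regret of Corollaries~\ref{corollary::collision_regret} and~\ref{corollary::roundrobin_regret}, and the counting of $\mathrm{RECURSE}$ calls via $J_{\mathrm{up}}(\ell^{-1}(M))\le M$ --- is upstream and is used here as a black box.
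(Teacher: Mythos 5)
Your proposal is correct and follows essentially the same route as the paper: the paper likewise obtains this corollary by setting $\xi=\min\left(\tfrac{1}{T},\tfrac{K}{81}\right)$ in Corollary~\ref{corollary::main_corollary} and noting that $g$ and $B$ are $\widetilde{\mathcal{O}}(\log(n/\delta))$ with only $\log K,\log M$ hidden. Your explicit case split on $\Delta_{\sigma_M,\sigma_{M+1}}$ versus $1/T$ to dispose of the $\log(1/\Delta_{\sigma_M,\sigma_{M+1}})$ factors is a careful touch the paper leaves implicit, but it does not change the argument.
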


Our results also imply anytime guarantees,

\begin{corollary}[Main-Simplified Anytime]\label{corollary::main_simplified_bottom_anytime}
Let $\delta \in (0,1)$. There exists a strategy such that the regret is upper bounded by:
\begin{equation*}
      \mathcal{R}_t \leq \widetilde{ \mathcal{O}}\left(\frac{M(K-M)K^2\log(t/\delta)}{\Delta_{\sigma_M, \sigma_{M+1}}} + \mathbf{poly}(\log(t/\delta), K, M ) \right) ,
\end{equation*}
with probability at least $1-\delta$ for all $t \in \mathbb{N}$ and where $\widetilde{ \mathcal{O}}(\cdot)$ hides factors logarithmic in $M$ and $K$ \textbf{only}. 
\end{corollary}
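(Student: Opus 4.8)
The plan is to derive Corollary~\ref{corollary::main_simplified_bottom_anytime} from Corollary~\ref{corollary::main_corollary} by observing that the latter already bounds the \emph{total} (time-unbounded) regret of Algorithm~\ref{algorithm::recurse} by a quantity that does not depend on the number of rounds, and that the per-round pseudo-regret $\max_{\mathbf a}\langle \mathbf a,\boldsymbol{\mu}\rangle-\sum_{p}\mu_{i_t^p}$ is nonnegative, so $\mathcal{R}_t$ is nondecreasing in $t$ and is dominated by its limiting value. Concretely, one runs Algorithm~\ref{algorithm::recurse} with the confidence parameter fixed to $\delta'=\min(\delta,K/81)/(2K)$; since $\delta'$ depends only on $\delta$, $K$ and $M$, this is a genuinely horizon-free strategy, and the requirement $\xi/(2K)\le 1/162$ of Corollary~\ref{corollary::main_corollary} is met with $\xi=\min(\delta,K/81)$.

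The one point that needs care is that every high-probability event invoked along the way is \emph{anytime}-valid, i.e.\ holds uniformly over all $t$ (equivalently, over all per-arm sample counts). This is true by design: the good event $\mathcal{E}$ of Section~\ref{section::assumptions_notation} is precisely the event that Equation~\ref{equation::confidence_interval_basic} holds for all $t\in\mathbb{N}$ simultaneously (the $4n^2$ factor inside $g$ paying for the union bound over the number of pulls), and the $\mathrm{ZeroTest}/\mathrm{CollisionTest}$ guarantees rest on a uniform empirical Bernstein inequality (Lemma~\ref{lemma:uniform_emp_bernstein}, with $B(n,\delta')=2\log\log(2n)+\log(5.2/\delta')$) that likewise holds simultaneously over all sample sizes. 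Feeding these into the finite union bounds already carried out in Corollary~\ref{corollary::main_corollary} --- over the at most $K-1$ calls to $\mathrm{RECURSE}$ and the $O(K)$ $\mathrm{ZeroTest}$ invocations inside each --- one obtains a single event of probability at least $1-\xi\ge 1-\delta$ on which: all confidence bands hold at all times; every $\mathrm{RECURSE}$ call communicates its partition correctly; and, by Lemma~\ref{lemma::upper_bound_3}, the recursion breaks the $\Delta_{\sigma_M,\sigma_{M+1}}$ gap within at most $J_{\mathrm{up}}(\ell^{-1}(M))\le M$ calls, after which the top $M$ players play Round Robin over the top $M$ arms forever and incur no further regret. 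On this event the total regret is finite and bounded by $\mathrm{RegretRECURSE}([K],[M])\le K\cdot\mathrm{PartitionRegret}(\Delta_{\sigma_M,\sigma_{M+1}}/3,K,M,\delta')$, exactly as in Corollary~\ref{corollary::main_corollary}.

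To finish, monotonicity of $\mathcal{R}_t$ gives $\mathcal{R}_t\le\mathrm{RegretRECURSE}([K],[M])$ for every $t\in\mathbb{N}$ on this event. Unwinding the closed form of $\mathrm{PartitionRegret}$ together with $\tilde{c}(\delta',K,M)=\mathbf{poly}(\log(1/\delta'),K,M)$ (linear in $\log(1/\delta')$), and using $\log(1/\delta')=O(\log(1/\delta)+\log K)\le O(\log(t/\delta)+\log K)$ for all $t\ge 1$ --- so that the extra $\log K$ is absorbed by $\widetilde{\mathcal{O}}(\cdot)$ --- yields
\[
\mathcal{R}_t\le\widetilde{\mathcal{O}}\!\left(\frac{M(K-M)K^2\log(t/\delta)}{\Delta_{\sigma_M,\sigma_{M+1}}}+\mathbf{poly}(\log(t/\delta),K,M)\right)
\]
for all $t$, with probability at least $1-\delta$. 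The only genuine obstacle is the verification in the second paragraph: one must confirm that substituting the fixed $\xi=\min(\delta,K/81)$ for the horizon-tied choice $\xi=\min(1/T,K/81)$ of Corollary~\ref{corollary::main_simplified_bottom} leaves every union bound intact, and that $\mathcal{E}$ and the Bernstein events are truly uniform in $t$, so that no $\log t$ term is needed to make the total failure probability summable; everything else is the bookkeeping of rewriting a $t$-independent constant in the looser $\log(t/\delta)$ form the statement requests.
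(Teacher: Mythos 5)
Your overall strategy is the one the paper intends (it states this corollary without a separate argument): run the recursion of Corollary~\ref{corollary::main_corollary} with a confidence parameter depending only on $\delta$ and $K$, observe that the good event $\mathcal{E}$ and the stitched Bernstein bounds of Lemma~\ref{lemma:uniform_emp_bernstein} are valid uniformly over all times so no horizon enters any union bound, and use that the pseudo-regret is nondecreasing in $t$. Those parts of your proposal are correct and match the paper.

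The gap is in your final conversion. You bound $\mathcal{R}_t$ by the $t$-independent total $K\cdot\mathrm{PartitionRegret}\bigl(\Delta_{\sigma_M,\sigma_{M+1}}/3,K,M,\delta'\bigr)$ and then claim this is $\widetilde{\mathcal{O}}\bigl(M(K-M)K^2\log(t/\delta)/\Delta_{\sigma_M,\sigma_{M+1}}+\mathrm{poly}(\log(t/\delta),K,M)\bigr)$ for \emph{all} $t\ge 1$, justifying only the replacement $\log(1/\delta')=O(\log(1/\delta)+\log K)$. But $\mathrm{PartitionRegret}$ contains the term $\frac{M(K-M)K^2}{\Delta_{\sigma_M,\sigma_{M+1}}}\log\bigl(\frac{MK}{\delta'\,\Delta_{\sigma_M,\sigma_{M+1}}^2}\bigr)$, whose $\log(1/\Delta_{\sigma_M,\sigma_{M+1}}^2)$ factor is problem-dependent: it is not a $\log K$ or $\log M$ factor (so it cannot be hidden in $\widetilde{\mathcal{O}}$), it is not $\mathrm{poly}(\log(t/\delta),K,M)$, and for $t\ll 1/\Delta_{\sigma_M,\sigma_{M+1}}^2$ it is not dominated by $\log(t/\delta)$. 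Hence ``monotonicity plus the limiting total'' cannot by itself yield the stated bound uniformly in $t$; the chain of inequalities you wrote is false at small $t$ when the gap is small. The missing ingredient is a phase-wise accounting of the regret actually accrued by time $t$: inside any unfinished phase the accrued regret is (elapsed rounds) times the per-cycle rate, and the trigger conditions (Equation~\ref{equation::sandwich_condition_1}, Lemmas~\ref{lemma::upper_bound_ratio_scomm1} and~\ref{lemma::bounding_collision_regret_unit2}) tie the elapsed time $s\le t/K$ to $g(s)=\log(4s^2MK/\delta)\le\log(4t^2MK/\delta)$, so every logarithm appearing in the bound on the regret accrued so far is $O(\log(tMK/\delta))$; equivalently, the $\log(1/\bar{\Delta}_i^2)$-bearing terms only enter the accrued regret once $t$ is already of order $1/\bar{\Delta}_i^2$, at which point $\log(1/\bar{\Delta}_i^2)\le 2\log t$. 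With that observation your argument closes; without it, the last step does not establish the claim for all $t\in\mathbb{N}$.
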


\section{Sharpening of the Zero Collision Reward Setting}\label{section::extensions}

In this section we describe a couple of Extensions of our main results.

\subsection{Problem independent Collision Regret}\label{section::problem_independent_collision_regret}

Recall the the logic behind why the $\mathrm{ZeroTest}$ works. At time $t_{\mathrm{test}}^p$ (in our case equal to $t_{\mathrm{comm1}}^1$ all players have access to a constant accuracy estimator for  $\max_i \Delta_{\sigma_i, \sigma_{i+1}}$ (i.e. the empirical gap between the arms at the boundary of the two connected components of $\mathcal{C}_1^1(t_{\mathrm{first}}^1, 10)$). Therefore they can estimate $\mu_{\sigma_1}$ up to a constant accuracy. By Freedman's inequality, testing at an accuracy of $c\mu_{\sigma_1}$ the mean of an arm $\widehat{\sigma}_1$ satisfying $\mu_{\widehat{\sigma}_1} > c' \mu_{\sigma_1}$ with $c' > c$ only requires $\widetilde{\mathcal{O}}\left( \frac{1}{\mu_{\sigma_1}}\right)$ samples (up to log factors) since the variance of $\widehat{\sigma}_1$ is upper bounded by $\mu_{\sigma_1}$. 

Thus, it is enough for the $\mathrm{ZeroTest}$ to succeed to use $\widetilde{\mathcal{O}}\left( \frac{\mathrm{poly}(K,M)\log(1/\delta)}{\mu_{\sigma_1}}\right)$ collisions to transmit each bit. Since each collision incurs in regret of at most $\mu_{\sigma_1}$, this implies the $\mathrm{CollisionRegret}$ can be upper bounded by a problem independent term of the form $\mathcal{O}(\mathrm{poly}(K,M) \log(T))$.

This can be achieved by substituting the communication length function $f$ with an agreed estimator $t_{\mathrm{communication-length}}$ of order $\approx 1/\mu_{\sigma_1}$. This can be agreed upon by all players using the same initial procedure as in the $\mu_{\mathrm{collision}} > 0$ case, which would yield an estimator of $1/\mu_{\sigma_1}^2$ since $\Delta_{\mathrm{collision}} = \mu_{\sigma_1}$ in the zero collision reward setting.

\subsection{Unknown number of players}

Our algorithms work in the setting where each player has a known player index but does not know how many players may exist with a larger index. The $\mathrm{RECURSE}$ algorithm needs to be slightly modified. Instead of running $\mathrm{RoundRobin}$ on $\mathcal{C}_1^1(t_{\mathrm{first}}^1, 10)$ if $| \mathcal{C}_1^1(t_{\mathrm{first}}^1, 10)|\leq M$, the players will simply $\mathrm{RECURSE}$ and run the full communication protocol on the two sub-problems induced by $\mathcal{C}_1^1(t_{\mathrm{first}}^1, 10)$ and $[K]\backslash \mathcal{C}_1^1(t_{\mathrm{first}}^1, 10)$. Since all players are aware of their index, upon receiving  $\mathcal{C}_1^1(t_{\mathrm{first}}^1, 10)$ each of them can determine what sub-problem it is meant to play after a call to $\mathrm{RECURSE}$, either $\mathcal{C}_1^1(t_{\mathrm{first}}^1, 10)$ or the $[K]\backslash \mathcal{C}_1^1(t_{\mathrm{first}}^1, 10)$. Within each of the sub-problems each player is perfectly capable of inferring if it should be the communicating player or not. The techniques we have used to derive the regret guarantees of Corollary~\ref{corollary::main_simplified_bottom} can be used to derive the same instance dependent logarithmic regret rate for this slightly more complex algorithm.

\subsection{Unknown lower bound for the collision reward}\label{section::agreeing_t_collision_test}
Our algorithms work in the setting where the collision reward is a random variable with mean $\mu_{\mathrm{collision}} \in [0,1]$ satisfying the condition $\mu_{\mathrm{collision}}\leq \mu_{\sigma_K}$ and unknown to the learners in advance. Our algorithm will consist of an initial phase aimed at discovering an estimator for $\mu_{\sigma_1} - \mu_{\mathrm{collision}}$. This phase, not present in the vanilla version of the algorithm discussed in the previous sections will be executed at the start of any $\mathrm{RECURSE}$ subroutine. The objective of this discovery phase is to ensure player $1$ can convey the identity of a round value $t_{\mathrm{collision-test}}$ to all remaining players $p \in \{ 2, \cdots, M\}$ that will be used to determine the length of the communication rounds in the second phase of the algorithm.  During the second phase of the algorithm, the players will engage in a similar interaction as that described by Algorithms~\ref{algorithm::player_1_communicate},~\ref{algorithm::prepare_and_start_listening} and~\ref{algorithm::decode} where instead of using $f$ to infer the length of the communication rounds, the players will use $t_{\mathrm{collision-test}}$. Transmitting the identify of $t_{\mathrm{collision-test}}$ from player $1$ to all players $p \in \{ 2, \cdots, M\}$ is achieved via a bastardized version of the $\mathrm{ZeroTest}$ we outline below.
\begin{enumerate}
    \item All players go through a modified version of the $\mathrm{RoundRobin}$ schedule that we'll call $\mathrm{CollisionRoundRobin}$. Instead of cycling in batches of $K$ rounds, they will use a cycle length of size $K+M$. Let's see how the very first such cycle works. All the remaining ones are a repetition of this basic structure. During the first $K$ rounds all players cycle through the $K$ arms in $[K]$ following the usual $\mathrm{RoundRobin}$ schedule. From round $K+1$ to $K+M$, players $\{2, \cdots, M\}$ will continue pursuing a traditional $\mathrm{RoundRobin}$ schedule while player $1$ will instead pull arm $M$. All players $ p\in[M]$ will build their empirical estimators of $\mu_{1}, \cdots, \mu_K$ using only the samples collected during the first $K$ rounds of each $K+M$ $\mathrm{CollisionRoundRobin}$ cycle. To estimate $\mu_{\mathrm{collision}}$ player $1$ will use the samples collected at time $K+1$ of each $\mathrm{CollisionRoundRobin}$ cycle while players $p \in \{2, \cdots, M\}$ will use the samples collected at times $K+M-p+1$. All other samples will be discarded. If the number of players is unknown, we could extend the length of a $\mathrm{CollisionRoundRobin}$ cycle to be of size $2K$ instead. 
    \item Define $\widehat{\sigma}_1^p(t) =\argmax_{i \in [K]} \widehat{\mu}_k^p(t)$ be player $p$'s guess for the largest arm during round $t$. Define as $\widehat{\mu}_{\mathrm{collision}}^p(t)$ to be the empirical estimator of $\mu_\mathrm{collision}$ by player $p \in [M]$ at time $t$. Let $t_\mathrm{first-collision}^p$ be the first special round $t$ such that when $I^p_{\widehat{\sigma}_1^p(t)}(t, 10 ) \cap I^p_{\mathrm{collision}}(t, 10) = \emptyset$. The same logic used to prove Equations~\ref{equation::sandwich_condition_1} and~\ref{equation::condition_t_0_description_main} can be used to show that whenever $\mathcal{E}$ holds and for all $p \in [M]$,
 \begin{equation}\label{equation::support_equation_collision_estimation}
    \frac{128}{\left(\mu_{\sigma_1} - \mu_{\mathrm{collision}}\right)^2}  \leq    \frac{N_{\sigma_1}^p(t^p_\mathrm{first-collision})}{g(N_{\sigma_1}^p(t_\mathrm{first-collision}^p)} \leq  \frac{1152}{\left(\mu_{\sigma_1} - \mu_{\mathrm{collision}}\right)^2} 
\end{equation}
\item Define $t_{\mathrm{comm1-collision}}^1$ and $t_{\mathrm{comm-collision}}^1$ as functions of $t_{\mathrm{first-collision}}^1$ the same way as $t_{\mathrm{comm1}}^1$ and $t_{\mathrm{comm}}^1$ are functions of $t_{\mathrm{first}}^1$ in the previous sections and set the length of the communicating sequence starting at $t^1_{\mathrm{comm1-collision}}$ to be of size $t^1_{\mathrm{comm1-collision}}$ instead of $Kf(t_{\mathrm{comm1}}^1)$. The listening protocol for players $p \in \{2, \cdots M\}$ remains unchanged except for the communication rounds length $f$. Player $1$ will communicate a bit by pulling arm $\widehat{\sigma}_1^1 = \widehat{\sigma}_1^1(t^1_{\mathrm{first-collision}})$ from round $t^1_{\mathrm{comm1-collision}} + 1$ to round $2t^1_{\mathrm{comm1-collision}}$.  The listening protocol for players $p \in \{2, \cdots, M\}$ remains mostly unchanged. All players $p \in \{2, \cdots, M\}$ will compute a set of large empirical reward arms $\widehat{\mathbf{MaxArms}}^p$ defined as ,
\begin{align*}
    \widehat{\mathbf{MaxArms}}^p &\leftarrow \Big\{ i \in [K] \text{ s.t. } \\
  & \quad \widehat{\mu}_i^p(t_{\mathrm{listen-collision}}^p) - \widehat{\mu}_{\mathrm{collision}}^p(t_{\mathrm{listen-collision}}^p) \geq \\
  &\frac{1}{2} \max_{j \in [K]} \left( \widehat{\mu}_j^p(t_{\mathrm{listen-collision
    }}^p) - \widehat{\mu}_{\mathrm{collision}}^p(t_{\mathrm{listen-collision}}^p) \right)\Big\} .
\end{align*}

The players will then compute witness estimators $L_{i}^{p}(t_{\mathrm{listen-collision}}^p) $ for all arms in $i \in \widehat{\mathbf{MaxArms}}^p$ defined as,
\begin{equation*}
    L_{i}^{p}(t_{\mathrm{listen-collision}}^p)  = \frac{\widehat{\mu}_{i}^p(t^p_{\mathrm{listen-collision}} ) - \widehat{\mu}^p_{\mathrm{collision}}(t^p_{\mathrm{listen-collision}} )}{2} + \widehat{\mu}^p_{\mathrm{collision}}(t^p_{\mathrm{listen-collision}} ).
\end{equation*}

\item A bit $b$ with value $1$ is communicated by player $1$ to make sure all players $p \in \{2, \cdots, M\}$ learn $t^1_{\mathrm{comm1-collision}}$. During the listening protocol all players $p \in \{2, \cdots, M\}$ will collect samples from $t_{\mathrm{listen-collision}}^p+1$ to  $2t_{\mathrm{listen-collision}}^p$ following a $\mathrm{CollisionRoundRobin}$ schedule.  These samples will be used by players $p \in \{2, \cdots, M\}$ as input to the $\mathrm{CollisionTest}$ to figure if player $1$ has been pulling arm $\widehat{\sigma}_1^1$. Since $t^1_{\mathrm{comm1-collision} } \geq 2t^1_{\mathrm{comm-collision}} \geq 4t^1_{\mathrm{first-collision}}$ the different $t_{\mathrm{listen-collision}}^{p}$ times do not overlap with the sample collection for the $\mathrm{CollisionZeroTest}$:
\begin{align}
    &\text{If }  \exists i \in  \widehat{\mathbf{MaxArms}}^p \text{ s.t. } \widehat{\mu}_i^p( t_{\mathrm{listen-collision}}^p+1:2t_{\mathrm{listen-collision}}^p)  < L_i^p(t_{\mathrm{listen-collision}}^p): \notag\\
    &\quad \text{Return } b=1 \notag\\
    &\text{Else}: \notag \\
    &\quad \text{Return } b=0 \label{equation::collision_zero_test}
\end{align}
If $\mathcal{E}$ holds, $t_{\mathrm{comm1-collision}}^1$ equals the first $t_{\mathrm{listen-collision}}^p$ that returns $b=1$ for all $p \in [M]$. Once this signal has been received all players have shared knowledge of the value of $t_{\mathrm{comm1-collision}}^1$. 
\end{enumerate}
The regret incurred during this phase of the algorithm is at most $2\left(\mu_{\sigma_1} - \mu_{\mathrm{collision}} \right)t_{\mathrm{comm1-collision}}^p$. Since $t_{\mathrm{comm1-collision}}^p$ satisfies Equation~\ref{equation::support_equation_collision_estimation} the same argument as in Corollary~\ref{corollary::roundrobin_regret} implies - ignoring any polynomial factors of $K$ and $M$ -  the regret is upper bounded by $\mathcal{O}\left( \frac{\log(1/\delta )}{\mu_{\sigma_1} - \mu_{\mathrm{collision}} } \right)$. Once having established shared knowledge of $t_{\mathrm{comm1-collision}}^1$ among all players, the second phase of the algorithm starts. During this phase the player all players are to restart their empirical mean estimators for all arms although all listening players will keep a copy of their last witness values $L_{i}^{p}(t_{\mathrm{comm1-collision}}^p) $ for all arms in $i \in \widehat{\mathbf{MaxArms}}^p$.  The second phase of the algorithm bears more resemblance with Algorithms~\ref{algorithm::player_1_communicate},~\ref{algorithm::prepare_and_start_listening} and~\ref{algorithm::decode}. It is designed to transmit the composition of $\mathcal{C}_1^1(t_{\mathrm{first}}^1, 10)$. The players will start pulling arms following a traditional $\mathrm{RoundRobin}-$schedule and follow the exact same logic as Algorithms~\ref{algorithm::player_1_communicate} and~\ref{algorithm::prepare_and_start_listening} including the definitions of $t_{\mathrm{first}}^p$, $t_{\mathrm{comm}}^1$, $t_{\mathrm{comm1}}^1$ and $t_{\mathrm{listen}}^p$. The only difference is that instead of using $f$ to decide the length of the communication rounds, each bit is to be transmitted by player $1$ using the same protocol described above where empirical estimators of the rewards of arms $i \in \widehat{\mathbf{MaxArms}}^p$ are compared with the witness values $L_{i}^{p}(t_{\mathrm{comm1-collision}}^p)$. Thus each bit transmission costs -ignoring polynomial factors of $K$ and $M$-  at most $\mathcal{O}\left( \frac{\log(1/\delta)}{\mu_{\sigma_1} - \mu_{\mathrm{collision}} }\right)$ regret.  By the same argument as in the previous sections the lead-up to communicating $\mathcal{C}_1^1(t_{\mathrm{first}}^1, 10)$ incurrs in regret of order $\mathcal{O}\left( \frac{\log(1/\delta)}{\max_i \Delta_{\sigma_i, \sigma_{i+1}}}\right)$. Since $\frac{1}{\mu_{\sigma_1} - \mu_{\mathrm{collision}}} \leq \frac{1}{\max_i \Delta_{\sigma_i, \sigma_{i+1}}}$, we conclude the total regret -ignoring polynomial factors in $K$ and $M$ - up to the time all players have knowledge of $\mathcal{C}_1^1(t_{\mathrm{first}}^1, 10)$ is upper bounded by $\mathcal{O}\left( \frac{\log(1/\delta)}{\max_i \Delta_{\sigma_i, \sigma_{i+1}} }\right)$. The polynomial factors in $K$ and $M$ remain the same as in the zero collision reward setting. We flesh out this strategy in more detail below.

\subsection{Detailed Analysis of Unknown Collision Reward}\label{section::unknown_collision_reward_appendix}

In this section we explore the setting where the collision reward is a random variable with a mean of $\mu_{\mathrm{collision}}$ that is unknown to the learner. We assume that $\mu_{\mathrm{collision} } \leq \mu_{\sigma_K}$. We focus on showing the $\mathrm{CollisionTest}$ works as intended. We will follow the analysis of the communication protocol in Section~\ref{section::communication_analysis}. Let $t_{\mathrm{start-collision}}^1$ be the known start of the communication sequence.

Since $\max_{i} \Delta_{\sigma_i, \sigma_{i+1}} \leq \Delta_{\sigma_1, \sigma_{K}}   \leq \mu_{\sigma_1} - \mu_{\mathrm{collision}}$, the same logic used to prove Equations~\ref{equation::sandwich_condition_1} and~\ref{equation::condition_t_0_description_main} can be utilized to prove that whenever $\mathcal{E}$ holds, in Phase $1$ we have that
\begin{equation}\label{equation::support_equation_unkonown_collision_reward1_phase1}
    \frac{N_{\sigma_1}^p(t^1_\mathrm{first-collision})}{g(N_{\sigma_1}^p(t_\mathrm{first-collision}^1))}  \geq \frac{128}{\left(\mu_{\sigma_1} - \mu_{\mathrm{collision}}\right)^2} 
\end{equation}
and in Phase $2$ of the protocol,
\begin{equation}\label{equation::support_equation_unkonown_collision_reward1}
    \frac{N_{\sigma_1}^p(t^1_\mathrm{first})}{g(N_{\sigma_1}^p(t_\mathrm{first}^1))} \geq  \frac{128}{\Delta_{\sigma_1, \sigma_{K}}^2}  \geq \frac{128}{\left(\mu_{\sigma_1} - \mu_{\mathrm{collision}}\right)^2} 
\end{equation}
thus during Phase $1$ and Phase $2$,
\begin{equation}\label{equation::support_equation_unkonown_collision_reward}
    \frac{N_{\sigma_1}^p(t^1_\mathrm{start})}{g(N_{\sigma_1}^p(t_\mathrm{start}^1))}   \geq \frac{128}{\left(\mu_{\sigma_1} - \mu_{\mathrm{collision}}\right)^2} 
\end{equation}
Let's call $\widehat{\sigma}_{1}$ to player $1$'s guess for the largest arm at time $t_{\mathrm{comm1-collision}}^1$. Equation~\ref{equation::support_equation_unkonown_collision_reward} can be used to show that a similar but stronger set of properties as those presented at the beginning of Section~\ref{section::communication_analysis} holds with high probability, 

\paragraph{1. If $t_{\mathrm{start}}^1 = t_{\mathrm{comm1-collision}}^1$ then arm $\widehat{\sigma}_1$ has a large empirical mean for all players.} Arm $\widehat{\sigma}_1 \in \{ i \in [K] \text{ s.t. }\widehat{\mu}_i^p(t_{\mathrm{start}}^1) - \widehat{\mu}_{\mathrm{collision}}^p(t_{\mathrm{start}}^1) \geq \frac{1}{2} \max_{j \in [K]} \left(\widehat{\mu}_j^p(t_{\mathrm{start}}^1) - \widehat{\mu}_{\mathrm{collision}}^p (t_{\mathrm{start}}^1)  \right)\}$ for all $p \in \{2, \cdots, M\}$ and $\widehat{\mu}^p_{\widehat{\sigma}_1}(t_{\mathrm{start}}^1) -D(N_{\widehat{\sigma}_1}(t_{\mathrm{start}}^1)) \geq \frac{\mu_{\sigma_1}- \mu_{\mathrm{collision}}}{2} + \mu_{\mathrm{collision}}$. 
\begin{itemize}
    \item To see a formal proof of this statement refer to Lemma~\ref{lemma::arm_sigma_1_large_empirical_mean_unknown_collision} in Appendix~\ref{section::unknown_collision_reward_appendix}. Set $\tilde{C} = 10$. This result makes sure that $\widehat{\sigma}_1 \in \widehat{\mathbf{MaxArms}}^p$ with high probability. As a side product of this result it is possible to show that $\mu_{\widehat{\sigma}_1} - \mu_{\mathrm{collision}} \geq \frac{3}{4}(\mu_{\sigma_1} - \mu_{\mathrm{collision}})$. In other words, the gap $\mu_{\widehat{\sigma}_1} - \mu_{\mathrm{collision}}$ is at least a constant multiple of the gap $\mu_{\sigma_1} - \mu_{\mathrm{collision}}$. 
\end{itemize}

\paragraph{2. Arm $\widehat{\sigma}_1$ is comparable to $\sigma_1$.}  If $t_{\mathrm{start}}^1 = t_{\mathrm{comm1-collision}}^1$ the witnesses $L_{\widehat{\sigma}_1}^p(t_{\mathrm{start}}^1) $ satisfy, $L_{\widehat{\sigma}_1}^p(t_{\mathrm{start}}^1) \in \left[\frac{3\left(\mu_{\widehat{\sigma}_1} - \mu_{\mathrm{collision}}\right)}{7}, \frac{4\left(\mu_{\widehat{\sigma}_1} - \mu_{\mathrm{collision}}\right)}{7}\right] + \mu_{\mathrm{collision}} $ for all $p \in \{ 2, \cdots, M\}$.

\begin{itemize}
    \item To see a formal proof of this statement refer to Lemma~\ref{lemma::arm_hat_sigma_one_comparable_sigma_one_unknown_collision} in Appendix~\ref{section::unknown_collision_reward_appendix}. Set $\tilde{C} = 10$. This results guarantees we can compute a `witness' value that is a constant multiple of $\mu_{\sigma_1} - \mu_{\mathrm{collision}}$ away from $\mu_{\sigma_1}$ and $\mu_{\mathrm{collision}}$. Indeed it is easy to see $\mu_{\widehat{\sigma}_1} - \mu_{\mathrm{collision}} \geq \frac{3}{4}(\mu_{\sigma_1} - \mu_{\mathrm{collision}})$ (see Equation~\ref{equation::lower_bound_empirical_gaps_true_gap}) implies $L_{\widehat{\sigma}_1}^p(t_{\mathrm{start}}^1) \in \left[\frac{9\left(\mu_{\sigma_1} - \mu_{\mathrm{collision}}\right)}{28}, \frac{16\left(\mu_{\sigma_1} - \mu_{\mathrm{collision}}\right)}{28}\right] + \mu_{\mathrm{collision}} $  for all $p \in \{ 2, \cdots, M\}$.
\end{itemize}

\paragraph{3. When collisions are avoided with high probability the empirical mean estimators $\widehat{\mu}^p_{\widehat{\sigma}_1}(t_{\mathrm{start}}^1 +1:  + 2t_{\mathrm{start}}^1)$ are far from $\mu_{\mathrm{collision}}$. } If $\mathcal{E}$ holds and
    \begin{itemize}
        \item $b = 1$, the estimators $\widehat{\mu}^p_{\widehat{\sigma}_1}(t_{\mathrm{start}}^1 +1:   2t_{\mathrm{start}}^1) \leq L_{\widehat{\sigma}_1}^p$ for all $p \in [M]$.
        \item $b = 0$, the estimators $\widehat{\mu}^p_{\widehat{\sigma}_1}(t_{\mathrm{start}}^1 + 1:2t_{\mathrm{start}}^1 ) > L_{\widehat{\sigma}_1}^p$ for all $p \in [M]$. 
    \end{itemize}
To prove the third item, notice that in case $b=1$ is to be transmitted $\widehat{\mu}^p_{\widehat{\sigma}_1}(t_{\mathrm{start}}^1 +1:   2t_{\mathrm{start}}^1) \leq \mu_{\mathrm{collision}} + D(N_{\widehat{\sigma}_1}(t_{\mathrm{start}}^1)) \leq \mu_{\mathrm{collision}} +  \frac{\mu_{\widehat{\sigma}_1}  - \mu_{\mathrm{collision} }}{r-2}$ where $r = 2(\tilde{C}-2)$ (see Equation~\ref{equation::upper_bound_D_r_minus_2} for a proof). Therefore $\widehat{\mu}^p_{\widehat{\sigma}_1}(t_{\mathrm{start}}^1 +1:   2t_{\mathrm{start}}^1)  \leq \mu_{\mathrm{collision}} + \frac{\mu_{\widehat{\sigma}_1}  - \mu_{\mathrm{collision} }}{14} < \mu_{\mathrm{collision}} + \frac{9\left(\mu_{\widehat{\sigma}_1}  - \mu_{\mathrm{collision} }\right)}{28} $. Similarly in case $b= 0$, $\widehat{\mu}^p_{\widehat{\sigma}_1}(t_{\mathrm{start}}^1 +1:   2t_{\mathrm{start}}^1) \geq \mu_{\mathrm{collision}} - \frac{\mu_{\widehat{\sigma}_1}  - \mu_{\mathrm{collision} }}{14} >  \mu_{\mathrm{collision}} + \frac{16\left(\mu_{\widehat{\sigma}_1}  - \mu_{\mathrm{collision} }\right)}{28}$.

Lemmas~\ref{lemma::arm_sigma_1_large_empirical_mean_unknown_collision} and~\ref{lemma::arm_hat_sigma_one_comparable_sigma_one_unknown_collision} comprise the bulk of the necessary steps to adapt our results where $\mu_{\mathrm{collision}} = 0$ to the more general setting where $\mu_{\mathrm{collision}} > 0$ and therefore these finalize the proof of Lemma~\ref{lemma::collision_test_works_main}. Let's assume the constant defining the $\tilde{C}-$blowup connectivity graph that defines time $t_{\mathrm{first}}^1$ equals $\tilde{C}$. From Lemma~\ref{lemma::first_fundamental_lemma_main} we can conclude that,

\begin{equation}\label{equation::support_equation_unkonown_collision_reward1_appendix}
    \frac{N_{\sigma_1}^p(t_{\mathrm{first}}^1)}{g(N_{\sigma_1}^p(t_{\mathrm{first}}^1) } \geq  \frac{2(\tilde{C} - 2)^2}{\left(  \mu_{\sigma_1} - \mu_{\mathrm{collision}} \right)^2}.
\end{equation}

And since $t_{\mathrm{start}}^1 \geq t_{\mathrm{first}}^1$ and $t_{\mathrm{first}}^1 \geq Ks_{\mathrm{boundary2}}$, the ratio $\frac{N_{\sigma_1}^p(t)}{g(N_{\sigma_1}^p(t) )}$ is non decreasing for all $t \geq t_{\mathrm{first}}^1$. Thus,
\begin{equation}\label{equation::support_equation_unkonown_collision_reward_appendix}
    \frac{N_{\sigma_1}^p(t_{\mathrm{start}}^1)}{g(N_{\sigma_1}^p(t_{\mathrm{start}}^1) }  \geq \frac{2(\tilde{C} - 2)^2}{\left(  \mu_{\sigma_1} - \mu_{\mathrm{collision}} \right)^2}.
\end{equation}

\begin{lemma}\label{lemma::arm_sigma_1_large_empirical_mean_unknown_collision}
 If $\mathcal{E}$ holds and $\tilde{C} \geq 9$,

 $$\widehat{\sigma}_1 \in \left\{ i \in [K] \text{ s.t. }\widehat{\mu}_i^p(t_{\mathrm{start}}^1) - \widehat{\mu}_{\mathrm{collision}}^p(t_{\mathrm{start}}^1) \geq \frac{1}{2} \max_{j \in [K]} \left(\widehat{\mu}_j^p(t_{\mathrm{start}}^1) - \widehat{\mu}_{\mathrm{collision}}^p (t_{\mathrm{start}}^1)  \right)\right\}$$ for all $p \in \{2, \cdots, M\}$ and $\widehat{\mu}^p_{\widehat{\sigma}_1}(t_{\mathrm{start}}^1) -D(N_{\widehat{\sigma}_1}(t_{\mathrm{start}}^1)) \geq \left(1- \frac{4}{r}\right)\left(\mu_{\sigma_1} - \mu_{\mathrm{collision}}\right) + \mu_{\mathrm{collision}}$.

\end{lemma}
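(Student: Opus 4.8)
The proof follows the same template as the $\mu_{\mathrm{collision}}=0$ analysis (Lemmas~\ref{lemma::lower_bound_LCB} and~\ref{lemma:witness_bounds}), with every quantity recentred by $\mu_{\mathrm{collision}}$ and with the extra error of estimating $\mu_{\mathrm{collision}}$ carried along. Write $\Delta \defeq \mu_{\sigma_1}-\mu_{\mathrm{collision}}$, $r \defeq 2(\tilde C-2)$, and let $N \defeq N(t_{\mathrm{start}}^1)$ be the common per-arm count at the special round $t_{\mathrm{start}}^1$. The first step is to turn Equation~\ref{equation::support_equation_unkonown_collision_reward_appendix} into the confidence-width bound $D(N)\le \frac{\Delta}{\tilde C-2}=\frac{2\Delta}{r}$, via $D(n)^2 = 2g(n)/n$. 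The second step records that the collision estimator is, under $\mathcal{E}$ (taken to include its confidence interval), accurate to within a small absolute-constant multiple of $\Delta$: under the $\mathrm{CollisionRoundRobin}$ schedule player $p$ accumulates one collision sample per cycle, and the trigger round $t^1_{\mathrm{first\text{-}collision}}$ obeys its own sandwich (Equation~\ref{equation::support_equation_collision_estimation}, using $\Delta = \Delta_{\mathrm{collision}}\ge\max_i\Delta_{\sigma_i,\sigma_{i+1}}$), so that $\lvert\widehat\mu_{\mathrm{collision}}^p(t_{\mathrm{start}}^1)-\mu_{\mathrm{collision}}\rvert \le \varepsilon$ for an $\varepsilon$ that is a small fraction of $\Delta$ (and strictly below $D(N)$, since the collision estimate is frozen from a round with more samples).

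For the lower bound in part~(2), recall $\widehat\sigma_1=\argmax_{i}\bigl(\widehat\mu_i^1(t_{\mathrm{start}}^1)-D(N)\bigr)$, where at a special round the width $D(N)$ is common to all arms; hence under $\mathcal{E}$,
\begin{equation*}
\widehat\mu_{\widehat\sigma_1}^1(t_{\mathrm{start}}^1)-D(N)\ \ge\ \widehat\mu_{\sigma_1}^1(t_{\mathrm{start}}^1)-D(N)\ \ge\ \mu_{\sigma_1}-2D(N)\ \ge\ \mu_{\mathrm{collision}}+\Bigl(1-\tfrac{4}{r}\Bigr)\Delta .
\end{equation*}
Since an LCB never exceeds the true mean under $\mathcal{E}$, the same chain also gives $\mu_{\widehat\sigma_1}\ge \mu_{\sigma_1}-2D(N)\ge \mu_{\mathrm{collision}}+(1-\tfrac{4}{r})\Delta$, which is the advertised side-product; and for a general listener $p$, $\widehat\mu_{\widehat\sigma_1}^p(t_{\mathrm{start}}^1)-D(N)\ge \mu_{\widehat\sigma_1}-2D(N)$, which together with the bound on $D(N)$ yields the stated inequality (for $\tilde C=10$ this reads $\ge\mu_{\mathrm{collision}}+\tfrac12\Delta$, matching exactly the form used in item~1 of the communication analysis).

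For part~(1), the plan is to lower-bound, for each $p\in\{2,\dots,M\}$, $\widehat\mu_{\widehat\sigma_1}^p(t_{\mathrm{start}}^1)-\widehat\mu_{\mathrm{collision}}^p(t_{\mathrm{start}}^1)\ge (\mu_{\widehat\sigma_1}-\mu_{\mathrm{collision}})-D(N)-\varepsilon \ge \Delta - 3D(N)-\varepsilon$, and to upper-bound $\max_{j}\bigl(\widehat\mu_j^p(t_{\mathrm{start}}^1)-\widehat\mu_{\mathrm{collision}}^p(t_{\mathrm{start}}^1)\bigr)\le \Delta + D(N)+\varepsilon$ (the max being at most $\mu_{\sigma_1}$ up to confidence widths). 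Membership in $\widehat{\mathbf{MaxArms}}^p$ then reduces to the numeric inequality $\Delta - 3D(N)-\varepsilon \ge \tfrac12(\Delta + D(N)+\varepsilon)$, i.e.\ $\Delta \ge 7D(N)+3\varepsilon$, which I would discharge from $D(N)\le\tfrac{\Delta}{\tilde C-2}$ together with the sharper slack available because $t_{\mathrm{start}}^1=t_{\mathrm{comm1}}^1$ is the \emph{second} power-of-nine round past $t_{\mathrm{first}}^1$ (so $D(N)$ sits comfortably below $\Delta/(\tilde C-2)$) and because $\varepsilon$ is a small fraction of $D(N)$; this is precisely where the hypothesis $\tilde C\ge 9$ enters.

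The main obstacle is exactly this constant bookkeeping in part~(1): one must juggle two independent error sources — the arm confidence widths $D(N)$ and the (frozen) collision-estimator error $\varepsilon$ — and verify that the hypothesis $\tilde C\ge 9$ really does leave room for the $\tfrac12$-threshold defining $\widehat{\mathbf{MaxArms}}^p$. This forces one to be precise both about how accurate $\widehat\mu_{\mathrm{collision}}^p$ is at $t_{\mathrm{start}}^1$ (unwinding the Phase-1 sandwich for $\Delta_{\mathrm{collision}}$ and the $\mathrm{CollisionRoundRobin}$ sampling rate) and about how much $D(N(t_{\mathrm{start}}^1))$ undershoots $\Delta/(\tilde C-2)$ at the power-of-nine round $t_{\mathrm{comm1}}^1$. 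Everything else is the recentred analogue of Lemmas~\ref{lemma::lower_bound_LCB}–\ref{lemma:witness_bounds} and is routine.
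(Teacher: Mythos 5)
Your plan follows the paper's template (bound $D(N(t_{\mathrm{start}}^1))$ via the sandwich at $t_{\mathrm{start}}^1$, deduce $\mu_{\widehat{\sigma}_1}\ge\mu_{\sigma_1}-2D(N)$, then compare the recentred empirical gaps against the $\tfrac12$-threshold), but it diverges on the one point that matters, and there it is left open. The paper never introduces a separate error $\varepsilon$ for $\widehat{\mu}^p_{\mathrm{collision}}(t_{\mathrm{start}}^1)$: since the $\mathrm{CollisionRoundRobin}$ schedule gives every player one collision sample per cycle, the proof simply treats ``collision'' as one more index covered by $\mathcal{E}$ with the same confidence width as the arms, i.e. $\widehat{\mu}_i^p(t_{\mathrm{start}}^1)\in[\mu_i-\tfrac{\Delta}{r},\mu_i+\tfrac{\Delta}{r}]$ for all $i\in[K]\cup\{\mathrm{collision}\}$, with $\Delta=\mu_{\sigma_1}-\mu_{\mathrm{collision}}$ and $r=2(\tilde C-2)$. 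This gives the lower bound $\widehat{\mu}^p_{\widehat{\sigma}_1}-\widehat{\mu}^p_{\mathrm{collision}}\ge(1-\tfrac4r)\Delta$, the upper bound $\widehat{\mu}^p_j-\widehat{\mu}^p_{\mathrm{collision}}\le(1+\tfrac2r)\Delta$, and membership in $\widehat{\mathbf{MaxArms}}^p$ reduces to $\tfrac12(1+\tfrac2r)\le1-\tfrac4r$, i.e. $r\ge10$, comfortably implied by $\tilde C\ge 9$ ($r\ge14$). No unwinding of the Phase-1 sandwich, no frozen-estimator argument, and no appeal to $t_{\mathrm{comm1}}^1$ being the second power-of-nine round appears anywhere in the paper's proof.

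In your version the corresponding step does not close. You reduce membership to $\Delta\ge 7D(N)+3\varepsilon$, but with your bound $D(N)\le\tfrac{\Delta}{\tilde C-2}$ (which is the one actually consistent with $D(n)=\sqrt{2g(n)/n}$ and Lemma~\ref{lemma::first_fundamental_lemma_main}) this leaves zero room at $\tilde C=9$, and the slack you invoke to rescue it --- that $D(N(t_{\mathrm{start}}^1))$ ``sits comfortably below'' $\Delta/(\tilde C-2)$ because of the power-of-nine wait, and that $\varepsilon$ is a small fraction of $D(N)$ --- is asserted rather than proved; it would need its own quantitative lemma, and the statement is also applied at listening times $t_{\mathrm{listen}}^p$, which need not be the second power-of-nine round past $t_{\mathrm{first}}^p$. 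So the missing idea is precisely the paper's move of putting the collision estimator under the same good-event band as the arm estimators, which makes your $\varepsilon$-bookkeeping disappear. (For completeness: the reason the paper's constants close where yours do not is an internal factor-of-two slip --- its proof uses $D(N)\le\tfrac{\Delta}{r}=\tfrac{\Delta}{2(\tilde C-2)}$, i.e. it reads $D$ as $\sqrt{g/(2n)}$; the same factor is why your part-(2) chain for a general listener only yields $(1-\tfrac8r)\Delta$ rather than the stated $(1-\tfrac4r)\Delta$, even though the weaker constant suffices for the downstream use at $\tilde C=10$.)
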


\begin{proof}

By Equations~\ref{equation::support_equation_unkonown_collision_reward1_appendix} and~\ref{equation::support_equation_unkonown_collision_reward_appendix}, similar to the proof of Lemma~\ref{lemma::zero_test_works_main} we conclude that $D(N_{\widehat{\sigma}_1}(t_{\mathrm{start}}^1)) = \sqrt{\frac{g(N_{\sigma_1}^p(t_{\mathrm{start}}^1))}{2 N_{\sigma_1}^p(t_{\mathrm{start}}^1) } }\leq \frac{\mu_{\sigma_1} - \mu_{\mathrm{collision}}}{r}$ where $r = 2(\tilde{C}-2)$ if $\mathcal{E}$ holds, $\widehat{\mu}_i^p(t_{\mathrm{start}}^1) \in [\mu_i - \frac{\mu_{\sigma_1}-\mu_{\mathrm{collision}}}{r}, \mu_i + \frac{\mu_{\sigma_1}-\mu_{\mathrm{collision}}}{r}]$ for all $i \in [K] \cup \{\mathrm{collision} \}$ and $\widehat{\mu}_i^1(t_{\mathrm{first}}^1) \in [\mu_i - \frac{\mu_{\sigma_1}-\mu_{\mathrm{collision}}}{r}, \mu_i + \frac{\mu_{\sigma_1}-\mu_{\mathrm{collision}}}{r}]$ for all $i \in [K] \cup \{\mathrm{collision}\}$. These facts in conjunction with the definition of $\widehat{\sigma}_1$ imply,
\begin{equation}\label{equation::sigma_one_in_terms_sigma_one_hat}
    \mu_{\sigma_1} - \frac{\mu_{\sigma_1} - \mu_{\mathrm{collision}}}{r} \leq \widehat{\mu}_{\sigma_1}^1(t_{\mathrm{first}}^1) \leq \widehat{\mu}_{\widehat{\sigma}_1}^1(t_{\mathrm{first}^1}) \leq \mu_{\widehat{\sigma}_1} + \frac{\mu_{\sigma_1} - \mu_{\mathrm{collision}}}{r}.
\end{equation}
and therefore $\mu_{\sigma_1} \leq \mu_{\widehat{\sigma}_1} + \frac{2(\mu_{\sigma_1} - \mu_{\mathrm{collision}})}{r}$. Thus,
\begin{align*}
    \widehat{\mu}_{\widehat{\sigma}_1}^p\left(t_{\mathrm{start}}^1  \right) -  \widehat{\mu}_{\mathrm{collision}}^p\left(t_{\mathrm{start}}^1  \right) &\geq \mu_{\widehat{\sigma}_1} - \frac{\mu_{\sigma_1} - \mu_{\mathrm{collision}}}{r} - \frac{\mu_{\sigma_1} - \mu_{\mathrm{collision}}}{r} - \mu_{\mathrm{collision}} \\
    &\geq \mu_{\sigma_1} - \frac{2(\mu_{\sigma_1} - \mu_{\mathrm{collision}})}{r} - \frac{2(\mu_{\sigma_1} - \mu_{\mathrm{collision}})}{r} -\mu_{\mathrm{collision}}   \\
    &= (\mu_{\sigma_1} - \mu_{\mathrm{collision}}) \left( 1-\frac{4}{r}\right)
\end{align*}

Similarly for any $j \in [K]$,
\begin{align*}
    \widehat{\mu}_j^p(t_{\mathrm{start}}^1) - \widehat{\mu}_{\mathrm{collision}}^p( t_{\mathrm{start}}^1) &\leq \mu_{\sigma_1} + \frac{\mu_{\sigma_1} - \mu_{\mathrm{collision}}}{r} - \mu_{\mathrm{collision}} +   \frac{\mu_{\sigma_1} - \mu_{\mathrm{collision}}}{r}\\
    &\leq \mu_{\sigma_1} + \frac{2(\mu_{\sigma_1} - \mu_{\mathrm{collision}})}{r} - \mu_{\mathrm{collision}} \\
    &\leq (\mu_{\sigma_1} - \mu_{\mathrm{collision}})\left(1+ \frac{2}{r}\right)
\end{align*}

Since $\tilde{C} \geq 9$, $r \geq 14$, it follows that $\left( \frac{1}{2} + r \right) \leq  \left( 1-\frac{4}{r}\right) $ and therefore for any $j \in [K]$

\begin{align*}
 \frac{\widehat{\mu}_j^p(t_{\mathrm{start}}^1) - \widehat{\mu}_{\mathrm{collision}}^p( t_{\mathrm{start}}^1) }{2}  &\leq (\mu_{\sigma_1} - \mu_{\mathrm{collision}})\left(\frac{1}{2}+ \frac{1}{r}\right) \leq (\mu_{\sigma_1} - \mu_{\mathrm{collision}})\left(1- \frac{4}{r}\right)\\
& \leq   \widehat{\mu}_{\widehat{\sigma}_1}^p\left(t_{\mathrm{start}}^1  \right) -  \widehat{\mu}_{\mathrm{collision}}^p\left(t_{\mathrm{start}}^1  \right)
\end{align*}

Thus implying the first result. The second statement is a result of the inequality,

\begin{align*}
    \widehat{\mu}_{\widehat{\sigma}_1}^p ( t_{\mathrm{start}}^1) -  D(N_{\widehat{\sigma}_1}(t_{\mathrm{start}}^1)) &\geq \mu_{\widehat{\sigma}_1} - 2D(N_{\widehat{\sigma}_1}(t_{\mathrm{start}}^1)) \\
    &\geq \mu_{\sigma_1} - \frac{2(\mu_{\sigma_1} - \mu_{\mathrm{collision}})}{r} - \frac{2(\mu_{\sigma_1} - \mu_{\mathrm{collision}})}{r} \\
    &= \mu_{\mathrm{collision}}+ (\mu_{\sigma_1} - \mu_{\mathrm{collision}}) \left(1- \frac{4}{r}\right)
\end{align*}

The result follows.

\end{proof}

By Equation~\ref{equation::sigma_one_in_terms_sigma_one_hat} in the proof of Lemma~\ref{lemma::arm_sigma_1_large_empirical_mean_unknown_collision} we infer that:

\begin{equation*}
    \mu_{\widehat{\sigma}_1} \leq \mu_{\sigma_1} \leq \mu_{\widehat{\sigma}_1} + \frac{2\left( \mu_{\sigma_1} - \mu_{\mathrm{collision}} \right) }{ r}
\end{equation*}

Therefore the gap between $\mu_{\widehat{\sigma}_1}$ and $\mu_{\mathrm{collision}}$ is lower bounded by,
\begin{equation}\label{equation::lower_bound_empirical_gaps_true_gap}
    \mu_{\widehat{\sigma}_1} - \mu_{\widehat{\sigma}_K^p} \geq (\mu_{\sigma_1} - \mu_{\mathrm{collision}})\left( 1- \frac{2}{r}\right).
\end{equation}

Most importantly Equation~\ref{equation::lower_bound_empirical_gaps_true_gap} shows the gap $\mu_{\widehat{\sigma}_1} - \mu_{\mathrm{collision}}$ is at least a constant multiple of the gap $\mu_{\sigma_1} - \mu_{\mathrm{collision}}$.

\begin{lemma}\label{lemma::arm_hat_sigma_one_comparable_sigma_one_unknown_collision}

The witnesses $L_{\widehat{\sigma}_1}^p( t_{\mathrm{start}}^1) = \frac{\widehat{\mu}^p_{\widehat{\sigma}_1}(t_{\mathrm{start}}^1) - \widehat{\mu}_{\mathrm{collision}}^p(t_{\mathrm{start}}^1)}{2}  + \widehat{\mu}_{\mathrm{collision}}^p(t_{\mathrm{start}}^1) $ satisfy,

$$L_{\widehat{\sigma}_1}^p( t_{\mathrm{start}}^1) \in \mu_{\mathrm{collision}}  + \left[\left(\frac{1}{2} -\frac{2}{r-2}\right)\left(\mu_{\widehat{\sigma}_1} - \mu_{\mathrm{collision}}\right),\left(\frac{1}{2} +\frac{2}{r-2}\right)\left( \mu_{\widehat{\sigma}_1} - \mu_{\mathrm{collision}}\right)\right] $$ for all $p \in \{ 2, \cdots, M\}$ where $r = 2(\tilde{C}-2)$.

\end{lemma}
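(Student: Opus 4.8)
The plan is to observe that, after cancelling the $\widehat{\mu}_{\mathrm{collision}}^p$ terms, the witness equals the midpoint $\tfrac12\big(\widehat{\mu}_{\widehat{\sigma}_1}^p(t_{\mathrm{start}}^1)+\widehat{\mu}_{\mathrm{collision}}^p(t_{\mathrm{start}}^1)\big)$ of the two empirical quantities, so the statement is a routine propagation of confidence intervals. Work under the same hypotheses as Lemma~\ref{lemma::arm_sigma_1_large_empirical_mean_unknown_collision} (the good event $\mathcal{E}$ holds — taken, as there, to also cover the confidence band of the collision-reward estimator — and $\tilde{C}\ge 9$, so $r=2(\tilde{C}-2)\ge 14$). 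Since $t_{\mathrm{start}}^1$ is a special round, every player has pulled $\widehat{\sigma}_1$ and recorded a collision sample exactly $N(t_{\mathrm{start}}^1)=t_{\mathrm{start}}^1/K$ times, so on $\mathcal{E}$ we may write $\widehat{\mu}_{\widehat{\sigma}_1}^p(t_{\mathrm{start}}^1)=\mu_{\widehat{\sigma}_1}+e_1$ and $\widehat{\mu}_{\mathrm{collision}}^p(t_{\mathrm{start}}^1)=\mu_{\mathrm{collision}}+e_2$ with $|e_1|,|e_2|\le D(N(t_{\mathrm{start}}^1))$. A one-line manipulation then gives
$$L_{\widehat{\sigma}_1}^p(t_{\mathrm{start}}^1)=\mu_{\mathrm{collision}}+\tfrac12\big(\mu_{\widehat{\sigma}_1}-\mu_{\mathrm{collision}}\big)+\tfrac{e_1+e_2}{2},$$
so $L_{\widehat{\sigma}_1}^p(t_{\mathrm{start}}^1)$ lies within $D(N(t_{\mathrm{start}}^1))$ of $\mu_{\mathrm{collision}}+\tfrac12(\mu_{\widehat{\sigma}_1}-\mu_{\mathrm{collision}})$.

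It then remains to bound $D(N(t_{\mathrm{start}}^1))$ by a small multiple of $\mu_{\widehat{\sigma}_1}-\mu_{\mathrm{collision}}$. I would chain two facts already established. First, exactly as in the proof of Lemma~\ref{lemma::arm_sigma_1_large_empirical_mean_unknown_collision}, Equation~\ref{equation::support_equation_unkonown_collision_reward_appendix} gives $D(N(t_{\mathrm{start}}^1))\le\tfrac{2}{r}\big(\mu_{\sigma_1}-\mu_{\mathrm{collision}}\big)$; this uses $t_{\mathrm{start}}^1\ge t_{\mathrm{first}}^1\ge Ks_{\mathrm{boundary2}}$, Lemma~\ref{lemma::first_fundamental_lemma_main}, and the monotonicity of $n\mapsto n/g(n)$ past the boundary. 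Second, Equation~\ref{equation::lower_bound_empirical_gaps_true_gap} states $\mu_{\widehat{\sigma}_1}-\mu_{\mathrm{collision}}\ge\big(1-\tfrac2r\big)\big(\mu_{\sigma_1}-\mu_{\mathrm{collision}}\big)$, i.e. $\mu_{\sigma_1}-\mu_{\mathrm{collision}}\le\tfrac{r}{r-2}\big(\mu_{\widehat{\sigma}_1}-\mu_{\mathrm{collision}}\big)$ (legitimate since $r-2\ge 12>0$). Combining, $D(N(t_{\mathrm{start}}^1))\le\tfrac{2}{r-2}\big(\mu_{\widehat{\sigma}_1}-\mu_{\mathrm{collision}}\big)$, and substituting this radius into the displayed inclusion yields
$$L_{\widehat{\sigma}_1}^p(t_{\mathrm{start}}^1)\in\mu_{\mathrm{collision}}+\Big[\big(\tfrac12-\tfrac{2}{r-2}\big)\big(\mu_{\widehat{\sigma}_1}-\mu_{\mathrm{collision}}\big),\;\big(\tfrac12+\tfrac{2}{r-2}\big)\big(\mu_{\widehat{\sigma}_1}-\mu_{\mathrm{collision}}\big)\Big]$$
for every $p\in\{2,\cdots,M\}$, which is the claim.

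I do not expect a genuine obstacle: the argument is a two-step propagation of Hoeffding-type intervals, and both ingredients — the radius bound via Equation~\ref{equation::support_equation_unkonown_collision_reward_appendix} and the gap comparison Equation~\ref{equation::lower_bound_empirical_gaps_true_gap} — are already produced in the analysis surrounding Lemma~\ref{lemma::arm_sigma_1_large_empirical_mean_unknown_collision}. The only care needed is bookkeeping: (i) making sure the band of radius exactly $D(N(t_{\mathrm{start}}^1))$ applies simultaneously to arm $\widehat{\sigma}_1$ and to the collision estimator, which is where the $\mathrm{CollisionRoundRobin}$ schedule (equal counts for both) and the extended version of $\mathcal{E}$ enter; and (ii) carrying the constants through $r=2(\tilde{C}-2)$ — in fact the midpoint step delivers radius $\tfrac{1}{r-2}(\mu_{\widehat{\sigma}_1}-\mu_{\mathrm{collision}})$, so the claimed interval has comfortable slack.
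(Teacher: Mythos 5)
Your proposal is correct and follows essentially the same route as the paper's proof: bound $D(N(t_{\mathrm{start}}^1))$ via Equation~\ref{equation::support_equation_unkonown_collision_reward_appendix}, convert $\mu_{\sigma_1}-\mu_{\mathrm{collision}}$ into $\mu_{\widehat{\sigma}_1}-\mu_{\mathrm{collision}}$ through the comparison $\mu_{\sigma_1}-\mu_{\mathrm{collision}}\leq \frac{r}{r-2}(\mu_{\widehat{\sigma}_1}-\mu_{\mathrm{collision}})$, and propagate the confidence bands of $\widehat{\mu}^p_{\widehat{\sigma}_1}$ and $\widehat{\mu}^p_{\mathrm{collision}}$ into the witness. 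The only (harmless) difference is bookkeeping: by rewriting $L^p_{\widehat{\sigma}_1}$ as the midpoint $\tfrac12(\widehat{\mu}^p_{\widehat{\sigma}_1}+\widehat{\mu}^p_{\mathrm{collision}})$ you bound the error by $\tfrac{|e_1+e_2|}{2}\leq D$, whereas the paper bounds the empirical gap and the collision estimate separately (incurring $2D$), so your version is in fact slightly tighter with respect to the constant in $D$.
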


\begin{proof}
By Equation~\ref{equation::support_equation_unkonown_collision_reward_appendix}, similar to the discussion above, $D(N_{\widehat{\sigma}_1}(t_{\mathrm{start}}^1)) \leq \frac{\mu_{\sigma_1} - \mu_{\mathrm{collision}}}{r}$ (for $r = 2(\tilde{C}-2)$) if $\mathcal{E}$ holds and therefore $\mu_{\sigma_1} \leq \mu_{\widehat{\sigma}_1} + \frac{2(\mu_{\sigma_1} - \mu_{\mathrm{collision}})}{r}$. Hence,
\begin{equation*}
    \mu_{\sigma_1} - \mu_{\mathrm{collision}} \leq \mu_{\widehat{\sigma}_1} + \frac{2(\mu_{\sigma_1} - \mu_{\mathrm{collision}})}{r} - \mu_{\mathrm{collision} }. 
\end{equation*}
Thus implying,
\begin{equation*}
    \mu_{\sigma_1} - \mu_{\mathrm{collision}} \leq \frac{r}{r-2}\left( \mu_{\widehat{\sigma}_1}  - \mu_{\mathrm{collision} } \right)
\end{equation*}
and therefore that
\begin{equation}\label{equation::upper_bound_D_r_minus_2}
    D(N_{\widehat{\sigma}_1}(t_{\mathrm{start}}^1)) \leq \frac{\mu_{\widehat{\sigma}_1}  - \mu_{\mathrm{collision} }}{r-2}
\end{equation}
then,
\begin{align*}
    \widehat{\mu}^p_{\widehat{\sigma}_1}(t_{\mathrm{start}}^1)  - \widehat{\mu}_{\mathrm{collision}}^p(t_{\mathrm{start}}^1) &\leq \mu_{\widehat{\sigma}_1}  - \mu_{\mathrm{collision}}  + \frac{2(\mu_{\widehat{\sigma}_1}  - \mu_{\mathrm{collision} })}{r-2} \\
    &= \left(1+ \frac{2}{r-2}\right) (\mu_{\widehat{\sigma}_1}  - \mu_{\mathrm{collision} })
\end{align*}
Similarly,
\begin{align*}
    \widehat{\mu}^p_{\widehat{\sigma}_1}(t_{\mathrm{start}}^1)  - \widehat{\mu}_{\mathrm{collision}}^p(t_{\mathrm{start}}^1) &\geq \mu_{\widehat{\sigma}_1}  - \mu_{\mathrm{collision} }  - \frac{2(\mu_{\widehat{\sigma}_1}  - \mu_{\mathrm{collision} })}{r-2} \\
    &= \left(1 -\frac{2}{r-2}\right) (\mu_{\widehat{\sigma}_1}  - \mu_{\mathrm{collision} })
\end{align*}
Since $\mu_{\mathrm{collision} } - \frac{\mu_{\widehat{\sigma}_1}  - \mu_{\mathrm{collision} }}{r-2} \leq \widehat{\mu}_{\mathrm{collision}}^p(t_{\mathrm{start}}^1) \leq \mu_{\mathrm{collision} }  + \frac{\mu_{\widehat{\sigma}_1}  - \mu_{\mathrm{collision} }}{r-2} $,

\begin{align*}
  \mu_{\mathrm{collision} } + \left(\frac{1}{2} -\frac{2}{r-2}\right) (\mu_{\widehat{\sigma}_1}  - \mu_{\mathrm{collision} }) &\leq  \widehat{\mu}_{\mathrm{collision}}^p(t_{\mathrm{start}}^1) + \frac{\widehat{\mu}^p_{\widehat{\sigma}_1}(t_{\mathrm{start}}^1)  - \widehat{\mu}_{\mathrm{collision}}^p(t_{\mathrm{start}}^1)}{2}  \\
  &\leq   \mu_{\mathrm{collision}} + \left(\frac{1}{2} +\frac{2}{r-2}\right) (\mu_{\widehat{\sigma}_1}  - \mu_{\mathrm{collision} })
\end{align*}

\end{proof}

Finally the following `inverted' version of Lemma~\ref{lemma::zero_test_preliminary_bound} will prove useful,

\begin{restatable}{lemma}{zerotestpreliminaryboundtwo}\label{lemma::unknown_collision_test_preliminary_bound}
Let $\delta' \in (0,1)$. If $X$ is a random variable with support in $[0,1]$, distribution $\mathbb{P}_X$ and mean $\mu_X$ satisfying $ \mu_X \leq U$, then with probability at least $1-\delta'$ for all $N$ such that $\frac{N}{B(N, \delta')} \geq \max\left(  \frac{2}{U-\mu_X }, \frac{16 \min(\mu_X, 1-\mu_X)}{(U-\mu_X )^2} \right)$ we have,
\begin{equation*}
    \widehat{\mu}_X \leq U, 
\end{equation*}
where $B(n, \delta') = 2\ln \ln(2n) + \ln \frac{5.2}{\delta'}$.
\end{restatable}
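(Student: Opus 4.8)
The plan is to mirror the argument used for Lemma~\ref{lemma::zero_test_preliminary_bound}, replacing the lower-tail deviation bound with the corresponding upper-tail bound. First I would invoke the variance bound $\mathrm{Var}(X) \leq \mu_X(1-\mu_X) \leq \min(\mu_X, 1-\mu_X)$ (Lemma~\ref{lemma::variance_bounded_rv}) together with the uniform empirical Bernstein inequality (Lemmas~\ref{lemma:uniform_emp_bernstein} and~\ref{lemma::bernstein_concentration_application}), which in its upper-tail form guarantees that, with probability at least $1-\delta'$, for all $n \in \mathbb{N}$ simultaneously,
\[
    \widehat{\mu}_X \leq \mu_X + 2\sqrt{\frac{\alpha\, B(n,\delta')}{n}} + \frac{B(n,\delta')}{n}, \qquad \alpha := \min(\mu_X, 1-\mu_X).
\]

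Next I would observe that the right-hand side is at most $U$ as soon as $U - \mu_X \geq 2\sqrt{\alpha B(n,\delta')/n} + B(n,\delta')/n$, and then split this requirement into its two summands. A short computation shows that if $\frac{n}{B(n,\delta')} \geq \frac{16\alpha}{(U-\mu_X)^2}$ then $2\sqrt{\alpha B(n,\delta')/n} \leq \frac{U-\mu_X}{2}$, while if $\frac{n}{B(n,\delta')} \geq \frac{2}{U-\mu_X}$ then $\frac{B(n,\delta')}{n} \leq \frac{U-\mu_X}{2}$; hence any $N$ satisfying the stated maximum condition makes $\widehat{\mu}_X \leq U$ hold on the good event. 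This is exactly the computation in the proof of Lemma~\ref{lemma::zero_test_preliminary_bound} with the roles of the threshold and the mean interchanged and the inequalities reversed.

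I do not anticipate a genuine obstacle here: the only point requiring a little care is that the uniform empirical Bernstein bound be applied in its upper-tail direction (symmetric to the lower-tail version used earlier), and that the variance-to-mean bound be invoked with $\min(\mu_X, 1-\mu_X)$ rather than $\mu_X$, so that the constant $16$ in the statement is exactly reproduced. Everything else is the routine algebraic manipulation already carried out for Lemma~\ref{lemma::zero_test_preliminary_bound}.
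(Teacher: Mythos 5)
Your proposal matches the paper's own proof: it applies the reversed (upper-tail) version of the uniform empirical Bernstein bound from Lemma~\ref{lemma::bernstein_concentration_application} with the variance bound $\alpha = \min(\mu_X,1-\mu_X)$, then splits the requirement $U-\mu_X \geq 2\sqrt{\alpha B(n,\delta')/n} + B(n,\delta')/n$ into the two conditions appearing in the maximum, exactly as the paper does. If anything, your writing of the deviation term with $+\frac{B(n,\delta')}{n}$ is the more careful form of what the paper states, so there is no gap.
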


\begin{proof}

Let $\alpha = \min(\mu_X, 1-\mu_X)$. A simple use of the reversed version of Lemma~\ref{lemma::bernstein_concentration_application} implies that with probability at least $1-\delta'$ for all $n\in \mathbb{N}$:
\begin{align*}
    \mu_X + 2 \sqrt{ \frac{ \alpha B(n, \delta')}{n}} - \frac{ B(n, \delta')}{n}\geq \widehat{\mu}_X.  
\end{align*}
The LHS of this inequality attains a value of at most $U$ whenever:
\begin{equation*}
    U-\mu_X \geq 2 \sqrt{ \frac{ \alpha B(n, \delta')}{n}} - \frac{ B(n, \delta')}{n}.
\end{equation*}
We finalize  the proof by noting that for all $n$ such that $\frac{n}{B(n, \delta')} \geq \max\left(  \frac{2}{U-\mu_X }, \frac{16 \min(\mu_X, 1-\mu_X)}{(U-\mu_X )^2} \right)$ we have that $\frac{U-\mu_X }{2} \geq 2 \sqrt{ \frac{ \alpha B(n, \delta')}{n}}$ and $\frac{U- \mu_X }{2} \geq \frac{B(n, \delta')}{n}$.  
\end{proof}

\subsection{Complex Restart Strategy}\label{appendix::complex_restart}

Here we describe a way to reuse some of the collected samples so far and warm start the estimators. Let's assume $t_{\mathrm{comm1}}^1  = \min_{t \in \mathbb{N}} \text{ s.t. } \left\lfloor \frac{t/K}{g(t/K)}\right\rfloor = 9^{\tilde{w}}$ for some $\tilde{u}$ and define $$t_{\mathrm{restart}} = \min_{t \in \mathbb{N}} \text{ s.t. } \left\lfloor \frac{t/K}{g(t/K)}\right\rfloor = 9^{\tilde{w}-3}.$$ Let's see that whenever $\mathcal{E}$ holds $t_{\mathrm{first}}^p > t_{\mathrm{restart}}$ for all $p \in [M]$. Similar to the proof of Lemma~\ref{lemma::listen_agrees_comm} let's denote by $9^u$ the unique power of nine in the interval $\left[\frac{128}{\max_{i} \Delta_{\sigma_i, \sigma_{i+1}}^2}   , \frac{1152}{\max_{i} \Delta_{\sigma_i, \sigma_{i+1}}^2}    \right)$. Recall that whenever the good event $\mathcal{E}$ holds $\frac{s^p_{\mathrm{first}}}{g(s^p_{\mathrm{first}}) }\in    \left[\frac{128}{\max_{i} \Delta_{\sigma_i, \sigma_{i+1}}^2}   , \frac{1152}{\max_{i} \Delta_{\sigma_i, \sigma_{i+1}}^2}    \right) $ for all $p \in [M]$ and $$t_{\mathrm{comm1}}^1 \in \left\{ \min_{t \in \mathbb{N}} \text{ s.t. } \left\lfloor \frac{t/K}{g(t/K)}\right\rfloor = 9^{u+1},  \min_{t \in \mathbb{N}} \text{ s.t. } \left\lfloor \frac{t/K}{g(t/K)}\right\rfloor = 9^{u+2} \right\}. $$ Since by definition $\tilde{w}-3 < u$ it must be the case that $t_{\mathrm{restart}} < t_{\mathrm{first}}^p$ for all $p \in [M]$.

When jumping into these smaller problems, all players will warm-start their empirical reward estimators at $\{ \hat{\mu}_i^p(t_{\mathrm{restart}} )\}_{i \in [K], p \in [M]}$, throwing away all the information gathered during the communication rounds. 

Each player will now re-index time to suit the sub-problem it has landed on by throwing away the data corresponding to historical rounds where samples not belonging to the connected component assigned to her were collected from $t =1$ to $t_{\mathrm{restart}}$. This procedure ensures each sub-problem is at a state where there is no player for which the condition $\mathbf{conn}^p\left(sK, 5\right) \geq 2$ has been triggered, while ensuring a substantial proportion of the data collected so far can be reused. 

Since the proportion of samples that can be reused using this strategy is constant, no substantial speedup can be gained from following this strategy.

\section{Missing Proofs}

In this section we present the proofs of all those lemmas for which having the proof present in the main or the Appendix discussion section would have hindered the flow of the text.

\subsection{Proof of Lemma~\ref{lemma::first_fundamental_lemma_main}}\label{section::supporting_t0_lemmas}

We restate Lemma~\ref{lemma::first_fundamental_lemma_main} for the reader's convenience.

\fundamentallemmaupperlowerboundmain*

\begin{proof}
Notice that whenever $\mathcal{E}$ holds:

\begin{equation*}
    \widehat{\mu}_{\sigma_i}(t) - \widehat{\mu}_{\sigma_j}(t) \geq \Delta_{\sigma_i, \sigma_j} - 2D(N(t)).
\end{equation*}
Hence whenever $D(N(t)) \leq \frac{\Delta_{\sigma_i, \sigma_j}}{C+2}$,
\begin{equation*}
    \widehat{\mu}_{\sigma_i}(t) - \widehat{\mu}_{\sigma_j}(t) \geq \Delta_{\sigma_i, \sigma_j} - 2D(N(t)) \geq C D(N(t)),
\end{equation*}
and therefore condition~\ref{equation::elimination_condition} will trigger. This implies that special round $t-1$, being one before condition~\ref{equation::elimination_condition} is ever triggered must satisfy $D(N(t-1)) > \frac{\Delta_{\sigma_i, \sigma_j}}{C+2}$. Since $\delta < \frac{1}{2}$ Lemma~\ref{lemma::slow_decrease_D} ensures that $D(N(t-1)) < 2D(N(t))$ and therefore that 
\begin{equation}\label{equation::lower_bound_D}
    D(N(t)) > \frac{\Delta_{\sigma_i, \sigma_j}}{2(C+2)}.
\end{equation}
Similarly note that whenever $\mathcal{E}$ holds 
$$\widehat{\mu}_{\sigma_i}(t) - \widehat{\mu}_{\sigma_j}(t) - 2D(N(t))  \leq \mu_{\sigma_i} + D(N(t)) - \mu_{\sigma_j} + D(N(t)) -2D(N(t)) \leq \Delta_{\sigma_{i}, \sigma_{j}}.$$ 
and therefore if the condition $  \widehat{\mu}_{\sigma_i}(t) - \widehat{\mu}_{\sigma_j}(t) \geq C D(N(t)) $ was true, then,

\begin{equation*}
(C-2)D(N(t)) \leq      \widehat{\mu}_{\sigma_i}(t) - \widehat{\mu}_{\sigma_j}(t) - 2D(N(t))  \leq \Delta_{\sigma_{i}, \sigma_{j}}
\end{equation*}
Therefore,
\begin{equation}\label{equation::upper_bound_D}
    D(N(t)) \leq \frac{ \Delta_{\sigma_i, \sigma_j} }{C-2}.
\end{equation}

We now turn our attention to lower and upper bounding $\widehat{\Delta}_{\sigma_{i}, \sigma_{j}} $. Since $\widehat{\Delta}_{\sigma_{i}, \sigma_{j}} =\widehat{\mu}_{\sigma_i}(t) - \widehat{\mu}_{\sigma_j}(t) - 2D(N(t))$ we can conclude that $\widehat{\Delta}_{\sigma_{i}, \sigma_{j}} \leq \Delta_{\sigma_{i}, \sigma_{j}}$. We use Equation~\ref{equation::upper_bound_D} to produce a lower bound,
\begin{equation*}
    \widehat{\Delta}_{\sigma_{i}, \sigma_{j}} = \widehat{\mu}_{\sigma_i}(t) - \widehat{\mu}_{\sigma_j}(t) - 2 D(N(t))  \geq \Delta_{\sigma_i, \sigma_j} - 4 D(N(t)) \geq \frac{C-3}{C-2} \Delta_{\sigma_i, \sigma_j}.
\end{equation*}

Plugging in the definition of $D(N(t))$ and using the lower and upper bounds of equations~\ref{equation::lower_bound_D} and~\ref{equation::upper_bound_D} yields:
\begin{equation*}
 \frac{ 2(C-2)^2\ln(4(N(t))^2MK/\delta)}{\Delta_{\sigma_i, \sigma_j}^2} \leq     N(t) \leq  \frac{ 8(C+2)^2\ln(4(N(t))^2MK/\delta)}{\Delta_{\sigma_i, \sigma_j}^2}.
\end{equation*}

\end{proof}

\subsection{Proof of Lemma~\ref{lemma::upper_bound_ratio_scomm1}}\label{section::proof_upper_bound_ratio_scomm1}

We restate Lemma~\ref{lemma::upper_bound_ratio_scomm1} for readability.

\lemmaboundingscommone*

\begin{proof}
Notice that by definition there is a $u \in \mathbb{N}$ such that,
\begin{equation}\label{equation::successive_power_nine_ineqs}
    9^{u-1} < \left\lfloor \frac{s_{\mathrm{first}}^1}{g(s^1_\mathrm{first})} \right \rfloor \leq 9^u =   \left \lfloor \frac{s_{\mathrm{comm}}^1}{g(s^1_\mathrm{comm})} \right \rfloor < 9^{u+1} =  \left \lfloor \frac{s_{\mathrm{comm1}}^1}{g(s^1_\mathrm{comm1})} \right \rfloor
\end{equation}

Since by assumption $\delta \leq \frac{1}{162}$ it is easy to see that $4\left(s_{\mathrm{first}}^1\right)^2 MK/\delta \geq 162$, it follows that $g(162 s_{\mathrm{first}}^1) \leq 2 g(s_\mathrm{first}^1)$. Thus by inequality~\ref{equation::successive_power_nine_ineqs},
\begin{equation*}
   \frac{s_{\mathrm{comm1}}^1}{g(s^1_\mathrm{comm1})}  \leq 9^{u+1} + 81 = (9^{u-1} + 1) \cdot 81  \leq 81 \frac{s_\mathrm{first}^1}{g(s_\mathrm{first}^1)} \leq \frac{162 s_\mathrm{first}^1}{g(162 s_\mathrm{first}^1)}
\end{equation*}
Recall that by definition $s_{comm1}^1$ is the first integer such that $ \left \lfloor \frac{s_{\mathrm{comm1}}^1}{g(s^1_\mathrm{comm1})} \right \rfloor = 9^{u+1}$. Since $D(s)$ is decreasing for all $s \geq s_{\mathrm{boundary2}}$  and $s_{\mathrm{first}}^1$ is assumed to be at least $ s_{\mathrm{boundary2}}$, we can conclude that $162 s_{\mathrm{first}}^1 \geq s_{\mathrm{comm1}}^1$. The first result follows. Since $g(s)$ is an increasing function we conclude that
\begin{equation*}
    \frac{162 s_\mathrm{first}^1}{g(162 s_\mathrm{first}^1)} \leq \frac{162 s_\mathrm{first}^1}{g( s_\mathrm{first}^1)}
\end{equation*}

The second result follows. 
\end{proof}

\subsection{Proof of Lemma~\ref{lemma::upper_bounding_s_comm_1}}\label{section::proof_upper_bounding_s_comm_1}

We restate Lemma~\ref{lemma::upper_bounding_s_comm_1} for readability.

\lemmaboundingscommonenolog*

\begin{proof}
As a consequence of Lemma~\ref{lemma::upper_bound_ratio_scomm1}, we see that $  \frac{s_{comm1}^1}{g(s_{comm1}^1)} \leq \frac{162 s_\mathrm{first}^1}{g( s_\mathrm{first}^1)}$. If $\mathcal{E}$ holds, Equation~\ref{equation::sandwich_condition_1} implies that $ \frac{s_{comm1}^1}{g(s_{comm1}^1)} \leq \frac{186624 }{\max_{i} \Delta_{\sigma_i, \sigma_{i+1}}^2}$. Let $h(n) = \frac{n}{g(n)}$. Notice that  $h'(n) = \frac{\log(4MK n/\delta) - 1}{\log^2(4MK n/\delta)}$. Since $\delta <\frac{1}{162}$ we conclude that $h'(n) > 0$ for all $n \geq 1$. Since by definition both $s_{\mathrm{comm1}}^1$ and $\frac{186624 }{\max_{i} \Delta_{\sigma_i, \sigma_{i+1}}^2}$ are both at least $1$, and $\frac{4MK}{\delta} \times \frac{186624 }{\max_{i} \Delta_{\sigma_i, \sigma_{i+1}}^2}\geq  4$ for all, a simple use of Lemma~\ref{lemma::supporting_lemma_miscellaneous} where $x = s_{\mathrm{comm1}}^1, c =4MK/\delta  $ and $b =\frac{186624 }{\max_{i} \Delta_{\sigma_i, \sigma_{i+1}}^2} $ implies that,
\begin{equation*}
    s_{\mathrm{comm1}}^1 \leq \frac{746496 }{\max_{i} \Delta_{\sigma_i, \sigma_{i+1}}^2} \log\left(  \frac{746496 MK }{\delta\max_{i} \Delta_{\sigma_i, \sigma_{i+1}}^2}  \right).
\end{equation*}

\end{proof}

\subsection{Proof of Lemma~\ref{lemma::bounding_collision_regret_unit2}}\label{section::proof_bounding_collision_regret_unit2} 

We restate Lemma~\ref{lemma::bounding_collision_regret_unit2} for readability.

\lemmaboundingfcommone*

\begin{proof}
\begin{align*}
    f(t_{\mathrm{comm1}}^1) &\stackrel{(i)}{\leq}  48 B\left(f(t_{\mathrm{comm1}}^1), \frac{\delta}{4K^2M}\right)\sqrt{\frac{t_{\mathrm{comm1}}^1/K}{2g(t_{\mathrm{comm1}}^1/K)} } \\
    &\stackrel{(ii)}{\leq} 48  B\left(f(t_{\mathrm{comm1}}^1), \frac{\delta}{4K^2M}\right) \sqrt{ \frac{162 t_{\mathrm{first}}^1 / K}{2g(t_{\mathrm{first}}^1/K)} }\\
    &\stackrel{(iii)}{\leq} 48 B\left(f(t_{\mathrm{comm1}}^1), \frac{\delta}{4K^2M}\right)\frac{\sqrt{1152*162} }{\max_{i} \Delta_{\sigma_i, \sigma_{i+1}}} \\
    &=  \frac{20736 B\left(f(t_{\mathrm{comm1}}^1), \frac{\delta}{4K^2M}\right) }{\max_{i} \Delta_{\sigma_i, \sigma_{i+1}}},
\end{align*} 

where inequality $(i)$ follows from elementary properties of $f(\cdot)$ (see Lemma~\ref{lemma::upper_bound_f_ratio_double} in Appendix~\ref{section::properties_of_f}) along with the assumption $t_{\mathrm{comm1}}^1\geq t_{\mathrm{boundary1}}$, and $(ii)$ follows from Lemma~\ref{lemma::upper_bound_ratio_scomm1} along with the assumptions $s_{\mathrm{first}}^1 \geq s_{\mathrm{boundary2
}}$ and $\delta \leq \frac{1}{162}$. Inequality $(iii)$ follows because $\mathcal{E}$ holds, $N(t_{\mathrm{first}}^1) = t_{\mathrm{first}}^1/K$ and  Equation~\ref{equation::sandwich_condition_1} implies,
\begin{equation*}
 \frac{N(t_{\mathrm{first}}^1)}{g(N(t_{\mathrm{first}}^1))} =\frac{t_{\mathrm{first}}^1/K}{g(t_{\mathrm{first}}^1/K)}< \frac{1152 }{\max_{i} \Delta_{\sigma_i, \sigma_{i+1}}^2}.
\end{equation*}

Finally since $t_{\mathrm{comm1}}^1 \geq t_{\mathrm{boundary3}}$, and $B(n, \frac{\delta}{4K^2M}) $ is an increasing function of $n$,
\begin{equation*}
    B(f(t_{\mathrm{comm1}}^1), \frac{\delta}{4K^2 M} ) \leq B\left(s_{\mathrm{comm1}}^1, \frac{\delta}{4K^2 M}\right)
\end{equation*}
Finally, following the same argument from $(ii)$ and $(iii)$ above by applying Lemma~\ref{lemma::upper_bound_ratio_scomm1} to the ratio $\frac{s_{\mathrm{comm1}}^1}{g(s_{\mathrm{comm1}}^1)} \leq \frac{162 s_\mathrm{first}^1}{g( s_\mathrm{first}^1)}$  and using the fact that $\mathcal{E}$ holds (and therefore Equation~\ref{equation::sandwich_condition_1}) and that $B( n, \frac{\delta}{4K^2M}) $ is increasing in $n$, we can conclude that 
\begin{align*}
    B\left(s_{\mathrm{comm1}}^1, \frac{\delta}{4K^2 M}\right) &\leq B\left(\frac{186624}{\max_{i} \Delta_{\sigma_i, \sigma_{i+1}}^2}g(s_{\mathrm{comm1}}^1), \frac{\delta}{4K^2 M}\right) \\
    &\leq B\left(\frac{186624}{\max_{i} \Delta_{\sigma_i, \sigma_{i+1}}^2}\log\left(\frac{MK186624}{\delta\max_{i} \Delta_{\sigma_i, \sigma_{i+1}}^2}\right), \frac{\delta}{4K^2 M}\right)
\end{align*}

\end{proof}

\section{ The Zero Test - Supporting Lemmas}\label{section::appendix_zero_test_supporting}

In order to answer this question we will make use of the following Lemmas:

\begin{lemma}\label{lemma::variance_bounded_rv}
If $X$ is a random variable with support in $[0,1]$ with mean $\mu_X$ then: $\mathrm{Var}(X) \leq \mu_X (1-\mu_X)$.
\end{lemma}
\begin{proof}
By definition $\mathrm{Var}(X) = \mathbb{E}[X^2] - \mu_X^2$. Since $X \in [0,1]$ then $\mathbb{E}[X^2] \leq \mathbb{E}[X]$. The result follows.
\end{proof}

\begin{restatable}{lemma}{lemmahoward}[Uniform empirical Bernstein bound]\label{lemma:uniform_emp_bernstein}
In the terminology of \citet{howard2018uniform}, let $S_n = \sum_{i=1}^n Y_i$ be a sub-$\psi_P$ process with parameter $c > 0$ and variance process $W_n$. Then with probability at least $1 - \delta'$ for all $n \in \mathbb{N}$. %
\begin{align*}
    S_n &\leq  1.44 \sqrt{(W_n \vee m) \left( 1.4 \ln \ln \left(2 \left(\frac{W_n}{m} \vee 1\right)\right) + \ln \frac{5.2}{\delta'}\right)}\\
   & \qquad + 0.41 c  \left( 1.4 \ln \ln \left(2 \left(\frac{W_n}{m} \vee 1\right)\right) + \ln \frac{5.2}{\delta'}\right),
\end{align*}
where $m > 0$ is arbitrary but fixed.
\end{restatable}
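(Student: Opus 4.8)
\textbf{Proof proposal for Lemma~\ref{lemma:uniform_emp_bernstein}.}
This statement is a verbatim specialization of a known result from \citet{howard2018uniform}: their uniform empirical Bernstein bound for sub-$\psi_P$ processes. The plan is therefore not to reprove the full machinery of line-crossing inequalities and mixture supermartingales, but rather to quote the relevant theorem and verify that its hypotheses are met in the form we have stated it. First I would recall the definition of a sub-$\psi_P$ process with scale parameter $c$: a process $(S_n)$ adapted to a filtration, with intrinsic time process $(W_n)$, such that for each $\lambda \in [0,1/c)$ the process $\exp(\lambda S_n - \psi_P(\lambda) W_n)$ is a supermartingale, where $\psi_P(\lambda) = -\ln(1-\lambda c) / c^2 - \lambda/c$ is the ``Poisson'' bound on the cumulant generating function. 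This is exactly the setting of Section~4 (the ``Bernstein case'') of \citet{howard2018uniform}.

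The key step is to invoke their uniform boundary result — specifically the curved boundary obtained by applying a stitched / polynomial-peeling mixture over the scale $\lambda$, which yields a bound valid simultaneously for all $n$. Their Theorem on sub-$\psi_P$ processes gives, with probability at least $1-\delta'$, a bound of the form $S_n \le c_1 \sqrt{(W_n \vee m)\,\ell(W_n)} + c_2 c\, \ell(W_n)$ where $\ell(W_n) = a \ln\ln(2(W_n/m \vee 1)) + \ln(b/\delta')$, with explicit numerical constants. The arbitrary but fixed $m>0$ plays the role of the ``minimum variance'' truncation level that makes the iterated logarithm well-defined near zero. I would then simply match constants: the choices $c_1 = 1.44$, $c_2 = 0.41$, $a = 1.4$, $b = 5.2$ are the values that appear when one optimizes the stitching parameter $\eta$ in their construction (their Equation for the ``best'' polynomial stitching with $\eta \approx 2$), so the statement follows by substitution. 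The only genuine content on our side is to confirm that the $Y_i$ we will apply this to (bounded increments arising from averages of rewards in $[0,1]$) indeed form a sub-$\psi_P$ process with the stated $c$ and variance process $W_n = \sum_{i} \Var(Y_i \mid \mathcal{F}_{i-1})$; this is standard, since a mean-zero random variable bounded above by $c$ has cumulant generating function dominated by $\psi_P$ with that scale.

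The main obstacle — to the extent there is one — is purely bookkeeping: making sure the normalization conventions (whether $W_n$ denotes the variance process or its running maximum, whether the scale $c$ is an upper or two-sided bound on the increments, and whether the ``$\vee m$'' sits inside or outside the iterated logarithm) line up precisely between our statement and the cited reference, so that the constants $1.44$, $0.41$, $1.4$, $5.2$ are transcribed correctly. I would resolve this by citing the specific corollary of \citet{howard2018uniform} that is stated in exactly this normalized form, rather than re-deriving the constants. No deep argument is required beyond this translation step.
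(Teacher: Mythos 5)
Your overall route is the same as the paper's (quote the stitched-boundary machinery of \citet{howard2018uniform} and their Theorem~1 rather than reprove it), but your plan for handling the constants has a concrete hole. There is no corollary in \citet{howard2018uniform} ``stated in exactly this normalized form'' with the constants $1.44$, $0.41$, $1.4$, $5.2$ for sub-$\psi_P$ processes, so the step ``cite the specific corollary and transcribe'' cannot be executed as written, and the claim that all four constants fall out of optimizing the stitching parameter is not accurate for the $0.41$. What the paper actually does is: instantiate the polynomial stitched boundary of Equation~(10) in \citet{howard2018uniform} with $s=1.4$ and $\eta=2$, which yields a \emph{sub-$\psi_G$} boundary $u_{c,\delta'}(v)$ with leading constants $1.44$ and $1.21$; then invoke the boundary-conversion table (Table~1 of \citet{howard2018uniform}), which says that $u_{c/3,\delta'}$ is a valid \emph{sub-$\psi_P$} boundary for scale $c$ --- this division of the linear coefficient by $3$ is exactly where $0.41 \approx 1.21/3$ comes from; and only then apply their Theorem~1. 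Without this $\psi_G\to\psi_P$ conversion step your argument does not produce the stated linear term, so you would need to add it (or re-derive an equivalent statement) to make the constants legitimate.

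Two smaller points. The function you wrote down, $-\ln(1-\lambda c)/c^2 - \lambda/c$, is the exponential/gamma function $\psi_E$, not $\psi_P$; in the terminology of \citet{howard2018uniform} the Poisson case is $\psi_P(\lambda) = \left(e^{c\lambda} - c\lambda - 1\right)/c^2$, and getting this wrong matters precisely because the conversion constants in Table~1 depend on which $\psi$ you start from. Also, verifying that the bandit increments form a sub-$\psi_P$ process with variance process $W_n$ is not part of this lemma (which simply assumes a sub-$\psi_P$ process); that verification belongs to the application in Lemma~\ref{lemma::bernstein_concentration_application}, where the paper appeals to the Bennett entry of Table~3 in \citet{howard2018uniform}.
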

\begin{proof}
Setting $s = 1.4$ and $\eta = 2$ in the polynomial stitched  boundary in Equation~(10) of \citet{howard2018uniform} shows that $u_{c, \delta'}(v)$ is a sub-$\psi_G$ boundary for constant $c$ and level $\delta$ where 
\begin{align*}
    u_{c, \delta'}(v) &= 1.44 \sqrt{(v \vee 1) \left( 1.4 \ln \ln \left(2 (v \vee 1)\right) + \ln \frac{5.2}{\delta'}\right)}\\
   &\quad  + 1.21 c  \left( 1.4 \ln \ln \left( 2 (v \vee 1)\right) + \ln \frac{5.2}{\delta'}\right).
\end{align*}
By the boundary conversions in Table~1 in \citet{howard2018uniform} $u_{c/3, \delta'}$ is also a sub-$\psi_P$ boundary for constant $c$ and level $\delta'$. The desired bound then follows from Theorem~1 by \citet{howard2018uniform}.
\end{proof}

  We now apply the results of Lemma~\ref{lemma:uniform_emp_bernstein} to a random variable $X$ satisfying the assumptions of Lemma~\ref{lemma::variance_bounded_rv}:

\begin{lemma}\label{lemma::bernstein_concentration_application}
Let $\delta' \in (0,1)$. If $X$ is a random variable with support in $[0,1]$ with mean $\mu_X$ and law $\mathbb{P}_X$ and let $\{ X_i\}_{i=1}^\infty$ be i.i.d. samples from $\mathbb{P}_X$, then with probability at least $1-\delta'$ and for all $n \in \mathbb{N}$ simultaneously:
\begin{equation*}
\mu_X - 2 \sqrt{ \frac{ \min(\mu_X, 1-\mu_X) B(n, \delta')}{n}} - \frac{ B(n, \delta')}{n} \leq \frac{1}{n} \sum_{i=1}^n X_i, 
\end{equation*}
where $B(n, \delta') = 2\ln \ln(2n) + \ln \frac{5.2}{\delta'}$.
\end{lemma}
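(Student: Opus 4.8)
The plan is to derive Lemma~\ref{lemma::bernstein_concentration_application} as a direct application of the uniform empirical Bernstein bound of Lemma~\ref{lemma:uniform_emp_bernstein} to the centered process $S_n = \sum_{i=1}^n (\mu_X - X_i)$. First I would set $Y_i = \mu_X - X_i$, which is bounded in $[\mu_X - 1, \mu_X] \subseteq [-1,1]$ and has mean zero, so $S_n$ is a sub-$\psi_P$ process with constant $c = 1$ (or another absolute constant coming from the boundedness, which can be absorbed). The variance process is $W_n = \sum_{i=1}^n \Var(X_i) = n\Var(X)$, and by Lemma~\ref{lemma::variance_bounded_rv} we have $\Var(X) \leq \mu_X(1-\mu_X) \leq \min(\mu_X, 1-\mu_X)$, so $W_n \leq n\min(\mu_X, 1-\mu_X)$.

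Next I would instantiate Lemma~\ref{lemma:uniform_emp_bernstein} with the free parameter $m$ chosen appropriately (e.g. $m = 1$ or $m$ proportional to the variance scale) to get, with probability at least $1-\delta'$, for all $n$ simultaneously,
\begin{equation*}
S_n \leq 1.44\sqrt{(W_n \vee m)\left(1.4\ln\ln(2(W_n/m \vee 1)) + \ln\tfrac{5.2}{\delta'}\right)} + 0.41 c\left(1.4\ln\ln(2(W_n/m\vee 1)) + \ln\tfrac{5.2}{\delta'}\right).
\end{equation*}
Then I would bound the iterated-logarithm-plus-log term by $B(n,\delta') = 2\ln\ln(2n) + \ln\frac{5.2}{\delta'}$, using that $W_n/m \vee 1 \leq n$ (after the choice of $m$) and that $1.4 \leq 2$, and bound $W_n \vee m \leq n\min(\mu_X,1-\mu_X)$ up to the same adjustment. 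Dividing through by $n$ and absorbing the numerical constants $1.44$ and $0.41 c$ into the clean constants $2$ and $1$ stated in the lemma yields
\begin{equation*}
\frac{S_n}{n} = \mu_X - \frac{1}{n}\sum_{i=1}^n X_i \leq 2\sqrt{\frac{\min(\mu_X,1-\mu_X) B(n,\delta')}{n}} + \frac{B(n,\delta')}{n},
\end{equation*}
which rearranges to the claimed bound.

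The main obstacle is the bookkeeping of constants and the handling of the free parameter $m$: the howard2018uniform bound is stated with a tunable intrinsic-time scale $m$, and to get a clean variance-adaptive bound one must either optimize over $m$ or pick $m$ comparable to the unknown variance (or simply $m=1$ and accept a mild loss), and then verify that the numerical prefactors $1.44$ and $0.41$ really do collapse into $2$ and $1$ after the substitution $1.4\ln\ln(\cdot) + \ln\tfrac{5.2}{\delta'} \leq B(n,\delta')$ — this requires checking that $1.44 \cdot \sqrt{2} \le 2$ is \emph{not} quite true, so one actually needs a slightly more careful argument, e.g. using $1.44^2 \cdot 1.4 \le 4$ inside the square root and noting $0.41 \cdot 1.4 < 1$ and $0.41 < 1$ for the linear term, or else the constants in the statement should be read as ``$\le$'' with a little slack. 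Everything else (the variance bound, the sub-$\psi_P$ property of a bounded centered sum, the union over $n$) is immediate from the cited results, so the proof is short once the constant-chasing is pinned down.
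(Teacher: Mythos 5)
Your proposal matches the paper's proof essentially step for step: center the sum, invoke the Bennett-type sub-$\psi_P$ property with $c=1$ and the variance bound $\Var(X)\leq\mu_X(1-\mu_X)$ from Lemma~\ref{lemma::variance_bounded_rv}, apply the uniform empirical Bernstein bound of Lemma~\ref{lemma:uniform_emp_bernstein} with a suitable fixed $m$, and absorb the prefactors. The constant-chasing worry is unfounded: since $1.4\ln\ln(2n)+\ln\frac{5.2}{\delta'}\leq B(n,\delta')$, one only needs $1.44\leq 2$ for the square-root term and $0.41\leq 1$ for the linear term, exactly as the paper does.
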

\begin{proof}
Consider the martingale difference sequence $Y_i = X_i - \mu_X$. The process $S_n = \sum_{i=1}^n Y_i$ with variance process $W_n = n \mathrm{Var}(X)$ satisfies the sub-$\psi_P$ condition of~\citet{howard2018uniform} with constant $c=1$ (see Bennett case in Table 3 of~\citet{howard2018uniform}). By Lemma~\ref{lemma:uniform_emp_bernstein} the bound:
\begin{align*}
    S_n &\leq  1.44 \sqrt{(W_t \vee m) \left( 1.4 \ln \ln \left(2 \left(\frac{W_t}{m} \vee 1\right)\right) + \ln \frac{5.2}{\delta'}\right)}\\
   & \qquad + 0.41 c  \left( 1.4 \ln \ln \left(2 \left(\frac{W_t}{m} \vee 1\right)\right) + \ln \frac{5.2}{\delta'}\right)
\end{align*}
holds for all $n \in \mathbb{N}$ with probability at least $1-\delta'$. Observe that as a consequence of Lemma~\ref{lemma::variance_bounded_rv}, the variance process $W_n$ satisfies $W_n \leq n \min(\mu_X, 1-\mu_X)$. If we set $m = t\mu_X$, we can futher upper bound the RHS as: %
\begin{equation*}
S_n \leq    1.44 \sqrt{n\min(\mu_X, 1-\mu_X) \left(1.4 \ln \ln (2n) + \ln \frac{5.2}{\delta'}  \right)} + 0.41 \left( 1.4 \ln \ln(2n) + \ln \frac{5.2}{\delta'} \right).  
\end{equation*}
The result follows.
\end{proof}

\section{Ancillary Technical Lemmas}

\subsection{Properties of \texorpdfstring{$D(\cdot)$}{D(.)}}

\begin{lemma}\label{lemma::decreasing_D}
The function $D: \mathbb{R} \rightarrow \mathbb{R}$ defined as $D(\ell) = \sqrt{\frac{2g(\ell)}{\ell} }$ for $g(\ell) = \log\left(4\ell^2 MK/\delta \right) $ is increasing for $\ell \geq 1$ whenever $\delta < \frac{1}{2}$.
\end{lemma}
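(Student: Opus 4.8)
The plan is to determine the monotonicity of $D$ on $[1,\infty)$ by passing to its square and settling the sign of a single derivative. Since $D(\ell)=\sqrt{2g(\ell)/\ell}$ is nonnegative and $t\mapsto\sqrt{t}$ is strictly increasing on $[0,\infty)$, the function $D$ has the same monotonicity as $\phi(\ell):=D(\ell)^2=2g(\ell)/\ell$. So it suffices to analyze $\phi'(\ell)$, which removes the square root and reduces the whole lemma to an elementary quotient-rule computation.

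First I would compute $g'(\ell)$. Writing $g(\ell)=\log(4\ell^2 MK/\delta)=\log(4MK/\delta)+2\log\ell$ gives $g'(\ell)=2/\ell$ for every $\ell>0$; in particular $g$ is strictly increasing. Then by the quotient rule,
\begin{equation*}
\phi'(\ell)=\frac{2\bigl(\ell\,g'(\ell)-g(\ell)\bigr)}{\ell^2}=\frac{2\bigl(2-g(\ell)\bigr)}{\ell^2}.
\end{equation*}
Hence $\phi'(\ell)$, and therefore $D'(\ell)$, has exactly the sign of $2-g(\ell)$. This collapses the entire statement to a single scalar inequality: the monotonicity direction of $D$ at $\ell$ is governed by whether $g(\ell)$ lies below or above the threshold $2$, equivalently whether $4\ell^2 MK/\delta$ lies below or above $e^2$.

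The remaining and decisive step is to pin this sign down uniformly for $\ell\geq 1$ using the hypothesis $\delta<\tfrac12$, and this is where the main obstacle sits, since everything now rides on comparing $g(\ell)$ with the constant $2$. Because $g$ is increasing, on $[1,\infty)$ we have $g(\ell)\geq g(1)=\log(4MK/\delta)$, and $\delta<\tfrac12$ together with $M,K\geq 1$ forces $4MK/\delta>8>e^2$, so that $g(\ell)>2$ throughout the interval; consequently $2-g(\ell)$ has one definite strict sign on $[1,\infty)$ and $D$ is strictly monotone there. I would then record the conclusion in the form dictated by this sign computation, cross-checking it against the lemma's label and against its downstream uses (boundary condition~2 and the step $D(N(t-1))<2D(N(t))$ invoked in Lemma~\ref{lemma::first_fundamental_lemma_main}), since the quantitative fact $g(\ell)>2$ forced by $\delta<\tfrac12$ is precisely the content those later arguments consume.
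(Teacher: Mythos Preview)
Your proposal is correct and takes essentially the same approach as the paper: both pass to $D^2$ (the paper writes $h(\ell)=\log(c'\ell^2)/\ell$ with $c'=4MK/\delta$), apply the quotient rule to obtain a numerator proportional to $2-g(\ell)$, and then use $\delta<\tfrac12$ together with $M,K\geq 1$ to force $4MK/\delta>e^2$ and hence $g(\ell)>2$ on $[1,\infty)$. You are also right to flag the sign issue: the computation, the lemma's label \texttt{decreasing\_D}, boundary condition~2, and the paper's own conclusion ``$h'(\ell)\leq 0$'' all show $D$ is \emph{decreasing}, so the word ``increasing'' in the stated lemma is a typo.
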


\begin{proof}
Let $c' =4 MK/\delta $ and consider the function $h(\ell) = \frac{\log(c'\ell^2)}{\ell}$. The derivative of $h$ equals
\begin{equation*}
    h'(\ell) = \frac{2 - \log(c'\ell^2)}{\ell^2}.
\end{equation*}
Therefore $h'(\ell) \leq 0$ iff $2 \leq  \log(c'\ell^2)$, which holds iff $\exp(2) \leq c'\ell^2$. As long as $\delta < \frac{1}{2}$, the constant $c' > \exp(2)$ which implies the result. 
\end{proof}

\begin{lemma}\label{lemma::slow_decrease_D}
For any $\ell \geq 1$, and whenever $\delta< \frac{1}{2}$ the function $D(\cdot)$ doesn't decrease too fast:
$$2D(\ell +1 ) > D(\ell)$$ 
\end{lemma}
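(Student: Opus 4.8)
The plan is to prove the quantitative bound $2D(\ell+1) > D(\ell)$ directly from the definition $D(\ell) = \sqrt{2g(\ell)/\ell}$ with $g(\ell) = \log(4\ell^2 MK/\delta)$. Since both sides are positive, it suffices to square and show $4 D(\ell+1)^2 > D(\ell)^2$, i.e.
\begin{equation*}
\frac{8 g(\ell+1)}{\ell+1} > \frac{2 g(\ell)}{\ell}, \qquad \text{equivalently} \qquad 4\ell\, g(\ell+1) > (\ell+1)\, g(\ell).
\end{equation*}
Because $g$ is increasing in its argument (as $4\ell^2 MK/\delta$ is increasing in $\ell$), we have $g(\ell+1) \geq g(\ell)$, so the left-hand side is at least $4\ell\, g(\ell)$, and it remains to check $4\ell\, g(\ell) \geq (\ell+1) g(\ell)$, which holds as soon as $4\ell \geq \ell+1$, i.e. $\ell \geq 1/3$. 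Since we assume $\ell \geq 1$, this is immediate, and the inequality is in fact strict because $4\ell > \ell+1$ for $\ell \geq 1$ (and $g(\ell) > 0$, which I address below).

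First I would record that $g(\ell) > 0$ for $\ell \geq 1$: this follows from $4\ell^2 MK/\delta \geq 4 MK/\delta > 1$ whenever $\delta < \tfrac12$ (indeed $MK \geq 1$, so $4MK/\delta > 8 > 1$), hence $g(\ell) = \log(4\ell^2 MK/\delta) > 0$. Second, I would invoke monotonicity of $g$ to get $g(\ell+1) \geq g(\ell)$. Third, I would chain the two displayed inequalities: $4\ell\, g(\ell+1) \geq 4\ell\, g(\ell) > (\ell+1) g(\ell)$, where the strict step uses $4\ell > \ell + 1$ for $\ell \geq 1$ together with $g(\ell) > 0$. Dividing through by $\ell(\ell+1) > 0$ and multiplying by $2$ gives $8 g(\ell+1)/(\ell+1) > 2 g(\ell)/\ell$, i.e. $4 D(\ell+1)^2 > D(\ell)^2$; taking positive square roots yields $2 D(\ell+1) > D(\ell)$.

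There is no real obstacle here — the statement is a robust "$D$ does not collapse in one step" estimate and the factor $4$ on the left is generous. The only point requiring a moment's care is ensuring $g$ is positive (so that the monotonicity step and the strict inequality go through), which is exactly where the hypothesis $\delta < \tfrac12$ is used; everything else is elementary algebra on the explicit formula for $D$. I would present the argument in the squared form to avoid manipulating square roots, and note in passing that the companion Lemma~\ref{lemma::decreasing_D} is not needed for this bound — only the trivial monotonicity of $\ell \mapsto 4\ell^2 MK/\delta$ is.
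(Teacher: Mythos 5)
Your proposal is correct and follows essentially the same route as the paper: monotonicity of $g$ (i.e.\ $g(\ell)\leq g(\ell+1)$) combined with the elementary comparison of $\ell$ and $\ell+1$, with $\delta<\tfrac12$ guaranteeing positivity of the logarithm. The only difference is presentational — you square the inequality and use $4\ell>\ell+1$, whereas the paper argues directly on the square roots via $\sqrt{1/\ell}\leq\sqrt{2/(\ell+1)}<2\sqrt{1/(\ell+1)}$ — so no substantive gap.
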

\begin{proof}
Observe that $\log(4\ell^2 ML/\delta) \leq \log(4(\ell+1)^2 ML/\delta)$ since $\log(\cdot)$ is an increasing function. Similarly for all $\ell \geq 1$ we have that $\sqrt{ \frac{1}{\ell}} \leq  \sqrt{\frac{2}{\ell+1} } < 2\sqrt{\frac{1}{\ell+1} }$. Therefore:
$$D(\ell) = \sqrt{\frac{2 \log(4\ell^2 ML/\delta) }{\ell} } < 2 \sqrt{\frac{2 \log(4(\ell+1)^2 ML/\delta) }{\ell+1} }= 2 D(\ell +1 ). $$ 
\end{proof}

\subsection{Properties of \texorpdfstring{$f(\cdot)$}{f(.)}}\label{section::properties_of_f}

Let's tart by showing that $f(n)$ can be upper-bounded. 

\begin{lemma}\label{lemma::upper_bound_f_ratio_double}
If $\frac{f(n)-1}{B\left(f(n)-1, \frac{\delta}{4K^2M} \right)}  \geq 1$ then 
\begin{equation*}
\frac{f(n)}{B(f(n), \frac{\delta}{4K^2M})} \leq 48\sqrt{ \frac{n/K}{2Kg(n/K)} }
\end{equation*}
\end{lemma}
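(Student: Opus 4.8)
## Plan for the Proof of Lemma~\ref{lemma::upper_bound_f_ratio_double}

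The plan is to unwind the definition of $f$ and exploit the fact that $f(n)$ is chosen to be \emph{the first} integer satisfying its defining inequality, so that $f(n)-1$ just barely fails it. Recall that $f(n)$ is the first integer with
\[
\frac{f(n)}{B\!\left(f(n), \tfrac{\delta}{4K^2M}\right)} \;\geq\; 24 \sqrt{\frac{n/K}{2g(n/K)}}.
\]
By minimality of $f(n)$, the integer $f(n)-1$ violates this, i.e.
\[
\frac{f(n)-1}{B\!\left(f(n)-1, \tfrac{\delta}{4K^2M}\right)} \;<\; 24 \sqrt{\frac{n/K}{2g(n/K)}}.
\]
So the whole task reduces to controlling the ``discrete gap'': relating $\frac{f(n)}{B(f(n),\cdot)}$ to $\frac{f(n)-1}{B(f(n)-1,\cdot)}$ and absorbing the difference into the constant (going from $24$ to $48$).

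First I would establish a monotonicity-and-slow-growth fact for the map $m \mapsto h(m) := \frac{m}{B(m, \delta')}$ where $B(m,\delta') = 2\ln\ln(2m) + \ln\frac{5.2}{\delta'}$. Since $B$ grows only doubly-logarithmically, $h$ is eventually increasing, and more importantly $h(m) \leq 2\,h(m-1)$ for all $m$ in the relevant range — this is the analogue of Lemma~\ref{lemma::slow_decrease_D} for $D$, and is exactly where the hypothesis $\frac{f(n)-1}{B(f(n)-1,\frac{\delta}{4K^2M})} \geq 1$ gets used: it guarantees $f(n)-1$ is in the range where the denominator behaves tamely and $h$ cannot jump by more than a factor of $2$ in one step. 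Concretely, $h(m)/h(m-1) = \frac{m}{m-1}\cdot\frac{B(m-1,\delta')}{B(m,\delta')} \leq 2\cdot 1 = 2$ once $m \geq 2$ and $B$ is increasing; the quantitative hypothesis ensures we are past any transient where $h$ could be pathological. Chaining this: $h(f(n)) \leq 2\,h(f(n)-1) < 2\cdot 24\sqrt{\frac{n/K}{2g(n/K)}} = 48\sqrt{\frac{n/K}{2g(n/K)}}$, which is the claimed bound. I should double-check whether the statement wants $\sqrt{\frac{n/K}{2g(n/K)}}$ or $\sqrt{\frac{n/K}{2Kg(n/K)}}$ — the displayed conclusion in the excerpt has an extra $K$ in the denominator, so I would either carry that $K$ through consistently (it only helps, since it shrinks the right-hand side — wait, no, it would need to appear on the larger side) or treat it as a typo and match the definition of $f$; I would go with whatever is consistent with the definition of $f$ used elsewhere and note the discrepancy.

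The main obstacle is purely bookkeeping around the doubly-logarithmic term $B$: I need to verify that $\frac{B(f(n)-1,\delta')}{B(f(n),\delta')} \leq 1$, which is immediate from monotonicity of $\ln\ln$, \emph{and} that the hypothesis $\frac{f(n)-1}{B(f(n)-1,\delta')} \geq 1$ forces $f(n)-1 \geq$ some explicit constant so that $\frac{f(n)}{f(n)-1} \leq 2$ — this last step is trivial once $f(n)-1 \geq 1$, i.e. $f(n) \geq 2$, which follows from the hypothesis since $B \geq 1$ in the regime of interest (as $\delta/(4K^2M)$ is small, $\ln\frac{5.2}{\delta'} \geq 1$). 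So in fact the factor-of-$2$ slack from $24$ to $48$ is generous and the proof is short: state the minimality of $f(n)$, bound the one-step ratio of $h$ by $2$, and chain. I would present it in three or four lines after recalling the definitions, with a one-line lemma (or inline argument) for the $h(m) \leq 2h(m-1)$ step mirroring Lemma~\ref{lemma::slow_decrease_D}.
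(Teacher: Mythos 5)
Your proof is correct and follows essentially the same route as the paper: both exploit the minimality of $f(n)$ (so that $f(n)-1$ fails the defining inequality) and then control the one-step change of $m \mapsto m/B(m,\tfrac{\delta}{4K^2M})$ by a factor of $2$, the only difference being that you bound the ratio multiplicatively via $\tfrac{f(n)}{f(n)-1}\leq 2$ and monotonicity of $B$, while the paper bounds the difference by $1$ and uses the hypothesis $\tfrac{f(n)-1}{B(f(n)-1,\cdot)}\geq 1$ to convert that into a factor of $2$. Your remark about the stray $K$ under the square root is well taken: the paper's statement and proof of this lemma carry $\sqrt{\tfrac{n/K}{2Kg(n/K)}}$ while the definition of $f$ (and its uses elsewhere) have $\sqrt{\tfrac{n/K}{2g(n/K)}}$, so treating it as a typo and staying consistent with the definition of $f$ is the right call.
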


\begin{proof}
By definition of $f(n)$,
\begin{equation*}
    \frac{f(n)-1}{B(f(n)-1, \frac{\delta}{4K^2M})} < 24 \sqrt{ \frac{n/K}{2Kg(n/K)} } \leq \frac{f(n)}{B(f(n), \frac{\delta}{4K^2M})} 
\end{equation*}

Since $B(f(n), \frac{\delta}{4K^2M}) \geq 1 $ and $B(f(n)-1, \frac{\delta}{4K^2M}) \leq B(f(n), \frac{\delta}{4K^2M})$,
\begin{equation*}
   \frac{f(n)}{B\left(f(n), \frac{\delta}{4K^2M} \right)} -   \frac{f(n)-1}{B\left(f(n)-1, \frac{\delta}{4K^2M} \right)} \leq \frac{f(n)}{B\left(f(n), \frac{\delta}{4K^2M} \right)} -   \frac{f(n)-1}{B\left(f(n), \frac{\delta}{4K^2M} \right)} \leq 1
\end{equation*}
We can conclude that
\begin{equation*}
      \frac{f(n)}{B\left(f(n), \frac{\delta}{4K^2M} \right)}  \leq \frac{f(n)-1}{B\left(f(n)-1, \frac{\delta}{4K^2M} \right)}  + 1 \stackrel{(i)}{\leq}  2\frac{f(n)-1}{B\left(f(n)-1, \frac{\delta}{4K^2M} \right)}  \leq 48 \sqrt{ \frac{n/K}{2Kg(n/K)} }. 
\end{equation*}
Inequality $(i)$ holds because by assumption $\frac{f(n)-1}{B\left(f(n)-1, \frac{\delta}{4K^2M} \right)}  \geq 1$.
\end{proof}

\subsection{Miscellaneous }\label{section::miscellaneous}

The following lemma will prove useful in upper bounding $s_{\mathrm{comm1}}^1$.

\begin{lemma}\label{lemma::supporting_lemma_miscellaneous}
Let $h(x) = \frac{x}{\log(cx)}$ for $c > 0$. Let $x_0$ be the first positive real number such that\footnote{It is easy to see that $h'(x) = \frac{\log(cx) - 1}{\log^2(cx)} $ so that $x_0 = \frac{e}{c}$.} for all $x' \geq x_0$, $h'(x) \geq 0$.  Let $x, b \geq x_0$ such that  $h(x) = \frac{x}{\log(cx)} \leq b$ and $cb \geq 4$ then
\begin{equation*}
    x \leq 4b\log(cb)
\end{equation*}
\end{lemma}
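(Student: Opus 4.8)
\textbf{Proof plan for Lemma~\ref{lemma::supporting_lemma_miscellaneous}.}

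The plan is to prove the contrapositive-style bound directly by plugging the candidate value $x = 4b\log(cb)$ into the monotone function $h$ and checking that $h(4b\log(cb)) \geq b$; since $h$ is increasing on $[x_0,\infty)$ and $h(x) \leq b$ by hypothesis, this forces $x \leq 4b\log(cb)$, provided we also check that $4b\log(cb) \geq x_0$ so that monotonicity applies on the relevant interval. First I would record the explicit form $x_0 = e/c$ from the footnote, and observe that $cb \geq 4 > e$ gives $\log(cb) \geq 1$, hence $4b\log(cb) \geq 4b \geq 4b/ (cb) \cdot b$... more simply $4b\log(cb) \geq b \geq x_0$ since $b \geq x_0$; this handles the domain condition.

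The main computation is the lower bound on $h$ at the test point. Writing $y = 4b\log(cb)$, we have
\begin{equation*}
  h(y) = \frac{4b\log(cb)}{\log\!\left(4cb\log(cb)\right)}.
\end{equation*}
It suffices to show $\log\!\left(4cb\log(cb)\right) \leq 4\log(cb)$, i.e. that $4cb\log(cb) \leq (cb)^4$, equivalently $4\log(cb) \leq (cb)^3$. Since $cb \geq 4$, we have $(cb)^3 \geq 64$ while $4\log(cb)$ grows only logarithmically; a short check (e.g. the function $u \mapsto u^3 - 4\log u$ is increasing for $u \geq 4$ and positive at $u=4$ since $64 - 4\log 4 > 0$) gives the inequality for all $cb \geq 4$. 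Combining, $h(y) \geq \frac{4b\log(cb)}{4\log(cb)} = b \geq h(x)$, and monotonicity of $h$ on $[x_0,\infty)$ together with $x, y \geq x_0$ yields $x \leq y = 4b\log(cb)$.

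The only mild obstacle is making the elementary inequality $4cb\log(cb) \le (cb)^4$ fully rigorous without hand-waving; this is a one-variable calculus fact about $u \mapsto u^3 - 4\log u$ on $[4,\infty)$, so I would state it as a sub-claim and dispatch it by evaluating at $u=4$ and noting the derivative $3u^2 - 4/u$ is positive there and beyond. Everything else is bookkeeping with the monotonicity of $h$ and the hypotheses $b \geq x_0$, $cb \geq 4$.
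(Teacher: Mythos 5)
Your proposal is correct and follows essentially the same route as the paper: evaluate $h$ at the candidate point $4b\log(cb)$, show its value is at least $b$, and invoke monotonicity of $h$ on $[x_0,\infty)$ (the paper phrases this as a contradiction with an $x' > 4b\log(cb)$, which is the same argument). The only real difference is how the denominator is handled: the paper writes $\log\left(4cb\log(cb)\right) = \log 4 + \log(cb) + \log\log(cb) \le 3\log(cb)$ term by term (using $cb \ge 4$), which yields the strict bound $h\left(4b\log(cb)\right) \ge \tfrac{4b}{3} > b$, whereas your polynomial comparison $4\log(cb) \le (cb)^3$ gives the weaker denominator bound $4\log(cb)$ and only the non-strict $h(y) \ge b$; your final step therefore implicitly uses that $h$ is \emph{strictly} increasing on $[x_0,\infty)$ (true, since $h'(x) > 0$ for $x > e/c$), which is worth stating explicitly, or you can simply adopt the sharper $3\log(cb)$ bound and avoid the edge case altogether.
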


\begin{proof}
We shall show the desired result by the way of contradiction. Let $x' \geq x_0$ be such that $x' > 4b\log(cb)$.  The following inequalities hold 

\begin{align}
    \frac{x'}{\log(cx')} &\stackrel{(i)}{\geq} \frac{4b\log(cb)}{\log(4cb\log(cb))} \notag \\
    &= \frac{4b\log(cb)}{\log(cb) + \log(4) + \log(\log(cb))} \label{equation::miscellaneous_denominator}
\end{align}
Inequality $(i)$ holds because we have assumed $cb \geq 4 > 3$ and therefore $\log(cb) \geq 1$, $b \geq x_0$ and therefore that $4b\log(cb) \geq x_0$.  Since $cb \geq 4 > 1$, it follows that $\log(cb ) \geq \log(\log(cb))$. We can upper bound the denominator of \ref{equation::miscellaneous_denominator} by $3\log(cb)$ thus,
\begin{equation}
     \frac{x'}{\log(cx')} \geq \frac{4b\log(cb)}{3\log(cb) } > b
\end{equation}
Since $x$ is assumed to satisfy $\frac{x}{\log(cx)} \leq b$ this concludes the proof.
\end{proof}

\end{document}